\crefname{proposition}{Proposition}{Propositions}
\crefname{theorem}{Theorem}{Theorems}
\crefname{lemma}{Lemma}{Lemmas}
\crefname{update_rule}{Update}{Updates}
\crefname{algorithm}{Algorithm}{Algorithms}
\crefname{figure}{Figure}{Figures}
\crefname{claim}{Claim}{Claims}
\def\eqref#1{equation~\ref{#1}}
\def\1{\bm{1}}
\DeclareMathAlphabet{\mathsfit}{\encodingdefault}{\sfdefault}{m}{sl}
\SetMathAlphabet{\mathsfit}{bold}{\encodingdefault}{\sfdefault}{bx}{n}
\def\gA{{\mathcal{A}}}
\def\gE{{\mathcal{E}}}
\def\gF{{\mathcal{F}}}
\def\gM{{\mathcal{M}}}
\def\gP{{\mathcal{P}}}
\def\gS{{\mathcal{S}}}
\def\gX{{\mathcal{X}}}
\def\sI{{\mathbb{I}}}
\def\sR{{\mathbb{R}}}
\newcommand{\E}{\mathbb{E}}
\newcommand{\R}{\mathbb{R}}
\newcommand{\softmax}{\mathrm{softmax}}
\DeclareMathOperator*{\argmax}{arg\,max}
\newtheorem{theorem}{Theorem}
\newtheorem{lemma}{Lemma}
\newtheorem{definition}{Definition}
\newtheorem{proposition}{Proposition}
\newtheorem{remark}{Remark}
\newtheorem{assumption}{Assumption}
\newtheorem{corollary}{Corollary}
\newtheorem{update_rule}{Update}
\newtheorem{claim}{Claim}
\DeclareMathOperator*{\expectation}{\mathbb{E}}
\def\rvone{{\mathbf{1}}}
\def\diagonalmatrix{\text{diag}}
\DeclareMathOperator*{\probability}{Pr}
\newcommand{\cA}{\mathcal{A}}
\newcommand{\cF}{\mathcal{F}}
\newcommand{\chE}{\mathbb{E}}
\newcommand{\EE}[1]{\mathbb{E}[#1]}
\newcommand{\PP}{\mathbb{P}}
\newcommand{\EEt}[1]{\mathbb{E}_t[#1]}
\newlength\tocrulewidth
\title{The Role of Baselines in Policy Gradient Optimization}
\author{%
  Jincheng Mei$^{\, 1}$\thanks{Correspondence to: Jincheng Mei and Csaba Szepesv{\'a}ri}
  \hspace{7mm} 
  Wesley Chung$^{\, 2}$
  \hspace{7mm}
  Valentin Thomas$^{\, 3}$
  \hspace{7mm}
  Bo Dai$^{\, 1}$ \\
  \textbf{Csaba Szepesv{\'a}ri}$^{\, 4, \, 5, \, *}$
  \hspace{7mm}
  \textbf{Dale Schuurmans}$^{\, 1, \, 5}$ \\
  \\
  \hspace{-7mm} $^1$\normalfont{Google Research, Brain Team} \hspace{2mm} $^2$Mila, McGill University \hspace{2mm} $^3$Mila, University of Montreal \\
  \hspace{2mm} $^4$DeepMind \hspace{2mm} $^5$Amii, University of Alberta \\
  \\
  \hspace{-2mm}
  \texttt{\{jcmei,bodai,szepi,schuurmans\}@google.com} \hspace{2mm}
  \texttt{\{wesley.chung2,vltn.thomas\}@gmail.com}
}
\begin{document}
\etocdepthtag.toc{mtmainpaper}

\maketitle

\begin{abstract}
We study the effect of baselines in on-policy stochastic policy gradient optimization, and close the gap between the theory and practice of policy optimization methods.
Our first contribution is to show that the \emph{state value} baseline allows on-policy stochastic \emph{natural} policy gradient (NPG) to converge to a globally optimal policy at an $O(1/t)$ rate, which was not previously known.
The analysis relies on two novel findings: the expected progress of the NPG update satisfies a stochastic version of the non-uniform \L{}ojasiewicz (N\L{}) inequality, and with probability 1 the state value baseline prevents the optimal action's probability from vanishing, thus ensuring sufficient exploration. 
Importantly, these results provide a new understanding of the role of baselines in stochastic policy gradient: by showing that the variance of natural policy gradient estimates remains unbounded with or without a baseline, we find that variance reduction \emph{cannot} explain their utility in this setting.
Instead, the analysis reveals that the primary effect of the value baseline is to \textbf{reduce the aggressiveness of the updates} rather than their variance.
That is, we demonstrate that a finite variance is \emph{not necessary} for almost sure convergence of stochastic NPG, while controlling update aggressiveness is both necessary and sufficient.
Additional experimental results verify these theoretical findings.
\end{abstract}

\section{Introduction}

The policy gradient (PG) \citep{sutton2000policy} is a key concept in reinforcement learning (RL), lying at the foundation of policy-based and actor-critic methods, and responsible for some of the most prominent practical achievements in RL \citep{schulman2015trust,schulman2017proximal,haarnoja2018soft}. 
However, progress in the theoretical understanding of PG methods is recent, and a number of the techniques used in practice still lack rigorous support, particularly in the online stochastic regime where an action is sampled from the current policy at each iteration. 
We study stochastic policy optimization in more detail to close this gap between theory and practice. 

In stochastic policy optimization, the two most common techniques for improving the basic algorithm are to include on-policy importance sampling (IS) and subtract a baseline.
Including on-policy IS provides unbiased gradient estimates, but introduces high variance when an action's sampling probability is close to $0$.
Meanwhile, subtracting a baseline remains a heuristic \citep{recht2018updates} that has strong empirical but limited theoretical support.
One possible benefit of a baseline is that it provides variance reduction \citep{greensmith2004variance}, which has motivated work on designing alternative baselines that further reduce variance \citep{tucker2018mirage,bhatnagar2007incremental,mao2018variance,wu2018variance}.
However, other work \citep{chung2020beyond} has shown that variance reduction is not necessarily aligned with policy learning quality. 
To date, it has remained unclear how a baseline impacts the quality of the ultimate solution found by policy gradient optimization.
We resolve this question in this work.

Recent progress in the theory of deterministic PG has shown that, given exact gradients, softmax policy gradient is able to converge to a globally optimal policy at a $O(1/t)$ rate \citep{mei2020global}.
Unfortunately, despite this guarantee, the constants in this rate can be extremely large \citep{li2021softmax} due to initialization sensitivity and poor performance at escaping sub-optimal plateaus \citep{mei2020escaping}.
Therefore, in the exact gradient setting, several techniques have been considered for mitigating 
the weaknesses of softmax PG, leading to better constants \citep{agarwal2021theory} or even exponentially faster rates of $O(e^{- c \cdot t})$ for $c > 0$.
Such improvements include adding entropy regularization \citep{mei2020global,cen2021fast}, normalizing the gradients \citep{mei2021leveraging}, or applying natural policy gradient (NPG) \citep{cen2021fast,khodadadian2021linear,mei2021understanding}.

However, in the on-policy \textit{stochastic} optimization case, recent studies \citep{mei2021understanding,chung2020beyond} show that naively applying the above techniques, such as normalization or NPG, leads to unexpectedly \emph{worse} performance than stochastic PG.
That is, techniques that accelerate convergence in the exact policy gradient setting become \emph{unsound} in the stochastic gradient setting, by inducing a non-zero probability of failure (i.e., failing to converge to a globally optimal solution) \citep{mei2021understanding}.
Such failures occur even when stochastic PG can still converge to a global optimum in probability.
Previous work has indicated that one key reason behind the failure of these acceleration strategies arises from their ``over-committal behaviour'' in the stochastic setting, 
which occurs independently
of the variance of the gradient estimates \citep{chung2020beyond}. 
That is, baseline techniques with higher variance can still better avoid over-committal behaviour (i.e., premature convergence) and ultimately achieve better policy optimization \citep{chung2020beyond}.

To resolve this issue, we develop a deeper understanding of the role of baselines in stochastic policy optimization based on the following contributions. 
\textcolor{blue}{First}, we establish a new result that combining on-policy IS with a value function baseline and natural policy gradient (NPG) can achieve almost sure convergence to a globally optimal policy at a $O(1/t)$ rate. 
This result is based on two novel findings: 
\textbf{(i)} At any iteration $t$, the conditional expected progress of the algorithm's next iterate obeys a stochastic non-uniform \L{}ojasiewicz (N\L{}) inequality. 
\textbf{(ii)} The use of the state value baseline (with appropriate learning rate control) almost surely prevents the probability of the optimal action from vanishing.
These findings show that a key role of the value baseline is to automatically ensure ``sufficient exploration'' during on-policy stochastic optimization. 
\textcolor{blue}{Next}, we provide a detailed understanding of how baselines modulate the circular interaction between stochastic action sampling and updating.
Although a baseline has no effect on exact gradients, it can play a major role in stochastic gradients.
In this respect, we first show that the PG estimator variance is unbounded with or without a baseline, hence variance reduction cannot be the primary effect.
Instead, our analysis reveals that the key role the baseline plays in ensuring global convergence is to reduce the aggressiveness of updates.
That is, finite variance of the gradient estimates is not necessary for ensuring global convergence, while properly controlling update aggressiveness is both necessary and sufficient.

The remainder of the paper is organized as follows. 
\cref{sec:on_policy_npg} provides the main results that establish the almost sure $O(1/t)$ convergence rate of stochastic NPG with on-policy IS and state value baseline to a globally optimal policy.
\cref{sec:stochastic_policy_optimization_circles} then develops the new understanding of the role of the baseline by going beyond standard variance reduction arguments.
\cref{sec:simulations} provides some simulations to verify the results, and \cref{sec:discussions_conclusions_future_work} concludes the paper with a brief discussion.

\section{On-policy Stochastic Natural Policy Gradient}
\label{sec:on_policy_npg}

We first consider a one-state Markov Decision Process (MDP) defined by a finite action space $[K] \coloneqq \{ 1, 2, \dots, K \}$ where the true mean reward vector is $r \in [0, 1]^K$. The policy optimization problem is to maximize the expected reward,
{\small
\begin{align}
\label{eq:expected_reward_objective}
    \max_{\theta : [K] \to \sR}\;{ \expectation_{a \sim \pi_{\theta}(\cdot)}{ [ r(a) ]  } },
\end{align}
}%
where the policy $\pi_\theta$ is parameterized by $\theta$ using the standard softmax parameterization,
{\small
\begin{align}
\label{eq:softmax}
\pi_\theta(a) = \frac{ \exp\{ \theta(a) \} }{ \sum_{a^\prime \in [K]}{ \exp\{ \theta(a^\prime) } \} } \mbox{, \quad   for all } a \in [K].
\end{align}
}%
Our focus in this paper is on on-policy optimization, where at each iteration $t \ge 1$ the current policy $\pi_{\theta_t}$ is used to sample one action and perform one update. 

For the sampled action $a_t$, a noisy reward observation $x_t(a_t) \in \sR$ is drawn from an unknown distribution with expected value $r(a_t)$. 
We make the following assumption that the observed reward $x_t(a)$ is sampled from a bounded distribution: $x_t(a) \in [-R_{\max}, R_{\max}]$ with probability one.
\begin{assumption}[Bounded sampled reward]
\label{assump:bounded_reward}
For each action $a \in [K]$, the true mean reward $r(a)$ is the expectation of a bounded reward distribution, i.e.,
{\small
\begin{align}
\label{eq:true_mean_reward_expectation_bounded}
	r(a) &= \int_{-R_{\max}}^{R_{\max}}{ x \cdot P_a(x) \mu(d x)}
\end{align}
}%
where $\mu$ is a finite measure over $[-R_{\max}, R_{\max}]$, and $P_a(x) \ge 0$ is the probability density function with respect to $\mu$, and $R_{\max} > 0$ is the reward range.
We let $R_a$ denote the reward distribution for action $a$ defined by the density $P_a$ and base measure $\mu$.
\end{assumption}
Then, given a sampled reward observation $x_t(a)\sim R_a$, an unbiased estimate of the expected reward vector $r$ can be formed by on-policy importance sampling (IS).
\begin{definition}[On-policy importance sampling (IS)]
\label{def:on_policy_importance_sampling}
At iteration $t$, sample one action $a_t \sim \pi_{\theta_t}(\cdot)$ and observe one reward sample $x_t(a_t)\sim R_{a_t}$.
Let $x_t(a)=0$ for all $a \not= a_t$.
Then the IS reward estimate is constructed as $\hat{r}_t(a) = \frac{ \sI\left\{ a = a_t \right\} }{ \pi_{\theta_t}(a) } \cdot x_t(a)$ for all $a \in [K]$.
\end{definition}
If the true mean reward $r(a_t)$ is observed for sampled actions $a_t$, we have the simplified IS estimator.
\begin{definition}[Simplified on-policy importance sampling (IS)]
\label{def:simplified_on_policy_importance_sampling}
At iteration $t$, sample one action $a_t \sim \pi_{\theta_t}(\cdot)$. 
The IS reward estimate is then constructed as $\hat{r}_t(a) = \frac{ \sI\left\{ a_t = a \right\} }{ \pi_{\theta_t}(a) } \cdot r(a)$ for all $a \in [K]$.
\end{definition}
\cref{def:simplified_on_policy_importance_sampling} will be used for illustrating ideas and new understandings in \cref{sec:stochastic_policy_optimization_circles}, while the main results in \cref{sec:on_policy_npg} are based on \cref{def:on_policy_importance_sampling}.

\subsection{Failure Without a Baseline}

First, to establish context, we review an existing negative result for the representative algorithm, natural policy gradient (NPG) \citep{kakade2002natural}, which for the softmax parameterization is defined as follows.
\begin{update_rule}[NPG with on-policy stochastic gradient]
\label{update_rule:softmax_natural_pg_special_on_policy_stochastic_gradient}
$\theta_{t+1} \gets \theta_{t} + \eta \cdot \hat{r}_t$,
where $\pi_\theta(a)$ is by \cref{eq:softmax}.
\end{update_rule}
It is known that NPG behaves problematically with on-policy IS, even if the true mean reward $r(a_t)$ is observed.
In particular, NPG converges to a sub-optimal deterministic policy with a constant positive probability in this case, as shown by \citep{chung2020beyond,mei2021understanding}.
\begin{proposition}[Theorem 3 of \citep{mei2021understanding}]
\label{prop:failure_probability_softmax_natural_pg_special_on_policy_stochastic_gradient}
Using \cref{update_rule:softmax_natural_pg_special_on_policy_stochastic_gradient}, where $\hat{r}_t$ is from \cref{def:simplified_on_policy_importance_sampling}, and $r \in (0, 1]^K$, we have, with positive probability, $\sum_{a \not= a^*}{ \pi_{\theta_t}(a)} \to 1$ as $t \to \infty$.
\end{proposition}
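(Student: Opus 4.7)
My plan is to exhibit a specific sub-optimal action $a \neq a^*$ and prove that the event $\mathcal{E} = \{a_t = a \text{ for all } t \geq 0\}$ has strictly positive probability; on $\mathcal{E}$ the optimal action's probability vanishes, yielding $\sum_{a' \neq a^*} \pi_{\theta_t}(a') \to 1$ with positive probability. The whole argument will then hinge on a log-odds linearization of the update and an elementary infinite-product computation.

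The key computation is in log-odds coordinates. Fix any $a \neq a^*$ (which exists whenever $K \geq 2$, the only nontrivial case) and set $p_t := \pi_{\theta_t}(a)$ and $L_t := \log\frac{p_t}{1-p_t} = \theta_t(a) - \log \sum_{a' \neq a} e^{\theta_t(a')}$. Under \cref{update_rule:softmax_natural_pg_special_on_policy_stochastic_gradient} with \cref{def:simplified_on_policy_importance_sampling}, only coordinate $a_t$ of $\theta_t$ changes at step $t$, and by the amount $\eta\, r(a_t)/\pi_{\theta_t}(a_t)$. Consequently, along the deterministic trajectory where $a_t = a$ for every $t$, the log-sum-exp term in $L_t$ stays constant and
\[
L_{t+1} - L_t \;=\; \eta \cdot r(a)/p_t \;\geq\; \eta \cdot r(a) \;>\; 0,
\]
so $L_t \geq L_0 + \eta r(a)\, t$ and $1 - p_t \leq e^{-L_t} \leq e^{-L_0}\, e^{-\eta r(a) t}$ along this trajectory, using $r(a)>0$ from the hypothesis $r \in (0,1]^K$.

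Next I would evaluate $P(\mathcal{E})$ as a telescoping product. Because the history $(a_0, \ldots, a_{t-1})$ deterministically determines $\theta_t$ under \cref{update_rule:softmax_natural_pg_special_on_policy_stochastic_gradient}, we have $P(a_t = a \mid a_0 = \cdots = a_{t-1} = a) = p_t$, and therefore
\[
P(\mathcal{E}) \;=\; \prod_{t=0}^{\infty} p_t.
\]
The geometric bound $1 - p_t \leq e^{-L_0} e^{-\eta r(a) t}$ gives $\sum_t (1-p_t) < \infty$, and by the standard criterion for infinite products of terms in $(0,1]$ this forces $\prod_t p_t > 0$. Positivity of the initial factor $p_0$ is automatic since softmax produces strictly positive probabilities for any finite $\theta_0$, so $P(\mathcal{E}) > 0$.

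Finally, on $\mathcal{E}$ we have $p_t \to 1$, hence $\pi_{\theta_t}(a^*) \leq 1 - p_t \to 0$ and $\sum_{a' \neq a^*} \pi_{\theta_t}(a') \to 1$, which proves the proposition. The only real subtlety is the log-odds identity that linearizes the update by cancelling the normalizing constant along the constant-action trajectory; once that is in place, everything else reduces to a one-line infinite-product computation, with $r(a) > 0$ serving as the crucial ingredient that keeps $\prod_t p_t$ bounded away from zero.
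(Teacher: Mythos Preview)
Your proof is correct and follows essentially the same route as the paper's argument (cf.\ the discussion after \cref{prop:failure_probability_softmax_natural_pg_special_on_policy_stochastic_gradient} together with \cref{lem:npg_aggressiveness} and \cref{lem:positive_infinite_product}): fix a single sub-optimal action $a$, show that along the trajectory $a_t\equiv a$ the complement $1-\pi_{\theta_t}(a)$ decays geometrically (the paper bounds $\theta_t(a)$ directly while you work in log-odds, but the resulting bound $1-p_t\le C\,e^{-\eta r(a)t}$ is the same), and then invoke the infinite-product criterion $\sum_t(1-p_t)<\infty\Rightarrow\prod_t p_t>0$ to conclude $P(\mathcal{E})>0$. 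The log-odds linearization is a clean touch but does not change the substance of the argument.
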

Essentially \cref{prop:failure_probability_softmax_natural_pg_special_on_policy_stochastic_gradient} asserts that \cref{update_rule:softmax_natural_pg_special_on_policy_stochastic_gradient} is too aggressive: 
if sub-optimal actions are sampled $t$ times successively, their probabilities will become exponentially close to $1$; i.e., $1 - \sum_{a \not= a^*}{ \pi_{\theta_t}(a)} \in O(e^{- c \cdot t})$. 
It follows that $\prod_{t=1}^{\infty}{ \sum_{a \not= a^*}{ \pi_{\theta_t}(a)} } > 0$; that is, the on-policy sampling process $a_t \sim \pi_{\theta_t}(\cdot)$ has a non-zero probability of sampling sub-optimal actions forever, which implies that there is a positive probability that $\pi_{\theta_t}$ fails to converge to an optimal deterministic policy.

\subsection{Global Convergence with a Value Baseline}

Despite the above failure, we now prove that subtracting a value baseline rectifies the problem for NPG.
%
Consider the modified update that includes a baseline.
\begin{update_rule}[NPG, on-policy stochastic gradient with value baseline]
\label{update_rule:softmax_natural_pg_special_on_policy_stochastic_gradient_value_baseline}
$\theta_{t+1} \gets \theta_{t} + \eta \cdot \big( \hat{r}_t - \hat{b}_t \big)$, where $\pi_\theta(a)$ is by \cref{eq:softmax}, $\hat{b}_t(a) = \left( \frac{ \sI\left\{ a_t = a \right\} }{ \pi_{\theta_t}(a) } - 1 \right) \cdot b_t$ for all $a \in [K]$, and $b_t \coloneqq \pi_{\theta_t}^\top r$.
\end{update_rule}
Since $\softmax(\theta) = \softmax(\theta + c \cdot \rvone)$ for all $c \in \sR$, \cref{update_rule:softmax_natural_pg_special_on_policy_stochastic_gradient_value_baseline} is equivalent to the following update if $\hat{r}_t$ is by \cref{def:on_policy_importance_sampling}. Given the same $\pi_{\theta_t}$, \cref{update_rule:softmax_natural_pg_special_on_policy_stochastic_gradient_value_baseline,update_rule:equivalent_update_softmax_natural_pg_special_on_policy_stochastic_gradient_value_baseline} produce the same next policy $\pi_{\theta_{t+1}}$.
\begin{update_rule}
\label{update_rule:equivalent_update_softmax_natural_pg_special_on_policy_stochastic_gradient_value_baseline}
$\theta_{t+1}(a) \gets \theta_t(a) + \eta \cdot \frac{ \sI\left\{ a_t = a \right\} }{ \pi_{\theta_t}(a) } \cdot \left( x_t(a) - \pi_{\theta_t}^\top r \right)$, i.e., $\theta_{t+1}(a_t) \gets \theta_t(a_t) + \eta \cdot \frac{ x_t(a_t) - \pi_{\theta_t}^\top r }{ \pi_{\theta_t}(a_t) } $, and $\theta_{t+1}(a) \gets \theta_t(a)$ for all $a \not= a_t$.
\end{update_rule}
Unfortunately, the variance of this update is not uniformly bounded whenever $\pi_{\theta_t}(a)$ is close to $0$ for at least one action $a \in [K]$ (\cref{prop:softmax_natural_pg_variances}), therefore standard stochastic gradient analysis for bounded variance estimators \citep{nemirovski2009robust,zhang2020sample,lan2021policy,zhang2021convergence} cannot be applied. 
Instead, we develop two new techniques to establish global convergence results, both of which rely heavily on using baselines.

\cref{lem:non_uniform_lojasiewicz_stochastic_npg_value_baseline_special} provides the first key technique, which we refer to as the stochastic N\L{} inequality.
\begin{lemma}[Stochastic non-uniform \L{}ojasiewciz (N\L{})]
\label{lem:non_uniform_lojasiewicz_stochastic_npg_value_baseline_special}
Suppose \cref{assump:bounded_reward} holds. 
Let $r \in [0,1]^K$, $a^* \coloneqq \argmax_{a \in [K]}{ r(a) }$, and $\Delta \coloneqq r(a^*) - \max_{a \not= a^*}{ r(a) }$. Using \cref{update_rule:softmax_natural_pg_special_on_policy_stochastic_gradient_value_baseline} with on-policy sampling $a_t \sim \pi_{\theta_t}(\cdot)$ and IS estimator $\hat{r}_t$,
\begin{description}[style=unboxed,leftmargin=0cm]
    \item[(1)] if $\hat{r}_t$ is from \cref{def:simplified_on_policy_importance_sampling}, then with constant learning rate $\eta > 0$, we have, for all $t \ge 1$,
{\small
\begin{align}
\label{eq:non_uniform_lojasiewicz_stochastic_npg_value_baseline_special_deterministic_reward_result_1}
    \pi_{\theta_{t+1}}^\top r - \pi_{\theta_t}^\top r &\ge 0, \qquad \text{almost surely (a.s.),} \qquad \text{and} \\
    \EEt{\pi_{\theta_{t+1}}^\top r} - \pi_{\theta_t}^\top r &\ge \frac{ \eta }{ 1 + \eta } \cdot \textcolor{red}{ \pi_{\theta_t}(a^*) } \cdot \big( r(a^*) - \pi_{\theta_t}^\top r \big)^2,
\end{align}
}%
where $\EEt{\cdot}$ is on randomness from on-policy sampling $a_t \sim \pi_{\theta_t}(\cdot)$.
    \item[(2)] if $\hat{r}_t$ is from \cref{def:on_policy_importance_sampling}, then with learning rate,
{\small
\begin{align}
\label{eq:non_uniform_lojasiewicz_stochastic_npg_value_baseline_special_stochastic_reward_result_1}
    \eta = \frac{\pi_{\theta_t}(a_t) \cdot \left| r(a_t) - \pi_{\theta_t}^\top r \right|}{8 \cdot R_{\max}^2},
\end{align}
}%
\noindent we have, for all $t \ge 1$,
{\small 
\begin{align}
    \EEt{\pi_{\theta_{t+1}}^\top r} - \pi_{\theta_t}^\top r &\ge \frac{1}{16 \cdot R_{\max}^2} \cdot \sum_{i = 1}^{K} \pi_{\theta_t}(i)^2 \cdot \big| r(i) - \pi_{\theta_t}^\top r \big|^3
    \label{eq:non_uniform_lojasiewicz_stochastic_npg_value_baseline_special_stochastic_reward_result_2a}
    \\
    &\ge \frac{ 1 }{16 \cdot R_{\max}^2} \cdot \frac{\Delta}{K-1} \cdot  \textcolor{red}{\pi_{\theta_t}(a^*)^2} \cdot \big( r(a^*) - \pi_{\theta_t}^\top r \big)^2,
    \label{eq:non_uniform_lojasiewicz_stochastic_npg_value_baseline_special_stochastic_reward_result_2b}
\end{align}
}%
where $\EEt{\cdot}$ is on randomness from on-policy sampling $a_t \sim \pi_{\theta_t}(\cdot)$ and reward sampling $x \sim R_{a_t}$.
\end{description}
\end{lemma}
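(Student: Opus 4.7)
The plan is to start from a single closed-form identity for the one-step progress that applies to both parts, then specialize. Because \cref{update_rule:equivalent_update_softmax_natural_pg_special_on_policy_stochastic_gradient_value_baseline} shifts only the coordinate $\theta_t(a_t)$ by some quantity $\Delta_t$ and leaves the others unchanged, a direct softmax calculation yields
\begin{equation*}
    \pi_{\theta_{t+1}}^\top r - \pi_{\theta_t}^\top r
    = \frac{\pi_{\theta_t}(a_t)\bigl(e^{\Delta_t}-1\bigr)}{1 + \pi_{\theta_t}(a_t)\bigl(e^{\Delta_t}-1\bigr)}\cdot\bigl(r(a_t) - \pi_{\theta_t}^\top r\bigr).
\end{equation*}
For part~(1), the simplified estimator gives $\Delta_t = \eta(r(a_t)-\pi_{\theta_t}^\top r)/\pi_{\theta_t}(a_t)$, which shares its sign with $(r(a_t)-\pi_{\theta_t}^\top r)$, so the identity is pointwise nonnegative and the a.s.\ monotonicity follows. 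For the conditional expected bound, every $a_t$-summand is nonnegative, so I would keep only $a_t = a^*$, apply $e^{\Delta_*}-1 \ge \Delta_*$, and use monotonicity of $y \mapsto y/(1+y)$ together with $r(a^*)-\pi_{\theta_t}^\top r \le 1$ to reduce the fraction to $\eta(r(a^*)-\pi_{\theta_t}^\top r)/(1+\eta)$.

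For part~(2), the tailored learning rate \eqref{eq:non_uniform_lojasiewicz_stochastic_npg_value_baseline_special_stochastic_reward_result_1} cancels the IS weight: $\Delta_t = |r(a_t)-\pi_{\theta_t}^\top r|\,(x_t(a_t)-\pi_{\theta_t}^\top r)/(8R_{\max}^2)$, and consequently $|\Delta_t| \le 1/(4R_{\max})$ uniformly in $\pi_{\theta_t}$. This boundedness enables a controlled Taylor expansion $e^{\Delta_t}-1 = \Delta_t + O(\Delta_t^2)$ and a two-sided constant sandwich on $Z_t := 1 + \pi_{\theta_t}(a_t)(e^{\Delta_t}-1)$. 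Taking $\EE{\cdot \mid a_t}$ over $x_t(a_t)$, the first-order Taylor term averages to $|r(a_t)-\pi_{\theta_t}^\top r|(r(a_t)-\pi_{\theta_t}^\top r)/(8R_{\max}^2)$; multiplication by the outer $(r(a_t)-\pi_{\theta_t}^\top r)$ produces the target leading term $|r(a_t)-\pi_{\theta_t}^\top r|^3/(8R_{\max}^2)$, and the quadratic noise moment plus the $Z_t$ residual get absorbed into the degraded constant $16R_{\max}^2$. Taking the outer expectation over $a_t \sim \pi_{\theta_t}(\cdot)$ supplies an additional $\pi_{\theta_t}(a_t)$ factor, producing the $\pi_{\theta_t}(i)^2$ that appears in \eqref{eq:non_uniform_lojasiewicz_stochastic_npg_value_baseline_special_stochastic_reward_result_2a}.

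The coarser inequality \eqref{eq:non_uniform_lojasiewicz_stochastic_npg_value_baseline_special_stochastic_reward_result_2b} should then follow by a case split on the size of $r(a^*)-\pi_{\theta_t}^\top r$ relative to $\Delta/(K-1)$. When this gap is at least $\Delta/(K-1)$, the $i=a^*$ summand in \eqref{eq:non_uniform_lojasiewicz_stochastic_npg_value_baseline_special_stochastic_reward_result_2a} alone dominates via $|r(a^*)-\pi_{\theta_t}^\top r|^3 \ge \bigl(\Delta/(K-1)\bigr)\,(r(a^*)-\pi_{\theta_t}^\top r)^2$; when it is smaller, rewriting $r(a^*)-\pi_{\theta_t}^\top r = \sum_{i \ne a^*} \pi_{\theta_t}(i)(r(a^*)-r(i))$ and using that each suboptimal gap is at least $\Delta$ forces some $i_0 \ne a^*$ to carry enough mass and enough residual $|r(i_0)-\pi_{\theta_t}^\top r|$ for the $i_0$-summand to cover the target. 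I expect the delicate step to be not part~(1) but the joint Taylor control in part~(2): the numerator $e^{\Delta_t}-1$ and denominator $Z_t$ both depend on the same random $x_t(a_t)$, so the $O(|\Delta_t|^3)$ residuals must be shown to integrate against the reward noise to strictly smaller order than the principal $|r(a_t)-\pi_{\theta_t}^\top r|^3$ contribution; the specific $1/(8R_{\max}^2)$ scaling in \eqref{eq:non_uniform_lojasiewicz_stochastic_npg_value_baseline_special_stochastic_reward_result_1} is what makes this absorption go through and explains the factor-of-two loss in the final constant.
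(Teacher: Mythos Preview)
Your part~(1) argument is correct and is essentially the paper's proof.

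For \eqref{eq:non_uniform_lojasiewicz_stochastic_npg_value_baseline_special_stochastic_reward_result_2a} your linearization plan is the right idea, but the paper's implementation differs from your Taylor-plus-$Z_t$-sandwich. The paper proves a two-sided piecewise-linear bound directly on the full ratio $f_p(y)=(e^y-1)\big/\bigl(e^y+(1-p)/p\bigr)$, namely that $f_p(y)$ lies between $(1-\epsilon)py$ and $(1+\epsilon)py$ whenever $|y|\le\epsilon$ (\cref{lem:piecewise_linear_domination}), then splits the $x$-integral at $x=\pi_{\theta_t}^\top r$ and applies opposite sides of the sandwich on the two halves. The choice $\epsilon=\tfrac12\,|r(i)-\pi_{\theta_t}^\top r|\big/\int|x-\pi_{\theta_t}^\top r|\,P_i\,d\mu$ makes the integrated error exactly half the leading term, which is where the constant $16R_{\max}^2$ comes from. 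Your constant sandwich on $Z_t$ can be made to work, but only if you likewise apply it sign-dependently on $\Delta_t$; a single blanket bound does not lower-bound $\mathbb{E}[\Delta_t/Z_t]$ because $\Delta_t$ changes sign with $x$.

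Your derivation of \eqref{eq:non_uniform_lojasiewicz_stochastic_npg_value_baseline_special_stochastic_reward_result_2b} from \eqref{eq:non_uniform_lojasiewicz_stochastic_npg_value_baseline_special_stochastic_reward_result_2a} has a genuine gap: in your Case~2 a single pigeonholed $i_0$ cannot cover the target. For $K=2$ the unique sub-optimal summand equals $\tfrac{\Delta-\delta}{\Delta}$ times the target (with $\delta=r(a^*)-\pi_{\theta_t}^\top r<\Delta$), strictly short of $1$; symmetric $K\ge 3$ instances with equal sub-optimal rewards fail the same way. The paper's argument (\cref{lem:stochastic_natural_lojasiewicz_continuous_special}) keeps \emph{all} sub-optimal summands, applies Cauchy--Schwarz to obtain $\sum_{a\ne a^*}\pi(a)^2(\pi^\top r-r(a))^2\ge (K-1)^{-1}\bigl[\sum_{a\ne a^*}\pi(a)(\pi^\top r-r(a))\bigr]^2$, and then uses the identity $\pi(a^*)\bigl(r(a^*)-\pi^\top r\bigr)=\sum_{a\ne a^*}\pi(a)\bigl(\pi^\top r-r(a)\bigr)$ to recover $\pi(a^*)^2(r(a^*)-\pi^\top r)^2$; combining this with the $a^*$-summand gives the bound.
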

\begin{remark}
\label{rmk:learning_rate_decay_speed_special}
We have $\eta \in O(1/t)$ in \cref{eq:non_uniform_lojasiewicz_stochastic_npg_value_baseline_special_stochastic_reward_result_1} after knowing the convergence rate later.
\end{remark}
We refer to $\pi_{\theta_t}(a^*)^2$ in \cref{eq:non_uniform_lojasiewicz_stochastic_npg_value_baseline_special_stochastic_reward_result_2b} the \textbf{stochastic N\L{} coefficient}. 
\cref{lem:non_uniform_lojasiewicz_stochastic_npg_value_baseline_special} is a stochastic generalization of the N\L{} inequality, which has been widely used in proving global convergence of softmax PG variants \citep{mei2020global,mei2020escaping,mei2021leveraging,mei2021understanding,zhang2022effect}. 
It is stochastic since \cref{eq:non_uniform_lojasiewicz_stochastic_npg_value_baseline_special_stochastic_reward_result_2a} contains an expectation. 
It is non-uniform because \cref{eq:non_uniform_lojasiewicz_stochastic_npg_value_baseline_special_stochastic_reward_result_2b} depends on $\theta_t$, which cannot be uniformly lower bounded away from $0$ across the entire domain of $\theta \in \sR^K$ (that is, one can always find $\theta$ such that $\pi_{\theta}(a^*)$ is arbitrarily close to $0$).

The key idea of \cref{lem:non_uniform_lojasiewicz_stochastic_npg_value_baseline_special} is as follows. If $\hat{r}_t$ is from \cref{def:simplified_on_policy_importance_sampling}, then by algebra we have,
{\small 
\begin{align}
\label{eq:idea_of_non_uniform_lojasiewicz_stochastic_npg_value_baseline_special_stochastic_reward_1}
\textstyle
    \EEt{\pi_{\theta_{t+1}}^\top r} - \pi_{\theta_t}^\top r &=  \sum_{i = 1}^{K} \pi_{\theta_t}(i) \cdot \frac{ \left[ \exp\Big\{ \eta \cdot \frac{ \textcolor{blue}{ r(i) - \pi_{\theta_t}^\top r} }{\pi_{\theta_t}(i)} \Big\} - 1 \right] \cdot \left( \textcolor{blue}{r(i) - \pi_{\theta_t}^\top r }\right) }{ \exp\Big\{ \eta \cdot \frac{r(i) - \pi_{\theta_t}^\top r}{\pi_{\theta_t}(i)} \Big\} + \frac{ 1 - \pi_{\theta_t}(i) }{ \pi_{\theta_t}(i) } }.
\end{align}
}%
Since $\left( e^{c \cdot \textcolor{blue}{y}} - 1 \right) \cdot \textcolor{blue}{y} \ge 0$ for all $y \in \sR$ and $c > 0$, \cref{eq:idea_of_non_uniform_lojasiewicz_stochastic_npg_value_baseline_special_stochastic_reward_1} is non-negative (letting $y \coloneqq r(i) - \pi_{\theta_t}^\top r$ and $c \coloneqq \nicefrac{\eta}{ \pi_{\theta_t}(i) }$). However, this is not true if $\hat{r}_t$ is from \cref{def:on_policy_importance_sampling}, where we have,
{\small 
\begin{align}
\label{eq:idea_of_non_uniform_lojasiewicz_stochastic_npg_value_baseline_special_stochastic_reward_2}
    \EEt{\pi_{\theta_{t+1}}^\top r} - \pi_{\theta_t}^\top r = \sum_{i = 1}^{K} \pi_{\theta_t}(i) \cdot \int_{-R_{\max}}^{R_{\max}}{ \frac{ \left[ \exp\Big\{ \eta \cdot \frac{ \textcolor{red}{ x - \pi_{\theta_t}^\top r} }{\pi_{\theta_t}(i)} \Big\} - 1 \right] \cdot \left( \textcolor{blue}{ r(i) - \pi_{\theta_t}^\top r } \right) }{ \exp\Big\{ \eta \cdot \frac{x - \pi_{\theta_t}^\top r}{\pi_{\theta_t}(i)} \Big\} + \frac{ 1 - \pi_{\theta_t}(i) }{ \pi_{\theta_t}(i) } } \cdot P_i(x) \mu(d x)}.
\end{align}
}%
Note that $( e^{c \cdot \textcolor{red}{y^\prime}} - 1 ) \cdot \textcolor{blue}{y} < 0$ if $y^\prime \cdot y < 0$ and $c >0$ (letting $y^\prime \coloneqq x - \pi_{\theta_t}^\top r$, $y \coloneqq r(i) - \pi_{\theta_t}^\top r$, and $c \coloneqq \nicefrac{\eta}{ \pi_{\theta_t}(i) }$). For a ``good'' action ($r(i) - \pi_{\theta_t}^\top r > 0$), if unfortunately its sampled reward is ``bad'' ($x - \pi_{\theta_t}^\top r < 0$), then the update will make negative progress. Similar things happen for a ``bad'' action ($r(i) - \pi_{\theta_t}^\top r < 0$) with ``good'' sampled reward ($x - \pi_{\theta_t}^\top r > 0$). It is then necessary to use $\eta$ like \cref{eq:non_uniform_lojasiewicz_stochastic_npg_value_baseline_special_stochastic_reward_result_1}, to control the non-linear sigmoid-like functions in the progress by piecewise linear functions (\cref{lem:piecewise_linear_domination}) to get non-negative \textbf{expected} progresses. According to \cref{eq:non_uniform_lojasiewicz_stochastic_npg_value_baseline_special_stochastic_reward_result_2b}, we have  
\begin{align}
\label{eq:subm}
\EEt{\pi_{\theta_{t+1}}^\top r} - \pi_{\theta_t}^\top r \ge 0\,,
\end{align}
which implies that \cref{{update_rule:softmax_natural_pg_special_on_policy_stochastic_gradient_value_baseline}} achieves non-negative progress \textit{in expectation}. Combining \cref{lem:non_uniform_lojasiewicz_stochastic_npg_value_baseline_special} with Doob's supermartingale convergence theorem 
then leads to the following result.
\begin{corollary}
\label{cor:almost_sure_convergence_stochastic_npg_value_baseline_special}
The sequence $\{\pi_{\theta_t}^\top r\}_{t\ge 1}$ converges with probability one.
\end{corollary}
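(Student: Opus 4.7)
The plan is to reduce the statement directly to Doob's submartingale convergence theorem, using Lemma~\ref{lem:non_uniform_lojasiewicz_stochastic_npg_value_baseline_special} to verify the submartingale property. Let $\gF_t$ denote the natural filtration generated by $(a_1, x_1, \dots, a_{t-1}, x_{t-1})$, so that $\pi_{\theta_t}$ is $\gF_t$-measurable. First I would observe that $\pi_{\theta_t}^\top r$ is uniformly bounded: since $r \in [0,1]^K$ and $\pi_{\theta_t}$ is a probability distribution on $[K]$, we have $\pi_{\theta_t}^\top r \in [0,1]$ almost surely for every $t \ge 1$. In particular $\sup_t \expectation[|\pi_{\theta_t}^\top r|] \le 1 < \infty$, so the integrability hypothesis of the convergence theorem is trivially satisfied.

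Next I would verify the submartingale property. In the simplified-IS case, part~(1) of Lemma~\ref{lem:non_uniform_lojasiewicz_stochastic_npg_value_baseline_special} gives the stronger almost-sure monotonicity $\pi_{\theta_{t+1}}^\top r \ge \pi_{\theta_t}^\top r$, which trivially implies the conditional inequality $\EEt{\pi_{\theta_{t+1}}^\top r} \ge \pi_{\theta_t}^\top r$. In the general IS case of part~(2), the conclusion \eqref{eq:subm}, namely
\begin{align*}
\expectation\!\left[\pi_{\theta_{t+1}}^\top r \,\middle|\, \gF_t\right] - \pi_{\theta_t}^\top r \;\ge\; 0,
\end{align*}
follows from the nonnegativity of the right-hand side of \eqref{eq:non_uniform_lojasiewicz_stochastic_npg_value_baseline_special_stochastic_reward_result_2b}. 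Either way, $\{\pi_{\theta_t}^\top r\}_{t \ge 1}$ is a submartingale adapted to $\{\gF_t\}$.

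Finally I would invoke Doob's submartingale convergence theorem: any submartingale $\{X_t\}$ with $\sup_t \expectation[X_t^+] < \infty$ converges almost surely to an integrable limit. Our uniform bound $\pi_{\theta_t}^\top r \in [0,1]$ furnishes $\sup_t \expectation[(\pi_{\theta_t}^\top r)^+] \le 1$, so the hypothesis is met and we conclude that $\pi_{\theta_t}^\top r$ converges with probability one to some $[0,1]$-valued random variable. (Note that the theorem only gives existence of an almost-sure limit; identifying that limit as $r(a^*)$ is a stronger statement which would require combining the stochastic N\L{} coefficient with an exploration argument showing $\pi_{\theta_t}(a^*)$ does not vanish, and is not part of this corollary.)

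There is essentially no obstacle once Lemma~\ref{lem:non_uniform_lojasiewicz_stochastic_npg_value_baseline_special} is in hand: the work is entirely contained in establishing the conditional nonnegativity of the expected progress, which the lemma already does. The only mild subtlety is making sure to quote the correct version of Doob's theorem --- the submartingale (not supermartingale) form --- and to note that boundedness in $[0,1]$ makes the $L^1$-boundedness hypothesis automatic, so no truncation or stopping-time argument is needed.
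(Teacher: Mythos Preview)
Your argument is correct and matches the paper's proof essentially line for line: define the filtration, use Lemma~\ref{lem:non_uniform_lojasiewicz_stochastic_npg_value_baseline_special} to get the (sub)martingale inequality, note boundedness in $[0,1]$, and apply Doob's convergence theorem. The only cosmetic difference is that the paper passes to the gap $Y_t = r(a^*) - \pi_{\theta_t}^\top r$ and invokes the supermartingale form (\cref{thm:smc}), whereas you apply the equivalent submartingale form directly to $\pi_{\theta_t}^\top r$.
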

\cref{cor:almost_sure_convergence_stochastic_npg_value_baseline_special} asserts that, the random sequence $\pi_{\theta_t}^\top r$ produced by \cref{update_rule:softmax_natural_pg_special_on_policy_stochastic_gradient_value_baseline} asymptotically approaches some finite value (since $\pi_{\theta}^\top r \in [0, 1]$), ruling out the possibility of divergence (oscillating forever). However, this does not necessarily imply that $\pi_{\theta_t}^\top r \to r(a^*)$ as $t \to \infty$. A subtlety arises in bounding the stochastic N\L{} coefficient in \cref{eq:non_uniform_lojasiewicz_stochastic_npg_value_baseline_special_stochastic_reward_result_2a} away from $0$, which requires a second key technique.
\begin{lemma}[Non-vanishing stochastic N\L{} coefficient / ``automatic exploration'']
\label{lem:non_vanishing_nl_coefficient_stochastic_npg_value_baseline_special}
Using \cref{update_rule:softmax_natural_pg_special_on_policy_stochastic_gradient_value_baseline} with conditions in \cref{lem:non_uniform_lojasiewicz_stochastic_npg_value_baseline_special} and $\hat{r}_t$ from \cref{def:on_policy_importance_sampling}, for an arbitrary initialization $\theta_1 \in \sR^K$, we have,
\begin{align}
\label{eq:non_vanishing_nl_coefficient_stochastic_npg_value_baseline_special_result_1}
    c \coloneqq \inf_{t \ge 1} \pi_{\theta_t}(a^*) > 0, \qquad \text{almost surely (a.s.).}
\end{align}
\end{lemma}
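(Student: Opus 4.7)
The plan is to show that the non-negative process $Y_t \coloneqq -\log \pi_{\theta_t}(a^*)$ is bounded almost surely, which is equivalent to the claim $\inf_t \pi_{\theta_t}(a^*) > 0$ a.s. The strategy is to exhibit $Y_t$ as an ``almost supermartingale'' whose upward drift is summable along trajectories, and then invoke the Robbins--Siegmund convergence theorem. Note first that by the choice of $\eta$ in \cref{eq:non_uniform_lojasiewicz_stochastic_npg_value_baseline_special_stochastic_reward_result_1} together with \cref{assump:bounded_reward}, the per-coordinate update $\delta_t^{(a_t)} \coloneqq (\theta_{t+1}(a_t)-\theta_t(a_t))$ is a.s.\ bounded by a deterministic constant $D$, so the increments of $Y_t$ are also bounded by a deterministic constant.

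Since only the $a_t$-th coordinate of $\theta_t$ changes, a direct algebraic computation from the softmax definition yields the closed form
\begin{align*}
Y_{t+1} - Y_t = \log\!\bigl(1 + \pi_{\theta_t}(a_t)(e^{\delta_t^{(a_t)}}-1)\bigr) - \mathbb{I}\{a_t = a^*\}\cdot \delta_t^{(a^*)}.
\end{align*}
Expanding $\log(1+u) = u - u^2/2 + O(u^3)$ and $e^{\delta}-1 = \delta + O(\delta^2)$ (valid because $|\delta_t^{(a_t)}| \le D$) and taking conditional expectation over $a_t \sim \pi_{\theta_t}$ and the reward sample $x_t \sim R_{a_t}$, the first-order terms collect into the ``drift''
$\sum_{a\ne a^*} \pi_{\theta_t}(a)^2 \cdot \tfrac{|r(a)-\pi_{\theta_t}^\top r|(r(a)-\pi_{\theta_t}^\top r)}{8 R_{\max}^2} - \pi_{\theta_t}(a^*)(1-\pi_{\theta_t}(a^*))\cdot \tfrac{(r(a^*)-\pi_{\theta_t}^\top r)^2}{8 R_{\max}^2}$, while the second-order remainder is $\mathcal{O}\bigl(\sum_a \pi_{\theta_t}(a)\,\E[(\delta_t^{(a)})^2 \mid a_t{=}a]\bigr)$.

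Step three, the heart of the argument, bounds the \emph{positive} part of this conditional drift by a multiple of the one-step expected progress $\EEt{\pi_{\theta_{t+1}}^\top r} - \pi_{\theta_t}^\top r$. The key observation is that terms with $r(a) < \pi_{\theta_t}^\top r$ contribute \emph{negatively} (the baseline automatically pushes such $\theta_t(a)$ down), so only sub-optimal actions with $r(a) \in (\pi_{\theta_t}^\top r, r(a^*))$ can increase $Y_t$; for such $a$, however, $|r(a)-\pi_{\theta_t}^\top r| \le r(a^*)-\pi_{\theta_t}^\top r$, which lets us bound the unfavorable drift by a constant multiple of $\sum_{a} \pi_{\theta_t}(a)^2 |r(a)-\pi_{\theta_t}^\top r|^3$. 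By \cref{eq:non_uniform_lojasiewicz_stochastic_npg_value_baseline_special_stochastic_reward_result_2a} this is at most $16 R_{\max}^2 \cdot (\EEt{\pi_{\theta_{t+1}}^\top r} - \pi_{\theta_t}^\top r)$. A similar estimate handles the second-order remainder, so we obtain $\EEt{Y_{t+1}} \le Y_t + C\cdot (\EEt{\pi_{\theta_{t+1}}^\top r} - \pi_{\theta_t}^\top r)$ for a constant $C>0$.

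Finally, since $\pi_{\theta_t}^\top r \in [0,1]$ and $\EEt{\pi_{\theta_{t+1}}^\top r} \ge \pi_{\theta_t}^\top r$ by \cref{eq:subm}, the telescoping sum $\sum_t (\EEt{\pi_{\theta_{t+1}}^\top r} - \pi_{\theta_t}^\top r)$ is a.s.\ finite. Robbins--Siegmund applied to the non-negative process $Y_t$ with summable positive drift $\xi_t = C\cdot(\EEt{\pi_{\theta_{t+1}}^\top r} - \pi_{\theta_t}^\top r)$ and zero negative drift then gives $Y_t \to Y_\infty < \infty$ a.s., and combining with bounded increments yields $\sup_t Y_t < \infty$ a.s., i.e., $c = \inf_t \pi_{\theta_t}(a^*) > 0$ almost surely. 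The main obstacle is the cancellation in Step three: a naive Jensen-style bound would replace $|r(a)-\pi_{\theta_t}^\top r|^3$ by $|r(a)-\pi_{\theta_t}^\top r|^2$, which is \emph{not} summable in $t$; it is precisely the sub-optimality comparison $|r(a)-\pi_{\theta_t}^\top r| \le r(a^*) - \pi_{\theta_t}^\top r$ and the baseline's sign-selecting effect that allow the upward drift to be absorbed into the progress bound of \cref{lem:non_uniform_lojasiewicz_stochastic_npg_value_baseline_special}.
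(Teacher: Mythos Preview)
Your Lyapunov–Robbins–Siegmund route is attractive and genuinely different from the paper's proof (which proceeds by contradiction: it first shows via \cref{cor:almost_sure_convergence_stochastic_npg_value_baseline_special} and the ``variance-like'' progress bound that $\pi_{\theta_t}$ must approach a generalized one-hot policy, and then rules out sub-optimal one-hot limits by a case analysis on whether good actions are sampled finitely or infinitely often, using self-normalized martingale concentration on the parameter increments). Your approach, if it went through, would be shorter and more quantitative.

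However, Step~3 has a real gap. Up to constants, the conditional upward drift of $Y_t$ that needs to be controlled is of order
\[
\sum_{a}\pi_{\theta_t}(a)^2\,\bigl(r(a)-\pi_{\theta_t}^\top r\bigr)^2,
\]
both from the first-order ``positive'' actions and from the second-order remainder $\sum_a \pi_{\theta_t}(a)^2(1-\pi_{\theta_t}(a))\,\E[(\delta_t^{(a)})^2]$. The only progress bound you have from \cref{lem:non_uniform_lojasiewicz_stochastic_npg_value_baseline_special} is
\[
\EEt{\pi_{\theta_{t+1}}^\top r}-\pi_{\theta_t}^\top r\;\ge\;\tfrac{1}{16 R_{\max}^2}\sum_{a}\pi_{\theta_t}(a)^2\,\bigl|r(a)-\pi_{\theta_t}^\top r\bigr|^3,
\]
which is \emph{cubic} in the advantage. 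The termwise ratio (drift$/$progress) is $\sim 1/|r(a)-\pi_{\theta_t}^\top r|$, and this is unbounded along trajectories on which $\pi_{\theta_t}^\top r\to r(a)$ for some $a$ (exactly the bad trajectories you are trying to exclude). The inequality you invoke, $|r(a)-\pi_{\theta_t}^\top r|\le r(a^*)-\pi_{\theta_t}^\top r$, bounds a positive-advantage term \emph{from above}, not from below; it lets you replace a square by something \emph{larger}, not convert a square into a cube. So the sentence ``lets us bound the unfavorable drift by a constant multiple of $\sum_a \pi_{\theta_t}(a)^2|r(a)-\pi_{\theta_t}^\top r|^3$'' does not follow, and the same issue reappears in the second-order remainder you dismiss with ``a similar estimate''. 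In short, you correctly diagnose the obstacle in your final paragraph, but the proposed resolution does not close it.

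If you want to rescue the Lyapunov approach, you would need either (i) a sign inequality showing the \emph{full} first-order drift is nonpositive (your numerical pattern suggests this may be true, but it requires proof and still leaves the second-order term), or (ii) a direct $t$-summability argument for $\sum_a \pi_{\theta_t}(a)^2\hat r_t(a)^2$ that does not go through the cubic progress bound. The paper sidesteps this by never trying to control a Lyapunov drift uniformly; instead it localizes to the event $\{\pi_{\theta_t}^\top r\to r(i),\ i\ne a^*\}$ and argues on that event that the parameters of actions in $\gA(i)$ stay bounded (via a self-normalized bound on the martingale noise plus the sign of the ``progress'' increments after a finite time $\tau$), contradicting $\sum_{j\in\gA(i)}\pi_{\theta_t}(j)\to 1$.
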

\cref{lem:non_uniform_lojasiewicz_stochastic_npg_value_baseline_special,lem:non_vanishing_nl_coefficient_stochastic_npg_value_baseline_special} together guarantee that  $\pi_{\theta_t}^\top r \to r(a^*)$ as $t \to \infty$. In fact, using the ``variance-like'' expected progress (\cref{eq:non_uniform_lojasiewicz_stochastic_npg_value_baseline_special_stochastic_reward_result_2a}), \cref{cor:almost_sure_convergence_stochastic_npg_value_baseline_special} implies that $\pi_{\theta_t}$ approaches a ``generalized one-hot policy'' as $t \to \infty$. \cref{lem:non_vanishing_nl_coefficient_stochastic_npg_value_baseline_special} then argues by contradiction that $\pi_{\theta_t}$ cannot approach a sub-optimal ``generalized one-hot policy'' as $t \to \infty$, which will imply that the optimal action's probability must approach $1$ and achieve \cref{eq:non_vanishing_nl_coefficient_stochastic_npg_value_baseline_special_result_1}. 
Proof details in the appendix and intuitions in \cref{sec:stochastic_policy_optimization_circles} reveal that  \cref{update_rule:softmax_natural_pg_special_on_policy_stochastic_gradient_value_baseline} achieves a form of ``automatic exploration'' by using a baseline, i.e., maintaining $\pi_{\theta_t}(a)$ decay no faster than $O(1/t)$, such that every action will be sampled infinitely many times in a long run.
Finally, combining \cref{lem:non_uniform_lojasiewicz_stochastic_npg_value_baseline_special,lem:non_vanishing_nl_coefficient_stochastic_npg_value_baseline_special}, we establish not only asymptotic convergence of NPG to a global optimum, but also a global convergence rate of $O(1/t)$ in terms of the sub-optimality gap.
\begin{theorem}[Almost sure global convergence rate]
\label{thm:almost_sure_convergence_rate_stochastic_npg_special_value_baseline}
Using \cref{update_rule:softmax_natural_pg_special_on_policy_stochastic_gradient_value_baseline} with on-policy sampling $a_t \sim \pi_{\theta_t}(\cdot)$, the IS estimator $\hat{r}_t$ in \cref{def:on_policy_importance_sampling},  $\eta$ in \cref{eq:non_uniform_lojasiewicz_stochastic_npg_value_baseline_special_stochastic_reward_result_1}, and any initialization $\theta_1 \in \sR^K$ , we have,
\begin{align}
    \EE{ \left( \pi^* - \pi_{\theta_t} \right)^\top r } \le \frac{16 \cdot R_{\max}^2 }{ \Delta \cdot \EE{ c^2 } } \cdot \frac{K-1}{t},& \qquad \text{and} \\
    \limsup_{t \ge 1} \bigg\{ \frac{ \Delta \cdot c^2 }{16 \cdot R_{\max}^2 } \cdot \frac{t}{K-1} \cdot \left( \pi^* - \pi_{\theta_t} \right)^\top r \bigg\} < \infty,& \qquad \text{a.s.},
\end{align}
where $\pi^* \coloneqq \argmax_{\pi \in \Delta(K)}{ \pi^\top r}$ is the optimal policy, $R_{\max}$ is the sampled reward range from \cref{assump:bounded_reward}, $\Delta \coloneqq r(a^*) - \max_{a \not= a^*}{ r(a) }$ is the reward gap of $r$, and $c > 0$ is from \cref{lem:non_vanishing_nl_coefficient_stochastic_npg_value_baseline_special}.
\end{theorem}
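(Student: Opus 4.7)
The plan is to combine the stochastic N\L{} inequality of \cref{lem:non_uniform_lojasiewicz_stochastic_npg_value_baseline_special} with the non-vanishing lower bound of \cref{lem:non_vanishing_nl_coefficient_stochastic_npg_value_baseline_special} to obtain a scalar recursion on the sub-optimality gap, and then invert this recursion to extract the $O(1/t)$ rate. Let $\delta_t := (\pi^* - \pi_{\theta_t})^\top r \ge 0$ and write $A := \Delta / (16 R_{\max}^2 (K-1))$. Part (2) of \cref{lem:non_uniform_lojasiewicz_stochastic_npg_value_baseline_special} supplies
\[
\EEt{\delta_{t+1}} \le \delta_t - A \cdot \pi_{\theta_t}(a^*)^2 \cdot \delta_t^2,
\]
and \cref{lem:non_vanishing_nl_coefficient_stochastic_npg_value_baseline_special} gives $\pi_{\theta_t}(a^*) \ge c > 0$ uniformly in $t$ on a probability-one event, yielding the cleaner pathwise bound $\EEt{\delta_{t+1}} \le \delta_t - A c^2 \delta_t^2$.

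For the in-expectation rate, the clean route is to condition on $\gG := \sigma(c)$, set $Z_t := \EE{\delta_t \mid \gG}$, and apply Jensen's inequality ($\EE{\delta_t^2 \mid \gG} \ge Z_t^2$) together with the tower property to reduce to the deterministic-looking recursion $Z_{t+1} \le Z_t - A c^2 Z_t^2$. Since the quadratic correction is non-negative, $Z_{t+1} \le Z_t$; dividing by $Z_t Z_{t+1}$ gives $1/Z_{t+1} \ge 1/Z_t + A c^2 \cdot Z_t / Z_{t+1} \ge 1/Z_t + A c^2$, and telescoping yields $Z_t \le 1/(t A c^2)$. Taking outer expectation and rearranging the factor carrying $c$ delivers the stated $\frac{16 R_{\max}^2 (K-1)}{\Delta \cdot \EE{c^2} \cdot t}$ bound.

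For the almost sure limsup statement, I would work pathwise on the event $\{c > 0\}$. Decomposing $\delta_{t+1} = \EEt{\delta_{t+1}} + M_{t+1}$ with $M_{t+1}$ a bounded martingale difference, the pathwise version of \cref{lem:non_uniform_lojasiewicz_stochastic_npg_value_baseline_special} provides a Robbins–Siegmund-type inequality; combined with the a.s.\ convergence from \cref{cor:almost_sure_convergence_stochastic_npg_value_baseline_special}, telescoping gives $\sum_{t} c^2 \delta_t^2 < \infty$ a.s.\ on $\{c > 0\}$. A second, pathwise application of the reciprocal trick — with the martingale noise controlled by a square-integrable martingale argument, since the learning rate in \cref{eq:non_uniform_lojasiewicz_stochastic_npg_value_baseline_special_stochastic_reward_result_1} is $O(\pi_{\theta_t}(a_t))$ and precisely contracts the step size where the estimator is most volatile — lifts this square-summability to $\limsup_t t \cdot \delta_t < \infty$ a.s.\ on $\{c > 0\}$, which is the second claim.

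The main obstacle is the non-adaptedness of $c$: since $c = \inf_{s \ge 1} \pi_{\theta_s}(a^*)$ depends on the entire trajectory, it cannot be pulled out of $\EEt{\cdot}$ directly, and the naive bound $\EE{1/c^2}$ obtained by conditioning on $\gG$ is weaker than the desired $1/\EE{c^2}$ by Jensen. Resolving this requires the careful $\sigma(c)$-conditioning trick described above so that the outer expectation over $c$ combines with the reciprocal trick in the correct order. For the a.s.\ rate, the complementary difficulty is ruling out dominance by the martingale noise $M_{t+1}$; the shrinking learning rate of \cref{rmk:learning_rate_decay_speed_special} (which is $O(1/t)$ in steady state) and the summability of $c^2 \delta_t^2$ together provide the needed control.
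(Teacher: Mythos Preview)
For the first part, your skeleton---derive a quadratic one-step bound from \cref{lem:non_uniform_lojasiewicz_stochastic_npg_value_baseline_special}, replace $\pi_{\theta_t}(a^*)$ by $c$ via \cref{lem:non_vanishing_nl_coefficient_stochastic_npg_value_baseline_special}, then invert with the reciprocal telescoping trick---matches the paper. But the $\sigma(c)$-conditioning you propose to handle the non-adaptedness of $c$ does not work as written. The tower step $\EE{\EEt{\delta_{t+1}}\mid\gG}=Z_{t+1}$ fails because $\gG=\sigma(c)\not\subset\cF_t$ (recall $c=\inf_{s\ge 1}\pi_{\theta_s}(a^*)$ depends on the whole trajectory), so the recursion $Z_{t+1}\le Z_t-Ac^2Z_t^2$ is not justified. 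And even granting it, taking the outer expectation of $Z_t\le 1/(tAc^2)$ yields $\EE{\delta_t}\le\tfrac{1}{tA}\,\EE{1/c^2}$, precisely the bound you flag as too weak---so your last paragraph is circular, since the ``careful $\sigma(c)$-conditioning trick'' you invoke to fix this \emph{is} the conditioning that produced $\EE{1/c^2}$. The paper sidesteps all of this: it takes the unconditional expectation of the one-step inequality directly, writing $\EE{\delta_t}-\EE{\delta_{t+1}}\ge A\,\EE{c^2}\,\EE{\delta_t^2}\ge A\,\EE{c^2}\,(\EE{\delta_t})^2$ (the second step by Jensen), and then runs the reciprocal trick on $1/\EE{\delta_t}$ with no conditioning on $c$ at any stage.

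For the second part your plan diverges substantially from the paper and is both heavier and incomplete. The paper does no pathwise martingale decomposition, no Robbins--Siegmund argument, and no pathwise reciprocal trick. It applies a short Fatou-type lemma (\cref{lem:problemma}): if $X_t\in[0,1]$, $X_t\to 0$ a.s., and $\EE{X_t}\le 1/f(t)$ with $f(t)\to\infty$, then $\limsup_t f(t)X_t<\infty$ a.s. Plugging in $X_t=\delta_t$ (whose a.s.\ convergence to $0$ is established inside the proof of \cref{lem:non_vanishing_nl_coefficient_stochastic_npg_value_baseline_special}) and $f(t)=A\,\EE{c^2}\,t$ from the first part yields the claim in one line; the random prefactor $c^2$ appearing in the theorem statement is harmless because $c>0$ a.s.\ and rescaling by a finite positive constant preserves finiteness of the limsup. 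Your route via $\sum_t c^2\delta_t^2<\infty$ is insufficient on its own---square-summability does not imply $\limsup_t t\,\delta_t<\infty$ (e.g.\ take $\delta_{2^k}=k^{-1/2}$ and $\delta_t$ negligible off this subsequence)---and the ``pathwise reciprocal trick with martingale control'' would require considerably more structure than you sketch.
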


\subsection{General MDPs}

Next, we generalize these results to finite Markov decision processes (MDPs).
Given a finite set $\gX$, let $\Delta(\gX)$ denote the set of all probability distributions on $\gX$. 
A finite MDP is defined as a tuple $\gM \coloneqq (\gS, \gA, r, \gP, \gamma)$, where $\gS$ and $\gA$ are finite state and action spaces, respectively. $r: \gS \times \gA \to \sR$ is the expected reward function, $\gP: \gS \times \gA \to \Delta(\gS)$ is the probability transition function, and $\gamma \in [0, 1)$ is the discount factor. 
We also extend \cref{assump:bounded_reward} to every $(s,a)\in\gS\times\gA$ and assume there is a reward distribution $R_{s,a}$ with expectation $r(s,a)$, uniformly bounded within $[-R_{\max}, R_{\max}]$.
Given a policy $\pi: \gS \to \Delta(\gA)$, at each time $t \ge 0$, an agent is given a state $s_t \in \gS$, takes an action $a_t \sim \pi(\cdot | s_t)$, then receives a scalar reward observation $x(s_t, a_t)\sim R_{s_t,a_t}$ and a next-state $s_{t+1} \sim \gP( \cdot | s_t, a_t)$. 
The value function of $\pi$ at state $s$ is defined as
{\small 
\begin{align}
\label{eq:state_value_function}
    V^\pi(s) \coloneqq \expectation_{\substack{a_t \sim \pi(\cdot | s_t), \\ s_{t+1} \sim \gP( \cdot | s_t, a_t)}}{\left[ \sum_{t=0}^{\infty}{\gamma^t r(s_t, a_t)} \ \bigg| \ s_0 = s \right]}.
\end{align}
}%
The policy optimization problem for a general MDP is to maximize the expected value of the policy,
{\small
\begin{align}
\label{eq:expected_state_value_function}
    \max_{\theta : \gS \times \gA \to \sR}{ V^{\pi_\theta}(\rho) } \coloneqq \max_{\theta : \gS \times \gA \to \sR}\;{ \expectation_{s \sim \rho(\cdot)}{ \left[ V^{\pi_\theta}(s) \right]} },
\end{align}
}%
where $\rho \in \Delta(\gS)$ is an initial state distribution, and $\pi_\theta(\cdot | s) = \softmax(\theta(s, \cdot))$,
{\small
\begin{align}
\label{eq:softmax_transform_general}
    \pi_\theta(a | s) = \frac{ \exp\{ \theta(s, a) \} }{ \sum_{a^\prime \in \gA}{ \exp\{ \theta(s, a^\prime) \} } }, \text{ for all } (s, a) \in \gS \times \gA.
\end{align}
}%

Given a policy $\pi$, its state-action value is defined as $Q^\pi(s, a) \coloneqq r(s, a) + \gamma \cdot \sum_{s^\prime}{ \gP( s^\prime | s, a) \cdot V^\pi(s^\prime) }$, and its advantage function is defined as $A^\pi(s,a) \coloneqq Q^\pi(s, a) - V^\pi(s)$, for $(s,a) \in \gS \times \gA$. 
The state distribution of $\pi$ is defined as $d_{s_0}^{\pi}(s) \coloneqq (1 - \gamma) \cdot \sum_{t=0}^{\infty}{ \gamma^t \cdot \probability(s_t = s | s_0, \pi, \gP) }$. We also denote $d_{\rho}^{\pi}(s) \coloneqq \expectation_{s_0 \sim \rho(\cdot)}{\left[ d_{s_0}^{\pi}(s) \right]}$. Given $\rho$, there exists an optimal policy $\pi^*$ such that $V^{\pi^*}(\rho) = \max_{\pi : \gS \to \Delta(\gA)}{ V^{\pi}(\rho) }$. We denote $V^*(\rho) \coloneqq V^{\pi^*}(\rho)$ for conciseness.

For a general MDP, we assume the initial state distribution $\mu$ is ``sufficiently exploratory'' \citep{agarwal2021theory,mei2020global,laroche2021dr}.
\begin{assumption}[Sufficient exploration]
\label{assump:pos_init} 
The initial state distribution satisfies $\min_s \mu(s) > 0$.
\end{assumption}
At iteration $t$, the NPG method uses the current state distribution to sample one state $s_t \sim d_{\mu}^{\pi_{\theta_t}}(\cdot)$, then uses on-policy sampling to sample one action $a_t \sim \pi_{\theta_t}(\cdot | s)$. 
For the sampled state action pair $(s_t, a_t) \in \gS \times \gA$, the state-action value $Q^{\pi_{\theta_t}}(s_t,a_t)$ is then used to perform update. 
The current state value function $V^{\pi_{\theta_t}}(s_t)$ is used as the baseline, as shown in \cref{alg:softmax_natural_pg_general_on_policy_stochastic_gradient_deterministic_value}.

\begin{figure}[ht]
\centering
\vskip -0.2in
\begin{minipage}{.6\linewidth}
    \begin{algorithm}[H]
    \caption{NPG, on-policy stochastic natural gradient}
    \label{alg:softmax_natural_pg_general_on_policy_stochastic_gradient_deterministic_value}
    \begin{algorithmic}
    \STATE {\bfseries Input:} Learning rate $\eta > 0$.
    \STATE {\bfseries Output:} Policies $\pi_{\theta_t} = \softmax(\theta_t)$.
    \STATE Initialize parameter $\theta_1(s,a)$ for all $(s,a) \in \gS \times \gA$.
    \WHILE{$t \ge 1$}
    \STATE Sample $s_t \sim d_{\mu}^{\pi_{\theta_t}}(\cdot)$, and $a_t \sim \pi_{\theta_t}(\cdot | s_t)$.
    \STATE $\theta_{t+1}(s_t, a_t) \gets \theta_{t}(s_t, a_t) + \eta \cdot \frac{ \textcolor{red}{ Q^{\pi_{\theta_{t}}}(s_t, a_t)} - V^{\pi_{\theta_t}}(s_t)}{ \pi_{\theta_t}(a_t | s_t) }$.
    \ENDWHILE
    \end{algorithmic}
    \end{algorithm}
\end{minipage}
\end{figure}

According to the performance difference lemma, 
we have,
{\small 
\begin{align}
    V^{\pi_{\theta_{t+1}}}(\mu) - V^{\pi_{\theta_t}}(\mu) = \frac{1}{1 - \gamma} \cdot \sum_{s} d_{\mu}^{\pi_{\theta_{t+1}}}(s) \cdot \sum_{a} \left( \pi_{\theta_{t+1}}(a | s) - \pi_{\theta_{t}}(a | s) \right) \cdot Q^{\pi_{\theta_{t}}}(s, a),
\end{align}
}%
where the inner summation over actions is similar to $\left( \pi_{\theta_{t+1}} - \pi_{\theta_t} \right)^\top r$ in one-state MDPs. 
This connection allows us to generalize \cref{lem:non_uniform_lojasiewicz_stochastic_npg_value_baseline_special} to the following result.
\begin{lemma}[Stochastic N\L{}]
\label{lem:non_uniform_lojasiewicz_stochastic_npg_value_baseline_general}
Using \cref{alg:softmax_natural_pg_general_on_policy_stochastic_gradient_deterministic_value} with constant $\eta > 0$, we have, for all $t \ge 1$, 
\begin{equation}
    V^{\pi_{\theta_{t+1}}}(s_0) - V^{\pi_{\theta_t}}(s_0) \ge 0,  \qquad \text{ a.s., } \qquad \forall s_0 \in \gS, \qquad \text{and}
\vspace{-15pt}
\end{equation}
\begin{equation}
\resizebox{.9\hsize}{!}{$\EEt{ V^{\pi_{\theta_{t+1}}}(\mu) } - V^{\pi_{\theta_t}}(\mu) \ge \frac{\eta \cdot \left( 1 - \gamma \right)^4 \cdot \min_{s}{\mu(s) } }{1 + \eta}  \cdot \Big\| \frac{d_{\mu}^{\pi^*}}{\mu} \Big\|_\infty^{-1} \cdot \frac{ \textcolor{red}{ \min_{s}{ \pi_{\theta_t}(a^*(s) | s)^2 } } }{S} \cdot \big( V^{\pi^*}(\mu) - V^{\pi_{\theta_t}}(\mu) \big)^2.$}
\end{equation}
where $\EEt{\cdot}$ is on randomness from state sampling $s_t \sim d_\mu^{\pi_{\theta_t}}(\cdot)$, on-policy sampling $a_t \sim \pi_{\theta_t}(\cdot | s_t)$, and $a^*(s)$ is the action selected by the optimal policy $\pi^*$ under state $s$.
\end{lemma}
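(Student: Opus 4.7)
The plan is to exploit the one-coordinate sparsity of \cref{alg:softmax_natural_pg_general_on_policy_stochastic_gradient_deterministic_value} to reduce the MDP analysis to the one-state problem already handled by \cref{lem:non_uniform_lojasiewicz_stochastic_npg_value_baseline_special}. Since the update touches only $\theta_t(s_t, a_t)$, we have $\pi_{\theta_{t+1}}(\cdot\,|\,s) = \pi_{\theta_t}(\cdot\,|\,s)$ for every $s \neq s_t$. At $s = s_t$, the update coincides exactly with \cref{update_rule:softmax_natural_pg_special_on_policy_stochastic_gradient_value_baseline} applied to a one-state bandit whose reward vector is $Q^{\pi_{\theta_t}}(s_t, \cdot)$ and whose value baseline is $V^{\pi_{\theta_t}}(s_t) = \pi_{\theta_t}(\cdot\,|\,s_t)^\top Q^{\pi_{\theta_t}}(s_t,\cdot)$; so every bound on one-state progress at $s_t$ is available off the shelf.

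For the almost sure monotonicity claim, I would apply the first inequality of \cref{lem:non_uniform_lojasiewicz_stochastic_npg_value_baseline_special}(1) at state $s_t$, giving $\pi_{\theta_{t+1}}(\cdot\,|\,s_t)^\top Q^{\pi_{\theta_t}}(s_t, \cdot) \ge V^{\pi_{\theta_t}}(s_t)$ almost surely. Since the policy is unchanged at all other states, this is a one-step improvement with respect to the Bellman policy-evaluation operator of $\pi_{\theta_t}$, so the standard policy improvement theorem yields $V^{\pi_{\theta_{t+1}}}(s_0) \ge V^{\pi_{\theta_t}}(s_0)$ for every $s_0 \in \gS$, almost surely.

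For the expected-progress inequality, I would apply the performance difference lemma; by sparsity only the summand at $s_t$ survives, so
\begin{equation*}
V^{\pi_{\theta_{t+1}}}(\mu) - V^{\pi_{\theta_t}}(\mu) = \frac{d_\mu^{\pi_{\theta_{t+1}}}(s_t)}{1-\gamma} \, P_t(s_t),
\end{equation*}
where $P_t(s_t)$ is the local one-step policy improvement at $s_t$. The standard bound $d_\mu^{\pi_{\theta_{t+1}}}(s_t) \ge (1-\gamma)\mu(s_t)$ cancels the $1/(1-\gamma)$. Conditioning on $s_t$ and using the second inequality of \cref{lem:non_uniform_lojasiewicz_stochastic_npg_value_baseline_special}(1) gives $\expectation_{a_t\,|\,s_t}[P_t(s_t)] \ge \frac{\eta}{1+\eta} \pi_{\theta_t}(a^*(s_t)\,|\,s_t)\, A_t(s_t)^2$ with $A_t(s) := Q^{\pi_{\theta_t}}(s, a^*(s)) - V^{\pi_{\theta_t}}(s) \ge 0$ (nonnegativity follows from $V^{\pi^*} \ge V^{\pi_{\theta_t}}$ and Bellman optimality). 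Averaging over $s_t \sim d_\mu^{\pi_{\theta_t}}$ and invoking $d_\mu^{\pi_{\theta_t}}(s) \ge (1-\gamma)\min_s \mu(s)$ from \cref{assump:pos_init} reduces the claim to a lower bound on $\sum_s \mu(s) \pi_{\theta_t}(a^*(s)\,|\,s)^2 A_t(s)^2$.

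The final step is to invert this weighted quadratic sum into a multiple of $(V^*(\mu) - V^{\pi_{\theta_t}}(\mu))^2$. Writing the performance difference in the opposite direction, $V^*(\mu) - V^{\pi_{\theta_t}}(\mu) = \frac{1}{1-\gamma}\sum_s d_\mu^{\pi^*}(s) A_t(s)$, I would apply Cauchy--Schwarz with the splitting $d_\mu^{\pi^*}(s) A_t(s) = \frac{d_\mu^{\pi^*}(s)}{\sqrt{\mu(s)}\,\pi_{\theta_t}(a^*(s)\,|\,s)} \cdot \sqrt{\mu(s)}\,\pi_{\theta_t}(a^*(s)\,|\,s) A_t(s)$, and bound the first factor by $\|d_\mu^{\pi^*}/\mu\|_\infty / \min_s \pi_{\theta_t}(a^*(s)\,|\,s)^2$, while a further uniform-in-$s$ Cauchy--Schwarz supplies the $1/S$ factor. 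Rearranging then delivers the stated inequality. The main obstacle is precisely this last step: arranging the Cauchy--Schwarz pairings so that the distribution-mismatch coefficient appears only to the first power, the optimal-action probability enters as $\min_s \pi_{\theta_t}(a^*(s)\,|\,s)^2$, and the state-count factor $1/S$ appears, without spending extra powers of $(1-\gamma)$ along the way.
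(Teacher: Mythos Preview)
Your overall strategy---exploit the single-coordinate update, reduce to the one-state bandit calculation at $s_t$, then combine with the performance difference lemma and Cauchy--Schwarz---is the same as the paper's. But there are two concrete slip-ups, and one worry you can drop.

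The main gap is your claim that $A_t(s) = Q^{\pi_{\theta_t}}(s,a^*(s)) - V^{\pi_{\theta_t}}(s) \ge 0$, which you use to put $a^*(s)$ in the role of the optimal arm when invoking \cref{lem:non_uniform_lojasiewicz_stochastic_npg_value_baseline_special}(1). This is false in general: $a^*(s)$ is $\argmax_a Q^*(s,a)$, not $\argmax_a Q^{\pi_{\theta_t}}(s,a)$, and these can differ. If $a^*(s)$ leads to states where $\pi_{\theta_t}$ is poor, then $Q^{\pi_{\theta_t}}(s,a^*(s))$ can be strictly below $V^{\pi_{\theta_t}}(s)$, so the second inequality of \cref{lem:non_uniform_lojasiewicz_stochastic_npg_value_baseline_special}(1) does not hand you $\frac{\eta}{1+\eta}\,\pi_{\theta_t}(a^*(s_t)\,|\,s_t)\,A_t(s_t)^2$. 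The paper sidesteps this by first deriving the uniform bound $\sum_a \pi_{\theta_t}(a\,|\,s)^2\bigl(Q^{\pi_{\theta_t}}(s,a)-V^{\pi_{\theta_t}}(s)\bigr)^2$ (weakening the ``good-action'' branch by one extra factor of $\pi_{\theta_t}$ so both branches carry $\pi^2$), which is valid for \emph{every} action, and only then keeping the $a^*(s)$ term. Your Cauchy--Schwarz step does not require $A_t(s)\ge 0$, so once you patch this your route goes through---with $\pi_{\theta_t}(a^*(s)\,|\,s)^2$ rather than the first power, which is what the statement needs anyway.

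The second slip is that \cref{lem:non_uniform_lojasiewicz_stochastic_npg_value_baseline_special}(1) assumes $r\in[0,1]^K$, whereas here the role of $r$ is played by $Q^{\pi_{\theta_t}}(s,\cdot)\in[0,1/(1-\gamma)]$. Reworking the denominator $1+\eta\,(r(a^+)-\pi^\top r)$ with this range yields $\frac{\eta}{1+\eta/(1-\gamma)} \ge \frac{\eta(1-\gamma)}{1+\eta}$, which is where one of the four factors of $(1-\gamma)$ in the stated bound comes from; as written your count is one short.

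Finally, the $1/S$ you flag as an obstacle is a red herring: it sits on the \emph{lower-bound} side, so if your Cauchy--Schwarz splitting avoids it (and it does---your bound $\sum_s d_\mu^{\pi^*}(s)^2/(\mu(s)\pi_{\theta_t}^2) \le \|d_\mu^{\pi^*}/\mu\|_\infty / \min_s\pi_{\theta_t}(a^*(s)|s)^2$ is clean), you have proved something stronger and can insert $1/S\le 1$ for free. The paper's route incurs $1/S$ genuinely because it first passes from $\sum_s d_\mu^{\pi^*}(s)(\cdots)^2$ to $\sum_s d_\mu^{\pi^*}(s)^2(\cdots)^2$ and then applies Cauchy--Schwarz against the constant vector.
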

Next, similar to \cref{lem:non_vanishing_nl_coefficient_stochastic_npg_value_baseline_special}, we can develop a set of contradictions that establish the following result.
\begin{lemma}[Non-vanishing stochastic N\L{} coefficient / ``automatic exploration'']
\label{lem:non_vanishing_nl_coefficient_stochastic_npg_value_baseline_general}
Using \cref{alg:softmax_natural_pg_general_on_policy_stochastic_gradient_deterministic_value} with the conditions in \cref{lem:non_uniform_lojasiewicz_stochastic_npg_value_baseline_general}, with arbitrary initialization $\theta_1 \in \sR^{\gS \times \gA}$, we have,
\begin{align}
\label{eq:non_vanishing_nl_coefficient_stochastic_npg_value_baseline_general_result_1}
    c \coloneqq \inf_{t \ge 1, s \in \gS} \pi_{\theta_t}(a^*(s) | s) > 0, \qquad \text{a.s.}
\end{align}
\end{lemma}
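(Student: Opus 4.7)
The plan is to adapt the contradiction strategy from the proof of \cref{lem:non_vanishing_nl_coefficient_stochastic_npg_value_baseline_special} to the general MDP setting. As a first step, by the a.s.\ monotonicity in \cref{lem:non_uniform_lojasiewicz_stochastic_npg_value_baseline_general}, for every $s_0 \in \gS$ the sequence $\{ V^{\pi_{\theta_t}}(s_0) \}_{t \ge 1}$ is non-decreasing a.s.\ and bounded in $[-R_{\max}/(1-\gamma), R_{\max}/(1-\gamma)]$; applying the monotone convergence theorem pathwise then yields random limits $V_\infty(s_0) \le V^*(s_0)$ a.s., and in particular $V^{\pi_{\theta_t}}(\mu) \to V_\infty(\mu)$ a.s. The remaining task is to rule out the possibility that $\min_s \pi_{\theta_t}(a^*(s)|s)$ vanishes along the iteration with positive probability.

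Suppose for contradiction that the event $\gE \coloneqq \{ \inf_{t \ge 1, s \in \gS} \pi_{\theta_t}(a^*(s)|s) = 0 \}$ has positive probability. Since $\gS$ is finite and softmax ensures $\pi_{\theta_t}(a^*(s)|s) > 0$ at every finite $t$, there exists a \emph{deterministic} state $s^\dagger \in \gS$ such that $\gE^\dagger \coloneqq \{ \liminf_t \pi_{\theta_t}(a^*(s^\dagger)|s^\dagger) = 0 \}$ has positive probability. Three ingredients drive the remainder of the proof. \textbf{(a)} $\pi_{\theta_t}(a^*(s^\dagger)|s^\dagger)$ only changes at iterations with $s_t = s^\dagger$, and a direct softmax computation gives, for $a_t \ne a^*(s^\dagger)$,
\[
\pi_{\theta_{t+1}}(a^*(s^\dagger)|s^\dagger) = \frac{ \pi_{\theta_t}(a^*(s^\dagger)|s^\dagger) }{ 1 + \pi_{\theta_t}(a_t|s^\dagger) \cdot \big( \exp\big( \eta \cdot A^{\pi_{\theta_t}}(s^\dagger, a_t) / \pi_{\theta_t}(a_t|s^\dagger) \big) - 1 \big) },
\]
so that the per-iteration multiplicative decay is controlled by $\eta$ and the uniformly bounded quantity $A^{\pi_{\theta_t}}$. \textbf{(b)} By \cref{assump:pos_init}, $d_\mu^{\pi_{\theta_t}}(s^\dagger) \ge (1-\gamma) \mu(s^\dagger) > 0$ uniformly in $t$, so the state $s^\dagger$ is visited infinitely often a.s. \textbf{(c)} A conditional Borel--Cantelli argument applied to the sampling events $\{s_t = s^\dagger, a_t = a^*(s^\dagger)\}$, combined with the controlled decay from (a), produces a subsequence along which $(s^\dagger, a^*(s^\dagger))$ is sampled infinitely often on $\gE^\dagger$.

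To close the contradiction, I would show that along this subsequence the advantage $A^{\pi_{\theta_t}}(s^\dagger, a^*(s^\dagger))$ is eventually bounded below by a strictly positive constant on $\gE^\dagger$, so that each such sample produces a quantifiable multiplicative \emph{increase} in $\pi_{\theta_t}(a^*(s^\dagger)|s^\dagger)$, contradicting $\liminf_t \pi_{\theta_t}(a^*(s^\dagger)|s^\dagger) = 0$. The main obstacle is exactly this sign-and-magnitude control on the advantage: unlike the one-state setting, in which $r(a^*) - \pi_{\theta_t}^\top r \ge 0$ is automatic, in a general MDP $A^{\pi_{\theta_t}}(s^\dagger, a^*(s^\dagger)) = Q^{\pi_{\theta_t}}(s^\dagger, a^*(s^\dagger)) - V^{\pi_{\theta_t}}(s^\dagger)$ depends on $V^{\pi_{\theta_t}}$ at every downstream state and is not \emph{a priori} signed. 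Handling it requires a finite-state inductive argument tied to the Bellman equation: on $\gE^\dagger$, any limit policy $\pi_\infty$ is sub-optimal at $s^\dagger$, and the strict performance gap $V^*(\mu) - V_\infty(\mu) > 0$ forced by the performance difference lemma propagates to a strict gap $Q^{\pi_\infty}(s^\dagger, a^*(s^\dagger)) - V^{\pi_\infty}(s^\dagger) > 0$, which by continuity of $Q$ and $V$ in $\theta$ delivers the required positive lower bound on the advantage for large $t$.
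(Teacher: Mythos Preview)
Your proposal has a genuine gap in the final step. You claim that the strict performance gap $V^*(\mu) - V_\infty(\mu) > 0$ ``propagates'' via the performance difference lemma to $Q^{\pi_\infty}(s^\dagger, a^*(s^\dagger)) - V^{\pi_\infty}(s^\dagger) > 0$ at your chosen state $s^\dagger$. But the performance difference lemma only gives
\[
V^*(\mu) - V^{\pi_\infty}(\mu) = \frac{1}{1-\gamma}\sum_s d_\mu^{\pi^*}(s)\,A^{\pi_\infty}\big(s, a^*(s)\big),
\]
so positivity of the left side forces $A^{\pi_\infty}(s, a^*(s)) > 0$ at \emph{some} state, not at your pre-selected $s^\dagger$, which was chosen only as a state with $\liminf_t \pi_{\theta_t}(a^*(s^\dagger)\,|\,s^\dagger)=0$. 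Concretely, take a two-state chain $s^\dagger \to s'$ with $r(s^\dagger,a^*(s^\dagger))=0$, $r(s^\dagger,b)=0.5$, $r(s',a^*(s'))=1$, $r(s',b')=0$; a limit policy playing $b$ at $s^\dagger$ and $b'$ at $s'$ gives $Q^{\pi_\infty}(s^\dagger,a^*(s^\dagger)) = 0 < 0.5 = V^{\pi_\infty}(s^\dagger)$. The optimal action can look bad when evaluated under a bad downstream policy, and your inductive sketch does not first establish optimality at $s'$ before treating $s^\dagger$.

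There is also a gap in ingredient (c): the per-step multiplicative bound in (a) does not prevent $\pi_{\theta_t}(a^*(s^\dagger)\,|\,s^\dagger)$ from decaying geometrically along visits to $s^\dagger$, in which case $\sum_t d_\mu^{\pi_{\theta_t}}(s^\dagger)\,\pi_{\theta_t}(a^*(s^\dagger)\,|\,s^\dagger)<\infty$ and the extended Borel--Cantelli lemma gives \emph{finitely} many samples of $(s^\dagger,a^*(s^\dagger))$, the opposite of what you need. The paper avoids both issues. It first uses the stochastic N\L{} lower bound to conclude that $\sum_a \pi_{\theta_t}(a|s)^2\big(Q^{\pi_{\theta_t}}(s,a)-V^{\pi_{\theta_t}}(s)\big)^2 \to 0$ at every $s$, so each $\pi_{\theta_t}(\cdot|s)$ converges to a generalized one-hot supported on a single $Q^\infty(s,\cdot)$-level. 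Then, at a state where this level is sub-optimal, it works not with $a^*(s)$ but with any $a^+$ satisfying $Q^\infty(s,a^+)>Q^\infty(s,i)$, and derives a contradiction via an exhaustive case split on whether such an $a^+$ is sampled finitely or infinitely often, tracking the parameters $\theta_t(s,\cdot)$ directly rather than forcing infinite sampling of the optimal action.
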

By combining \cref{lem:non_uniform_lojasiewicz_stochastic_npg_value_baseline_general,lem:non_vanishing_nl_coefficient_stochastic_npg_value_baseline_general}, we obtain the following result that generalizes \cref{thm:almost_sure_convergence_rate_stochastic_npg_special_value_baseline}.
\begin{theorem}[Almost sure global convergence rate]
\label{thm:almost_sure_convergence_rate_stochastic_npg_general_value_baseline}
Using \cref{alg:softmax_natural_pg_general_on_policy_stochastic_gradient_deterministic_value} with any initialization $\theta_1 \in \sR^K$,  under the same assumptions as Lemmas \ref{lem:non_uniform_lojasiewicz_stochastic_npg_value_baseline_general}, there exists a $C>0$ such that for all $t \ge 1$,
{\small 
\begin{align}
    \EE{ V^*(\mu) - V^{\pi_{\theta_t}}(\mu) } \le \frac{1 + \eta}{\eta \cdot \left( 1 - \gamma \right)^4 \cdot \min_{s}{\mu(s) } }  \cdot \bigg\| \frac{d_{\mu}^{\pi^*}}{\mu} \bigg\|_\infty \cdot \frac{S}{ \EE{ c^2 } } \cdot \frac{1}{t},& \qquad \text{and} \\
    \limsup_{t \ge 1} \bigg\{ \frac{\eta \cdot \left( 1 - \gamma \right)^4 \cdot \min_{s}{\mu(s) } }{1 + \eta}  \cdot \bigg\| \frac{d_{\mu}^{\pi^*}}{\mu} \bigg\|_\infty^{-1} \cdot \frac{ c^2  \cdot t}{S} \cdot \left( V^*(\mu) - V^{\pi_{\theta_t}}(\mu) \right) \bigg\} < \infty,& \qquad \text{a.s.},
\end{align}
}%
where $\pi^*$ is the global optimal policy, $S$ is the state number, $\min_{s}{\mu(s)} > 0$ by \cref{assump:pos_init}, and $c \coloneqq \inf_{t \ge 1, s \in \gS} \pi_{\theta_t}(a^*(s) | s) > 0$ is from \cref{lem:non_vanishing_nl_coefficient_stochastic_npg_value_baseline_general}.
\end{theorem}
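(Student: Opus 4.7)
The plan is to combine the stochastic \L{}ojasiewicz inequality of \cref{lem:non_uniform_lojasiewicz_stochastic_npg_value_baseline_general} with the non-vanishing coefficient guarantee of \cref{lem:non_vanishing_nl_coefficient_stochastic_npg_value_baseline_general} and reduce the analysis to a scalar quadratic-decay recursion $W_{t+1}\le W_t - K W_t^2$. Write $\delta_t \coloneqq V^*(\mu) - V^{\pi_{\theta_t}}(\mu)$, $\nu_t \coloneqq \min_{s} \pi_{\theta_t}(a^*(s)\mid s)$, and let $K$ denote the deterministic constant $\frac{\eta(1-\gamma)^4 \min_s \mu(s)}{(1+\eta)\,S\,\|d_\mu^{\pi^*}/\mu\|_\infty}$. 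Then \cref{lem:non_uniform_lojasiewicz_stochastic_npg_value_baseline_general} reads $\delta_{t+1}\le\delta_t$ a.s.\ and $\EEt{\delta_{t+1}}\le \delta_t - K\nu_t^2\delta_t^2$, while \cref{lem:non_vanishing_nl_coefficient_stochastic_npg_value_baseline_general} gives $c=\inf_{t\ge1,\,s\in\gS} \pi_{\theta_t}(a^*(s)\mid s)>0$ a.s.

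The central difficulty is that $c$ is tail-measurable, so multiplying through by $c^2$ and taking conditional expectations is not directly legitimate. To handle this, I would introduce the adapted surrogate $\tilde c_t \coloneqq \min_{1\le s\le t}\nu_s$, which is $\mathcal{F}_t$-measurable, monotone non-increasing, and satisfies $\tilde c_t\downarrow c$ almost surely. Using $\nu_t \ge \tilde c_t$ the N\L{} inequality becomes $\EEt{\delta_{t+1}}\le \delta_t - K\tilde c_t^2 \delta_t^2$. Multiplying by the $\mathcal{F}_t$-measurable factor $\tilde c_t^2$ and exploiting $\tilde c_{t+1}\le \tilde c_t$ together with $\delta_{t+1}\le \delta_t$ produces $\EEt{\tilde c_{t+1}^2\delta_{t+1}}\le \tilde c_t^2\delta_t - K(\tilde c_t^2\delta_t)^2$. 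Setting $W_t \coloneqq \EE{\tilde c_t^2 \delta_t}$ and taking full expectations, Jensen's inequality $\EE{X^2}\ge (\EE{X})^2$ with $X=\tilde c_t^2\delta_t$ yields the deterministic recursion $W_{t+1}\le W_t - KW_t^2$. Standard induction on the reciprocal $1/W_{t+1}\ge 1/W_t + K$ then delivers $W_t\le 1/(Kt)$, and the claimed expectation bound follows after bounding $\EE{c^2\delta_t}\le W_t$ by the monotone passage $\tilde c_t\downarrow c$ and collecting the constants folded into $K$.

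For the almost-sure rate, restrict to the full-probability event $\{c>0\}$ and use $\nu_t\ge c$ pathwise to obtain $\EEt{\delta_{t+1}}\le \delta_t - Kc^2\delta_t^2$. Since $\delta_t$ is a.s.\ non-increasing and non-negative it converges, and combined with the expected bound above this forces $\delta_t\to 0$ a.s., so the convex inversion $1/(y-Ky^2)\ge 1/y + K$ is valid for all sufficiently large $t$. Applying Jensen to the convex map $1/(\cdot)$ gives $\EEt{1/\delta_{t+1}}\ge 1/\delta_t + Kc^2$, so $1/\delta_t - Kc^2 t$ is a submartingale on $\{c>0\}$; a Doob/Robbins-Siegmund bound on the martingale-noise component then yields $1/\delta_t\ge Kc^2 t - O(1)$ a.s., equivalently $\limsup_{t}\, t\cdot c^2\delta_t < \infty$ a.s., which is the second assertion of the theorem.

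The hardest step is the dual treatment of the random N\L{} coefficient: the almost-sure bound wants $c$ to behave like a deterministic constant (handled by restricting to the full-probability event $\{c>0\}$ and invoking \cref{lem:non_vanishing_nl_coefficient_stochastic_npg_value_baseline_general}), while the expectation argument needs an $\mathcal{F}_t$-adapted surrogate that can be pulled inside the conditional expectation, handled by the running-infimum $\tilde c_t$. A secondary subtlety is that the transition from the $\tilde c_t^2$-weighted recursion to the final $\EE{c^2}$-dependent constant relies on the monotone-convergence passage $\tilde c_t\downarrow c$ and on reconciling the Jensen step $\EE{X^2}\ge(\EE{X})^2$ with the claimed dependence, which must be carried through the inequality without loss.
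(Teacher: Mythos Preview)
Your overall strategy for the expectation bound---combining \cref{lem:non_uniform_lojasiewicz_stochastic_npg_value_baseline_general,lem:non_vanishing_nl_coefficient_stochastic_npg_value_baseline_general} into a quadratic-decay recursion and telescoping reciprocals---is exactly what the paper does. The paper's version, however, is far terser: it simply replaces $\min_s\pi_{\theta_t}(a^*(s)\mid s)^2$ by $c^2$, takes full expectations, writes $\EE{\delta_t}-\EE{\delta_{t+1}}\ge K\,\EE{c^2}\,(\EE{\delta_t})^2$, and then sums $1/\EE{\delta_t}$ as in \cref{thm:almost_sure_convergence_rate_stochastic_npg_special_value_baseline}. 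You are right that this step is delicate because $c$ is tail-measurable, and your adapted surrogate $\tilde c_t$ is a clean way to make the recursion rigorous. The gap in your argument, though, is the very last clause of your first paragraph: your chain yields $\EE{\tilde c_t^2\delta_t}\le 1/(Kt)$ and hence $\EE{c^2\delta_t}\le 1/(Kt)$, but this is \emph{not} the theorem's assertion $\EE{\delta_t}\le 1/(K\,\EE{c^2}\,t)$. Converting one into the other would require $\EE{c^2\delta_t}\ge \EE{c^2}\EE{\delta_t}$, a covariance-sign condition that is unavailable (and heuristically the wrong sign). So your more careful argument produces a valid $O(1/t)$ statement, but not the one displayed.

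For the almost-sure part you take a genuinely different route and it fails for the same adaptedness reason you flagged above. Your claim that ``$1/\delta_t-Kc^2t$ is a submartingale'' requires $\EEt{1/\delta_{t+1}-Kc^2(t{+}1)}\ge 1/\delta_t-Kc^2t$; expanding the left side forces you to compare $c^2$ with $\EEt{c^2}$, and since $c$ is not $\mathcal F_t$-measurable these need not agree. The paper avoids any martingale argument here: it invokes a short Fatou-type lemma (\cref{lem:problemma}) stating that if $X_t\in[0,1]$, $X_t\to 0$ a.s., and $\EE{X_t}\le 1/f(t)$ with $f(t)\to\infty$, then $\limsup_t f(t)X_t<\infty$ a.s. Applying this with $X_t=\delta_t$ (suitably scaled) and $f(t)$ the expectation rate from the first part gives $\limsup_t t\,\delta_t<\infty$ a.s.; multiplying by the a.s.\ finite constant $Kc^2$ then yields the stated bound. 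This is shorter and sidesteps the adaptedness issue entirely.
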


\section{Understanding Baselines in On-policy Stochastic Policy Optimization}
\label{sec:stochastic_policy_optimization_circles}

\cref{sec:on_policy_npg} shows that using a value function baseline in on-policy stochastic NPG can ensure convergence to a globally optimal policy. 
However, the mechanism behind this finding requires further elucidation. 
Preliminary studies \citep{chung2020beyond,mei2021understanding} have observed that subtracting a baseline can reduce the committal behavior of PG-based estimators, suggesting that this effect might be more important than variance reduction. 
A mathematical characterization of ``committal behavior'' is from using the following concept of ``committal rate'' \citep{mei2021understanding}.
\begin{definition}[Committal Rate, Definition 2 of \cite{mei2021understanding}]
\label{def:committal_rate}
Fix $r \in (0, 1]^K$ 
and $\theta_1 \in \sR^K$.
Consider a policy optimization algorithm $\gA$. Let action $a$ be the sampled action \textbf{forever} after initialization and let $\theta_t$ be produced by $\gA$ on the first $t$ observations.
The committal rate of algorithm $\gA$ on action $a$ (given $r$ and $\theta_1$) is,  
\begin{align}
    \kappa(\gA, a) = \sup\left\{ \alpha \ge 0: \limsup_{t \to \infty}{ t^\alpha \cdot  \left[ 1 - \pi_{\theta_t}(a) \right] < \infty} \right\}.
\end{align}
\end{definition}
The larger the committal rate $\kappa$ is, the more aggressive one update is.
In this section, we provide a new, deeper understanding of how a baseline improves the convergence behaviour of a stochastic PG based method using \cref{def:committal_rate}. However, \cite{mei2021understanding} only studied the deterministic reward setting, i.e., $\hat{r}_t$ is from \cref{def:simplified_on_policy_importance_sampling}. We follow the same deterministic reward setting in this section.

\subsection{Baselines Do Not Control Update Variance in NPG}

We begin from the well known result that value baselines have no effect on exact policy gradients.
\begin{proposition}[Unbiasedness of NPG]
\label{prop:softmax_natural_pg_unbiased}
For NPG with and without a state value baseline, corresponding to \cref{update_rule:softmax_natural_pg_special_on_policy_stochastic_gradient,update_rule:softmax_natural_pg_special_on_policy_stochastic_gradient_value_baseline} respectively,
we have 
$\expectation_{a_t \sim \pi_{\theta_t}(\cdot)}{ \left[ \hat{r}_t \right] } = \expectation_{a_t \sim \pi_{\theta_t}(\cdot)}{ [ \hat{r}_t - \hat{b}_t ] } = r$. 
\end{proposition}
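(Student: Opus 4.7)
The plan is to verify the two claimed equalities by direct calculation, exploiting linearity of expectation and the fact that under the simplified IS estimator from \cref{def:simplified_on_policy_importance_sampling} the only source of randomness is the action sampling $a_t \sim \pi_{\theta_t}(\cdot)$. Because the softmax parameterization in \cref{eq:softmax} guarantees $\pi_{\theta_t}(a) > 0$ for every $a \in [K]$, all the importance weights $1/\pi_{\theta_t}(a)$ are well defined, so there will be no technicality about dividing by zero.

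First I would unpack $\hat{r}_t$ coordinate by coordinate: for each fixed $a \in [K]$ we have $\hat{r}_t(a) = \frac{\mathbb{I}\{a_t = a\}}{\pi_{\theta_t}(a)} \cdot r(a)$, so marginalizing over $a_t$,
\[
\expectation_{a_t \sim \pi_{\theta_t}(\cdot)}[\hat{r}_t(a)] = \sum_{a' \in [K]} \pi_{\theta_t}(a') \cdot \frac{\mathbb{I}\{a' = a\}}{\pi_{\theta_t}(a)} \cdot r(a) = r(a),
\]
which gives the first equality $\expectation[\hat{r}_t] = r$ componentwise.

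Next I would show that the baseline correction contributes zero in expectation. Since $b_t = \pi_{\theta_t}^\top r$ is measurable with respect to $\theta_t$ (hence constant under the action-sampling expectation), the key computation reduces to
\[
\expectation_{a_t \sim \pi_{\theta_t}(\cdot)}\!\left[\frac{\mathbb{I}\{a_t = a\}}{\pi_{\theta_t}(a)} - 1 \right] = \frac{\pi_{\theta_t}(a)}{\pi_{\theta_t}(a)} - 1 = 0,
\]
so $\expectation[\hat{b}_t(a)] = 0$ for every $a$, and by linearity $\expectation[\hat{r}_t - \hat{b}_t] = \expectation[\hat{r}_t] = r$, completing the second equality.

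There is no genuine obstacle here; the proposition is essentially a sanity check that subtracting the value baseline preserves the mean of the IS estimator. The only subtle point worth flagging in the write-up is that the computation relies on the sampling distribution used in the importance weights being exactly the one used to sample $a_t$ (this is why the result is specific to the on-policy regime), and on the baseline being $\theta_t$-measurable so that it can be pulled out of the conditional expectation. The content of the surrounding section is that despite this identity at the level of means, the \emph{update} produced with and without $\hat{b}_t$ behaves very differently, a distinction the remainder of \cref{sec:stochastic_policy_optimization_circles} is devoted to explaining.
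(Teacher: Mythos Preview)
Your proof is correct and essentially identical to the paper's: both verify the two identities by direct componentwise calculation using linearity and the definition of the IS estimator. The only cosmetic difference is that you show $\expectation[\hat{b}_t]=0$ separately and then invoke linearity, whereas the paper expands $\hat{r}_t(i)-\hat{b}_t(i)$ and computes its expectation in one line; these are the same computation.
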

According to \cref{prop:softmax_natural_pg_unbiased}, \cref{update_rule:softmax_natural_pg_special_on_policy_stochastic_gradient,update_rule:softmax_natural_pg_special_on_policy_stochastic_gradient_value_baseline} become identical if the exact policy gradient is available,
hence both enjoy an
$O(e^{-c \cdot t})$ convergence rate to a global optimum ($c > 0$) \citep{khodadadian2021linear,mei2021understanding}. 
Therefore, a state value baseline can only have an effect if the policy gradient has to be estimated from a stochastic sample. 
However, we find that the variance of the NPG updates remains unbounded in the stochastic setting, regardless of whether a state value baseline is used. 
\begin{proposition}[Unboundedness of NPG]
\label{prop:softmax_natural_pg_variances}
For NPG without a baseline, \cref{update_rule:softmax_natural_pg_special_on_policy_stochastic_gradient},
we have $\expectation_{a_t \sim \pi_{\theta_t}(\cdot)}{ \left\| \hat{r}_t \right\|_2^2 } = \sum_{a \in [K]}{ \frac{ r(a)^2 }{ \pi_{\theta_t}(a) }  }$. 
For NPG with a state value baseline, \cref{update_rule:softmax_natural_pg_special_on_policy_stochastic_gradient_value_baseline},
we have $\expectation_{a_t \sim \pi_{\theta_t}(\cdot)}{ \| \hat{r}_t - \hat{b}_t \|_2^2 } = \sum_{a \in [K]}{ \frac{ ( r(a) - \pi_{\theta_t}^\top r )^2 }{ \pi_{\theta_t}(a) } } - K \cdot ( \pi_{\theta_t}^\top r)^2 + 2 \cdot ( \pi_{\theta_t}^\top r) \cdot ( r^\top \rvone )$.
\end{proposition}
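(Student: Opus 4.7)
The plan is to verify both identities by direct computation from the definitions, since both claimed expressions are just second-moment formulas. No inequalities or fixed-point arguments are needed; the only subtlety is careful bookkeeping in the baselined case, where unlike $\hat{r}_t$, the vector $\hat{r}_t - \hat{b}_t$ is non-zero at \emph{every} coordinate.

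First I would handle the baseline-free case. Since $\hat{r}_t(a) = \frac{\sI\{a = a_t\}}{\pi_{\theta_t}(a)} \cdot r(a)$ is non-zero only at index $a_t$, the squared norm collapses to the single term $\|\hat{r}_t\|_2^2 = r(a_t)^2 / \pi_{\theta_t}(a_t)^2$. Taking expectation over $a_t \sim \pi_{\theta_t}(\cdot)$ gives $\sum_{a}{ \pi_{\theta_t}(a) \cdot r(a)^2 / \pi_{\theta_t}(a)^2 } = \sum_{a}{ r(a)^2 / \pi_{\theta_t}(a) }$.

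For the baselined case I would first write out $(\hat{r}_t - \hat{b}_t)(a)$ coordinate-wise using $\hat{b}_t(a) = \big( \sI\{a_t = a\}/\pi_{\theta_t}(a) - 1\big) b_t$ with $b_t = \pi_{\theta_t}^\top r$: the index $a = a_t$ contributes $\frac{r(a_t) - b_t}{\pi_{\theta_t}(a_t)} + b_t$, while every $a \neq a_t$ contributes exactly $b_t$. Squaring and summing yields
\[
\|\hat{r}_t - \hat{b}_t\|_2^2 = \frac{(r(a_t) - b_t)^2}{\pi_{\theta_t}(a_t)^2} + 2 b_t \cdot \frac{r(a_t) - b_t}{\pi_{\theta_t}(a_t)} + K \cdot b_t^2.
\]
Now I would take expectation term by term: the first gives $\sum_{a} (r(a) - b_t)^2 / \pi_{\theta_t}(a)$, the second gives $2 b_t \sum_{a} (r(a) - b_t) = 2 b_t (r^\top \rvone - K b_t)$, and the last is just $K b_t^2$. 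Adding and using $b_t = \pi_{\theta_t}^\top r$ collects into $\sum_a (r(a) - \pi_{\theta_t}^\top r)^2 / \pi_{\theta_t}(a) - K (\pi_{\theta_t}^\top r)^2 + 2(\pi_{\theta_t}^\top r)(r^\top \rvone)$, matching the claim.

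Unboundedness in both cases follows immediately from inspection of the formulas: in each expression the term $\sum_a (\cdot)^2 / \pi_{\theta_t}(a)$ diverges whenever some $\pi_{\theta_t}(a) \to 0$ while the corresponding numerator stays bounded away from $0$ (e.g., $r(a) > 0$ in the unbaselined case, or $|r(a) - \pi_{\theta_t}^\top r|$ not vanishing for every $a$ when $r$ is non-constant). The remaining finite correction terms $-K(\pi_{\theta_t}^\top r)^2 + 2(\pi_{\theta_t}^\top r)(r^\top \rvone)$ in the baselined formula are bounded by a constant depending only on $r$, so they cannot cancel the blow-up. The only non-routine part is getting the cross-term $2 b_t (r(a_t) - b_t)/\pi_{\theta_t}(a_t)$ right; this is where sign and cancellation errors are most likely, so I would double-check it before combining.
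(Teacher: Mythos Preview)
Your proposal is correct and follows essentially the same direct-computation approach as the paper; the only cosmetic difference is that you resolve the indicator by splitting into the cases $a = a_t$ and $a \neq a_t$ before squaring, whereas the paper keeps the unified form $(\hat{r}_t - \hat{b}_t)(i) = \frac{\sI\{a_t = i\}}{\pi_{\theta_t}(i)}(r(i) - \pi_{\theta_t}^\top r) + \pi_{\theta_t}^\top r$ and uses $\sI\{a_t = i\}^2 = \sI\{a_t = i\}$ when expanding. Both routes land on exactly the same three terms after taking expectation, and your extra remark on why the formulas blow up as $\pi_{\theta_t}(a) \to 0$ is a fine addition that the paper only makes informally in the surrounding text.
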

According to \cref{prop:softmax_natural_pg_variances}, whenever $\pi_{\theta_t}$ nears a one-hot probability distribution over $[K]$ (which it must converge to),
there exists at least one action $a \in [K]$ such that both $\frac{ r(a)^2 }{ \pi_{\theta_t}(a) }$ and $\frac{ ( r(a) - \pi_{\theta_t}^\top r )^2 }{ \pi_{\theta_t}(a) }$ become unbounded, implying an unbounded scale for both \cref{update_rule:softmax_natural_pg_special_on_policy_stochastic_gradient,update_rule:softmax_natural_pg_special_on_policy_stochastic_gradient_value_baseline}. 
Yet we know from \cref{prop:failure_probability_softmax_natural_pg_special_on_policy_stochastic_gradient} that not using a baseline fails with positive probability, while from \cref{thm:almost_sure_convergence_rate_stochastic_npg_special_value_baseline} subtracting a state value baseline ensures almost sure convergence to a global optimum.
The fact that the variance of both updates is unbounded suggests that it is difficult to draw conclusions on the effect of the baseline from a variance reduction perspective alone.
An alternative analysis is required to explain the fundamental difference between \cref{update_rule:softmax_natural_pg_special_on_policy_stochastic_gradient,update_rule:softmax_natural_pg_special_on_policy_stochastic_gradient_value_baseline}.

\subsection{Coupled Sampling and Updating}

In on-policy stochastic policy optimization, sampling and updating are coupled as shown in \cref{fig:on_policy_coupling}. 
At iteration $t$, the data collected depends
on the current policy, since on-policy sampling is used $a_t \sim \pi_{\theta_t}(\cdot)$,
while the policy is updated from the observations collected based on $a_t$.
%
\begin{wrapfigure}{r}{0.3\textwidth}
  \begin{center}
    \includegraphics[width=0.3\textwidth]{./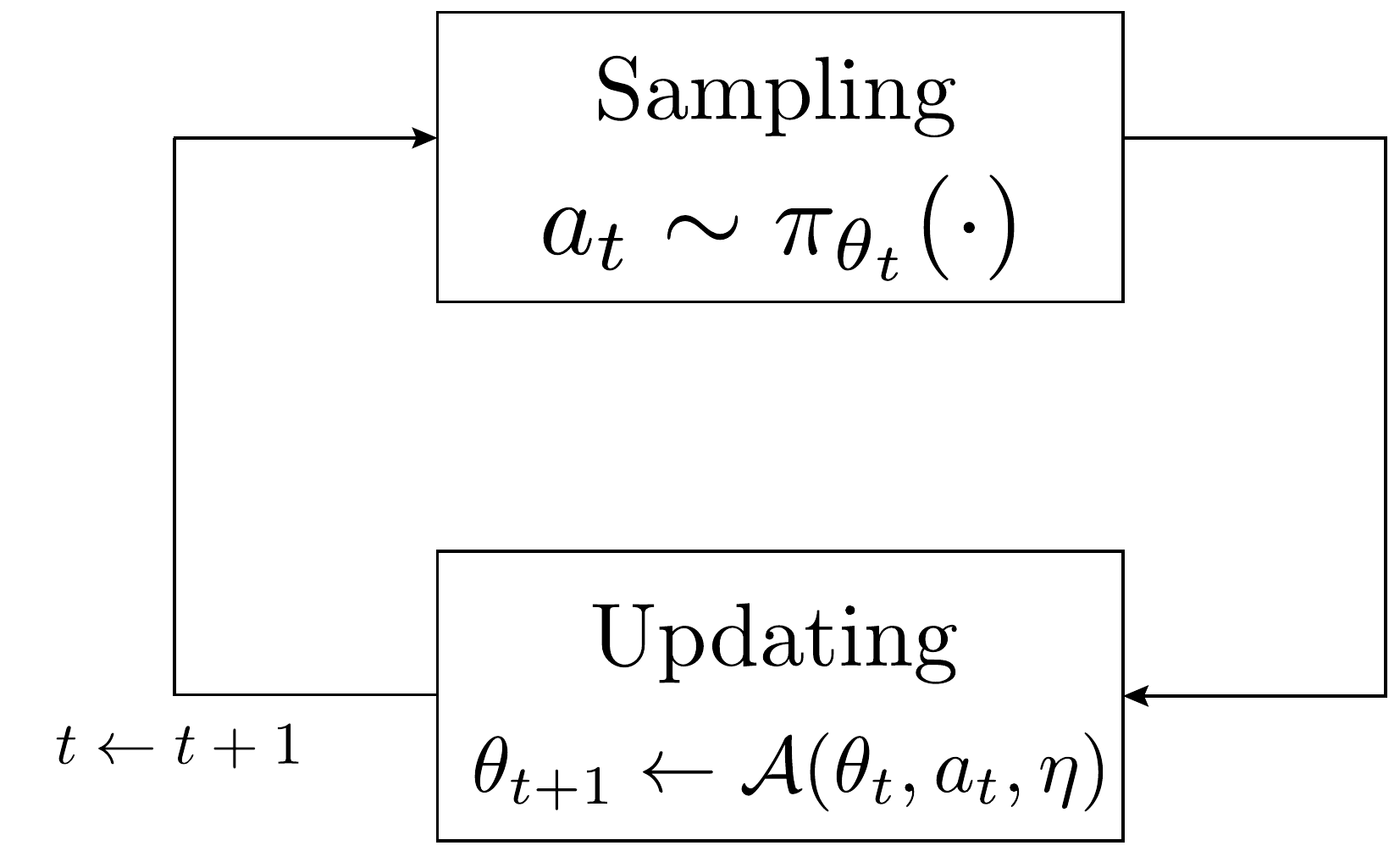}
  \end{center}
\caption{Coupled on-policy sampling and updating \citep[Figure 2]{mei2021understanding}.}
\label{fig:on_policy_coupling}
\end{wrapfigure}
This coupling introduces complexity in the optimization process as well as in the analysis. 
However, this coupling is also fundamental to understanding the circular interaction created by any on-policy stochastic optimization method.
That is, on-policy stochastic optimization faces an exploration-exploitation dilemma: 
a learning algorithm can improve the policy and increase the probability of choosing actions that yield higher rewards (exploitation),
but it must not do so too aggressively lest it fail to identify possibly higher-reward actions (exploration). 
Striking a proper balance between exploration and exploitation is key to achieving good convergence properties.
Different levels of update aggression create different circular effects between sampling and updating, which is central to determining almost sure convergence to a global optimum.

\subsection{The ``Vicious Circle'' of Being Too Aggressive}

First we illustrate a negative effect, the ``vicious circle'' of being too aggressive.
\begin{lemma}[Bad sampling]
\label{lem:positive_infinite_product}
Let $\pi_{\theta_t}(a) \in (0, 1)$ be the probability of sampling action $a$ using online sampling $a_t \sim \pi_{\theta_t}(\cdot)$, for all $t \ge 1$. If $1 - \pi_{\theta_t}(a) \in O(1/t^{1+ \epsilon})$, where $\epsilon > 0$, then $\prod_{t=1}^{\infty}{ \pi_{\theta_t}(a) } > 0$.
\end{lemma}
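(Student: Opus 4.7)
The plan is to pass the product to a sum via the logarithm and show the resulting series is finite. Writing $u_t := 1 - \pi_{\theta_t}(a) \in [0,1)$, we have
\[
\prod_{t=1}^{\infty} \pi_{\theta_t}(a) > 0 \quad \Longleftrightarrow \quad \sum_{t=1}^{\infty} \log \pi_{\theta_t}(a) > -\infty \quad \Longleftrightarrow \quad \sum_{t=1}^{\infty} -\log(1-u_t) < \infty,
\]
so the goal reduces to establishing convergence of $\sum_t -\log(1-u_t)$.

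By the hypothesis $1 - \pi_{\theta_t}(a) \in O(1/t^{1+\epsilon})$, there exist a constant $C > 0$ and an index $t_0 \ge 1$ such that $u_t \le C/t^{1+\epsilon}$ for all $t \ge t_0$. In particular $u_t \to 0$, so I may choose $t_1 \ge t_0$ large enough that $u_t \le 1/2$ for all $t \ge t_1$. For $u \in [0, 1/2]$ the elementary inequality $-\log(1-u) \le 2u$ holds (for instance because $-\log(1-u) = \int_0^u (1-s)^{-1} ds \le u/(1-u) \le 2u$), so
\[
-\log \pi_{\theta_t}(a) \;=\; -\log(1 - u_t) \;\le\; 2 u_t \;\le\; \frac{2C}{t^{1+\epsilon}} \qquad \text{for all } t \ge t_1.
\]

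Since $\epsilon > 0$, the $p$-series $\sum_{t \ge t_1} t^{-(1+\epsilon)}$ converges, so $\sum_{t \ge t_1} -\log \pi_{\theta_t}(a) < \infty$. The finitely many earlier terms $-\log \pi_{\theta_t}(a)$ for $1 \le t < t_1$ are each finite (because $\pi_{\theta_t}(a) \in (0,1)$ by assumption), so the full sum $\sum_{t=1}^{\infty} -\log \pi_{\theta_t}(a)$ is finite and therefore $\prod_{t=1}^{\infty} \pi_{\theta_t}(a) > 0$.

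There is no real obstacle here: the only mild care needed is to separate the finite initial segment (where one only knows $\pi_{\theta_t}(a) > 0$, not a rate) from the tail (where the $O(1/t^{1+\epsilon})$ hypothesis kicks in and the linearization of $-\log(1-u)$ is valid). The content of the lemma is exactly the familiar fact that $\sum u_t < \infty$ implies $\prod (1 - u_t) > 0$, applied with the summability $u_t = O(1/t^{1+\epsilon})$, $\epsilon > 0$.
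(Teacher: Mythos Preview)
Your proof is correct and follows essentially the same route as the paper: both arguments reduce to showing $\sum_t (1-\pi_{\theta_t}(a)) < \infty$ via the $p$-series test and then use the inequality $-\log(1-u)\le 2u$ on $[0,1/2]$ (plus the finite initial segment) to conclude the product is positive. The only cosmetic difference is that the paper packages the equivalence $\sum u_t < \infty \Leftrightarrow \prod(1-u_t)>0$ as a separate supporting lemma, whereas you inline that argument directly.
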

Note that \cref{lem:positive_infinite_product} characterizes sampling behaviour under general conditions that do not otherwise depend on specific updates. 
However, according to \cref{lem:positive_infinite_product}, if an action's probability approaches $1$ strictly faster than $O(1/t)$, by whatever means, it becomes possible to not sample any other action forever, which creates a ``lack of exploration'' phenomenon as it is known in RL. 
In particular, on-policy stochastic NPG without a baseline can  produce such a sequence of $\{ \pi_{\theta_t}(a) \}_{t \ge 1}$.
\begin{lemma}[NPG aggressiveness]
\label{lem:npg_aggressiveness}
Fix sampling $a_t = a$ for all $t \ge 1$, using \cref{update_rule:softmax_natural_pg_special_on_policy_stochastic_gradient} with constant learning rate $\eta > 0$, where $\hat{r}_t$ is from \cref{def:simplified_on_policy_importance_sampling}, we have $1 - \pi_{\theta_t}(a) \in O(e^{-c \cdot t})$ for all $t \ge 1$, where $c > 0$.
\end{lemma}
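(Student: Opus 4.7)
The plan is to exploit the fact that because $a_t=a$ for every $t$, only the coordinate $\theta_t(a)$ of the parameter is ever updated, while every other coordinate is frozen at its initial value $\theta_1(a')$ for $a'\neq a$. This reduces the dynamics to a one-dimensional recursion on the odds ratio $p_t/(1-p_t)$, where $p_t\coloneqq\pi_{\theta_t}(a)$, and from there exponential decay of $1-p_t$ follows by a direct unrolling.

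More concretely, since $\hat r_t$ from \cref{def:simplified_on_policy_importance_sampling} with $a_t=a$ is a one-hot vector with value $r(a)/\pi_{\theta_t}(a)$ at coordinate $a$ and $0$ elsewhere, \cref{update_rule:softmax_natural_pg_special_on_policy_stochastic_gradient} gives $\theta_{t+1}(a)=\theta_t(a)+\eta\,r(a)/p_t$ and $\theta_{t+1}(a')=\theta_1(a')$ for $a'\neq a$. Plugging into the softmax \cref{eq:softmax} and writing $Z\coloneqq\sum_{a'\neq a}\exp\{\theta_1(a')\}$ (a constant along the trajectory), I get
\begin{align*}
\frac{p_{t+1}}{1-p_{t+1}}
 \;=\;\frac{\exp\{\theta_{t+1}(a)\}}{Z}
 \;=\;\frac{p_t}{1-p_t}\,\exp\!\Big\{\tfrac{\eta\,r(a)}{p_t}\Big\}.
\end{align*}

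Using that $r(a)>0$ (the regime of interest where $a^\ast$ is assumed optimal, as in \cref{prop:failure_probability_softmax_natural_pg_special_on_policy_stochastic_gradient}) and that $p_t\in(0,1]$, I bound $\exp\{\eta r(a)/p_t\}\ge\exp\{\eta r(a)\}$. Iterating the recursion then yields
\begin{align*}
\frac{p_t}{1-p_t}\;\ge\;\frac{p_1}{1-p_1}\,\exp\{\eta\,r(a)\,(t-1)\},
\end{align*}
which rearranges to $1-p_t\le \tfrac{1-p_1}{p_1}\,\exp\{-\eta\,r(a)(t-1)\}$, giving $1-\pi_{\theta_t}(a)\in O(e^{-c\,t})$ with $c=\eta\,r(a)>0$. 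If one only assumes $r\in[0,1]^K$ with a positive entry, the same argument applies provided $r(a)>0$; otherwise the claim is vacuous.

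The proof is essentially a one-line calculation once the reduction to frozen off-coordinates is noted, so I do not foresee a genuine obstacle. The only subtlety worth care is the direction of the inequality in the odds-ratio recursion: one must use $p_t\le 1$ to \emph{lower}-bound $\exp\{\eta r(a)/p_t\}$ by $\exp\{\eta r(a)\}$, which is what turns an a priori only-increasing sequence into one that grows at a guaranteed geometric rate, and hence forces $1-p_t$ to shrink geometrically rather than merely monotonically.
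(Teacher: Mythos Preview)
Your proof is correct and follows essentially the same approach as the paper: both arguments observe that the off-coordinates $\theta_t(a')$ for $a'\neq a$ are frozen, use $\pi_{\theta_s}(a)\le 1$ to lower-bound the per-step increment by $\eta\,r(a)$, and obtain the same exponential rate $c=\eta\,r(a)>0$. The only cosmetic difference is that you phrase the recursion via the odds ratio $p_t/(1-p_t)=\exp\{\theta_t(a)\}/Z$, whereas the paper works directly with $\theta_t(a)$; these are equivalent.
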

According to \cref{def:committal_rate}, we have $\kappa(\text{NPG}, a) = \infty$, meaning that NPG without baseline is very aggressive.
Note that \cref{lem:npg_aggressiveness} only characterizes the aggressiveness of 
\cref{update_rule:softmax_natural_pg_special_on_policy_stochastic_gradient} with the sampling fixed to be $a_t = a$ for all $t \ge 1$. 
\cref{lem:positive_infinite_product,lem:npg_aggressiveness} together describe the ``vicious circle'' between sampling and updating that can be created by overly aggressive updates. 
\textcolor{blue}{First}, in on-policy sampling, there will always be a non-zero probability of ``bad luck''; that is, with positive probability a set of sub-optimal actions can be sequentially sampled for multiple steps. 
\textcolor{blue}{Second}, an overly aggressive update will only exaggerate the weakness of the sampling procedure by increasing the sampled sub-optimal actions' probabilities rapidly (\cref{{lem:npg_aggressiveness}}). 
\textcolor{blue}{Third}, this exaggeration can worsen data collection for subsequent updating by further increasing the prevalence of sub-optimal actions.
Such a vicious circular interaction between sampling and updating can happen repeatedly, and its self-reinforcing nature can create a non-zero probability that the cycle occurs forever (\cref{{lem:positive_infinite_product}}), resulting in convergence to a sub-optimal deterministic policy (a stationary point for both sampling and updating).

\subsection{The ``Virtuous Circle'' of Not Being Too Aggressive}

Next, we demonstrate a positive effect, the ``virtuous circle'' of not being too aggressive.
\begin{lemma}[Good sampling]
\label{lem:zero_infinite_product}
Let $\pi_{\theta_t}(a) \in (0, 1)$ and $a_t \sim \pi_{\theta_t}(\cdot)$, for all $t \ge 1$. If $\sum_{t=1}^{\infty}{ \left( 1 - \pi_{\theta_t}(a) \right) }$ $= \infty$ (e.g., $1 - \pi_{\theta_t}(a) \in \Omega(1/t)$), then $\prod_{t=1}^{\infty}{ \pi_{\theta_t}(a) } = 0$.
\end{lemma}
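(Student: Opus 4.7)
The statement is a purely analytic fact about infinite products, so the plan is to reduce it to a standard divergence/convergence criterion for products of numbers in $(0,1)$.

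First, I would rewrite each factor as $\pi_{\theta_t}(a) = 1 - (1 - \pi_{\theta_t}(a))$ and set $u_t \coloneqq 1 - \pi_{\theta_t}(a) \in (0,1)$. The hypothesis then reads $\sum_{t=1}^{\infty} u_t = \infty$, and the goal becomes $\prod_{t=1}^{\infty}(1 - u_t) = 0$.

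Next, I would apply the elementary inequality $1 - u \le e^{-u}$, valid for every $u \in [0,1)$ (proved by comparing derivatives at $u=0$, or equivalently by $\log(1-u) \le -u$). Applying it termwise to a finite truncation gives
\begin{equation*}
\prod_{t=1}^{N} \pi_{\theta_t}(a) \;=\; \prod_{t=1}^{N} (1 - u_t) \;\le\; \prod_{t=1}^{N} e^{-u_t} \;=\; \exp\!\left(-\sum_{t=1}^{N} u_t\right).
\end{equation*}
Letting $N \to \infty$, the divergence $\sum_{t=1}^{\infty} u_t = \infty$ drives the right-hand side to $0$. Since the partial products are nonnegative and monotonically nonincreasing (each factor lies in $(0,1)$), they converge to a limit, and that limit must be $0$. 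This yields the claim.

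The proof has essentially no obstacles: the only subtlety is the (implicit) measurability consideration when $\pi_{\theta_t}(a)$ is itself random along the trajectory, but the pointwise (trajectorywise) inequality above is valid whatever the underlying sample path, so the conclusion $\prod_{t=1}^\infty \pi_{\theta_t}(a) = 0$ holds deterministically on every trajectory on which the summability hypothesis holds. In particular, when this lemma is later combined with the Borel--Cantelli flavored reasoning used elsewhere in the paper (namely, that $\prod_t \pi_{\theta_t}(a)$ equals the conditional probability of forever sampling action $a$), the conclusion is precisely what is needed to rule out premature commitment whenever action $a$'s probability stays sufficiently bounded away from $1$, e.g., $1 - \pi_{\theta_t}(a) \in \Omega(1/t)$, so that $\sum_t (1-\pi_{\theta_t}(a)) = \infty$ holds.
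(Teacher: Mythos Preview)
Your proof is correct and takes essentially the same approach as the paper: set $u_t = 1 - \pi_{\theta_t}(a)$ and use the inequality $1 - u \le e^{-u}$ (equivalently $\log(1-u)\le -u$) to bound the partial products by $\exp(-\sum_{t\le N} u_t)\to 0$. The paper routes this through two auxiliary lemmas that state the general equivalence $\prod_t(1-u_t)>0 \Leftrightarrow \sum_t u_t<\infty$ and then takes the contrapositive, but the underlying analytic step is identical to yours.
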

As in \cref{lem:positive_infinite_product}, \cref{lem:zero_infinite_product} only characterizes the effect of sampling behaviour under general conditions that do not otherwise depend on specific updates. 
Here we see that if an action's probability approaches $1$ no faster than $O(1/t)$, it is no longer possible to avoid sampling any other action forever; that is, sufficiently slow modification of the sampling probabilities forces persistent exploration such that every action is sampled within some finite time with probability $1$. 
In particular, subtracting a value baseline in on-policy stochastic NPG produces such a sequence $\{ \pi_{\theta_t}(a) \}_{t \ge 1}$.
\begin{lemma}[Value baselines reduce NPG aggressiveness]
\label{lem:npg_aggressiveness_value_baseline}
Fix sampling $a_t = a$ for all $t \ge 1$. 
Then using \cref{update_rule:softmax_natural_pg_special_on_policy_stochastic_gradient_value_baseline} with a constant learning rate $\eta > 0$ and $\hat{r}_t$ from \cref{def:simplified_on_policy_importance_sampling} obtains $1 - \pi_{\theta_t}(a) \in \Omega(1/t)$ for all $t \ge 1$.
\end{lemma}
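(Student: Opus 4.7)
The plan is to reduce to a scalar recursion on $p_t := \pi_{\theta_t}(a)$ and then work in the odds variable $u_t := p_t/(1-p_t)$. Since $a_t = a$ for every $t$, Update~\ref{update_rule:equivalent_update_softmax_natural_pg_special_on_policy_stochastic_gradient_value_baseline} only modifies $\theta_t(a)$: for $a' \not= a$ the parameter $\theta_t(a')$ stays at its initial value, so the relative weights among the actions other than $a$ are invariant in $t$. Setting $q(a') := \pi_{\theta_1}(a')/(1-p_1)$ for $a' \not= a$ and $\tilde r := \sum_{a' \not= a} q(a')\, r(a')$, one gets $\pi_{\theta_t}^\top r = p_t\, r(a) + (1-p_t)\tilde r$, hence
$$ r(a) - \pi_{\theta_t}^\top r \;=\; (1 - p_t)\bigl(r(a) - \tilde r\bigr). $$

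Next, I would compute $u_{t+1}$ directly from the softmax. Because only $\theta_t(a)$ changes, by the amount $\Delta_t := \eta\bigl[r(a) - \pi_{\theta_t}^\top r\bigr]/p_t$, and because $u_t = e^{\theta_t(a)}/\sum_{a' \not= a} e^{\theta_t(a')}$, one immediately obtains $u_{t+1} = u_t\, e^{\Delta_t}$. Combined with the identity above this yields the clean recursion
$$ u_{t+1} \;=\; u_t\, e^{\Delta_t}, \qquad \Delta_t \;=\; \frac{\eta\,\delta}{u_t}, \qquad \delta := r(a) - \tilde r. $$
If $\delta \le 0$, then $u_t$ is non-increasing so $p_t \le p_1 < 1$ and $1 - p_t \ge 1 - p_1 \in \Omega(1)$ and we are done. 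Otherwise $\delta > 0$ and $u_t$ is non-decreasing, so $\Delta_t \le \eta\delta/u_1 =: C_0$ for all $t$. The elementary bound $e^x - 1 \le x\,(e^{C_0}-1)/C_0$ on $[0, C_0]$ then gives
$$ u_{t+1} - u_t \;=\; u_t(e^{\Delta_t}-1) \;\le\; u_t\,\Delta_t \cdot \tfrac{e^{C_0}-1}{C_0} \;=\; \eta\,\delta \cdot \tfrac{e^{C_0}-1}{C_0} \;=:\; M. $$
Summing yields $u_t \le u_1 + (t-1)M \in O(t)$, hence $1 - \pi_{\theta_t}(a) = 1/(1+u_t) \in \Omega(1/t)$.

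There is no real technical obstacle; the conceptual content lives entirely in the identity $\Delta_t = \eta\delta/u_t$. The value baseline forces the ``signal'' $r(a) - \pi_{\theta_t}^\top r$ to shrink proportionally to $1-p_t$, which exactly cancels the $1/\pi_{\theta_t}(a)$ amplification of NPG and turns what would otherwise be geometric growth of $u_t$---compare Lemma~\ref{lem:npg_aggressiveness}, where the baseline-free update has $\Delta_t \ge \eta\,r(a)$ bounded below, giving $\kappa = \infty$---into merely arithmetic growth. This $O(t)$ growth of the odds is what caps the committal rate at $\kappa = 1$ and makes Lemma~\ref{lem:zero_infinite_product} (and therefore the ``automatic exploration'' in Lemma~\ref{lem:non_vanishing_nl_coefficient_stochastic_npg_value_baseline_special}) applicable.
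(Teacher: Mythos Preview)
Your proof is correct and takes a genuinely different route from the paper's. The paper proceeds by invoking the $3/2$-smoothness of the map $\theta\mapsto\pi_\theta(a)$ (\cref{lem:smoothness_softmax_special}) to obtain a one-step bound of the form $(1-\pi_{\theta_t}(a))-(1-\pi_{\theta_{t+1}}(a))\le C\,(1-\pi_{\theta_t}(a))^2$, and then telescopes $1/(1-\pi_{\theta_t}(a))$; this mirrors the usual ``smoothness + N\L{}'' rate argument used elsewhere in the paper. Your argument is more elementary and more transparent: by passing to the odds $u_t=\pi_{\theta_t}(a)/(1-\pi_{\theta_t}(a))$ you get the exact recursion $u_{t+1}=u_t\,e^{\eta\delta/u_t}$, from which the arithmetic (rather than geometric) growth of $u_t$ is immediate. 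This avoids the external smoothness lemma entirely and makes the mechanism explicit: the baseline forces $r(a)-\pi_{\theta_t}^\top r=(1-p_t)\delta$, which exactly cancels the $1/p_t$ amplification to leave $\Delta_t=\eta\delta/u_t$. What the paper's route buys is uniformity of technique with the other convergence proofs; what yours buys is a self-contained argument that needs nothing beyond the convexity of $x\mapsto e^x$, and a sharper explanation of \emph{why} the committal rate drops to $1$.
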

According to \cref{def:committal_rate}, with value baselines, we have $\kappa(\text{NPG}, a) = 1$, meaning that the aggressiveness of NPG update is reduced.
As in \cref{lem:npg_aggressiveness}, \cref{lem:npg_aggressiveness_value_baseline} only characterizes the conservativeness of \cref{update_rule:softmax_natural_pg_special_on_policy_stochastic_gradient_value_baseline} with fixed sampling of $a_t = a$ for all $t \ge 1$. 
\cref{lem:zero_infinite_product,lem:npg_aggressiveness_value_baseline} now describe a ``virtuous circle'' between sampling and updating that is created by using not too aggressive updates. 
\textcolor{blue}{First}, even in a worst case situation (e.g., an adversarial initialization), where a sub-optimal action has a dominant probability $\pi_{\theta_t}(a) \approx 1$, under on-policy sampling all actions will eventually be sampled. \textcolor{blue}{Second}, conservative updating will mitigate the effect of the extreme sampler by not increasing the sub-optimal action's probability too rapidly (\cref{{lem:npg_aggressiveness_value_baseline}}). 
\textcolor{blue}{Third}, sustained diversity in sampling will eventually draw a better action than the current dominating sub-optimal action (\cref{{lem:zero_infinite_product}}). 
\textcolor{blue}{Finally}, once better actions are sampled, the update will improve subsequent sampling by decreasing the probability of the dominating sub-optimal action. In particular, this is achieved by increasing value baselines to be larger than the dominating sub-optimal action's true mean reward, such that the dominating sub-optimal action will start losing probabilities.
This virtuous circular interaction between sampling and updating ensures sufficient exploration,
which prevents the iteration from converging to a sub-optimal deterministic policy.

\subsection{How a State Value Baseline Reduces Update Aggressiveness} 
Based on \cref{lem:positive_infinite_product,lem:zero_infinite_product}, the boundary between ``too aggressive'' and ``not too aggressive'' is precisely $\Theta(1/t)$. 
We now explain how a state value baseline in NPG will control update aggressiveness. 
\textcolor{blue}{First}, without a baseline, sampling a sub-optimal action $a \in [K]$ for $t$ times makes its parameter behave as $\theta_t(a) \in \Theta(t)$, since $r(a) \in \Theta(1)$. 
On the other hand, other action parameters will behave as $\theta_t(a^\prime) \in \Theta(1)$ if they are only sampled a constant number of times. 
Under the softmax parameterization \cref{eq:softmax}, this will imply that $1 - \pi_{\theta_t}(a) \in O(e^{-c \cdot t})$, which is far too aggressive. 
\textcolor{blue}{Second},
using a state value baseline, under repeated sampling the parameter increase for a sub-optimal action $a \in [K]$ will be damped. 
In particular, whenever the policy is close to deterministic, say $\pi_{\theta_t}(a) \approx 1$, 
we also have $\pi_{\theta_t}^\top r \approx r(a)$. 
Therefore, since
\begin{align}
    r(a) - \pi_{\theta_t}^\top r = \sum_{a^\prime \not= a}{ \pi_{\theta_t}(a^\prime) \cdot \left( r(a) - r(a^\prime) \right) } \le 1 - \pi_{\theta_t}(a),
\end{align}
the closer $1 - \pi_{\theta_t}(a)$ is to $0$, the smaller $r(a) - \pi_{\theta_t}^\top r$ will be.
This means even if $a$ is sampled repeatedly for $t$ times, we obtain $\theta_t(a) \in O(\log{t})$ and $1 - \pi_{\theta_t}(a) \in \Omega(1/t)$ (\cref{{lem:npg_aggressiveness_value_baseline}}). 
Thus, the effect of baseline is to modify the sampling to lie exactly on the boundary of being good enough. 
From this argument the key role of the value baseline is to reduce update aggressiveness to achieve a particular  effect on long-term sampling, rather than simply reduce variance.
It also shows how using an appropriately un-aggressive update is both necessary (\cref{lem:positive_infinite_product}) and sufficient (\cref{lem:zero_infinite_product}) to achieve almost sure convergence to a global optimum in on-policy stochastic policy optimization.

\section{Simulations}
\label{sec:simulations}

We conducted simulations to verify the two main results above: asymptotic convergence toward globally optimal policy $\pi^*$ in \cref{lem:non_vanishing_nl_coefficient_stochastic_npg_value_baseline_special}, and the $O(1/t)$ convergence rate in \cref{thm:almost_sure_convergence_rate_stochastic_npg_special_value_baseline}.

\subsection{Asymptotic Convergence}

We first consider a one-state MDP with $K = 20$ actions and true mean reward vector $r \in (0, 1)^K$, where the optimal action is $a^* = 1$ with true mean reward $r(1) \approx 0.97$ and best sub-optimal action's true mean reward $r(2) \approx 0.95$. The sampled reward is observed with a large noise, e.g., $x \approx -2.03$ and $x \approx 3.97$ with both $0.5$ probability for the optimal action, such that $r(1) \approx 0.5 \cdot (-2.03) + 0.5 \cdot 3.97$. Details about $r$ and the reward distributions can be found in the appendix.

To verify asymptotic convergence to a globally optimal policy in \cref{lem:non_vanishing_nl_coefficient_stochastic_npg_value_baseline_special}, we consider the iteration behaviors of \cref{update_rule:softmax_natural_pg_special_on_policy_stochastic_gradient_value_baseline} under an adversarial initialization, where $\pi_{\theta_1}(2) \approx 0.88$, i.e., a sub-optimal action starts with a dominating probability. 
This is the worst case scenario for \cref{lem:non_vanishing_nl_coefficient_stochastic_npg_value_baseline_special}, where the optimal action only has a small chance to be sampled, while the sampled reward noise is very large.

As shown in \cref{fig:adversarial_initialization_expected_reward}, the expected reward $\pi_{\theta_t}^\top r$ quickly approaches and remains stuck around $r(2) \approx 0.95$ initially,  as expected. 
However, after about $8 \times 10^6$ iterations, the policy $\pi_{\theta_t}$ finally escapes the sub-optimal plateau and approaches the optimal reward $r(1) \approx0.97$. 
This simulation result is consistent with  \cref{lem:non_vanishing_nl_coefficient_stochastic_npg_value_baseline_special}, i.e., for an arbitrary initialization, the introduction of a value baseline eventually makes $\pi_{\theta_t}$ approach a globally optimal policy within finite time, while additionally the optimal action's probability never vanishes, $\inf_{t \ge 1}{ \pi_{\theta_t}(a^*) } > 0$, as shown in \cref{fig:adversarial_initialization_optimal_action_probability}.

\subsection{Convergence Rate}

We run \cref{update_rule:softmax_natural_pg_special_on_policy_stochastic_gradient_value_baseline} with a uniform initialization, i.e., $\pi_{\theta_1}(a) = 1/K$ for all $a \in [K]$, and calculate averaged sub-optimality gap $\left( \pi^* - \pi_{\theta_t} \right)^\top r$ across $20$ independent runs, using deterministic reward settings where $\hat{r}_t$ is from \cref{def:simplified_on_policy_importance_sampling}. As shown in \cref{fig:uniform_initialization_sub_optimality_gap}, where both axes are in $\log$ scale, the slope is approximately $-1$, indicating that $\log{ \left( \pi^* - \pi_{\theta_t}\right)^\top r } = - \log{t} + C$, or equivalently $\left( \pi^* - \pi_{\theta_t}\right)^\top r = C^\prime/ t$, which is consistent with \cref{thm:almost_sure_convergence_rate_stochastic_npg_special_value_baseline}.
\begin{figure}
\centering
\begin{subfigure}[b]{.325\linewidth}
\includegraphics[width=\linewidth]{./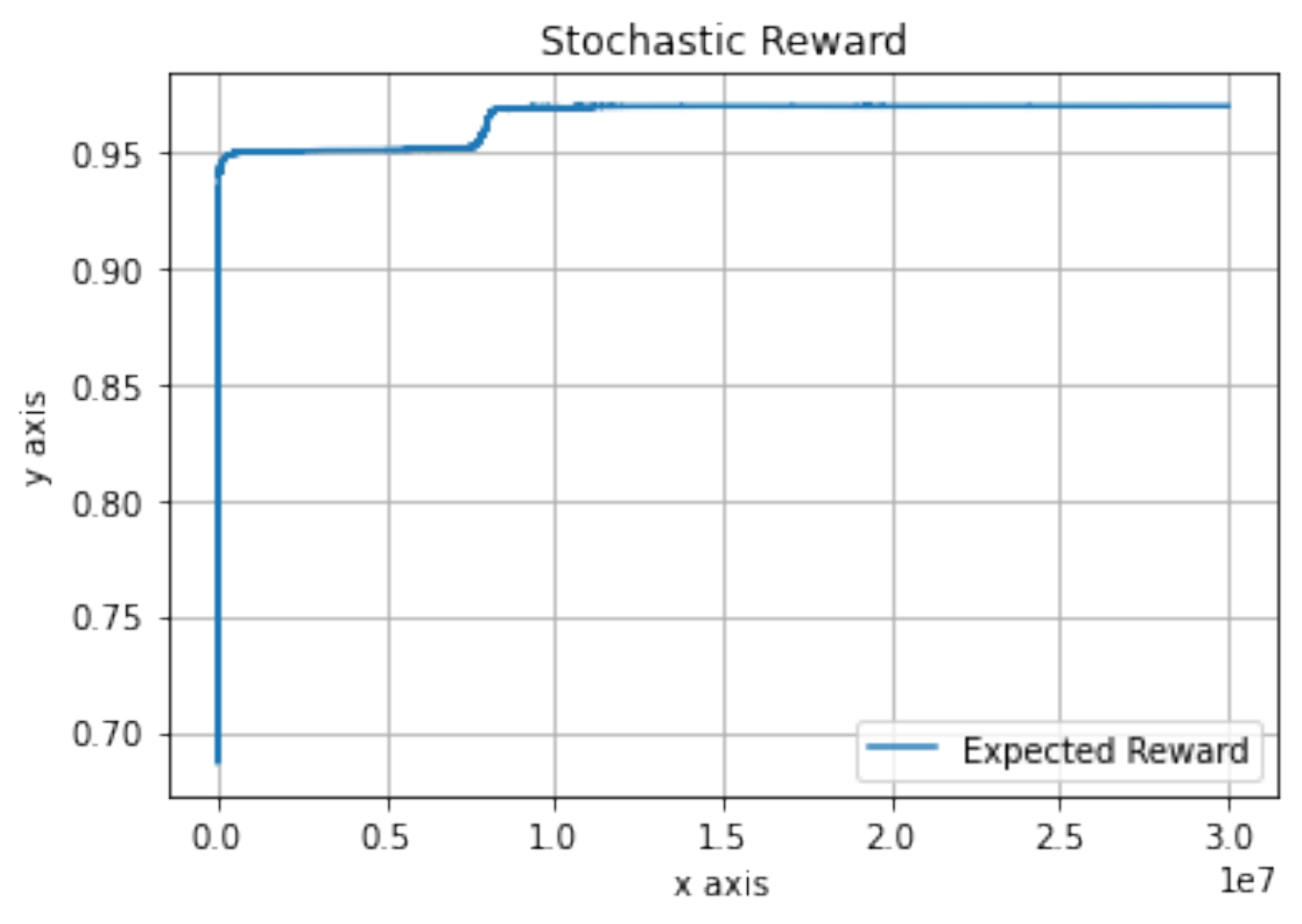}
\caption{$\pi_{\theta_t}^\top r$.}\label{fig:adversarial_initialization_expected_reward}
\end{subfigure}
\begin{subfigure}[b]{.325\linewidth}
\includegraphics[width=\linewidth]{./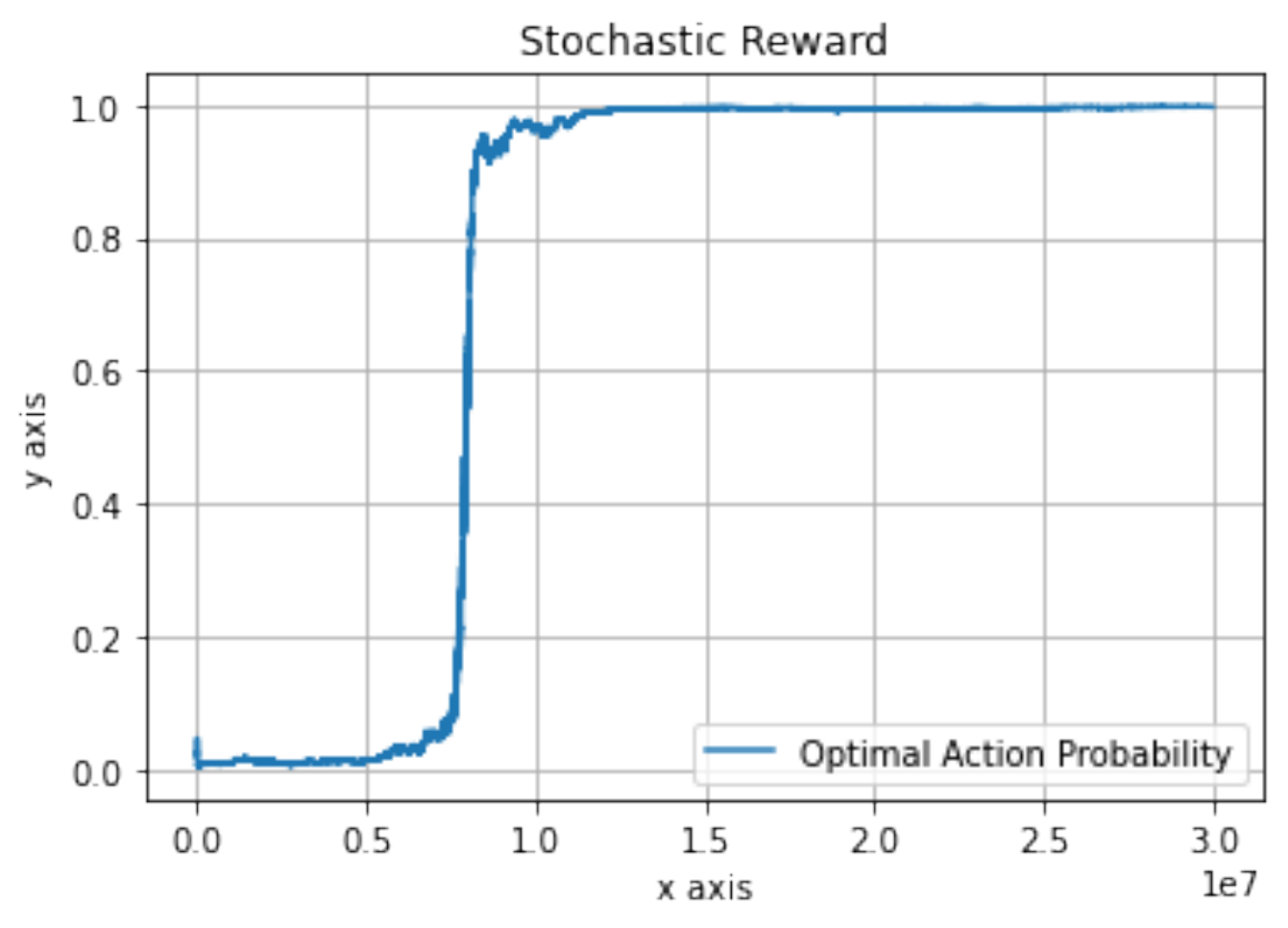}
\caption{$\pi_{\theta_t}(a^*)$.}\label{fig:adversarial_initialization_optimal_action_probability}
\end{subfigure}
\begin{subfigure}[b]{.325\linewidth}
\includegraphics[width=\linewidth]{./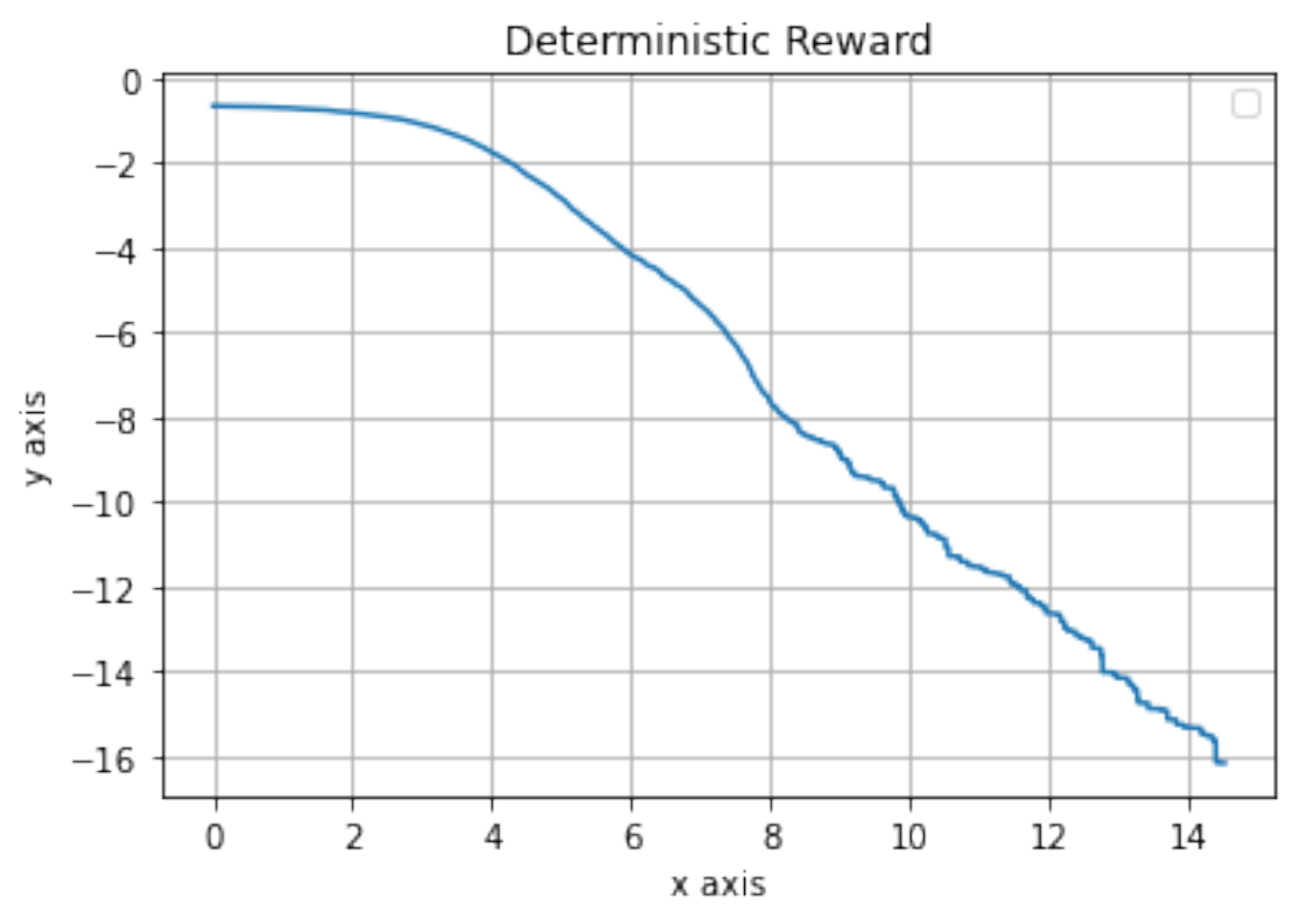}
\caption{$\log{\left( \pi^* - \pi_{\theta_t}\right)^\top r}$.}\label{fig:uniform_initialization_sub_optimality_gap}
\end{subfigure}
\caption{Adversarial initialization (a) and (b); uniform initialization (c).}
\label{fig:adversarial_initialization_uniform_initialization}
\vspace{-10pt}
\end{figure}

\section{Conclusion}
\label{sec:discussions_conclusions_future_work}

This work clarifies some of the longstanding mysteries those have separated the theory and practice of policy gradient optimization. The major finding is a state value baseline reduces the aggressiveness of the on-policy stochastic NPG update, which turns out to be necessary and sufficient for achieving almost sure convergence to a global optimum. 
The deeper understanding of the circular dependence between on-policy sampling and updating also dispels a common misconception about variance reduction, showing that bounded variance estimators are not necessary for achieving global convergence. 
The main technical innovation is the stochastic N\L{} inequality, and the subsequent arguments that establish global convergence, both of which depend critically on the value baseline. 

This work leaves open a number of interesting questions. 
\textit{First}, the $O(1/t)$ convergence rate contains an initialization dependent constant in \cref{lem:non_vanishing_nl_coefficient_stochastic_npg_value_baseline_special}, resulting from plateaus as observed in \cref{fig:adversarial_initialization_expected_reward}, which does not appear in results that use the direct parameterization \citep{denisov2020regret}. Thus the difficulty appears due to the non-linear softmax transform. 
Removing or improving this constant would impact practical performance, so investigating other techniques, such as regularization, optimism or momentum might be helpful. 
\textit{Second}, the results in this paper use the true state values as the baselines. 
It would be interesting to consider the effect of estimating the value baseline or using alternative baselines in policy optimization.
\textit{Finally}, the $O(1/t)$ last iteration convergence rate implies an optimal $O(\log{T})$ regret in stochastic bandit problems \citep{lai1985asymptotically}. 
The explanation of the circular dependence between sampling and updating is specific to on-policy PG optimization, but it is also consistent with the exploration exploitation dilemma in RL. 
In other words, this work suggests a completely new approach to the exploration-exploitation trade-off,
achieving provable bounds with ever requiring explicit uncertainty estimates, nor any concrete instantiation of the principle of optimism under uncertainty.

\begin{ack}
The authors would like to thank anonymous reviewers for their valuable comments. Jincheng Mei thanks Alekh Agarwal for reviewing a draft of this work. Csaba Szepesv\'ari and Dale Schuurmans gratefully acknowledge funding from the Canada
CIFAR AI Chairs Program, Amii and NSERC.
\end{ack}

{\small
\bibliography{neurips_refs}
}
\bibliographystyle{plain}

\newpage
\appendix

\begin{center}
\LARGE \textbf{Appendix}
\end{center}

The appendix is organized as follows.

\etocdepthtag.toc{mtappendix}
\etocsettagdepth{mtmainpaper}{none}
\etocsettagdepth{mtappendix}{subsubsection}
\begingroup
\parindent=0em
\etocsettocstyle{\rule{\linewidth}{\tocrulewidth}\vskip0.5\baselineskip}{\rule{\linewidth}{\tocrulewidth}}
\tableofcontents 
\endgroup

\section{Proofs for One-state MDPs}

\textbf{\cref{lem:non_uniform_lojasiewicz_stochastic_npg_value_baseline_special} }(Stochastic non-uniform \L{}ojasiewicz (N\L{}))\textbf{.}
Suppose \cref{assump:bounded_reward} holds. 
Let $r \in [0,1]^K$, $a^* \coloneqq \argmax_{a \in [K]}{ r(a) }$ denote the optimal action, and $\Delta \coloneqq r(a^*) - \max_{a \not= a^*}{ r(a) }$ denote the reward gap. Using \cref{update_rule:softmax_natural_pg_special_on_policy_stochastic_gradient_value_baseline} with on-policy sampling $a_t \sim \pi_{\theta_t}(\cdot)$ and IS estimator $\hat{r}_t$,
\begin{description}[style=unboxed,leftmargin=0cm]
    \item[(1)] if $\hat{r}_t$ is from \cref{def:simplified_on_policy_importance_sampling}, then with constant learning rate $\eta > 0$, we have, for all $t \ge 1$,
\begin{align}
\label{eq:non_uniform_lojasiewicz_stochastic_npg_value_baseline_special_result_1a_appendix}
    \pi_{\theta_{t+1}}^\top r - \pi_{\theta_t}^\top r &\ge 0, \qquad \text{almost surely (a.s.),} \qquad \text{and} \\
\label{eq:non_uniform_lojasiewicz_stochastic_npg_value_baseline_special_result_1b_appendix}
    \EEt{\pi_{\theta_{t+1}}^\top r} - \pi_{\theta_t}^\top r &\ge \frac{ \eta }{ 1 + \eta } \cdot \pi_{\theta_t}(a^*) \cdot \left( r(a^*) - \pi_{\theta_t}^\top r \right)^2,
\end{align}
where $\EEt{\cdot}$ is on randomness from on-policy sampling $a_t \sim \pi_{\theta_t}(\cdot)$.
    \item[(2)] if $\hat{r}_t$ is from \cref{def:on_policy_importance_sampling}, then with learning rate,
\begin{align}
\label{eq:non_uniform_lojasiewicz_stochastic_npg_value_baseline_special_result_2a_appendix}
    \eta = \frac{\pi_{\theta_t}(a_t) \cdot \left| r(a_t) - \pi_{\theta_t}^\top r \right|}{8 \cdot R_{\max}^2},
\end{align}
\noindent we have, for all $t \ge 1$,
\begin{align}
\label{eq:non_uniform_lojasiewicz_stochastic_npg_value_baseline_special_result_2b_appendix}
    \EEt{\pi_{\theta_{t+1}}^\top r} - \pi_{\theta_t}^\top r &\ge \frac{1}{16 \cdot R_{\max}^2} \cdot \sum_{i = 1}^{K} \pi_{\theta_t}(i)^2 \cdot \left| r(i) - \pi_{\theta_t}^\top r \right|^3 \\
    &\ge \frac{ 1 }{16 \cdot R_{\max}^2} \cdot \frac{\Delta}{K-1} \cdot \pi_{\theta_t}(a^*)^2 \cdot \left( r(a^*) - \pi_{\theta_t}^\top r \right)^2,
\end{align}
where $\EEt{\cdot}$ is on randomness from on-policy sampling $a_t \sim \pi_{\theta_t}(\cdot)$ and reward sampling $x \sim R_{a_t}$.
\end{description}
\begin{proof}
\textbf{First part. (1)} If $\hat{r}_t$ is from \cref{def:simplified_on_policy_importance_sampling}.

Since the results are concerned with the policies $\{ \pi_{\theta_t} \}_{t \ge 1}$ underlying the parameter $\{ \theta_t \}_{t \ge 1}$ and not the parameter vectors themselves, as noted after 
\cref{update_rule:softmax_natural_pg_special_on_policy_stochastic_gradient_value_baseline},
without loss of generality, in the rest of the proof we assume that the
update over parameter vectors is according to,
\begin{align}
\label{eq:non_uniform_lojasiewicz_stochastic_npg_value_baseline_special_deterministic_reward_intermediate_0}
\theta_{t+1}(a) \gets \theta_t(a) + \eta \cdot \frac{ \sI\left\{ a_t = a \right\} }{ \pi_{\theta_t}(a) } \cdot \left( r(a) - \pi_{\theta_t}^\top r \right)\,.
\end{align}

For all $t \ge 1$, for any action $ i \in [K]$, denote
\begin{align}
\label{eq:non_uniform_lojasiewicz_stochastic_npg_value_baseline_special_deterministic_reward_intermediate_1}
    \big[ \pi_{\theta_{t+1} }^\top r \ | \ a_t = i \big]
\end{align}
as the the value of $\pi_{\theta_{t+1}}^\top r$ given the sampled action $a_t = i$. 

According to \cref{eq:non_uniform_lojasiewicz_stochastic_npg_value_baseline_special_deterministic_reward_intermediate_0,def:simplified_on_policy_importance_sampling}, we have,
\begin{align}
\label{eq:non_uniform_lojasiewicz_stochastic_npg_value_baseline_special_deterministic_reward_intermediate_2}
\MoveEqLeft
	\big[ \pi_{\theta_{t+1}}^\top r \ | \ a_t = i \big] = \frac{\exp\Big\{ \theta_t(i) + \eta \cdot \frac{ r(i) - \pi_{\theta_t}^\top r }{\pi_{\theta_t}(i)} \Big\} \cdot r(i) + \sum_{j \not= i}{ \exp\{ \theta_t(j) \} \cdot r(j) } }{ \exp\Big\{ \theta_t(i) + \eta \cdot \frac{ r(i) - \pi_{\theta_t}^\top r }{\pi_{\theta_t}(i)} \Big\} + \sum_{j \not= i}{ \exp\{ \theta_t(j) \}} } \\
	&= \frac{ \pi_{\theta_t}(i) \cdot \exp\Big\{ \eta \cdot \frac{r(i) - \pi_{\theta_t}^\top r }{\pi_{\theta_t}(i)} \Big\} \cdot r(i) + \sum_{j \not= i}{ \pi_{\theta_t}(j) \cdot r(j) } }{ \pi_{\theta_t}(i) \cdot \exp\Big\{ \eta \cdot \frac{r(i) - \pi_{\theta_t}^\top r }{\pi_{\theta_t}(i)} \Big\} + \sum_{j \not= i}{ \pi_{\theta_t}(j) } },
\end{align}
where the last equation is by dividing $\sum_{a \in [K]}{ \exp\big\{ \theta_t(a) \big\} }$ from both the numerator and the denominator. Therefore, by algebra we have,
\begin{align}
\label{eq:non_uniform_lojasiewicz_stochastic_npg_value_baseline_special_deterministic_reward_intermediate_3}
	\big[ \pi_{\theta_{t+1}}^\top r \ | \ a_t = i \big] - \pi_{\theta_t}^\top r &= \frac{ \left[ \pi_{\theta_t}(i) \cdot \exp\Big\{ \eta \cdot \frac{ r(i) - \pi_{\theta_t}^\top r}{\pi_{\theta_t}(i)} \Big\} - \pi_{\theta_t}(i) \right] \cdot \left( r(i) - \pi_{\theta_t}^\top r \right) }{ \pi_{\theta_t}(i) \cdot \exp\Big\{ \eta \cdot \frac{ r(i) - \pi_{\theta_t}^\top r}{\pi_{\theta_t}(i)} \Big\} + \sum_{j \not= i}{ \pi_{\theta_t}(j) } } \\
	&= \frac{ \left[ \exp\Big\{ \eta \cdot \frac{r(i) - \pi_{\theta_t}^\top r}{\pi_{\theta_t}(i)} \Big\} - 1 \right] \cdot \left( r(i) - \pi_{\theta_t}^\top r \right) }{ \exp\Big\{ \eta \cdot \frac{r(i) - \pi_{\theta_t}^\top r}{\pi_{\theta_t}(i)} \Big\} + \frac{ 1 - \pi_{\theta_t}(i) }{ \pi_{\theta_t}(i) } } \ge 0,
\end{align}
where the last inequality is from $\left( e^{ c \cdot y} - 1 \right) \cdot y \ge 0$ for all $y \in \sR$ with $c \coloneqq \frac{\eta}{\pi_{\theta_t}(i)} > 0$. This proves \cref{eq:non_uniform_lojasiewicz_stochastic_npg_value_baseline_special_result_1a_appendix}, because of $i \in [K]$ is arbitrary.

For all $t \ge 1$, given current policy $\pi_{\theta_t}$, the expected reward of next policy $\pi_{\theta_{t+1}}^\top r$ is a random variable, and the randomness is from on-policy sampling $a_t \sim \pi_{\theta_t}(\cdot)$. The expected progress is,
\begin{align}
\label{eq:non_uniform_lojasiewicz_stochastic_npg_value_baseline_special_deterministic_reward_intermediate_4}
\MoveEqLeft
	\EEt{\pi_{\theta_{t+1}}^\top r} - \pi_{\theta_t}^\top r = \sum_{i = 1}^{K} \pi_{\theta_t}(i) \cdot \EEt{\pi_{\theta_{t+1} }^\top r \ | \ a_t = i } - \pi_{\theta_t}^\top r \qquad \left( a_t \sim \pi_{\theta_t}(\cdot) \right) \\
	&= \sum_{i = 1}^{K} \pi_{\theta_t}(i)  \cdot \left(  \big[ \pi_{\theta_{t+1} }^\top r \ | \ a_t = i \big] - \pi_{\theta_t}^\top r  \right) \\
	&= \sum_{i = 1}^{K} \pi_{\theta_t}(i) \cdot \frac{ \left[ \exp\Big\{ \eta \cdot \frac{r(i) - \pi_{\theta_t}^\top r}{\pi_{\theta_t}(i)} \Big\} - 1 \right] \cdot \left( r(i) - \pi_{\theta_t}^\top r \right) }{ \exp\Big\{ \eta \cdot \frac{r(i) - \pi_{\theta_t}^\top r}{\pi_{\theta_t}(i)} \Big\} + \frac{ 1 - \pi_{\theta_t}(i) }{ \pi_{\theta_t}(i) } } \qquad \left( \text{by \cref{eq:non_uniform_lojasiewicz_stochastic_npg_value_baseline_special_deterministic_reward_intermediate_3}} \right)
\end{align}
where $\big[ \pi_{\theta_{t+1} }^\top r \ | \ a_t = i \big]$ means the value of $\pi_{\theta_{t+1}}^\top r$ given the sampled action $a_i = i$. 

Partition the action set $[K]$ into three parts using $\pi_{\theta_t}^\top r $ as follows,
\begin{align}
\label{eq:non_uniform_lojasiewicz_stochastic_npg_value_baseline_special_deterministic_reward_intermediate_5}
    \gA_t^0 &\coloneqq \left\{ a^0 \in [K]: r(a^0) = \pi_{\theta_t}^\top r \right\}, \\
    \gA_t^+ &\coloneqq \left\{ a^+ \in [K]: r(a^+) > \pi_{\theta_t}^\top r \right\}, \\
    \gA_t^- &\coloneqq \left\{ a^- \in [K]: r(a^-) < \pi_{\theta_t}^\top r \right\}.
\end{align}
From \cref{eq:non_uniform_lojasiewicz_stochastic_npg_value_baseline_special_deterministic_reward_intermediate_4}, we have,
\begin{align}
\label{eq:non_uniform_lojasiewicz_stochastic_npg_value_baseline_special_deterministic_reward_intermediate_6}
    \EEt{\pi_{\theta_{t+1}}^\top r} - \pi_{\theta_t}^\top r &= \sum_{a^+ \in \gA_t^+} \pi_{\theta_t}(a^+) \cdot \frac{ \left[ \exp\Big\{ \eta \cdot \frac{r(a^+) - \pi_{\theta_t}^\top r}{\pi_{\theta_t}(a^+)} \Big\} - 1 \right] \cdot \left( r(a^+) - \pi_{\theta_t}^\top r \right) }{ \exp\Big\{ \eta \cdot \frac{r(a^+) - \pi_{\theta_t}^\top r}{\pi_{\theta_t}(a^+)} \Big\} + \frac{ 1 - \pi_{\theta_t}(a^+) }{ \pi_{\theta_t}(a^+) } } \\
    &\qquad + \sum_{a^- \in \gA_t^-} \pi_{\theta_t}(a^-) \cdot \frac{ \left[ \exp\Big\{ \eta \cdot \frac{r(a^-) - \pi_{\theta_t}^\top r}{\pi_{\theta_t}(a^-)} \Big\} - 1 \right] \cdot \left( r(a^-) - \pi_{\theta_t}^\top r \right) }{ \exp\Big\{ \eta \cdot \frac{r(a^-) - \pi_{\theta_t}^\top r}{\pi_{\theta_t}(a^-)} \Big\} + \frac{ 1 - \pi_{\theta_t}(a^-) }{ \pi_{\theta_t}(a^-) } }.
\end{align}
For any $a^+ \in \gA_t^+$, we have,
\begin{align}
\MoveEqLeft
\label{eq:non_uniform_lojasiewicz_stochastic_npg_value_baseline_special_deterministic_reward_intermediate_7}
    \frac{ \left[ \exp\Big\{ \eta \cdot \frac{r(a^+) - \pi_{\theta_t}^\top r}{\pi_{\theta_t}(a^+)} \Big\} - 1 \right] \cdot \left( r(a^+) - \pi_{\theta_t}^\top r \right) }{ \exp\Big\{ \eta \cdot \frac{r(a^+) - \pi_{\theta_t}^\top r}{\pi_{\theta_t}(a^+)} \Big\} + \frac{ 1 - \pi_{\theta_t}(a^+) }{ \pi_{\theta_t}(a^+) } } \ge \frac{ \eta \cdot \frac{r(a^+) - \pi_{\theta_t}^\top r}{\pi_{\theta_t}(a^+)} \cdot \left( r(a^+) - \pi_{\theta_t}^\top r \right) }{ \eta \cdot \frac{r(a^+) - \pi_{\theta_t}^\top r}{\pi_{\theta_t}(a^+)} + \frac{ 1 }{ \pi_{\theta_t}(a^+) } } \qquad \left( e^x - 1 \ge x > 0 \right) \\
    &= \frac{\eta \cdot \left( r(a^+) - \pi_{\theta_t}^\top r \right)^2 }{ \eta \cdot \left( r(a^+) - \pi_{\theta_t}^\top r \right) + 1} \ge \frac{\eta}{1 + \eta} \cdot \left( r(a^+) - \pi_{\theta_t}^\top r \right)^2. \qquad \left( r \in [0,1]^K \right)
\end{align}
For any $a^- \in \gA_t^-$, we have,
\begin{align}
\MoveEqLeft
\label{eq:non_uniform_lojasiewicz_stochastic_npg_value_baseline_special_deterministic_reward_intermediate_8}
    \frac{ \left[ \exp\Big\{ \eta \cdot \frac{r(a^-) - \pi_{\theta_t}^\top r}{\pi_{\theta_t}(a^-)} \Big\} - 1 \right] \cdot \left( r(a^-) - \pi_{\theta_t}^\top r \right) }{ \exp\Big\{ \eta \cdot \frac{r(a^-) - \pi_{\theta_t}^\top r}{\pi_{\theta_t}(a^-)} \Big\} + \frac{ 1 - \pi_{\theta_t}(a^-) }{ \pi_{\theta_t}(a^-) } } = \frac{ \left[ \exp\Big\{ \eta \cdot \frac{ \pi_{\theta_t}^\top r - r(a^-) }{\pi_{\theta_t}(a^-)} \Big\} - 1 \right] \cdot \left( \pi_{\theta_t}^\top r - r(a^-)  \right) }{ \left[ \exp\Big\{ \eta \cdot \frac{ \pi_{\theta_t}^\top r - r(a^-) }{\pi_{\theta_t}(a^-)} \Big\} - 1 \right] \cdot \frac{ 1 - \pi_{\theta_t}(a^-) }{ \pi_{\theta_t}(a^-) } + \frac{1}{\pi_{\theta_t}(a^-)} } \\
    &\ge \frac{ \eta \cdot \frac{ \pi_{\theta_t}^\top r - r(a^-) }{\pi_{\theta_t}(a^-)} \cdot \left( \pi_{\theta_t}^\top r - r(a^-)  \right) }{ \eta \cdot \frac{ \pi_{\theta_t}^\top r - r(a^-) }{\pi_{\theta_t}(a^-)} \cdot \frac{ 1 - \pi_{\theta_t}(a^-) }{ \pi_{\theta_t}(a^-) } + \frac{1}{\pi_{\theta_t}(a^-)} } \qquad \left( e^x - 1 \ge x > 0 \right) \\
    &= \frac{\eta \cdot \pi_{\theta_t}(a^-) \cdot \left( \pi_{\theta_t}^\top r - r(a^-)  \right)^2 }{ \eta \cdot \left( \pi_{\theta_t}^\top r - r(a^-)  \right)\cdot \big( 1 - \pi_{\theta_t}(a^-) \big) + \pi_{\theta_t}(a^-)} \\
    &\ge \frac{\eta}{1 + \eta} \cdot \pi_{\theta_t}(a^-) \cdot \left( \pi_{\theta_t}^\top r - r(a^-)  \right)^2 \qquad \left( r \in [0,1]^K, \ \pi_{\theta_t}(a^-) \in (0, 1) \right)
\end{align}
Combining \cref{eq:non_uniform_lojasiewicz_stochastic_npg_value_baseline_special_deterministic_reward_intermediate_6,eq:non_uniform_lojasiewicz_stochastic_npg_value_baseline_special_deterministic_reward_intermediate_7,eq:non_uniform_lojasiewicz_stochastic_npg_value_baseline_special_deterministic_reward_intermediate_8}, we have,
\begin{align}
\label{eq:non_uniform_lojasiewicz_stochastic_npg_value_baseline_special_deterministic_reward_intermediate_9}
    \EEt{\pi_{\theta_{t+1}}^\top r} - \pi_{\theta_t}^\top r &\ge \sum_{a^+ \in \gA_t^+} \pi_{\theta_t}(a^+) \cdot \frac{\eta}{1 + \eta} \cdot \left( r(a^+) - \pi_{\theta_t}^\top r  \right)^2 \\
    &\qquad + \sum_{a^- \in \gA_t^-} \pi_{\theta_t}(a^-) \cdot \frac{\eta}{1 + \eta} \cdot \pi_{\theta_t}(a^-) \cdot \left( \pi_{\theta_t}^\top r - r(a^-)  \right)^2 \\
    &\ge \frac{ \eta }{ 1 + \eta } \cdot \pi_{\theta_t}(a^*) \cdot \left( r(a^*) - \pi_{\theta_t}^\top r \right)^2. \qquad \left( a^* \in \gA_t^+ \right)
\end{align}

\textbf{Second part. (2)} If $\hat{r}_t$ is from \cref{def:on_policy_importance_sampling}.

As noted after 
\cref{update_rule:softmax_natural_pg_special_on_policy_stochastic_gradient_value_baseline}, we analyze \cref{update_rule:equivalent_update_softmax_natural_pg_special_on_policy_stochastic_gradient_value_baseline}, which is duplicated as follows,
\begin{align}
\label{eq:non_uniform_lojasiewicz_stochastic_npg_value_baseline_special_stochastic_reward_intermediate_0}
\theta_{t+1}(a) \gets \theta_t(a) + \eta \cdot \frac{ \sI\left\{ a_t = a \right\} }{ \pi_{\theta_t}(a) } \cdot \left( x_t(a) - \pi_{\theta_t}^\top r \right)\,.
\end{align}

For all $t \ge 1$, given current policy $\pi_{\theta_t}$, the expected reward of next policy $\pi_{\theta_{t+1}}^\top r$ is a random variable, and the randomness is from on-policy sampling $a_t \sim \pi_{\theta_t}(\cdot)$ and reward sampling $x \sim R_{a_t}$. The expected progress after one update is,
\begin{align}
\label{eq:non_uniform_lojasiewicz_stochastic_npg_value_baseline_special_stochastic_reward_intermediate_1}
\MoveEqLeft
	\EEt{\pi_{\theta_{t+1}}^\top r} - \pi_{\theta_t}^\top r = \sum_{i = 1}^{K} \pi_{\theta_t}(i) \cdot \EEt{\pi_{\theta_{t+1} }^\top r \ | \ a_t = i } - \pi_{\theta_t}^\top r \qquad \left( a_t \sim \pi_{\theta_t}(\cdot) \right) \\
	&= \sum_{i = 1}^{K} \pi_{\theta_t}(i)  \cdot \underbrace{ \left(  \EEt{\pi_{\theta_{t+1} }^\top r \ | \ a_t = i } - \pi_{\theta_t}^\top r  \right) }_{\text{expected progress of } a_t = i} \\
	&= \sum_{i = 1}^{K} \pi_{\theta_t}(i) \cdot \left(  \int_{-R_{\max}}^{R_{\max}}{ \big[ \pi_{\theta_{t+1}}^\top r \ | \ a_t = i, \ R_t = x \big] \cdot P_i(x) \mu(d x)} - \pi_{\theta_t}^\top r  \right) \\
	&= \sum_{i = 1}^{K} \pi_{\theta_t}(i) \cdot \int_{-R_{\max}}^{R_{\max}}{ \underbrace{ \left( \big[ \pi_{\theta_{t+1}}^\top r \ | \ a_t = i, \ R_t = x \big] - \pi_{\theta_t}^\top r \right)}_{\text{progress of } a_t = i, \ R_t = x} \cdot P_i(x) \mu(d x)},
\end{align}
where $\big[ \pi_{\theta_{t+1}}^\top r \ | \ a_t = i, \ \ R_t = x \big]$ means the value of $\pi_{\theta_{t+1}}^\top r$ given the sampled action $a_i = i$ and sampled reward $R_t = x$. According to \cref{eq:non_uniform_lojasiewicz_stochastic_npg_value_baseline_special_stochastic_reward_intermediate_0,def:on_policy_importance_sampling}, we have,
\begin{align}
\label{eq:non_uniform_lojasiewicz_stochastic_npg_value_baseline_special_stochastic_reward_intermediate_2}
	\big[ \pi_{\theta_{t+1}}^\top r \ | \ a_t = i, \ R_t = x \big] &= \frac{\exp\Big\{ \theta_t(i) + \eta \cdot \frac{ x - \pi_{\theta_t}^\top r }{\pi_{\theta_t}(i)} \Big\} \cdot r(i) + \sum_{j \not= i}{ \exp\{ \theta_t(j) \} \cdot r(j) } }{ \exp\Big\{ \theta_t(i) + \eta \cdot \frac{ x - \pi_{\theta_t}^\top r }{\pi_{\theta_t}(i)} \Big\} + \sum_{j \not= i}{ \exp\{ \theta_t(j) \}} } \\
	&= \frac{ \pi_{\theta_t}(i) \cdot \exp\Big\{ \eta \cdot \frac{x - \pi_{\theta_t}^\top r }{\pi_{\theta_t}(i)} \Big\} \cdot r(i) + \sum_{j \not= i}{ \pi_{\theta_t}(j) \cdot r(j) } }{ \pi_{\theta_t}(i) \cdot \exp\Big\{ \eta \cdot \frac{x - \pi_{\theta_t}^\top r }{\pi_{\theta_t}(i)} \Big\} + \sum_{j \not= i}{ \pi_{\theta_t}(j) } },
\end{align}
where the last equation is by dividing $\sum_{a \in [K]}{ \exp\big\{ \theta_t(a) \big\} }$ from both the numerator and the denominator. Therefore, by algebra we have,
\begin{align}
\label{eq:non_uniform_lojasiewicz_stochastic_npg_value_baseline_special_stochastic_reward_intermediate_3}
	\big[ \pi_{\theta_{t+1}}^\top r \ | \ a_t = i, \ R_t = x \big] - \pi_{\theta_t}^\top r &= \frac{ \left[ \pi_{\theta_t}(i) \cdot \exp\Big\{ \eta \cdot \frac{ x - \pi_{\theta_t}^\top r}{\pi_{\theta_t}(i)} \Big\} - \pi_{\theta_t}(i) \right] \cdot \left( r(i) - \pi_{\theta_t}^\top r \right) }{ \pi_{\theta_t}(i) \cdot \exp\Big\{ \eta \cdot \frac{ x - \pi_{\theta_t}^\top r}{\pi_{\theta_t}(i)} \Big\} + \sum_{j \not= i}{ \pi_{\theta_t}(j) } } \\
	&= \frac{ \left[ \exp\Big\{ \eta \cdot \frac{x - \pi_{\theta_t}^\top r}{\pi_{\theta_t}(i)} \Big\} - 1 \right] \cdot \left( r(i) - \pi_{\theta_t}^\top r \right) }{ \exp\Big\{ \eta \cdot \frac{x - \pi_{\theta_t}^\top r}{\pi_{\theta_t}(i)} \Big\} + \frac{ 1 - \pi_{\theta_t}(i) }{ \pi_{\theta_t}(i) } }.
\end{align}
Combining \cref{eq:non_uniform_lojasiewicz_stochastic_npg_value_baseline_special_stochastic_reward_intermediate_1,eq:non_uniform_lojasiewicz_stochastic_npg_value_baseline_special_stochastic_reward_intermediate_3}, we have,
\begin{align}
\label{eq:non_uniform_lojasiewicz_stochastic_npg_value_baseline_special_stochastic_reward_intermediate_4}
\MoveEqLeft
	\EEt{\pi_{\theta_{t+1}}^\top r} - \pi_{\theta_t}^\top r = \sum_{i = 1}^{K} \pi_{\theta_t}(i) \cdot \int_{-R_{\max}}^{R_{\max}}{ \frac{ \left[ \exp\Big\{ \eta \cdot \frac{x - \pi_{\theta_t}^\top r}{\pi_{\theta_t}(i)} \Big\} - 1 \right] \cdot \left( r(i) - \pi_{\theta_t}^\top r \right) }{ \exp\Big\{ \eta \cdot \frac{x - \pi_{\theta_t}^\top r}{\pi_{\theta_t}(i)} \Big\} + \frac{ 1 - \pi_{\theta_t}(i) }{ \pi_{\theta_t}(i) } } \cdot P_i(x) \mu(d x)} \\
	&= \sum_{i = 1}^{K} \pi_{\theta_t}(i) \cdot \left( r(i) - \pi_{\theta_t}^\top r \right) \cdot \Bigg[ \int_{ x \in \gX_t^+ } \frac{ \exp\Big\{ \eta \cdot \frac{x - \pi_{\theta_t}^\top r}{\pi_{\theta_t}(i)} \Big\} - 1 }{ \exp\Big\{ \eta \cdot \frac{x - \pi_{\theta_t}^\top r}{\pi_{\theta_t}(i)} \Big\} + \frac{ 1 - \pi_{\theta_t}(i) }{ \pi_{\theta_t}(i) } } \cdot P_i(x) \mu(d x) \\
	&\qquad + \int_{ x \in \gX_t^- } \frac{ \exp\Big\{ \eta \cdot \frac{x - \pi_{\theta_t}^\top r}{\pi_{\theta_t}(i)} \Big\} - 1 }{ \exp\Big\{ \eta \cdot \frac{x - \pi_{\theta_t}^\top r}{\pi_{\theta_t}(i)} \Big\} + \frac{ 1 - \pi_{\theta_t}(i) }{ \pi_{\theta_t}(i) } } \cdot P_i(x) \mu(d x) \Bigg],
\end{align}
where $\gX_t^+$ and $\gX_t^-$ are defined by partitioning the sampled reward range $[ - R_{\max}, R_{\max} ]$ into two parts for the current iteration,
\begin{align}
\label{eq:non_uniform_lojasiewicz_stochastic_npg_value_baseline_special_stochastic_reward_intermediate_5}
    \gX_t^+ &\coloneqq \left\{ x \in [- R_{\max}, R_{\max}]: x - \pi_{\theta_t}^\top r \ge 0 \right\} =  [ \pi_{\theta_t}^\top r, \ R_{\max} ] , \\
    \gX_t^- &\coloneqq \left\{ x \in [- R_{\max}, R_{\max}]: x - \pi_{\theta_t}^\top r < 0 \right\} = [ - R_{\max}, \ \pi_{\theta_t}^\top r).
\end{align}
We next prove that, in \cref{eq:non_uniform_lojasiewicz_stochastic_npg_value_baseline_special_stochastic_reward_intermediate_4}, for any sampled action $a_t = i \in [K]$, we have,
\begin{align}
    \int_{-R_{\max}}^{R_{\max}}{ \frac{ \left[ \exp\Big\{ \eta \cdot \frac{x - \pi_{\theta_t}^\top r}{\pi_{\theta_t}(i)} \Big\} - 1 \right] \cdot \left( r(i) - \pi_{\theta_t}^\top r \right) }{ \exp\Big\{ \eta \cdot \frac{x - \pi_{\theta_t}^\top r}{\pi_{\theta_t}(i)} \Big\} + \frac{ 1 - \pi_{\theta_t}(i) }{ \pi_{\theta_t}(i) } } \cdot P_i(x) \mu(d x)} \ge \frac{\eta}{2} \cdot \left( r(i) - \pi_{\theta_t}^\top r \right)^2.
\end{align}
There are three cases of sampled action $a_t = i \in [K]$.

\textbf{Case (a).} $i \in [K]$ is a ``good'' action at the current iteration, i.e., $r(i) - \pi_{\theta_t}^\top r > 0$.

According to \cref{eq:piecewise_linear_domination_result_1} in \cref{lem:piecewise_linear_domination}, given any fixed $p \in (0, 1]$, and any fixed $\epsilon \in [0, 1]$, we have,
\begin{align}
\label{eq:non_uniform_lojasiewicz_stochastic_npg_value_baseline_special_stochastic_reward_intermediate_6}
	f_p(y) \coloneqq \frac{e^y - 1}{e^y + \frac{1 - p}{ p } } \ge \left( 1 - \epsilon \right) \cdot p \cdot y, \text{ for all } y \in  [0, \epsilon].
\end{align}
Let $p = \pi_{\theta_t}(i) \in (0, 1]$ according to the softmax parameterization. Let
\begin{align}
\label{eq:non_uniform_lojasiewicz_stochastic_npg_value_baseline_special_stochastic_reward_intermediate_7}
    \epsilon = \frac{1}{2} \cdot \frac{ r(i) - \pi_{\theta_t}^\top r }{  \int_{-R_{\max}}^{R_{\max}}{ \left| x - \pi_{\theta_t}^\top r \right| \cdot P_i(x) \mu(d x)} } > 0,
\end{align}
where the inequality is because of $r(i) - \pi_{\theta_t}^\top r > 0$. Also note that,
\begin{align}
\label{eq:non_uniform_lojasiewicz_stochastic_npg_value_baseline_special_stochastic_reward_intermediate_8}
\MoveEqLeft
    \epsilon = \frac{1}{2} \cdot \frac{ \left | r(i) - \pi_{\theta_t}^\top r \right| }{  \int_{-R_{\max}}^{R_{\max}}{ \left| x - \pi_{\theta_t}^\top r \right| \cdot P_i(x) \mu(d x)} } \qquad \left( r(i) - \pi_{\theta_t}^\top r > 0 \right) \\
    &= \frac{1}{2} \cdot \frac{ \left | \int_{-R_{\max}}^{R_{\max}}{ x \cdot P_i(x) \mu(d x)} - \pi_{\theta_t}^\top r \right| }{  \int_{-R_{\max}}^{R_{\max}}{ \left| x - \pi_{\theta_t}^\top r \right| \cdot P_i(x) \mu(d x)} } \qquad \left( \text{by \cref{assump:bounded_reward}} \right) \\
    &= \frac{1}{2} \cdot \frac{ \left | \int_{-R_{\max}}^{R_{\max}}{ \left( x - \pi_{\theta_t}^\top r \right) \cdot P_i(x) \mu(d x)} \right| }{  \int_{-R_{\max}}^{R_{\max}}{ \left| x - \pi_{\theta_t}^\top r \right| \cdot P_i(x) \mu(d x)} } \\
    &\le \frac{1}{2} \cdot \frac{ \int_{-R_{\max}}^{R_{\max}}{ \left| x - \pi_{\theta_t}^\top r \right| \cdot P_i(x) \mu(d x)} }{  \int_{-R_{\max}}^{R_{\max}}{ \left| x - \pi_{\theta_t}^\top r \right| \cdot P_i(x) \mu(d x)} } \qquad \left( \text{by triangle inequality} \right) \\
    &= 1 / 2 \le 1,  
\end{align}
which means $\epsilon \in (0, 1]$. Let
\begin{align}
\label{eq:non_uniform_lojasiewicz_stochastic_npg_value_baseline_special_stochastic_reward_intermediate_9}
    y = \eta \cdot \frac{x - \pi_{\theta_t}^\top r}{\pi_{\theta_t}(i)}.
\end{align}
We have,
\begin{align}
\MoveEqLeft
    \left| y \right| = \frac{\pi_{\theta_t}(i) \cdot \left| r(i) - \pi_{\theta_t}^\top r \right|}{8 \cdot R_{\max}^2} \cdot \frac{ \left| x - \pi_{\theta_t}^\top r \right| }{\pi_{\theta_t}(i)} \qquad \left( \text{by \cref{eq:non_uniform_lojasiewicz_stochastic_npg_value_baseline_special_stochastic_reward_result_1}} \right) \\
    &\le \frac{ \left| r(i) - \pi_{\theta_t}^\top r \right| }{ 4 \cdot R_{\max} } \qquad \left( \left| x - \pi_{\theta_t}^\top r \right| \le 2 \cdot R_{\max} \right) \\
    &\le \frac{1}{2} \cdot \frac{ \left| r(i) - \pi_{\theta_t}^\top r \right| }{  \int_{-R_{\max}}^{R_{\max}}{ \left| x - \pi_{\theta_t}^\top r \right| \cdot P_i(x) \mu(d x)} } \qquad \left( \int_{-R_{\max}}^{R_{\max}}{ \left| x - \pi_{\theta_t}^\top r \right| \cdot P_i(x) \mu(d x)} \le 2 \cdot R_{\max} \right) \\
    &= \epsilon.
\end{align}
Therefore, we have,
\begin{align}
\label{eq:non_uniform_lojasiewicz_stochastic_npg_value_baseline_special_stochastic_reward_intermediate_10}
\MoveEqLeft
    \int_{ x \in \gX_t^+ } \frac{ \exp\Big\{ \eta \cdot \frac{x - \pi_{\theta_t}^\top r}{\pi_{\theta_t}(i)} \Big\} - 1 }{ \exp\Big\{ \eta \cdot \frac{x - \pi_{\theta_t}^\top r}{\pi_{\theta_t}(i)} \Big\} + \frac{ 1 - \pi_{\theta_t}(i) }{ \pi_{\theta_t}(i) } } \cdot P_i(x) \mu(d x) \\
    &\ge \int_{ x \in \gX_t^+ } \left( 1 - \epsilon \right) \cdot \pi_{\theta_t}(i) \cdot \eta \cdot \frac{x - \pi_{\theta_t}^\top r}{\pi_{\theta_t}(i)} \cdot P_i(x) \mu(d x) \qquad \left( \text{by \cref{eq:non_uniform_lojasiewicz_stochastic_npg_value_baseline_special_stochastic_reward_intermediate_6}} \right) \\
    &= \eta \cdot \int_{ x \in \gX_t^+ } \left( 1 - \epsilon \right) \cdot \left( x - \pi_{\theta_t}^\top r \right) \cdot P_i(x) \mu(d x).
\end{align}
According to \cref{eq:piecewise_linear_domination_result_2} in \cref{lem:piecewise_linear_domination}, given any fixed $p \in (0, 1]$, and any fixed $\epsilon \in [0, 1]$, we have,
\begin{align}
\label{eq:non_uniform_lojasiewicz_stochastic_npg_value_baseline_special_stochastic_reward_intermediate_11}
	\frac{e^y - 1}{e^y + \frac{1 - p}{ p } } \ge \left( 1 + \epsilon \right) \cdot p \cdot y, \text{ for all } y \in  [- \epsilon, 0].
\end{align}
Using the same values of $p = \pi_{\theta_t}(i)$, $\epsilon$ in \cref{eq:non_uniform_lojasiewicz_stochastic_npg_value_baseline_special_stochastic_reward_intermediate_7}, and $y$ in \cref{eq:non_uniform_lojasiewicz_stochastic_npg_value_baseline_special_stochastic_reward_intermediate_9}, we have,
\begin{align}
\label{eq:non_uniform_lojasiewicz_stochastic_npg_value_baseline_special_stochastic_reward_intermediate_12}
\MoveEqLeft
    \int_{ x \in \gX_t^- } \frac{ \exp\Big\{ \eta \cdot \frac{x - \pi_{\theta_t}^\top r}{\pi_{\theta_t}(i)} \Big\} - 1 }{ \exp\Big\{ \eta \cdot \frac{x - \pi_{\theta_t}^\top r}{\pi_{\theta_t}(i)} \Big\} + \frac{ 1 - \pi_{\theta_t}(i) }{ \pi_{\theta_t}(i) } } \cdot P_i(x) \mu(d x) \\
    &\ge \int_{ x \in \gX_t^- } \left( 1 + \epsilon \right) \cdot \pi_{\theta_t}(i) \cdot \eta \cdot \frac{x - \pi_{\theta_t}^\top r}{\pi_{\theta_t}(i)} \cdot P_i(x) \mu(d x) \qquad \left( \text{by \cref{eq:non_uniform_lojasiewicz_stochastic_npg_value_baseline_special_stochastic_reward_intermediate_11}} \right) \\
    &= \eta \cdot \int_{ x \in \gX_t^- } \left( 1 + \epsilon \right) \cdot \left( x - \pi_{\theta_t}^\top r \right) \cdot P_i(x) \mu(d x).
\end{align}
Combining \cref{eq:non_uniform_lojasiewicz_stochastic_npg_value_baseline_special_stochastic_reward_intermediate_4,eq:non_uniform_lojasiewicz_stochastic_npg_value_baseline_special_stochastic_reward_intermediate_10,eq:non_uniform_lojasiewicz_stochastic_npg_value_baseline_special_stochastic_reward_intermediate_12}, we have,
\begin{align}
\label{eq:non_uniform_lojasiewicz_stochastic_npg_value_baseline_special_stochastic_reward_intermediate_13}
\MoveEqLeft
	\int_{-R_{\max}}^{R_{\max}}{ \frac{ \left[ \exp\Big\{ \eta \cdot \frac{x - \pi_{\theta_t}^\top r}{\pi_{\theta_t}(i)} \Big\} - 1 \right] \cdot \left( r(i) - \pi_{\theta_t}^\top r \right) }{ \exp\Big\{ \eta \cdot \frac{x - \pi_{\theta_t}^\top r}{\pi_{\theta_t}(i)} \Big\} + \frac{ 1 - \pi_{\theta_t}(i) }{ \pi_{\theta_t}(i) } } \cdot P_i(x) \mu(d x)} \\
	&\ge \left( r(i) - \pi_{\theta_t}^\top r \right) \cdot \eta \cdot \Bigg[ \int_{ x \in \gX_t^+ } \left( 1 - \epsilon \right) \cdot \left( x - \pi_{\theta_t}^\top r \right) \cdot P_i(x) \mu(d x) \\
	&\qquad + \int_{ x \in \gX_t^- } \left( 1 + \epsilon \right) \cdot \left( x - \pi_{\theta_t}^\top r \right) \cdot P_i(x) \mu(d x) \Bigg] \qquad \left( \text{since } r(i) - \pi_{\theta_t}^\top r > 0 \right) \\
	&= \left( r(i) - \pi_{\theta_t}^\top r \right) \cdot \eta \cdot \Bigg[ \int_{-R_{\max}}^{R_{\max}} \left( x - \pi_{\theta_t}^\top r \right) \cdot P_i(x) \mu(d x) \qquad \left( \text{by \cref{eq:non_uniform_lojasiewicz_stochastic_npg_value_baseline_special_stochastic_reward_intermediate_5}} \right) \\
	&\qquad - \epsilon \cdot \left( \int_{ x \in \gX_t^+ } \left( x - \pi_{\theta_t}^\top r \right) \cdot P_i(x) \mu(d x) - \int_{ x \in \gX_t^- } \left( x - \pi_{\theta_t}^\top r \right) \cdot P_i(x) \mu(d x) \right) \Bigg] \\
	&= \left( r(i) - \pi_{\theta_t}^\top r \right) \cdot \eta \cdot \Bigg[ \left( r(i) - \pi_{\theta_t}^\top r \right) \qquad \left( \text{by \cref{assump:bounded_reward}} \right) \\
	&\qquad - \epsilon \cdot \int_{-R_{\max}}^{R_{\max}}{ \left| x - \pi_{\theta_t}^\top r \right| \cdot P_i(x) \mu(d x)} \Bigg] \qquad \left( \text{by \cref{eq:non_uniform_lojasiewicz_stochastic_npg_value_baseline_special_stochastic_reward_intermediate_5}} \right) \\
	&= \left( r(i) - \pi_{\theta_t}^\top r \right) \cdot \eta \cdot \left[ \left( r(i) - \pi_{\theta_t}^\top r \right) - \frac{1}{2} \cdot \left( r(i) - \pi_{\theta_t}^\top r \right) \right] \qquad \left( \text{by \cref{eq:non_uniform_lojasiewicz_stochastic_npg_value_baseline_special_stochastic_reward_intermediate_7}} \right) \\
	&= \frac{\eta}{2} \cdot \left( r(i) - \pi_{\theta_t}^\top r \right)^2.
\end{align}

\textbf{Case (b).} $i \in [K]$ is a ``bad'' action at the current iteration, i.e., $r(i) - \pi_{\theta_t}^\top r < 0$.

According to \cref{eq:piecewise_linear_domination_result_1} in \cref{lem:piecewise_linear_domination}, given any fixed $p \in (0, 1]$, and any fixed $\epsilon \in [0, 1]$, we have,
\begin{align}
\label{eq:non_uniform_lojasiewicz_stochastic_npg_value_baseline_special_stochastic_reward_intermediate_14}
	\frac{e^y - 1}{e^y + \frac{1 - p}{ p } } \le \left( 1 + \epsilon \right) \cdot p \cdot y, \text{ for all } y \in  [0, \epsilon].
\end{align}
Let $p = \pi_{\theta_t}(i) \in (0, 1]$ according to the softmax parameterization. Let
\begin{align}
\label{eq:non_uniform_lojasiewicz_stochastic_npg_value_baseline_special_stochastic_reward_intermediate_15}
    \epsilon = \frac{1}{2} \cdot \frac{ - \left( r(i) - \pi_{\theta_t}^\top r \right) }{  \sum_{m=1}^{M} P_i(m) \cdot \left| R_i(m) - \pi_{\theta_t}^\top r \right| } > 0.
\end{align}
We have $\epsilon \le 1$ according to \cref{eq:non_uniform_lojasiewicz_stochastic_npg_value_baseline_special_stochastic_reward_intermediate_8}. Using the same value of $y$ in \cref{eq:non_uniform_lojasiewicz_stochastic_npg_value_baseline_special_stochastic_reward_intermediate_9}, we have,
\begin{align}
\label{eq:non_uniform_lojasiewicz_stochastic_npg_value_baseline_special_stochastic_reward_intermediate_16}
\MoveEqLeft
    \int_{ x \in \gX_t^+ } \frac{ \exp\Big\{ \eta \cdot \frac{x - \pi_{\theta_t}^\top r}{\pi_{\theta_t}(i)} \Big\} - 1 }{ \exp\Big\{ \eta \cdot \frac{x - \pi_{\theta_t}^\top r}{\pi_{\theta_t}(i)} \Big\} + \frac{ 1 - \pi_{\theta_t}(i) }{ \pi_{\theta_t}(i) } } \cdot P_i(x) \mu(d x) \\
    &\le \int_{ x \in \gX_t^+ } \left( 1 + \epsilon \right) \cdot \pi_{\theta_t}(i) \cdot \eta \cdot \frac{x - \pi_{\theta_t}^\top r}{\pi_{\theta_t}(i)} \cdot P_i(x) \mu(d x) \qquad \left( \text{by \cref{eq:non_uniform_lojasiewicz_stochastic_npg_value_baseline_special_stochastic_reward_intermediate_14}} \right) \\
    &= \eta \cdot \int_{ x \in \gX_t^+ } \left( 1 + \epsilon \right) \cdot \left( x - \pi_{\theta_t}^\top r \right) \cdot P_i(x) \mu(d x).
\end{align}
According to \cref{eq:piecewise_linear_domination_result_2} in \cref{lem:piecewise_linear_domination}, given any fixed $p \in (0, 1]$, and any fixed $\epsilon \in [0, 1]$, we have,
\begin{align}
\label{eq:non_uniform_lojasiewicz_stochastic_npg_value_baseline_special_stochastic_reward_intermediate_17}
	\frac{e^y - 1}{e^y + \frac{1 - p}{ p } } \le \left( 1 - \epsilon \right) \cdot p \cdot y, \text{ for all } y \in  [- \epsilon, 0].
\end{align}
Using the same values of $p = \pi_{\theta_t}(i)$, $\epsilon$ in \cref{eq:non_uniform_lojasiewicz_stochastic_npg_value_baseline_special_stochastic_reward_intermediate_15}, and $y$ in \cref{eq:non_uniform_lojasiewicz_stochastic_npg_value_baseline_special_stochastic_reward_intermediate_9}, we have,
\begin{align}
\label{eq:non_uniform_lojasiewicz_stochastic_npg_value_baseline_special_stochastic_reward_intermediate_18}
\MoveEqLeft
    \int_{ x \in \gX_t^- } \frac{ \exp\Big\{ \eta \cdot \frac{x - \pi_{\theta_t}^\top r}{\pi_{\theta_t}(i)} \Big\} - 1 }{ \exp\Big\{ \eta \cdot \frac{x - \pi_{\theta_t}^\top r}{\pi_{\theta_t}(i)} \Big\} + \frac{ 1 - \pi_{\theta_t}(i) }{ \pi_{\theta_t}(i) } } \cdot P_i(x) \mu(d x) \\
    &\le \int_{ x \in \gX_t^- } \left( 1 - \epsilon \right) \cdot \pi_{\theta_t}(i) \cdot \eta \cdot \frac{x - \pi_{\theta_t}^\top r}{\pi_{\theta_t}(i)} \cdot P_i(x) \mu(d x) \qquad \left( \text{by \cref{eq:non_uniform_lojasiewicz_stochastic_npg_value_baseline_special_stochastic_reward_intermediate_17}} \right) \\
    &= \eta \cdot \int_{ x \in \gX_t^- } \left( 1 - \epsilon \right) \cdot \left( x - \pi_{\theta_t}^\top r \right) \cdot P_i(x) \mu(d x).
\end{align}
Combining \cref{eq:non_uniform_lojasiewicz_stochastic_npg_value_baseline_special_stochastic_reward_intermediate_4,eq:non_uniform_lojasiewicz_stochastic_npg_value_baseline_special_stochastic_reward_intermediate_16,eq:non_uniform_lojasiewicz_stochastic_npg_value_baseline_special_stochastic_reward_intermediate_18}, we have,
\begin{align}
\label{eq:non_uniform_lojasiewicz_stochastic_npg_value_baseline_special_stochastic_reward_intermediate_19}
\MoveEqLeft
	\int_{-R_{\max}}^{R_{\max}}{ \frac{ \left[ \exp\Big\{ \eta \cdot \frac{x - \pi_{\theta_t}^\top r}{\pi_{\theta_t}(i)} \Big\} - 1 \right] \cdot \left( r(i) - \pi_{\theta_t}^\top r \right) }{ \exp\Big\{ \eta \cdot \frac{x - \pi_{\theta_t}^\top r}{\pi_{\theta_t}(i)} \Big\} + \frac{ 1 - \pi_{\theta_t}(i) }{ \pi_{\theta_t}(i) } } \cdot P_i(x) \mu(d x)} \\
	&\ge \left( r(i) - \pi_{\theta_t}^\top r \right) \cdot \eta \cdot \Bigg[ \int_{ x \in \gX_t^+ } \left( 1 + \epsilon \right) \cdot \left( x - \pi_{\theta_t}^\top r \right) \cdot P_i(x) \mu(d x) \\
	&\qquad + \int_{ x \in \gX_t^- } \left( 1 - \epsilon \right) \cdot \left( x - \pi_{\theta_t}^\top r \right) \cdot P_i(x) \mu(d x) \Bigg] \qquad \left( \text{since } r(i) - \pi_{\theta_t}^\top r < 0 \right) \\
	&= \left( r(i) - \pi_{\theta_t}^\top r \right) \cdot \eta \cdot \Bigg[ \int_{-R_{\max}}^{R_{\max}} \left( x - \pi_{\theta_t}^\top r \right) \cdot P_i(x) \mu(d x) \qquad \left( \text{by \cref{eq:non_uniform_lojasiewicz_stochastic_npg_value_baseline_special_stochastic_reward_intermediate_5}} \right) \\
	&\qquad + \epsilon \cdot \left( \int_{ x \in \gX_t^+ } \left( x - \pi_{\theta_t}^\top r \right) \cdot P_i(x) \mu(d x) - \int_{ x \in \gX_t^- } \left( x - \pi_{\theta_t}^\top r \right) \cdot P_i(x) \mu(d x) \right) \Bigg] \\
	&= \left( r(i) - \pi_{\theta_t}^\top r \right) \cdot \eta \cdot \Bigg[ \left( r(i) - \pi_{\theta_t}^\top r \right) \qquad \left( \text{by \cref{assump:bounded_reward}} \right) \\
	&\qquad + \epsilon \cdot \int_{-R_{\max}}^{R_{\max}}{ \left| x - \pi_{\theta_t}^\top r \right| \cdot P_i(x) \mu(d x)} \Bigg] \qquad \left( \text{by \cref{eq:non_uniform_lojasiewicz_stochastic_npg_value_baseline_special_stochastic_reward_intermediate_5}} \right) \\
	&= \left( r(i) - \pi_{\theta_t}^\top r \right) \cdot \eta \cdot \left[ \left( r(i) - \pi_{\theta_t}^\top r \right) - \frac{1}{2} \cdot \left( r(i) - \pi_{\theta_t}^\top r \right) \right] \qquad \left( \text{by \cref{eq:non_uniform_lojasiewicz_stochastic_npg_value_baseline_special_stochastic_reward_intermediate_15}} \right) \\
	&= \frac{\eta}{2} \cdot \left( r(i) - \pi_{\theta_t}^\top r \right)^2.
\end{align}

\textbf{Case (c).} $i \in [K]$ is an ``indifferent'' action at the current iteration, i.e., $r(i) - \pi_{\theta_t}^\top r = 0$.

According to \cref{eq:non_uniform_lojasiewicz_stochastic_npg_value_baseline_special_stochastic_reward_intermediate_4}, we have,
\begin{align}
\label{eq:non_uniform_lojasiewicz_stochastic_npg_value_baseline_special_stochastic_reward_intermediate_20}
\MoveEqLeft
	\int_{-R_{\max}}^{R_{\max}}{ \frac{ \left[ \exp\Big\{ \eta \cdot \frac{x - \pi_{\theta_t}^\top r}{\pi_{\theta_t}(i)} \Big\} - 1 \right] \cdot \left( r(i) - \pi_{\theta_t}^\top r \right) }{ \exp\Big\{ \eta \cdot \frac{x - \pi_{\theta_t}^\top r}{\pi_{\theta_t}(i)} \Big\} + \frac{ 1 - \pi_{\theta_t}(i) }{ \pi_{\theta_t}(i) } } \cdot P_i(x) \mu(d x)} \\
	&= 0 \ge \frac{\eta}{2} \cdot \left( r(i) - \pi_{\theta_t}^\top r \right)^2. \qquad \left( \text{since } r(i) - \pi_{\theta_t}^\top r = 0 \right)
\end{align}
Combining the three cases, i.e., \cref{eq:non_uniform_lojasiewicz_stochastic_npg_value_baseline_special_stochastic_reward_intermediate_13,eq:non_uniform_lojasiewicz_stochastic_npg_value_baseline_special_stochastic_reward_intermediate_19,eq:non_uniform_lojasiewicz_stochastic_npg_value_baseline_special_stochastic_reward_intermediate_20}, we have, for all action $i \in [K]$,
\begin{align}
\label{eq:non_uniform_lojasiewicz_stochastic_npg_value_baseline_special_stochastic_reward_intermediate_21}
\MoveEqLeft
    \int_{-R_{\max}}^{R_{\max}}{ \frac{ \left[ \exp\Big\{ \eta \cdot \frac{x - \pi_{\theta_t}^\top r}{\pi_{\theta_t}(i)} \Big\} - 1 \right] \cdot \left( r(i) - \pi_{\theta_t}^\top r \right) }{ \exp\Big\{ \eta \cdot \frac{x - \pi_{\theta_t}^\top r}{\pi_{\theta_t}(i)} \Big\} + \frac{ 1 - \pi_{\theta_t}(i) }{ \pi_{\theta_t}(i) } } \cdot P_i(x) \mu(d x)} \ge \frac{\eta}{2} \cdot \left( r(i) - \pi_{\theta_t}^\top r \right)^2 \\
    &= \frac{1}{2} \cdot \frac{\pi_{\theta_t}(i) \cdot \left| r(i) - \pi_{\theta_t}^\top r \right|}{8 \cdot R_{\max}^2} \cdot \left( r(i) - \pi_{\theta_t}^\top r \right)^2. \qquad \left( \text{by \cref{eq:non_uniform_lojasiewicz_stochastic_npg_value_baseline_special_stochastic_reward_result_1}} \right)
\end{align}
Combining \cref{eq:non_uniform_lojasiewicz_stochastic_npg_value_baseline_special_stochastic_reward_intermediate_4,eq:non_uniform_lojasiewicz_stochastic_npg_value_baseline_special_stochastic_reward_intermediate_21}, we have,
\begin{align}
\MoveEqLeft
\label{eq:non_uniform_lojasiewicz_stochastic_npg_value_baseline_special_stochastic_reward_intermediate_22}
	\EEt{\pi_{\theta_{t+1}}^\top r} - \pi_{\theta_t}^\top r = \sum_{i = 1}^{K} \pi_{\theta_t}(i) \cdot \int_{-R_{\max}}^{R_{\max}}{ \frac{ \left[ \exp\Big\{ \eta \cdot \frac{x - \pi_{\theta_t}^\top r}{\pi_{\theta_t}(i)} \Big\} - 1 \right] \cdot \left( r(i) - \pi_{\theta_t}^\top r \right) }{ \exp\Big\{ \eta \cdot \frac{x - \pi_{\theta_t}^\top r}{\pi_{\theta_t}(i)} \Big\} + \frac{ 1 - \pi_{\theta_t}(i) }{ \pi_{\theta_t}(i) } } \cdot P_i(x) \mu(d x)} \\
\label{eq:non_uniform_lojasiewicz_stochastic_npg_value_baseline_special_stochastic_reward_intermediate_22_a}
	&\ge \frac{1}{16 \cdot R_{\max}^2} \cdot \sum_{i = 1}^{K} \pi_{\theta_t}(i)^2 \cdot \left| r(i) - \pi_{\theta_t}^\top r \right|^3 \\
	&\ge \frac{ 1 }{16 \cdot R_{\max}^2} \cdot \frac{\Delta}{K-1} \cdot \pi_{\theta_t}(a^*)^2 \cdot \left( r(a^*) - \pi_{\theta_t}^\top r \right)^2, \qquad \left( \text{by \cref{lem:stochastic_natural_lojasiewicz_continuous_special}} \right)
\end{align}
thus finishing the proofs.
\end{proof}

\textbf{\cref{cor:almost_sure_convergence_stochastic_npg_value_baseline_special}.}
The sequence $\{\pi_{\theta_t}^\top r\}_{t\ge 1}$ converges with probability one.
\begin{proof}
Setting $Y_t =  r(a^*) - \pi_{\theta_t}^\top r$ we have $Y_t\in [0,1]$. 
Define $\cF_t$ as the $\sigma$-algebra generated by
$a_1, x_1(a_1), a_2, x_2(a_2), \dots, a_{t-1}, x_{t-1}(a_{t-1})$.
Note that $Y_t$ is $\cF_t$-measurable since $\theta_t$ is a deterministic function of $a_1, x_1(a_1), \dots, a_{t-1}, x_{t-1}(a_{t-1})$.
By \cref{lem:non_uniform_lojasiewicz_stochastic_npg_value_baseline_special}, $\EE{ Y_{t+1}|\cF_t }\le Y_t$.
Hence, the conditions of Doob's supermartingale theorem (\cref{thm:smc}) are satisfied and the result follows.
\end{proof}

\textbf{\cref{lem:non_vanishing_nl_coefficient_stochastic_npg_value_baseline_special} }(Non-vanishing stochastic N\L{} coefficient / ``automatic exploration'')\textbf{.}
Using \cref{update_rule:softmax_natural_pg_special_on_policy_stochastic_gradient_value_baseline} with the same settings as in \cref{lem:non_uniform_lojasiewicz_stochastic_npg_value_baseline_special}, with arbitrary policy parameter initialization $\theta_1 \in \sR^K$, we have,
\begin{align}
    c \coloneqq \inf_{t \ge 1} \pi_{\theta_t}(a^*) > 0, \qquad \text{almost surely (a.s.).}
\end{align}
\begin{proof}
Since the claim is concerned with the policies underlying the parameter vectors and not the parameter vectors themselves, as noted after 
\cref{update_rule:softmax_natural_pg_special_on_policy_stochastic_gradient_value_baseline},
without loss of generality, in the rest of the proof we assume that the parameter vector is updated
according to \cref{update_rule:equivalent_update_softmax_natural_pg_special_on_policy_stochastic_gradient_value_baseline} as follows,
\begin{align}
\theta_{t+1}(a) \gets \theta_t(a) + \eta \cdot \frac{ \sI\left\{ a_t = a \right\} }{ \pi_{\theta_t}(a) } \cdot \left( x_t(a) - \pi_{\theta_t}^\top r \right)\,.
\label{eq:sparseu}
\end{align}

Given $i \in [K]$, define the following set $\gP(i)$ of ``generalized one-hot policy'',
\begin{align}
\label{eq:non_vanishing_nl_coefficient_stochastic_npg_value_baseline_special_intermediate_1_a}
    \gA(i) &\coloneqq \left\{ j \in [K]: r(j) = r(i) \right\}, \\
\label{eq:non_vanishing_nl_coefficient_stochastic_npg_value_baseline_special_intermediate_1_b}
    \gP(i) &\coloneqq \bigg\{ \pi \in \Delta(K): \sum_{j \in \gA(i)}{ \pi(j) } = 1 \bigg\}.
\end{align}
We make the following two claims.
\begin{claim}
\label{cl:approaching_generalized_one_hot_policy}
Almost surely, $\pi_{\theta_t}$ approaches one ``generalized one-hot policy'', i.e., 
there exists (a possibly random) $i \in [K]$, such that $\sum_{ j \in \gA(i) }{\pi_{\theta_t}(j) } \to 1$ almost surely as $t \to \infty$.
\end{claim}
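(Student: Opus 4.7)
\textbf{Proof plan for \cref{cl:approaching_generalized_one_hot_policy}.} The idea is to combine the stochastic N\L{} inequality from \cref{lem:non_uniform_lojasiewicz_stochastic_npg_value_baseline_special} with Doob's decomposition of the bounded submartingale $M_t := \pi_{\theta_t}^\top r$ to deduce an a.s.\ summability statement, and then conclude that $\pi_{\theta_t}$ must place vanishing mass on every action whose reward differs from $V_\infty := \lim_t M_t$, whose existence is given by \cref{cor:almost_sure_convergence_stochastic_npg_value_baseline_special}.

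First, with the learning rate of \cref{eq:non_uniform_lojasiewicz_stochastic_npg_value_baseline_special_stochastic_reward_result_1}, the bound \cref{eq:non_uniform_lojasiewicz_stochastic_npg_value_baseline_special_stochastic_reward_result_2a} (using the filtration $\cF_t$ generated by $a_1,x_1(a_1),\dots,a_{t-1},x_{t-1}(a_{t-1})$, as in the proof of \cref{cor:almost_sure_convergence_stochastic_npg_value_baseline_special}) shows that $M_t$ is a $[0,1]$-valued $\cF_t$-submartingale. Writing its Doob decomposition $M_t = N_t + A_t$ with $N_t$ a martingale and $A_t$ predictable, nondecreasing, $A_1 = 0$, boundedness of $M_t$ gives that $N_t$ is bounded in $L^1$, hence $N_t \to N_\infty$ a.s.\ and $A_\infty := \lim_t A_t = M_\infty - N_\infty < \infty$ a.s. Applying \cref{eq:non_uniform_lojasiewicz_stochastic_npg_value_baseline_special_stochastic_reward_result_2a} term by term,
\begin{align*}
A_\infty \;=\; \sum_{t \ge 1}\bigl(\EEt{M_{t+1}} - M_t\bigr) \;\ge\; \frac{1}{16 R_{\max}^2}\sum_{t \ge 1}\sum_{i=1}^{K}\pi_{\theta_t}(i)^2\,\bigl|r(i) - \pi_{\theta_t}^\top r\bigr|^3,
\end{align*}
so the double sum is finite a.s., and in particular $\sum_{i=1}^K \pi_{\theta_t}(i)^2\,|r(i) - \pi_{\theta_t}^\top r|^3 \to 0$ a.s.

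Second, fix a sample path $\omega$ in the full-probability event on which $\pi_{\theta_t}^\top r \to V_\infty$ and the above summand vanishes. I claim $V_\infty \in \{r(1),\dots,r(K)\}$: otherwise $\delta := \min_{i \in [K]}|r(i) - V_\infty| > 0$, so for all sufficiently large $t$ we have $|r(i) - \pi_{\theta_t}^\top r| \ge \delta/2$ for every $i$, which forces $\pi_{\theta_t}(i) \to 0$ for every $i$ and contradicts $\sum_i \pi_{\theta_t}(i) = 1$. Hence there is a (random) index $i_0$ with $V_\infty = r(i_0)$. For every $i \notin \gA(i_0)$, $|r(i) - V_\infty| > 0$, so the same argument yields $\pi_{\theta_t}(i) \to 0$. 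Summing over $i \notin \gA(i_0)$ and using $\sum_{i \in [K]} \pi_{\theta_t}(i) = 1$ gives $\sum_{j \in \gA(i_0)}\pi_{\theta_t}(j) \to 1$, which is \cref{cl:approaching_generalized_one_hot_policy}.

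The step that needs the most care is the a.s.\ summability $A_\infty < \infty$: it rests on the boundedness of $M_t$ together with $L^1$-martingale convergence for the martingale part $N_t$. Once this is in hand, the remainder is a direct consequence of the lower bound on expected progress in \cref{lem:non_uniform_lojasiewicz_stochastic_npg_value_baseline_special} plus an elementary contradiction argument excluding the possibility that $V_\infty$ lies outside the finite set $\{r(i)\}$.
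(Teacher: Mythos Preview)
Your proof is correct and follows essentially the same strategy as the paper: use the submartingale property of $M_t=\pi_{\theta_t}^\top r$ together with the stochastic N\L{} lower bound \cref{eq:non_uniform_lojasiewicz_stochastic_npg_value_baseline_special_stochastic_reward_result_2a} to force $\sum_i \pi_{\theta_t}(i)^2|r(i)-\pi_{\theta_t}^\top r|^3\to 0$ a.s., and then run the same contradiction argument showing $V_\infty\in\{r(1),\dots,r(K)\}$ and that mass on actions outside $\gA(i_0)$ vanishes.

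The one minor technical difference is in how you extract ``expected increments $\to 0$'': the paper invokes its custom \cref{cor:submnoiseconv} (noise convergence for bounded submartingales) to obtain $\EEt{M_{t+1}}-M_t\to 0$ directly, whereas you go through the Doob decomposition $M_t=N_t+A_t$ and use boundedness of $M_t$ (hence $\EE{A_\infty}\le 1$) to get the stronger statement that the increments are a.s.\ \emph{summable}. Your route is a bit more self-contained (it avoids the bespoke \cref{lem:submnoiseconv}) and yields a stronger intermediate conclusion, but both arrive at the identical endgame.
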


\begin{claim}
\label{cl:contradiction_approaching_sub_optimal_generalized_one_hot_policy}
Almost surely, $\pi_{\theta_t}$ cannot approach any ``sub-optimal generalized one-hot policies'', 
i.e., $i$ in the previous claim must be an optimal action. 
\end{claim}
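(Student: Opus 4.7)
The plan is to follow the two-claim decomposition flagged in the excerpt: first show that $\pi_{\theta_t}$ almost surely approaches some generalized one-hot set $\gP(i_0)$ (Claim 1), then rule out $i_0 \neq a^*$ by contradiction (Claim 2).

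For Claim 1, I would invoke the ``variance-like'' lower bound \eqref{eq:non_uniform_lojasiewicz_stochastic_npg_value_baseline_special_stochastic_reward_result_2a_appendix} in \cref{lem:non_uniform_lojasiewicz_stochastic_npg_value_baseline_special}. Telescoping the conditional expected one-step progress and using $\pi_{\theta_t}^\top r \in [0,1]$ yields $\sum_{t \ge 1} \mathbb{E}\big[ \sum_i \pi_{\theta_t}(i)^2 |r(i) - \pi_{\theta_t}^\top r|^3 \big] \le 16\,R_{\max}^2$, hence $\sum_i \pi_{\theta_t}(i)^2 |r(i) - \pi_{\theta_t}^\top r|^3 \to 0$ almost surely. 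Combined with the limit $\pi_{\theta_t}^\top r \to V_\infty$ supplied by \cref{cor:almost_sure_convergence_stochastic_npg_value_baseline_special}, any $i$ with $\limsup_t \pi_{\theta_t}(i) > 0$ must satisfy $r(i) = V_\infty$, so $\pi_{\theta_t}$ a.s.\ concentrates on $\gA(i_0) = \{ j : r(j) = V_\infty \}$ for some (random) index $i_0$.

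For Claim 2, suppose for contradiction that $E := \{ V_\infty < r(a^*)\}$ has positive probability; on $E$, $\pi_{\theta_t}(a^*) \to 0$ and eventually $r(a^*) - \pi_{\theta_t}^\top r \ge \Delta/2$, so the sharper bound \eqref{eq:non_uniform_lojasiewicz_stochastic_npg_value_baseline_special_stochastic_reward_result_2b_appendix} forces $\sum_t \pi_{\theta_t}(a^*)^2 < \infty$ on $E$. I would derive a contradiction by establishing the ``automatic exploration'' lower bound $\pi_{\theta_t}(a^*) = \Omega(1/t)$ on $E$ (the stochastic analogue of \cref{lem:npg_aggressiveness_value_baseline}); this gives $\sum_t \pi_{\theta_t}(a^*) = \infty$ on $E$, and by the conditional second Borel--Cantelli lemma $a^*$ is sampled infinitely often there. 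Fix $a \in \gA(i_0)$ with $\limsup_t \pi_{\theta_t}(a) > 0$ and track $Z_t^a := \theta_t(a^*) - \theta_t(a)$. A direct computation of $\mathbb{E}\big[ Z_{t+1}^a - Z_t^a \mid \cF_t \big]$ under \cref{update_rule:equivalent_update_softmax_natural_pg_special_on_policy_stochastic_gradient_value_baseline} isolates two contributions: a non-negative term $\pi_{\theta_t}(a^*)(r(a^*) - b_t)^2/(8 R_{\max}^2)$ coming from sampling $a^*$, and a vanishing term from sampling $a$ (since $b_t \to r(a)$ on $E$). Summing on $E$ gives infinite cumulative drift, and since the adaptive step size \eqref{eq:non_uniform_lojasiewicz_stochastic_npg_value_baseline_special_stochastic_reward_result_1} makes each per-step coordinate increment of $\theta_t(\cdot)$ bounded by a constant (the $1/\pi_{\theta_t}(a_t)$ blow-up cancels with the $\pi_{\theta_t}(a_t)$ factor in $\eta$), an Azuma--Hoeffding plus Borel--Cantelli argument forces $Z_t^a \to +\infty$ on $E$. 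This contradicts $Z_t^a \to -\infty$, which is implied by $\pi_{\theta_t}(a^*)/\pi_{\theta_t}(a) \to 0$. Hence $\mathbb{P}(E) = 0$ and $c = \inf_t \pi_{\theta_t}(a^*) > 0$ almost surely.

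The main obstacle is the quantitative automatic-exploration step: proving $\pi_{\theta_t}(a^*) = \Omega(1/t)$ on $E$ in the stochastic-reward setting. In the deterministic-reward analogue (\cref{lem:npg_aggressiveness_value_baseline}) one directly tracks the deterministic parameter drift under repeated sampling of a single action; here the per-step increments are random, and one must uniformly control the growth of every coordinate $\theta_t(\cdot)$ so that $\sum_b \exp(\theta_t(b))$ grows no faster than linearly in $t$. The unbounded variance of the NPG estimator (\cref{prop:softmax_natural_pg_variances}) is sidestepped because the adaptive learning rate makes single-coordinate updates bounded, but converting this into a uniform pathwise $\Omega(1/t)$ lower bound on $\pi_{\theta_t}(a^*)$ requires delicate martingale concentration, and this is where the bulk of the technical work lies.
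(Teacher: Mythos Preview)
Your Claim~1 argument matches the paper's. The gap is in Claim~2, precisely at the step you flag: you never give a mechanism for proving $\pi_{\theta_t}(a^*)=\Omega(1/t)$ on $E$, and the ingredients you cite are insufficient. The observation that the adaptive learning rate \eqref{eq:non_uniform_lojasiewicz_stochastic_npg_value_baseline_special_stochastic_reward_result_1} makes each coordinate increment bounded only yields $|\theta_t(a')-\theta_1(a')|\le Ct$ for every $a'$, hence $\pi_{\theta_t}(a^*)\ge c\,e^{-2Ct}$ --- an exponential, not polynomial, lower bound. To upgrade this to $\Omega(1/t)$ you would need the increments to the dominant coordinates $\theta_t(j)$, $j\in\gA(i_0)$, to be \emph{summable}; but each such increment is $\propto |r(i_0)-\pi_{\theta_t}^\top r|$, and controlling the decay rate of this quantity is exactly the circular problem you are trying to solve. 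Even if $\Omega(1/t)$ is granted, your drift argument on $\theta_t(a^*)-\theta_t(a)$ is fragile: the positive drift from sampling $a^*$ sums only to $\Theta(\log t)$, and the ``vanishing term'' from sampling $a$ is $\pi_{\theta_t}(a)\,|r(a)-\pi_{\theta_t}^\top r|\,(r(a)-\pi_{\theta_t}^\top r)$, whose absolute summability does \emph{not} follow from the available bound $\sum_t\pi_{\theta_t}(a)^2|r(a)-\pi_{\theta_t}^\top r|^3<\infty$.

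The paper sidesteps the $\Omega(1/t)$ bound entirely by a dichotomy on whether every good action $a^+\in\gA^+(i_0)$ is sampled only finitely often. If so (Case~2a), the extended Borel--Cantelli lemma gives $\sum_t\pi_{\theta_t}(a^+)<\infty$ for each $a^+$, hence $q_t:=\sum_{a^+}\pi_{\theta_t}(a^+)$ is summable; since bad-action parameters are bounded above, this forces $|r(i_0)-\pi_{\theta_t}^\top r|\le C'q_t$, which makes both the cumulative progress and the predictable variance $S_t^2(j)$ finite for every $j\in\gA(i_0)$, so self-normalized concentration bounds $\theta_t(j)$ --- contradicting $\sum_j\pi_{\theta_t}(j)\to 1$ because good-action parameters are also bounded. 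If instead some $a^+$ is sampled infinitely often (Case~2b), its parameter diverges to $+\infty$ (linear positive drift per sample beats $\sqrt{N_t(a^+)}$ noise), which eventually forces $\pi_{\theta_t}^\top r>r(i_0)$; then $P_s(j)\le 0$ for $j\in\gA(i_0)$, and the same concentration argument bounds $\theta_t(j)$. The case split is what breaks the circularity you ran into: in Case~2a, summability of $q_t$ comes for free from Borel--Cantelli rather than from a rate on $\pi_{\theta_t}(a^*)$.
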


From \cref{cl:contradiction_approaching_sub_optimal_generalized_one_hot_policy}, it follows that $\sum_{ j \in \gA(a^*) }{\pi_{\theta_t}(j) } \to 1$ almost surely, as $t \to \infty$ and thus the policy sequence obtained
almost surely convergences to a globally optimal policy $\pi^*$.

\textbf{Proof of \cref{cl:approaching_generalized_one_hot_policy}}. 

According to \cref{cor:almost_sure_convergence_stochastic_npg_value_baseline_special}, we have that for some (possibly random) $c \in [0, 1]$, almost surely,
\begin{align}
\label{eq:non_vanishing_nl_coefficient_stochastic_npg_value_baseline_special_claim_1_intermediate_1}
    \lim_{t \to \infty}{ \pi_{\theta_t}^\top r} = c\,.
\end{align}
Thanks to $\pi_{\theta_t}^\top r \in [0,1]$ and \cref{eq:subm}, 
$X_t = \pi_{\theta_t}^\top r$ ($t\ge 1$) satisfies the conditions of
\cref{cor:submnoiseconv}. Hence, by this result, 
almost surely, 
\begin{align}
\label{eq:non_vanishing_nl_coefficient_stochastic_npg_value_baseline_special_claim_1_intermediate_2}
    \lim_{t \to \infty}\,\,{  \EEt{\pi_{\theta_{t+1}}^\top r} - \pi_{\theta_{t+1}}^\top r } & = 0\,,
\end{align}
which,
combined with 
\cref{eq:non_vanishing_nl_coefficient_stochastic_npg_value_baseline_special_claim_1_intermediate_1}
 also gives that $\lim_{t\to\infty} \EEt{\pi_{\theta_{t+1}}^\top r} = c$ almost surely.
Hence, 
\begin{align}
\label{eq:non_vanishing_nl_coefficient_stochastic_npg_value_baseline_special_claim_1_intermediate_3}
    \lim_{t \to \infty} \,\, \EEt{\pi_{\theta_{t+1}}^\top r} - \pi_{\theta_{t}}^\top r  & = c-c = 0,\, 
    \qquad \text{a.s.}
\end{align}
According to \cref{eq:non_uniform_lojasiewicz_stochastic_npg_value_baseline_special_stochastic_reward_intermediate_22_a} in the proof of \cref{lem:non_uniform_lojasiewicz_stochastic_npg_value_baseline_special}, we have,
\begin{align}
\label{eq:non_vanishing_nl_coefficient_stochastic_npg_value_baseline_special_claim_1_intermediate_4}
\MoveEqLeft
	\EEt{\pi_{\theta_{t+1}}^\top r} - \pi_{\theta_t}^\top r \ge \frac{1}{16 \cdot R_{\max}^2} \cdot \sum_{i = 1}^{K} \pi_{\theta_t}(i)^2 \cdot \left| r(i) - \pi_{\theta_t}^\top r \right|^3 \qquad \text{a.s.}
\end{align}
Combining \cref{eq:non_vanishing_nl_coefficient_stochastic_npg_value_baseline_special_claim_1_intermediate_3,eq:non_vanishing_nl_coefficient_stochastic_npg_value_baseline_special_claim_1_intermediate_4}, we have, with probability $1$, 
\begin{align}
\label{eq:non_vanishing_nl_coefficient_stochastic_npg_value_baseline_special_claim_1_intermediate_5}
    \lim_{t \to \infty}{  \sum_{i = 1}^{K} \pi_{\theta_t}(i)^2 \cdot \left| r(i) - \pi_{\theta_t}^\top r \right|^3 } = 0,
\end{align}
which implies that, for all $i \in [K]$, almost surely,
\begin{align}
\label{eq:non_vanishing_nl_coefficient_stochastic_npg_value_baseline_special_claim_1_intermediate_6}
    \lim_{t \to \infty}{ \pi_{\theta_t}(i)^2 \cdot \left| r(i) - \pi_{\theta_t}^\top r \right|^3 } = 0\,.
\end{align}

We claim that $c$, the almost sure limit of $\pi_{\theta_t}^\top r$, is such that almost surely, for some (possibly random)
 $i\in [K]$, $c= r(i)$ almost surely. 
We prove this by contradiction.
Let $\mathcal{E}_i = \{ c = r(i) \}$. 
Hence, our goal is to show that $\mathbb{P}( \cup_i \mathcal{E}_i ) = 1$.
Clearly, this follows from $\mathbb{P}( \cap_i \mathcal{E}_i^c ) = 0$, hence, we prove this.
On $\mathcal{E}_i^c$, since $\lim_{t\to\infty} \pi_{\theta_t}^\top r \ne r(i)$, 
we also have
\begin{align}
\label{eq:non_vanishing_nl_coefficient_stochastic_npg_value_baseline_special_claim_1_intermediate_7}
    \lim_{t \to \infty}{ \left| r(i) - \pi_{\theta_t}^\top r \right|^3 } > 0, \quad \text{ almost surely on } \mathcal{E}_i^c\,.
\end{align}
This, together with \cref{eq:non_vanishing_nl_coefficient_stochastic_npg_value_baseline_special_claim_1_intermediate_6} gives that almost surely on $\mathcal{E}_i^c$,
\begin{align}
\label{eq:non_vanishing_nl_coefficient_stochastic_npg_value_baseline_special_claim_1_intermediate_8}
    \lim_{t \to \infty}{ \pi_{\theta_t}(i)^2 } = 0\,.
\end{align}
Hence, on $\cap_i \mathcal{E}_i^c$, almost surely, for all $i\in [K]$,
$\lim_{t \to \infty}{ \pi_{\theta_t}(i)^2 } = 0$. This contradicts
with that $\sum_i \pi_{\theta_t}(i)=1$ holds for all $t\ge 1$, and hence we must have that 
$\mathbb{P}(\cap_i \mathcal{E}_i^c)=0$, finishing the proof that 
$\mathbb{P}(\cup_i \mathcal{E}_i)=1$.

Now, let $i\in [K]$ be the (possibly random) index of the action for which $c=r(i)$ almost surely.
Recall that $\gA(i)$ contains all actions $j$ with $r(j)=r(i)$
(cf. \cref{eq:non_vanishing_nl_coefficient_stochastic_npg_value_baseline_special_intermediate_1_a}).
Clearly, it holds that 
%
for all $j \in \gA(i)$,
\begin{align}
\label{eq:non_vanishing_nl_coefficient_stochastic_npg_value_baseline_special_claim_1_intermediate_9}
    \lim_{t \to \infty}{ \pi_{\theta_t}^\top r } = r(j), \qquad \text{a.s.},
\end{align}
and we have, for all $k \not\in \gA(i)$,
\begin{align}
\label{eq:non_vanishing_nl_coefficient_stochastic_npg_value_baseline_special_claim_1_intermediate_10}
    \lim_{t \to \infty}{ \left| r(k) - \pi_{\theta_t}^\top r \right|^3 } > 0, \qquad \text{a.s.},
\end{align}
which implies that,
\begin{align}
\label{eq:non_vanishing_nl_coefficient_stochastic_npg_value_baseline_special_claim_1_intermediate_11}
    \lim_{t \to \infty}{ \sum_{k \not\in \gA(i)} \pi_{\theta_t}(k)^2 } = 0, \qquad \text{a.s.}
\end{align}
Therefore, we have,
\begin{align}
\label{eq:non_vanishing_nl_coefficient_stochastic_npg_value_baseline_special_claim_1_intermediate_12}
    \lim_{t \to \infty}{ \sum_{j \in \gA(i)} \pi_{\theta_t}(j) } = 1, \qquad \text{a.s.},
\end{align}
which means $\pi_{\theta_t}$ a.s. approaches the ``generalized one-hot policy'' $\gP(i)$ in \cref{eq:non_vanishing_nl_coefficient_stochastic_npg_value_baseline_special_intermediate_1_b} as $t \to \infty$, finishing the proof of the first claim.

\textbf{Proof of \cref{cl:contradiction_approaching_sub_optimal_generalized_one_hot_policy}}.
Recall that this claim stated that  
$\lim_{t \to \infty} \sum_{ j \in \gA(a^*) }{\pi_{\theta_t}(j) } = 1$.
The brief sketch of the proof is as follows:
By \cref{cl:approaching_generalized_one_hot_policy},  there exists a (possibly random) $i\in [K]$ such that 
$\sum_{ j \in \gA(i) }{\pi_{\theta_t}(j) } \to 1$ almost surely, as $t \to \infty$.
If $i=a^*$ almost surely, \cref{cl:contradiction_approaching_sub_optimal_generalized_one_hot_policy} follows. 
Hence, it suffices to consider the event that $\{ i\not = a^* \}$ and show that this event has zero probability mass. Hence,  in the rest of the proof we assume that we are on the event when $i\not =a^*$.

Since $i \not= a^*$, there exists at least one ``good'' action $a^+ \in [K]$ such that $r(a^+) > r(i)$. The two cases are as follows.
\begin{description}[style=unboxed,leftmargin=0cm]
    \item[2a)] All ``good'' actions are sampled finitely many times as $t \to \infty$. \label{cl:contradiction_approaching_sub_optimal_generalized_one_hot_policy:a}
    \item[2b)] At least one ``good'' action is sampled infinitely many times as $t \to \infty$.
\end{description}
In both cases, we show that $\sum_{ j \in \gA(i) }{ \exp\{ \theta_t(j) \} } < \infty$ as $t \to \infty$ (but for different reasons), \textcolor{red}{which is a contradiction with the assumption of $\sum_{ j \in \gA(i) }{\pi_{\theta_t}(j) } \to 1$ as $t \to \infty$}, given that a ``good'' action's parameter is almost surely lower bounded. Hence, $i\ne a^*$ almost surely does not happen, which means that almost surely $i=a^*$.

Let us now turn to the details of the proof. We start with 
 some useful extra notation.
For each action $a \in [K]$, for $t \ge 2$, we have the following decomposition,
\begin{align}
\label{eq:non_vanishing_nl_coefficient_stochastic_npg_value_baseline_special_claim_2_intermediate_1}
    \theta_{t}(a) = \underbrace{ \theta_{t}(a) - \chE_{t-1}{[ \theta_t(a)]} }_{ W_t(a) } + \underbrace { \chE_{t-1}{[ \theta_t(a)]} - \theta_{t-1}(a) }_{ P_{t-1}(a) } + \theta_{t-1}(a),
\end{align}
while we also have,
\begin{align}
\label{eq:non_vanishing_nl_coefficient_stochastic_npg_value_baseline_special_claim_2_intermediate_2}
    \theta_1(a) = \underbrace{ \theta_{1}(a) - \EE{\theta_{1}(a)} }_{ W_1(a) } + \EE{\theta_{1}(a)},
\end{align}
where $\EE{\theta_{1}(a)}$ accounts for possible randomness in initialization of $\theta_1$.

Define the following notations,
\begin{align}
\label{eq:non_vanishing_nl_coefficient_stochastic_npg_value_baseline_special_claim_2_intermediate_3a}
    Z_t(a) &\coloneqq W_1(a) + \cdots + W_t(a), \qquad \left( \text{``cumulative noise''} \right) \\
\label{eq:non_vanishing_nl_coefficient_stochastic_npg_value_baseline_special_claim_2_intermediate_3b}
    W_t(a) &\coloneqq \theta_t(a) - \chE_{t-1}{[ \theta_t(a)]}, \qquad \left( \text{``noise''} \right) \\
\label{eq:non_vanishing_nl_coefficient_stochastic_npg_value_baseline_special_claim_2_intermediate_3c}
    P_t(a) &\coloneqq \EEt{\theta_{t+1}(a)} - \theta_t(a). \qquad \left( \text{``progress''} \right)
\end{align}
Recursing \cref{eq:non_vanishing_nl_coefficient_stochastic_npg_value_baseline_special_claim_2_intermediate_1} gives,
\begin{align}
\label{eq:non_vanishing_nl_coefficient_stochastic_npg_value_baseline_special_claim_2_intermediate_4}
    \theta_t(a) = \EE{\theta_1(a)} + Z_t(a) + \underbrace{ P_1(a) + \cdots + P_{t-1}(a)}_{\text{``cumulative progress''}}.
\end{align}
We have that $\EEt{W_{t+1}(a)}=0$, for $t=0, 1, \dots$. Let
\begin{align}
\label{eq:non_vanishing_nl_coefficient_stochastic_npg_value_baseline_special_claim_2_intermediate_5}
    I_t(a) = \begin{cases}
		1, & \text{if } a_t = a\, , \\
		0, & \text{otherwise}\,.
	\end{cases}
\end{align}
The update rule (cf. \cref{eq:sparseu}) is,
\begin{align}
\label{eq:non_vanishing_nl_coefficient_stochastic_npg_value_baseline_special_claim_2_intermediate_6}
    \theta_{t+1}(a) = \theta_{t}(a) + \eta \cdot \frac{I_t(a)}{ \pi_{\theta_t}(a) } \cdot \left( x_t(a) - \pi_{\theta_t}^\top r \right),
\end{align}
where $a_t \sim \pi_{\theta_t}(\cdot)$, and $x_t(a) \sim P_a$. Let $\gF_t$ be the $\sigma$-algebra generated by $a_1$, $x_1(a_1)$, $\cdots$, $a_{t-1}$, $x_{t-1}(a_{t-1})$, $a_t$:
\begin{align}
\cF_t = \sigma( \{ 
 a_1, x_1(a_1), \cdots, a_{t-1}, x_{t-1}(a_{t-1}), a_t \} )\,.
\end{align}
Note that $\theta_{t},I_t$ are $\gF_t$-measurable and $\hat{x}_t$ is $\gF_{t+1}$-measurable for all $t\ge 1$. Let $\mathbb{E}_t$ denote the conditional expectation with respect to $\gF_t$: $\mathbb{E}_t[X] = \mathbb{E}[X|\gF_t]$.

Using the above notations, we have,
\begin{align}
\MoveEqLeft
\label{eq:non_vanishing_nl_coefficient_stochastic_npg_value_baseline_special_claim_2_intermediate_7}
    W_{t+1}(a) = \theta_{t+1}(a) - \EEt{\theta_{t+1}(a)} \\
    &= \bcancel{\theta_{t}(a)} + \eta \cdot \frac{I_t(a)}{ \pi_{\theta_t}(a) } \cdot \left( x_t(a) - \bcancel{\pi_{\theta_t}^\top r} \right) - \chE_{t}{ \left[ \bcancel{\theta_{t}(a)} + \eta \cdot \frac{I_t(a)}{ \pi_{\theta_t}(a) } \cdot \left( x_t(a) - \bcancel{\pi_{\theta_t}^\top r} \right) \right] } \\
    &= \eta \cdot \frac{I_t(a)}{ \pi_{\theta_t}(a) } \cdot \left( x_t(a) - r(a) \right),
\end{align}
which implies that,
\begin{align}
\label{eq:non_vanishing_nl_coefficient_stochastic_npg_value_baseline_special_claim_2_intermediate_8}
    Z_t(a) &= W_1(a) + \cdots + W_t(a) \\
    &= \sum_{s=1}^{t-1}{ \eta \cdot \frac{I_s(a)}{ \pi_{\theta_s}(a) } \cdot \left( x_s(a) - r(a) \right)}.
\end{align}
We also have,
\begin{align}
\label{eq:non_vanishing_nl_coefficient_stochastic_npg_value_baseline_special_claim_2_intermediate_9}
    P_t(a) &= \EEt{\theta_{t+1}(a)} - \theta_t(a) \\
    &= \chE_{t}{ \left[ \bcancel{\theta_{t}(a)} + \eta \cdot \frac{I_t(a)}{ \pi_{\theta_t}(a) } \cdot \left( x_t(a) - \pi_{\theta_t}^\top r \right) \right] } - \bcancel{\theta_t(a)} \\
    &= \eta \cdot \frac{I_t(a)}{ \pi_{\theta_t}(a) } \cdot \left( r(a) - \pi_{\theta_t}^\top r \right).
\end{align}
Using the learning rate of \cref{eq:non_uniform_lojasiewicz_stochastic_npg_value_baseline_special_stochastic_reward_result_1},
\begin{align}
\label{eq:non_vanishing_nl_coefficient_stochastic_npg_value_baseline_special_claim_2_intermediate_10}
    \eta = \frac{\pi_{\theta_t}(a_t) \cdot \left| r(a_t) - \pi_{\theta_t}^\top r \right| }{8 \cdot R_{\max}^2},
\end{align}
we have,
\begin{align}
\label{eq:non_vanishing_nl_coefficient_stochastic_npg_value_baseline_special_claim_2_intermediate_11}
    W_{t+1}(a) &= \frac{\pi_{\theta_t}(a_t) \cdot \left| r(a_t) - \pi_{\theta_t}^\top r \right| }{8 \cdot R_{\max}^2} \cdot \frac{I_t(a)}{ \pi_{\theta_t}(a) } \cdot \left( x_t(a) - r(a) \right) \qquad \left( \text{by \cref{eq:non_vanishing_nl_coefficient_stochastic_npg_value_baseline_special_claim_2_intermediate_7}} \right) \\
    &= \frac{ I_t(a)  }{8 \cdot R_{\max}^2} \cdot \left| r(a) - \pi_{\theta_t}^\top r \right| \cdot \left( x_t(a) - r(a) \right) \\
    &\in \bigg[ - \frac{1}{ 8 \cdot R_{\max} }, \frac{1}{ 8 \cdot R_{\max} } \bigg].
\end{align}
Similarly, we have,
\begin{align}
\label{eq:non_vanishing_nl_coefficient_stochastic_npg_value_baseline_special_claim_2_intermediate_12}
    P_t(a) = \frac{ I_t(a) }{8 \cdot R_{\max}^2} \cdot \left| r(a) - \pi_{\theta_t}^\top r \right| \cdot \left( r(a) - \pi_{\theta_t}^\top r \right),
\end{align}
and
\begin{align}
\label{eq:non_vanishing_nl_coefficient_stochastic_npg_value_baseline_special_claim_2_intermediate_13}
    Z_t(a) = \sum_{s=1}^{t-1}{ \frac{ I_s(a) }{8 \cdot R_{\max}^2} \cdot \left| r(a) - \pi_{\theta_s}^\top r \right| \cdot \left( x_s(a) - r(a) \right) }.
\end{align}
Define the following notations,
\begin{align}
\label{eq:non_vanishing_nl_coefficient_stochastic_npg_value_baseline_special_claim_2_intermediate_14a}
    N_t(a) &\coloneqq \sum_{s=1}^{t}{ I_s(a) }, \\
\label{eq:non_vanishing_nl_coefficient_stochastic_npg_value_baseline_special_claim_2_intermediate_14b}
    N_\infty(a) &\coloneqq \sum_{s=1}^{\infty}{ I_s(a) }, \\
\label{eq:non_vanishing_nl_coefficient_stochastic_npg_value_baseline_special_claim_2_intermediate_14c}
    N_{p:q}(a) &\coloneqq \sum_{s=p}^{q}{ I_s(a) }.
\end{align}

Recall that $i$ is the index of the (random) action $I\in [K]$  with 
\begin{align}
\label{eq:non_vanishing_nl_coefficient_stochastic_npg_value_baseline_special_claim_2_intermediate_15}
    \lim_{t \to \infty} \sum_{ j \in \gA(I) }{\pi_{\theta_t}(j) } = 1, \qquad \text{a.s.}
\end{align}
As noted earlier we consider the event $\{ I \ne a^* \}$, where $a^*$ is the index of an optimal action and we will show that this event has zero probability.
Since $\{ I \ne a^* \} = \cup_{i\in [K]} \{ I=i, i\ne a^* \}$, it suffices to show that for any fixed $i\in [K]$ index with $r(i)<r(a^*)$, $\{ I=i, i\ne a^* \}$ has zero probability.
Hence, in what follows we fix such a suboptimal action's index $i\in [K]$ and consider the event $\{I=i,i\ne a^*\}$.

Partition the action set $[K]$ into three parts using $r(i)$ as follows,
\begin{align}
\label{eq:non_vanishing_nl_coefficient_stochastic_npg_value_baseline_special_claim_2_intermediate_16a}
    \gA(i) &\coloneqq \left\{ j \in [K]: r(j) = r(i) \right\}, \qquad \left( \text{from \cref{eq:non_vanishing_nl_coefficient_stochastic_npg_value_baseline_special_intermediate_1_a}} \right) \\
\label{eq:non_vanishing_nl_coefficient_stochastic_npg_value_baseline_special_claim_2_intermediate_16b}
    \gA^+(i) &\coloneqq \left\{ a^+ \in [K]: r(a^+) > r(i) \right\}, \\
\label{eq:non_vanishing_nl_coefficient_stochastic_npg_value_baseline_special_claim_2_intermediate_16c}
    \gA^-(i) &\coloneqq \left\{ a^- \in [K]: r(a^-) < r(i) \right\}.
\end{align}
Because $i$ was the index of a sub-optimal action, we have $\gA^+(i)\ne \emptyset$.
According to \cref{eq:non_vanishing_nl_coefficient_stochastic_npg_value_baseline_special_claim_2_intermediate_15}, 
on $\{I=i\} \supset \{I=i,i\ne a^* \}$,
we have $\pi_{\theta_t}^\top r \to r(i)$ as $t \to \infty$ because 
\begin{align}
\label{eq:non_vanishing_nl_coefficient_stochastic_npg_value_baseline_special_claim_2_intermediate_17}
    \left| r(i) - \pi_{\theta_t}^\top r \right| &= \bigg| \sum_{k \not\in \gA(i)} \pi_{\theta_t}(k) \cdot \left( r(i) - r(k) \right) \bigg| \\
    &\le  \sum_{k \not\in \gA(i)} \pi_{\theta_t}(k) \cdot \left| r(i) - r(k) \right|  \\
    &\le 1 - \sum_{ j \in \gA(i) }{\pi_{\theta_t}(j) }. \qquad \left ( r \in [0, 1]^K \right)
\end{align}
Therefore, there exists $\tau\ge 1$ such that
almost surely on $\{I = i,i\ne a^* \}$ 
 $\tau < \infty$ while we also have
\begin{align}
\label{eq:non_vanishing_nl_coefficient_stochastic_npg_value_baseline_special_claim_2_intermediate_18}
    r(a^+) - c^\prime \ge \pi_{\theta_t}^\top r \ge r(a^-) + c^\prime, \qquad \text{for all } t \ge \tau,
\end{align}
for all $a^+ \in \gA^+(i)$, $a^- \in \gA^-(i)$, where $c^\prime > 0$.

Now, take any $a^- \in \gA^-(i)$. According to \cref{lem:bad_action_parameter_upper_bounded_almost_surely}, we have, almost surely on $\{I=i,i\ne a^*\}$,
\begin{align}
\label{eq:non_vanishing_nl_coefficient_stochastic_npg_value_baseline_special_claim_2_intermediate_19}
    c_1 \coloneqq \sup_{t \ge 1}{ \theta_t(a^-)} < \infty.
\end{align}

\textbf{First case. 2a).} Consider the event,
\begin{align}
\label{eq:non_vanishing_nl_coefficient_stochastic_npg_value_baseline_special_claim_2_case_1_intermediate_1}
    \gE_0 \coloneqq \bigcap\limits_{a^+ \in \gA^+(i)} 
    \underbrace{
    \left\{ N_\infty(a^+) < \infty \right\}}_{\mathcal{E}_0(a^+)},
\end{align}
i.e., any ``good'' action $a^+ \in \gA^+(i)$ has finitely many updates as $t \to \infty$. Pick $a^+ \in \gA^+(i)$, such that $\PP{\left( N_\infty(a^+) < \infty \right) } > 0$. According to the extended Borel-Cantelli lemma (\cref{lem:ebc}), we have, almost surely,
\begin{align}
\label{eq:non_vanishing_nl_coefficient_stochastic_npg_value_baseline_special_claim_2_case_1_intermediate_2}
    \Big\{ \sum_{t \ge 1} \pi_{\theta_t}(a^+)=\infty \Big\} = \left\{ N_\infty(a^+)=\infty \right\}.
\end{align}
Hence, taking complements, we have,
\begin{align}
\label{eq:non_vanishing_nl_coefficient_stochastic_npg_value_baseline_special_claim_2_case_1_intermediate_3}
    \Big\{ \sum_{t \ge 1} \pi_{\theta_t}(a^+)<\infty \Big\} = \left\{N_\infty(a^+)<\infty\right\}
\end{align}
also holds almost surely. 

On event $\gE_0(a^+)$, we also have,
\begin{align}
\label{eq:non_vanishing_nl_coefficient_stochastic_npg_value_baseline_special_claim_2_case_1_intermediate_6a}
    c_2 &\coloneqq \inf_{t \ge 1}{ \theta_t(a^+) } > - \infty, \\
\label{eq:non_vanishing_nl_coefficient_stochastic_npg_value_baseline_special_claim_2_case_1_intermediate_6b}
    c_3 &\coloneqq \sup_{t \ge 1}{ \theta_t(a^+) } < \infty,
\end{align}
which is because on this event the parameter corresponding to $a^+$ receives finitely many updates and each update is bounded, i.e., for any $a \in [K]$,
\begin{align}
\label{eq:non_vanishing_nl_coefficient_stochastic_npg_value_baseline_special_claim_2_case_1_intermediate_7}
    \big| \theta_{t+1}(a) - \theta_{t}(a) \big| &= \eta \cdot \frac{I_t(a)}{ \pi_{\theta_t}(a) } \cdot \left| x_t(a) - \pi_{\theta_t}^\top r \right| \qquad \left( \text{by \cref{eq:non_vanishing_nl_coefficient_stochastic_npg_value_baseline_special_claim_2_intermediate_6}} \right) \\
    &= \frac{\pi_{\theta_t}(a_t) \cdot \left| r(a_t) - \pi_{\theta_t}^\top r \right| }{8 \cdot R_{\max}^2} \cdot \frac{I_t(a)}{ \pi_{\theta_t}(a) } \cdot \left| x_t(a) - \pi_{\theta_t}^\top r \right| \qquad \left( \text{by \cref{eq:non_vanishing_nl_coefficient_stochastic_npg_value_baseline_special_claim_2_intermediate_10}} \right) \\
    &= \frac{I_t(a)  }{8 \cdot R_{\max}^2} \cdot \left| r(a) - \pi_{\theta_t}^\top r \right| \cdot \left| x_t(a) - \pi_{\theta_t}^\top r \right| \le \frac{1}{8 \cdot R_{\max}}.
\end{align}
Define
\begin{align}
\label{eq:non_vanishing_nl_coefficient_stochastic_npg_value_baseline_special_claim_2_case_1_intermediate_9}
    q_t = \sum_{a^+ \in \gA^+(i)}{ \pi_{\theta_t}(a^+) }.
\end{align}
On event $\gE^\prime \coloneqq \gE_0\cap \{ I=i,i\ne a^* \}$, 
and by the softmax parameterization, we have,
\begin{align}
\MoveEqLeft
\label{eq:non_vanishing_nl_coefficient_stochastic_npg_value_baseline_special_claim_2_case_1_intermediate_10}
    q_t = \frac{ \sum_{a^+ \in \gA^+(i)} e^{\theta_t(a^+)} }{ \sum_{j \in \gA(i)}{ e^{ \theta_t(j) } } + \sum_{a^+ \in \gA^+(i)}{ e^{ \theta_t(a^+) } } + \sum_{a^- \in \gA^-(i)}{ e^{ \theta_t(a^-) } } } \\
    &\ge \frac{ \sum_{a^+ \in \gA^+(i)} e^{c_2 } }{ \sum_{j \in \gA(i)}{ e^{ \theta_t(j) } } + \sum_{a^+ \in \gA^+(i)}{ e^{ c_2 } } + \sum_{a^- \in \gA^-(i)}{ e^{ \theta_t(a^-) } } } \qquad \left( \text{by \cref{eq:non_vanishing_nl_coefficient_stochastic_npg_value_baseline_special_claim_2_case_1_intermediate_6a}} \right) \\
    &\ge \frac{ \sum_{a^+ \in \gA^+(i)} e^{c_2 } }{ \sum_{j \in \gA(i)}{ e^{\theta_t(j) } } + \sum_{a^+ \in \gA^+(i)}{ e^{c_2 } } + \sum_{a^- \in \gA^-(i)}{ e^{c_1 } } } \qquad \left( \text{by \cref{eq:non_vanishing_nl_coefficient_stochastic_npg_value_baseline_special_claim_2_intermediate_19}} \right) \\
    &= \frac{ e^{c_2} \cdot \left|\gA^+(i) \right|  }{ \sum_{j \in \gA(i)}{ e^{ \theta_t(j) } } + e^{c_2} \cdot \left|\gA^+(i) \right| + e^{c_1} \cdot \left|\gA^-(i) \right| }.
\end{align}
Next, we have,
\begin{align}
\MoveEqLeft
\label{eq:non_vanishing_nl_coefficient_stochastic_npg_value_baseline_special_claim_2_case_1_intermediate_11}
    1 - \sum_{ j \in \gA(i) }{\pi_{\theta_t}(j) } = \frac{ \sum_{a^+ \in \gA^+(i)}{ e^{\theta_t(a^+)} } + \sum_{a^- \in \gA^-(i)}{ e^{\theta_t(a^-)} } }{ \sum_{j \in \gA(i)}{ e^{\theta_t(j)} } + \sum_{a^+ \in \gA^+(i)}{ e^{\theta_t(a^+)} } + \sum_{a^- \in \gA^-(i)}{ e^{\theta_t(a^-)} } } \\
    &\le \frac{ \sum_{a^+ \in \gA^+(i)}{ e^{c_3} } + \sum_{a^- \in \gA^-(i)}{ e^{c_1} } }{ \sum_{j \in \gA(i)}{ e^{ \theta_t(j) } } + \sum_{a^+ \in \gA^+(i)}{ e^{c_3} } + \sum_{a^- \in \gA^-(i)}{ e^{c_1} } } \qquad \left( \text{by \cref{eq:non_vanishing_nl_coefficient_stochastic_npg_value_baseline_special_claim_2_case_1_intermediate_6b,eq:non_vanishing_nl_coefficient_stochastic_npg_value_baseline_special_claim_2_intermediate_19}} \right) \\
    &= \frac{ e^{c_3} \cdot \left|\gA^+(i) \right| + e^{c_1} \cdot \left|\gA^-(i) \right| }{ \sum_{j \in \gA(i)}{ e^{ \theta_t(j) } } + e^{c_2} \cdot \left|\gA^+(i) \right| + e^{c_1} \cdot \left|\gA^-(i) \right| + \left( e^{c_3} - e^{c_2} \right) \cdot \left|\gA^+(i) \right| } \\
    &\le \frac{ e^{c_3} \cdot \left|\gA^+(i) \right| + e^{c_1} \cdot \left|\gA^-(i) \right| }{ \frac{e^{c_2} }{q_t} \cdot \left|\gA^+(i) \right| + \left( e^{c_3} - e^{c_2} \right) \cdot \left|\gA^+(i) \right| } \qquad \left( \text{by \cref{eq:non_vanishing_nl_coefficient_stochastic_npg_value_baseline_special_claim_2_case_1_intermediate_10}} \right) \\
    &= \frac{ e^{c_3} \cdot \left|\gA^+(i) \right| + e^{c_1} \cdot \left|\gA^-(i) \right| }{ e^{c_2} \cdot \left|\gA^+(i) \right| + \left( e^{c_3} - e^{c_2} \right) \cdot \left|\gA^+(i) \right| \cdot q_t } \cdot q_t \\
    &\le \frac{ e^{c_3} \cdot \left|\gA^+(i) \right| + e^{c_1} \cdot \left|\gA^-(i) \right| }{ e^{c_2} \cdot \left|\gA^+(i) \right| } \cdot q_t \,. \qquad \left( \text{because } q_t > 0 \right)
\label{eq:non_vanishing_nl_coefficient_stochastic_npg_value_baseline_special_claim_2_case_1_intermediate_111}
\end{align}
Denote $C^\prime \coloneqq  \frac{ e^{c_3} \cdot \left|\gA^+(i) \right| + e^{c_1} \cdot \left|\gA^-(i) \right| }{ e^{c_2} \cdot \left|\gA^+(i) \right| } $. We have,
\begin{align}
\label{eq:non_vanishing_nl_coefficient_stochastic_npg_value_baseline_special_claim_2_case_1_intermediate_12}
    \big| r(i) - \pi_{\theta_t}^\top r \big| &\le 1 - \sum_{ j \in \gA(i) }{\pi_{\theta_t}(j) } \qquad \left( r \in [0, 1]^K \right) \qquad \left( \text{by \cref{eq:non_vanishing_nl_coefficient_stochastic_npg_value_baseline_special_claim_2_intermediate_17}} \right) \\
    &\le C^\prime \cdot q_t. \qquad \left( \text{by \cref{eq:non_vanishing_nl_coefficient_stochastic_npg_value_baseline_special_claim_2_case_1_intermediate_111}} \right)
\end{align}
Take any $j \in \gA(i)$, according to \cref{eq:non_vanishing_nl_coefficient_stochastic_npg_value_baseline_special_claim_2_intermediate_4}, we have,
\begin{align}
\label{eq:non_vanishing_nl_coefficient_stochastic_npg_value_baseline_special_claim_2_case_1_intermediate_13}
    \theta_{t}(j) = \EE{\theta_1(j)} +Z_t(j) + \sum_{s=1}^{t-1}{P_s(j)}.
\end{align}
According to \cref{eq:non_vanishing_nl_coefficient_stochastic_npg_value_baseline_special_claim_2_intermediate_12}, we have,
\begin{align}
\label{eq:non_vanishing_nl_coefficient_stochastic_npg_value_baseline_special_claim_2_case_1_intermediate_14}
    P_s(j) = \frac{ I_s(j) }{8 \cdot R_{\max}^2} \cdot \left| r(j) - \pi_{\theta_s}^\top r \right| \cdot \left( r(j) - \pi_{\theta_s}^\top r \right).
\end{align}
Therefore, for all $s \ge 1$,
\begin{align}
\label{eq:non_vanishing_nl_coefficient_stochastic_npg_value_baseline_special_claim_2_case_1_intermediate_15}
    \left| P_s(j) \right| &\le \frac{1}{8 \cdot R_{\max}^2} \cdot \left( r(i) - \pi_{\theta_s}^\top r \right)^2 \qquad \left( j \in \gA(i), \ r(j) = r(i) \right) \\
    &\le \frac{C^\prime}{8 \cdot R_{\max}^2} \cdot q_s^2 \qquad \left( \text{by \cref{eq:non_vanishing_nl_coefficient_stochastic_npg_value_baseline_special_claim_2_case_1_intermediate_12}} \right) \\
    &\le \frac{C^\prime}{8 \cdot R_{\max}^2} \cdot q_s. \qquad \left( q_s \in (0,1) \right)
\label{eq:non_vanishing_nl_coefficient_stochastic_npg_value_baseline_special_claim_2_case_1_intermediate_15b}    
\end{align}
For any $j \in \gA(i)$, we have,
\begin{align}
\label{eq:non_vanishing_nl_coefficient_stochastic_npg_value_baseline_special_claim_2_case_1_intermediate_16_a}
    S_t^2(j) &\coloneqq \sum_{s=1}^{t} \left( r(j) - \pi_{\theta_s}^\top r \right)^2 \cdot I_s(j) \\
    &\le  \sum_{s=1}^{t} \left( r(j) - \pi_{\theta_s}^\top r \right)^2 \\
    &\le \sum_{s=1}^{t} q_s^2 \qquad \left( \text{by \cref{eq:non_vanishing_nl_coefficient_stochastic_npg_value_baseline_special_claim_2_case_1_intermediate_12}} \right) \\
    &\le \sum_{s=1}^{t} q_s \qquad \left( q_s \in [0, 1] \right) \\
\label{eq:non_vanishing_nl_coefficient_stochastic_npg_value_baseline_special_claim_2_case_1_intermediate_16_b}
    &\eqqcolon Q_t.
\end{align}
Fix $\delta \in [0, 1]$. According to \cref{lem:z_t_confidence_intervals_using_variance}, $\exists \ \gE_{\delta}$ with $\PP{\left( \gE_{\delta} \right)} \ge 1 - \delta$, and on $\gE_{\delta}$,
for all $t\ge 1$,
\begin{align}
\label{eq:non_vanishing_nl_coefficient_stochastic_npg_value_baseline_special_claim_2_case_1_intermediate_17}
    \left| Z_t(j) \right| \le \frac{1}{8 R_{\max}} \cdot \sqrt{ \left( 1+ S_t^2(j) \right) \cdot \Bigg( 1 + 2 \log{\bigg( \frac{\left(1+ S_t^2(j) \right)^{\frac{1}{2}}}{\delta} \bigg)} \Bigg) }.
\end{align}
Then, on $\gE^\prime \cap \gE_{\delta}$, \cref{eq:non_vanishing_nl_coefficient_stochastic_npg_value_baseline_special_claim_2_case_1_intermediate_16_b} holds and also,
\begin{align}
\label{eq:non_vanishing_nl_coefficient_stochastic_npg_value_baseline_special_claim_2_case_1_intermediate_18}
    \sum_{s=1}^{t-1}{P_s(j)} \le \frac{C^\prime}{8 \cdot  R_{\max}^2} \cdot Q_t. \qquad \left( \text{by \cref{eq:non_vanishing_nl_coefficient_stochastic_npg_value_baseline_special_claim_2_case_1_intermediate_15b}} \right)
\end{align}
According to \cref{eq:non_vanishing_nl_coefficient_stochastic_npg_value_baseline_special_claim_2_case_1_intermediate_13,eq:non_vanishing_nl_coefficient_stochastic_npg_value_baseline_special_claim_2_case_1_intermediate_17,eq:non_vanishing_nl_coefficient_stochastic_npg_value_baseline_special_claim_2_case_1_intermediate_18}, we have, on $\gE^\prime \cap \gE_{\delta}$,
\begin{align}
    \theta_t(j)
     &\le \EE{\theta_1(j)} + \frac{1}{8 R_{\max}} \cdot \sqrt{ \left( 1+ Q_t \right) \cdot \Bigg( 1 + 2 \log{\bigg( \frac{\left(1+ Q_t \right)^{\frac{1}{2}}}{\delta} \bigg)} \Bigg) } + \frac{C^\prime}{8 R_{\max}^2} \cdot Q_t\\
     &\le \EE{\theta_1(j)} + \frac{1}{8 R_{\max}} \cdot \sqrt{ \left( 1+ Q \right) \cdot \Bigg( 1 + 2 \log{\bigg( \frac{\left(1+ Q \right)^{\frac{1}{2}}}{\delta} \bigg)} \Bigg) } + \frac{C^\prime}{8 R_{\max}^2} \cdot Q\,,\label{eq:non_vanishing_nl_coefficient_stochastic_npg_value_baseline_special_claim_2_case_1_intermediate_19}
\end{align}
where 
$Q = \lim_{t \to \infty}{Q_t}$ and 
the inequality follows because $(Q_t)$ is increasing. Note that on $\gE^\prime$, $Q$ is finite almost surely, according to \cref{eq:non_vanishing_nl_coefficient_stochastic_npg_value_baseline_special_claim_2_case_1_intermediate_3,eq:non_vanishing_nl_coefficient_stochastic_npg_value_baseline_special_claim_2_case_1_intermediate_9,eq:non_vanishing_nl_coefficient_stochastic_npg_value_baseline_special_claim_2_case_1_intermediate_16_b}.

Now take any $\omega \in \gE^\prime$. Because $\PP{\left( \gE^\prime \setminus \left( \gE^\prime \cap \gE_{\delta} \right) \right)} \le \PP{\left( \Omega \setminus \gE_{\delta} \right)} \le \delta \to 0$ as $\delta \to 0$, we have that $\PP$-almost surely for all $\omega \in \gE^\prime$
there exists $\delta > 0$ such that $\omega \in \gE^\prime\cap \gE_{\delta}$
while
\cref{eq:non_vanishing_nl_coefficient_stochastic_npg_value_baseline_special_claim_2_case_1_intermediate_19} also holds for this $\delta$.
Take such a $\delta$. By \cref{eq:non_vanishing_nl_coefficient_stochastic_npg_value_baseline_special_claim_2_case_1_intermediate_19},
\begin{align}
\label{eq:non_vanishing_nl_coefficient_stochastic_npg_value_baseline_special_claim_2_case_1_intermediate_20}
    \limsup_{t \to \infty}{\theta_t(j)(\omega)} < \infty.
\end{align}
Hence, almost surely on $\gE^\prime$,
\begin{align}
\label{eq:non_vanishing_nl_coefficient_stochastic_npg_value_baseline_special_claim_2_case_1_intermediate_21}
    c_4 \coloneqq \limsup_{t \to \infty}{\theta_t(j)} < \infty.
\end{align}
Therefore, we have, almost surely on $\gE^\prime$,
\begin{align}
\label{eq:non_vanishing_nl_coefficient_stochastic_npg_value_baseline_special_claim_2_case_1_intermediate_22}
    \sum_{j \in \gA(i)}{ \pi_{\theta_t}(j) } &= \frac{ \sum_{j \in \gA(i)}{ e^{ \theta_t(j) } } }{ \sum_{j \in \gA(i)}{ e^{ \theta_t(j) } } + \sum_{a^+ \in \gA^+(i)}{ e^{ \theta_t(a^+) } } + \sum_{a^- \in \gA^-(i)}{ e^{ \theta_t(a^-) } } } \\
    &\le \frac{ \sum_{j \in \gA(i)}{ e^{ \theta_t(j) } } }{ \sum_{j \in \gA(i)}{ e^{ \theta_t(j) } } + \sum_{a^+ \in \gA^+(i)}{ e^{ \theta_t(a^+) } } } \qquad \big( e^{ \theta_t(a^-) } > 0 \big) \\
    &\le \frac{ \sum_{j \in \gA(i)}{ e^{ \theta_t(j) } } }{ \sum_{j \in \gA(i)}{ e^{ \theta_t(j) } } + e^{c_2} \cdot \left|\gA^+(i) \right| } \qquad \left( \text{by \cref{eq:non_vanishing_nl_coefficient_stochastic_npg_value_baseline_special_claim_2_case_1_intermediate_6a}} \right) \\
    &\le \frac{ e^{c_4} \cdot \left|\gA(i) \right| }{ e^{c_4} \cdot \left|\gA(i) \right| + e^{c_2} \cdot \left|\gA^+(i) \right| } \qquad \left( \text{by \cref{eq:non_vanishing_nl_coefficient_stochastic_npg_value_baseline_special_claim_2_case_1_intermediate_21}} \right) \\
    &\not\to 1,
\end{align}
which is a contradiction with the assumption of \cref{eq:non_vanishing_nl_coefficient_stochastic_npg_value_baseline_special_claim_2_intermediate_15},
showing that $\mathbb{P}(\gE^\prime)=0$.

\textbf{Second case. 2b).} Consider the complement $\gE_0^c$ of $\gE_0$, where $\gE_0$ is by \cref{eq:non_vanishing_nl_coefficient_stochastic_npg_value_baseline_special_claim_2_case_1_intermediate_1}. $\gE_0^c$ indicates the event for at least one ``good'' action $a^+ \in \gA^+(i)$ has infinitely many updates as $t \to \infty$.

We now show that also $\PP(\gE^{\prime\prime})=0$ where
 $\gE^{\prime\prime}=\gE_0^c \cap \{ I=i,i\ne a^* \}
 = (\cup_{a^+\in \cA(i)} \{ N_\infty(a^+)=\infty \}) \cap \{ I=i,i\ne a^* \}$.
It suffices to show that for any $a^+\in \gA^+(i)$, 
$\PP(  \{ N_\infty(a^+)=\infty \}) \cap \{ I=i,i\ne a^* \} )=0$.

Thus, fix an arbitrary $a^+ \in \gA^+(i)$ and let
\[
\gE^\prime\coloneqq\gE_\infty(a^+)\cap \{I=i,i\ne a^*\},
\]
where for $a\in [K]$, $\gE_\infty(a) = \{ N_\infty(a)=\infty \}$. With this notation, the goal is to show that $\PP(\gE^\prime)=0$.%
\footnote{Here, $\gE^\prime$ is redefined to minimize clutter; the previous definition is not used in this part of the proof.}
Since $\gE^\prime\subset \gE_\infty(a^+)$,
the statement follows if 
$\mathbb{P}{\left( \gE_\infty(a^+) \right) } = 0$.
Hence, assume that 
$\mathbb{P}{\left( \gE_\infty(a^+)\right) } > 0$.

Fix $\delta \in [0, 1]$. According to \cref{cor:z_t_confidence_intervals}, there exists an event 
$\gE_{\delta}$ such that $\PP{\left( \gE_{\delta} \right)} \ge 1 - \delta$, and on $\gE_{\delta}$, for all $t\ge 1$,
\begin{align}
\label{eq:non_vanishing_nl_coefficient_stochastic_npg_value_baseline_special_claim_2_case_2_intermediate_1}
    \left| Z_t(a^+) \right| \le \frac{1}{8 R_{\max}} \cdot  \sqrt{ \left( 1+ N_t(a^+) \right) \cdot \Bigg( 1 + 2 \log{\bigg( \frac{\left(1+N_t(a^+) \right)^{\frac{1}{2}}}{\delta} \bigg)} \Bigg) }.
\end{align}
Using a similar calculation as in the proof of \cref{lem:bad_action_parameter_upper_bounded_almost_surely}, we have, on $\gE_{\delta} \cap \gE_{\infty}(a^+)$ that
\begin{align}
\label{eq:non_vanishing_nl_coefficient_stochastic_npg_value_baseline_special_claim_2_case_2_intermediate_2}
    \theta_t(a^+) &\ge \EE{\theta_1(a^+)} - \frac{1}{8 R_{\max}} \cdot  \sqrt{ \left( 1+ N_t(a^+) \right) \cdot \Bigg( 1 + 2 \log{\bigg( \frac{\left(1+N_t(a^+) \right)^{\frac{1}{2}}}{\delta} \bigg)} \Bigg) } \\
    &\qquad + \frac{c}{8 \cdot R_{\max}^2} \cdot \underbrace{ N_{t-1}(a^+) }_{\to \infty} - \frac{c}{8 \cdot R_{\max}^2} \cdot (\tau-1) + P_1(a^+) + \cdots + P_{\tau-1}(a^+).
\end{align}
On $\gE_{\infty}(a^+) \cap \gE_{\delta}$, $N_{t-1}(a^+) \to \infty$ as $t \to \infty$, we have $\theta_t(a^+) \to \infty$ as $t \to \infty$.

Since $\PP{\left( \gE_{\infty}(a^+) \setminus \left( \gE_{\infty}(a^+) \cap \gE_{\delta} \right) \right)} \to 0$ as $\delta \to 0$, we have, almost surely on $\gE_{\infty}(a^+)$, 
\begin{align}
\label{eq:non_vanishing_nl_coefficient_stochastic_npg_value_baseline_special_claim_2_case_2_intermediate_3}
    \lim_{t \to \infty}{ \theta_t(a^+) } = \infty,
\end{align}
which implies that there exists $\tau\ge 1$ such that on 
$\gE^\prime(=\gE_\infty(a^+)\cap \{I=i,i\ne a^*\})$
we have almost surely that $\tau<+\infty$ while we also have that for all $t\ge \tau$,
\begin{align}
\label{eq:non_vanishing_nl_coefficient_stochastic_npg_value_baseline_special_claim_2_case_2_intermediate_3b}
    \sum_{a^- \in \gA^-(i)}  \frac{ r(i) - r(a^-) }{ \exp\{ \theta_t(a^+) - c_1 \}} < \frac{r(a^+)-r(i)}{2}\,.
\end{align}
Hence, on $\gE^\prime$, for $t \ge \tau$, almost surely,
\begin{align}
\MoveEqLeft
\label{eq:non_vanishing_nl_coefficient_stochastic_npg_value_baseline_special_claim_2_case_2_intermediate_4}
    \pi_{\theta_t}^\top r = \sum_{j \in \gA(i)} \pi_{\theta_t}(j) \cdot r(i) + \sum_{a^- \in \gA^-(i)} \pi_{\theta_t}(a^-) \cdot r(a^-) + \sum_{\tilde{a}^+ \in \gA^+(i)} \pi_{\theta_t}(\tilde{a}^+) \cdot r(\tilde{a}^+) \\
    &= r(i) - \sum_{a^- \in \gA^-(i)} \pi_{\theta_t}(a^-) \cdot \left( r(i) - r(a^-) \right) + \sum_{\tilde{a}^+ \in \gA^+(i)} \pi_{\theta_t}(\tilde{a}^+) \cdot \left( r(\tilde{a}^+) - r(i) \right) \\
    &\ge r(i) - \sum_{a^- \in \gA^-(i)} \pi_{\theta_t}(a^-) \cdot \left( r(i) - r(a^-) \right) + \pi_{\theta_t}(a^+) \cdot \left( r(a^+) - r(i) \right) \qquad \left ( r(\tilde{a}^+) - r(i) > 0, \text{ \cref{eq:non_vanishing_nl_coefficient_stochastic_npg_value_baseline_special_claim_2_intermediate_16b}}\right) \\
    &= r(i) + \pi_{\theta_t}(a^+) \cdot \bigg[ \left( r(a^+) - r(i) \right)  - \sum_{a^- \in \gA^-(i)} \frac{ \pi_{\theta_t}(a^-) }{\pi_{\theta_t}(a^+) } \cdot \left( r(i) - r(a^-) \right) \bigg] \\
    &= r(i) + \pi_{\theta_t}(a^+) \cdot \bigg[ \left( r(a^+) - r(i) \right) - \sum_{a^- \in \gA^-(i)}  \frac{ r(i) - r(a^-) }{ \exp\{ \theta_t(a^+) - \theta_t(a^-) \} }  \bigg] \\
    &\ge r(i) + \pi_{\theta_t}(a^+) \cdot \bigg[ \left( r(a^+) - r(i) \right) - \sum_{a^- \in \gA^-(i)}  \frac{ r(i) - r(a^-) }{ \exp\{ \theta_t(a^+) - c_1 \} }  \bigg] \qquad \left( \text{by \cref{eq:non_vanishing_nl_coefficient_stochastic_npg_value_baseline_special_claim_2_intermediate_19}} \right) \\
    &> r(i) + \frac{r(a^+)-r(i)}{2} \cdot \pi_{\theta_t}(a^+)\,. \qquad \left( \text{by \cref{eq:non_vanishing_nl_coefficient_stochastic_npg_value_baseline_special_claim_2_case_2_intermediate_3b}} \right)
\end{align}
Therefore, on 
$\gE^\prime$, for all $s \ge \tau$, for any $j \in \gA(i)$, almost surely,
\begin{align}
\label{eq:non_vanishing_nl_coefficient_stochastic_npg_value_baseline_special_claim_2_case_2_intermediate_5}
    P_s(j) &= \frac{ I_s(j) }{8 \cdot R_{\max}^2} \cdot \left| r(j) - \pi_{\theta_s}^\top r \right| \cdot \left( r(j) - \pi_{\theta_s}^\top r \right) \qquad \left( \text{by \cref{eq:non_vanishing_nl_coefficient_stochastic_npg_value_baseline_special_claim_2_intermediate_12}} \right) \\
    &= - \frac{ I_s(j) }{8 \cdot R_{\max}^2} \cdot \left( r(j) - \pi_{\theta_s}^\top r \right)^2. \qquad \left( \text{by \cref{eq:non_vanishing_nl_coefficient_stochastic_npg_value_baseline_special_claim_2_case_2_intermediate_4}, } r(i) - \pi_{\theta_s}^\top r < 0 \right)
\end{align}
From now on assume that $\gE^\prime$ holds.
Therefore, we have, for all $t \ge \tau$,
\begin{align}
\MoveEqLeft
    \sum_{s=1}^{t-1}{ P_s(j) } = \sum_{s=1}^{\tau - 1}{ P_s(j) } + \sum_{s=\tau}^{t}{ P_s(j) } - P_t(j) \\
    &= \sum_{s=1}^{\tau - 1}{ P_s(j) } - \frac{ 1 }{8 \cdot R_{\max}^2} \cdot \sum_{s=\tau}^{t} \left( r(j) - \pi_{\theta_s}^\top r \right)^2 \cdot I_s(j) -P_t(j) \qquad \left( \text{by \cref{eq:non_vanishing_nl_coefficient_stochastic_npg_value_baseline_special_claim_2_case_2_intermediate_5}} \right) \\
    &= \sum_{s=1}^{\tau - 1}{ P_s(j) } - \frac{ 1 }{8 \cdot R_{\max}^2} \cdot \bigg[ S_t^2(j) - \sum_{s=1}^{\tau-1} \left( r(j) - \pi_{\theta_s}^\top r \right)^2 \cdot I_s(j) \bigg] -P_t(j) \\
    &= - \frac{ 1 }{8 \cdot R_{\max}^2} \cdot S_t^2(j) + \sum_{s=1}^{\tau - 1}{ \bigg[ P_s(j) + \frac{ \left( r(j) - \pi_{\theta_s}^\top r \right)^2 \cdot I_s(j) }{8 \cdot R_{\max}^2} \cdot  \bigg] } - P_t(j) \\
    &\le - \frac{ 1 }{8 \cdot R_{\max}^2} \cdot S_t^2(j) 
     + \frac{ \tau - 1 }{4 \cdot R_{\max}^2} + \frac{ 1}{8 \cdot R_{\max}^2}, \qquad \left( |P_t(j)| \le \frac{ 1}{8 \cdot R_{\max}^2}, \text{ \cref{eq:non_vanishing_nl_coefficient_stochastic_npg_value_baseline_special_claim_2_case_2_intermediate_5}} \right)
\end{align}
where $S_t^2(j) = \sum_{s=1}^{t} \left( r(j) - \pi_{\theta_s}^\top r \right)^2 \cdot I_s(j)$. According to \cref{lem:z_t_confidence_intervals_using_variance}, 
for any $\delta\in [0,1]$, there exist an event $\gE_\delta$ such that $\PP(\gE_\delta)\ge 1-\delta$ and on $\gE_\delta \cap \gE^\prime$,
we have,
\begin{align}
\label{eq:non_vanishing_nl_coefficient_stochastic_npg_value_baseline_special_claim_2_case_2_intermediate_6}
     \theta_t(j)
     &\le \EE{\theta_1(j)} + Z_t(j) + \sum_{s=1}^{t-1}{ P_s(j) } \qquad \left( \text{by \cref{eq:non_vanishing_nl_coefficient_stochastic_npg_value_baseline_special_claim_2_intermediate_4}} \right) \\
     &\le \EE{\theta_1(j)} + \frac{1}{8 R_{\max}} \cdot \sqrt{ \left( 1+ S_t^2(j) \right) \cdot \Bigg( 1 + 2 \log{\bigg( \frac{\left(1+ S_t^2(j) \right)^{\frac{1}{2}}}{\delta} \bigg)} \Bigg) } \\
     &\qquad - \frac{ 1 }{8 \cdot R_{\max}^2} \cdot \left( 1 + S_t^2(j) \right) + \frac{ \tau }{4 \cdot  R_{\max}^2}.
\end{align}
Note that,
\begin{align}
\label{eq:non_vanishing_nl_coefficient_stochastic_npg_value_baseline_special_claim_2_case_2_intermediate_7}
    M(\delta) &\coloneqq \sup_{s \ge 0}{ \frac{1}{8 R_{\max}} \cdot \sqrt{ \left( 1+ s \right) \cdot \Bigg( 1 + 2 \log{\bigg( \frac{\left(1+ s \right)^{\frac{1}{2}}}{\delta} \bigg)} \Bigg) } - \frac{ 1 }{8 \cdot R_{\max}^2} \cdot \left( 1 + s \right)} \\
    &< \infty.
\end{align}
Therefore, on $\gE^\prime \cap \gE_\delta$ for $t\ge \tau$ we have,
\begin{align}
\label{eq:non_vanishing_nl_coefficient_stochastic_npg_value_baseline_special_claim_2_case_2_intermediate_8}
    \theta_t(j) \le \EE{\theta_1(j)} + M(\delta)  + \frac{ \tau }{4 \cdot  R_{\max}^2}\,.
\end{align}
Since $\PP(\gE_\delta^c)\to 0$ as $\delta\to 0$, with an argument parallel to that used in the proof of the first part (cf. the argument around \cref{eq:non_vanishing_nl_coefficient_stochastic_npg_value_baseline_special_claim_2_case_1_intermediate_20}), 
we get that there exists a random constant $c_5(j)$ such that almost surely on $\gE^\prime$, $c_5(j)<\infty$ and
$\sup_{t\ge \tau} \theta_t(j) \le c_5(j)$.
Define $c_5 \coloneqq \max_{j\in \gA(i)}{ c_5(j) }$. Then,
almost surely on $\gE^\prime$, $c_5<\infty$ and
\begin{align}
\label{eq:non_vanishing_nl_coefficient_stochastic_npg_value_baseline_special_claim_2_case_2_intermediate_9}
\sup_{t\ge \tau}\max_{j\in \gA(i)} \theta_t(j) \le c_5\,.
\end{align}

By
\cref{eq:non_vanishing_nl_coefficient_stochastic_npg_value_baseline_special_claim_2_case_2_intermediate_3}, there exists $\tau'\ge 1$, such that almost surely on $\gE^\prime$,
$\tau'<\infty$ while we also have 
\begin{align}
\inf_{t\ge \tau'} \theta_t(a^+)\ge 0,
\label{eq:ttapp}
\end{align}
 for all $t\ge \tau'$.
Hence, on $\gE^\prime$, almost surely for all $t\ge \max(\tau,\tau')$, 
\begin{align}
\label{eq:non_vanishing_nl_coefficient_stochastic_npg_value_baseline_special_claim_2_case_2_intermediate_10}
    \sum_{j \in \gA(i)}{ \pi_{\theta_t}(j) } &= \frac{ \sum_{j \in \gA(i)}{ e^{ \theta_t(j) } } }{ \sum_{j \in \gA(i)}{ e^{ \theta_t(j) } } + \sum_{\tilde{a}^+ \in \gA^+(i)}{ e^{ \theta_t(\tilde{a}^+) } } + \sum_{a^- \in \gA^-(i)}{ e^{ \theta_t(a^-) } } } \\
    &\le \frac{ \sum_{j \in \gA(i)}{ e^{ \theta_t(j) } } }{ \sum_{j \in \gA(i)}{ e^{ \theta_t(j) } } + 	{ e^{ \theta_t(a^+) } } } \qquad \big( e^{ \theta_t(k) } > 0 \text{ for any } k\in [K] \big) \\
    &\le \frac{ \sum_{j \in \gA(i)}{ e^{ \theta_t(j) } } }
    { \sum_{j \in \gA(i)}{ e^{ \theta_t(j) } } 
    + 1 } \qquad \left( \text{by \cref{eq:ttapp} } \right) \\
    &\le \frac{ e^{c_5} \cdot \left|\gA(i) \right| }{ e^{c_5} \cdot \left|\gA(i) \right| + 1 } \qquad \left( \text{by \cref{eq:non_vanishing_nl_coefficient_stochastic_npg_value_baseline_special_claim_2_case_2_intermediate_8}} \right) \\
    &\not\to 1\,.
\end{align}
Hence, $\mathbb{P}( \gE^\prime )=0$, finishing the proof.
\end{proof}

Let us now turn to the proof of the results that were used in the above proof.
\begin{lemma}
\label{lem:bad_action_parameter_upper_bounded_almost_surely}
Let $I$ be as in \cref{eq:non_vanishing_nl_coefficient_stochastic_npg_value_baseline_special_claim_2_intermediate_15},
let $i$ be a sub-optimal action,
and let $\tau$ be as in \cref{eq:non_vanishing_nl_coefficient_stochastic_npg_value_baseline_special_claim_2_intermediate_18},
Then,
on $\{ I=i, i\ne a^* \}$, for any action $a^- \in \gA^-(i)$ (using \cref{update_rule:softmax_natural_pg_special_on_policy_stochastic_gradient_value_baseline})
we have, almost surely,
\begin{align}
\label{eq:bad_action_parameter_upper_bounded_almost_surely_result_1}
    c_1 \coloneqq \sup_{t \ge 1}{ \theta_t(a^-)} < \infty.
\end{align}
\end{lemma}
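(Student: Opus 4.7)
The plan is to mirror Case 2b) of the preceding argument (which handled ``good'' actions $a^+$): decompose $\theta_t(a^-)$ via \cref{eq:non_vanishing_nl_coefficient_stochastic_npg_value_baseline_special_claim_2_intermediate_4} into initialization, cumulative martingale noise $Z_t(a^-)$, and cumulative predictable progress $\sum_s P_s(a^-)$, and then show that on $\{I = i,\, i \ne a^*\}$ the progress term provides a negative drift which is linear in the accumulated predictable quadratic variation, while a self-normalized concentration inequality controls $Z_t(a^-)$ by only the square root of the same quantity.

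Concretely, since $a^- \in \gA^-(i)$ and, by \cref{eq:non_vanishing_nl_coefficient_stochastic_npg_value_baseline_special_claim_2_intermediate_18}, $\pi_{\theta_s}^\top r \ge r(a^-) + c^\prime$ for every $s \ge \tau$, formula \cref{eq:non_vanishing_nl_coefficient_stochastic_npg_value_baseline_special_claim_2_intermediate_12} gives $P_s(a^-) = -\frac{I_s(a^-)}{8R_{\max}^2}(r(a^-) - \pi_{\theta_s}^\top r)^2 \le 0$ for all $s \ge \tau$. Setting $S_t^2(a^-) \coloneqq \sum_{s=1}^t (r(a^-) - \pi_{\theta_s}^\top r)^2 I_s(a^-)$ and lumping the finitely many contributions from $s < \tau$, each bounded in absolute value by $\tfrac{1}{8R_{\max}^2}$ (cf.\ \cref{eq:non_vanishing_nl_coefficient_stochastic_npg_value_baseline_special_claim_2_case_1_intermediate_7}), into an almost surely finite random constant $C_\tau$, I will obtain
\begin{align*}
\sum_{s=1}^{t-1} P_s(a^-) \;\le\; C_\tau \;-\; \frac{1}{8R_{\max}^2}\,S_{t-1}^2(a^-)\,.
\end{align*}
Combining this with the self-normalized tail bound of \cref{lem:z_t_confidence_intervals_using_variance} applied to $Z_t(a^-)$ on an event $\gE_\delta$ of probability at least $1-\delta$ yields that, on $\gE_\delta \cap \{I = i,\, i \ne a^*\}$, the right-hand side of the decomposition is upper bounded by an expression of the form $A_\delta + B\sqrt{(1+s)\log((1+s)/\delta)} - s/(8R_{\max}^2)$ with $s = S_t^2(a^-)$, whose supremum over $s \ge 0$ is finite. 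Hence $\sup_{t\ge 1}\theta_t(a^-) < \infty$ on $\gE_\delta \cap \{I = i,\, i \ne a^*\}$, and letting $\delta \downarrow 0$ (using $\PP(\gE_\delta^c) \le \delta$) promotes this to an almost sure bound on $\{I = i,\, i \ne a^*\}$.

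The delicate step is the ``$\sqrt{s}$ versus $s$'' matching: the negative drift is genuinely proportional to $S_t^2(a^-)$, so one must use the \emph{variance-adaptive} version of \cref{lem:z_t_confidence_intervals_using_variance} rather than a naive $\sqrt{N_t(a^-)}$ estimate, otherwise the linear negative term would not necessarily dominate in the case where $a^-$ happens to be sampled infinitely often. The remaining bookkeeping, including the fact that $\tau$ is almost surely finite on $\{I=i,\, i\ne a^*\}$ and absorbing the pre-$\tau$ terms, is a direct transcription of the calculations leading to \cref{eq:non_vanishing_nl_coefficient_stochastic_npg_value_baseline_special_claim_2_case_2_intermediate_8}.
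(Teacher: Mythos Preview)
Your proposal is correct, but it takes a slightly different route from the paper's own proof and contains one misleading remark.

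The paper does not use the variance-adaptive bound \cref{lem:z_t_confidence_intervals_using_variance} here. Instead it exploits directly that, by \cref{eq:non_vanishing_nl_coefficient_stochastic_npg_value_baseline_special_claim_2_intermediate_18}, $\pi_{\theta_s}^\top r - r(a^-) \ge c'$ for all $s\ge\tau$, which via \cref{eq:non_vanishing_nl_coefficient_stochastic_npg_value_baseline_special_claim_2_intermediate_12} gives the cruder bound $P_s(a^-)\le -\,(c')^2\,I_s(a^-)/(8R_{\max}^2)$. This makes the cumulative negative drift linear in the \emph{count} $N_{t-1}(a^-)$ rather than in $S_{t-1}^2(a^-)$, so the paper can pair it with the simpler count-based concentration \cref{cor:z_t_confidence_intervals}. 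The price is a two-case split: on $\{N_\infty(a^-)=\infty\}$ the linear-in-$N$ drift dominates the $\sqrt{N\log N}$ noise and $\theta_t(a^-)\to -\infty$; on $\{N_\infty(a^-)<\infty\}$ the parameter receives only finitely many bounded updates so is trivially bounded. Your approach, matching drift and noise through the common quantity $S_t^2(a^-)$ and then taking $\sup_{s\ge 0}$ of the resulting $\sqrt{s\log s}-cs$ expression, is exactly what the paper does later in Case~2b) for actions $j\in\gA(i)$ (cf.\ \cref{eq:non_vanishing_nl_coefficient_stochastic_npg_value_baseline_special_claim_2_case_2_intermediate_6}--\cref{eq:non_vanishing_nl_coefficient_stochastic_npg_value_baseline_special_claim_2_case_2_intermediate_8}); its advantage is that it handles both cases uniformly, without the split.

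Your remark that ``one must use the variance-adaptive version \ldots\ otherwise the linear negative term would not necessarily dominate'' is not quite right for $a^-$. Because $|r(a^-)-\pi_{\theta_s}^\top r|\ge c'$ for $s\ge\tau$, one has $S_t^2(a^-)\ge (c')^2\,N_{\tau:t}(a^-)$, so $S_t^2(a^-)$ and $N_t(a^-)$ are comparable here and the count-based bound suffices. The variance-adaptive version is genuinely needed only for actions $j\in\gA(i)$, where $|r(j)-\pi_{\theta_s}^\top r|\to 0$ and the progress-per-visit shrinks.
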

\begin{proof}
According to \cref{eq:non_vanishing_nl_coefficient_stochastic_npg_value_baseline_special_claim_2_intermediate_12}, we have, for all $t \ge \tau$,
\begin{align}
\label{eq:bad_action_parameter_upper_bounded_almost_surely_intermediate_1}
    P_t(a^-)& = \frac{ I_t(a^-) }{8 \cdot R_{\max}^2} \cdot \left| r(a^-) - \pi_{\theta_t}^\top r \right| \cdot \left( r(a^-) - \pi_{\theta_t}^\top r \right) \\
    &\le - c \cdot \frac{ I_t(a^-) }{8 \cdot R_{\max}^2}, \qquad \left( \text{by \cref{eq:non_vanishing_nl_coefficient_stochastic_npg_value_baseline_special_claim_2_intermediate_18}} \right)
\end{align}
which implies that,
\begin{align}
\label{eq:bad_action_parameter_upper_bounded_almost_surely_intermediate_2}
    \theta_{t}(a^-) &= \EE{\theta_1(a^-)} + Z_t(a^-) + P_1(a^-) + \cdots + P_{\tau-1}(a^-) \qquad \left( \text{by \cref{eq:non_vanishing_nl_coefficient_stochastic_npg_value_baseline_special_claim_2_intermediate_4}} \right) \\
    &\qquad + P_{\tau}(a^-) + \cdots +  P_{t-1}(a^-) \\
    &\le \EE{\theta_1(a^-)} + Z_t(a^-) + P_1(a^-) + \cdots + P_{\tau-1}(a^-) \\
    &\qquad - \frac{c}{8 \cdot R_{\max}^2} \cdot \left( I_{\tau}(a^-) + \cdots + I_{t-1}(a^-) \right) \qquad \left( \text{by \cref{eq:bad_action_parameter_upper_bounded_almost_surely_intermediate_1}} \right) \\
    &= \EE{\theta_1(a^-)} + Z_t(a^-) + P_1(a^-) + \cdots + P_{\tau-1}(a^-) \\
    &\qquad - \frac{c}{8 \cdot R_{\max}^2} \cdot N_{\tau:t-1}(a^-) \qquad \left( \text{\cref{eq:non_vanishing_nl_coefficient_stochastic_npg_value_baseline_special_claim_2_intermediate_14c}} \right)
\end{align}
Denote $\gE_{\infty}(a) \coloneqq \{ N_\infty(a) = \infty \}$, where $N_\infty(a)$ is defined in \cref{eq:non_vanishing_nl_coefficient_stochastic_npg_value_baseline_special_claim_2_intermediate_14b}.

Fix $\delta \in [0, 1]$. Take $\gE_{\delta}$ from \cref{cor:z_t_confidence_intervals}. Consider on event $\gE_{\infty}(a^-) \cap \gE_{\delta}$, we have,
\begin{align}
\label{eq:bad_action_parameter_upper_bounded_almost_surely_intermediate_3}
    \theta_{t}(a^-) &\le \EE{\theta_1(a^-)} + \frac{1}{8 R_{\max}} \cdot  \sqrt{ \left( 1+ N_t(a) \right) \cdot \Bigg( 1 + 2 \log{\bigg( \frac{\left(1+N_t(a) \right)^{\frac{1}{2}}}{\delta} \bigg)} \Bigg) } \\
    &\qquad - \frac{c}{8 \cdot R_{\max}^2} \cdot N_{\tau:t-1}(a^-) + P_1(a^-) + \cdots + P_{\tau-1}(a^-).
\end{align}
Note that,
\begin{align}
\label{eq:bad_action_parameter_upper_bounded_almost_surely_intermediate_4}
    N_{\tau:t-1}(a^-) &= N_{t-1}(a^-) - N_{1:\tau-1}(a^-) \qquad \left( \text{\cref{eq:non_vanishing_nl_coefficient_stochastic_npg_value_baseline_special_claim_2_intermediate_14a,eq:non_vanishing_nl_coefficient_stochastic_npg_value_baseline_special_claim_2_intermediate_14c}} \right)\\
    &\ge N_{t-1}(a^-) - \left( \tau - 1 \right).
\end{align}
We have,
\begin{align}
\label{eq:bad_action_parameter_upper_bounded_almost_surely_intermediate_5}
    \theta_{t}(a^-) &\le \EE{\theta_1(a^-)} + \frac{1}{8 R_{\max}} \cdot  \sqrt{ \left( 1+ N_t(a) \right) \cdot \Bigg( 1 + 2 \log{\bigg( \frac{\left(1+N_t(a) \right)^{\frac{1}{2}}}{\delta} \bigg)} \Bigg) } \\
    &\qquad - \frac{c}{8 \cdot R_{\max}^2} \cdot \underbrace{ N_{t-1}(a^-) }_{\to \infty} + \frac{c}{8 \cdot R_{\max}^2} \cdot ( \tau - 1) + P_1(a^-) + \cdots + P_{\tau-1}(a^-).
\end{align}
On $\gE_{\infty}(a^-) \cap \gE_{\delta}$, $N_{t-1}(a^-) \to \infty$ as $t \to \infty$, we have $\theta_t(a^-) \to - \infty$ as $t \to \infty$.

Since $\PP{\left( \gE_{\infty}(a^-) \setminus \left( \gE_{\infty}(a^-) \cap \gE_{\delta} \right) \right)} \to 0$ as $\delta \to 0$, we have, almost surely on $\gE_{\infty}(a^-)$, 
\begin{align}
\label{eq:bad_action_parameter_upper_bounded_almost_surely_intermediate_6}
    \lim_{t \to \infty}{ \theta_t(a^-) } = - \infty,
\end{align}
which implies that on $\gE_{\infty}(a^-)$, we have $\sup_{t \ge 1}{ \theta_t(a^-) } < \infty$. 

On the other hand, on $\left( \gE_{\infty}(a^-) \right)^c$, we have $\sup_{t \ge 1}{ \theta_t(a^-) } < \infty$ by construction (finitely many updates of $a^-$ as $t \to \infty$, and each update is bounded according to \cref{eq:non_vanishing_nl_coefficient_stochastic_npg_value_baseline_special_claim_2_case_1_intermediate_7}).

Therefore, we have $\sup_{t \ge 1}{ \theta_t(a^-) } < \infty$ almost surely.
\end{proof}

\begin{lemma}[Lemma 6 in \citep{abbasi2011improved}]
\label{lem:improved_algorithm_confidence_intervals}
Let ${X}_t = \sum_{s=1}^{t}{I_s \cdot \eta_s}$, and $N_t = \sum_{s=1}^{t}{I_s}$. Assume $\eta_t$ is conditionally \ $\sigma$-sub-Gaussian, and $I_t$ is $\gF_t$-measurable. Then,
for all $\delta\in [0,1]$,
 with probability $1 - \delta$, for all $ t \ge 1$,
\begin{align}
    \left| {X}_t \right| \le \sigma \cdot \sqrt{ \left( 1+ N_t \right) \cdot \Bigg( 1 + 2 \log{\bigg( \frac{\left(1+N_t \right)^{\frac{1}{2}}}{\delta} \bigg)} \Bigg) }.
\end{align}
\end{lemma}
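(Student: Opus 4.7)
The plan is to apply the classical \emph{method of mixtures} from self-normalized concentration (Pe\~na--Lai--Shao, and Abbasi-Yadkori et al.\ \citep{abbasi2011improved}), which converts a one-parameter family of exponential supermartingales into a single uniform-in-$t$ tail bound for $X_t$ normalized by $\sqrt{1+N_t}$. The three ingredients are: construct an exponential supermartingale in a scalar parameter $\lambda$; mix it against a Gaussian prior on $\lambda$ to obtain an integrated non-negative supermartingale whose size directly controls $X_t^2/(1+N_t)$; then apply Ville's maximal inequality to the mixture and invert.

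For the first step, I would define, for each $\lambda \in \sR$,
\[
M_t^\lambda = \exp\!\Big( \lambda X_t - \tfrac12 \lambda^2 \sigma^2 N_t \Big),
\]
and verify that $(M_t^\lambda)_{t\ge 0}$ is a non-negative supermartingale adapted to $(\gF_t)$ with $\EE{M_0^\lambda}\le 1$. The verification uses three facts: $I_t$ is $\gF_t$-measurable (so it acts as a constant under $\EEt{\cdot}$), $I_t^2 = I_t$ since $I_t\in\{0,1\}$, and $\eta_t$ is conditionally $\sigma$-sub-Gaussian, which together yield $\EEt{\exp(\lambda I_t \eta_t)} \le \exp(\lambda^2 \sigma^2 I_t / 2)$. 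For the second step, I would mix against the zero-mean Gaussian prior $h(\lambda)$ of variance $\sigma^{-2}$ and define $\bar{M}_t \coloneqq \int M_t^\lambda h(\lambda)\, d\lambda$. By Tonelli, $\bar{M}_t$ is itself a non-negative supermartingale with $\EE{\bar{M}_0}\le 1$, and the Gaussian integral closes in closed form to
\[
\bar{M}_t = (1+N_t)^{-1/2}\, \exp\!\Big( \tfrac{X_t^2}{2\sigma^2(1+N_t)} \Big).
\]
For the third step, Ville's maximal inequality gives $\PP(\sup_{t\ge 1} \bar{M}_t \ge 1/\delta) \le \delta$; on the complementary event, rearranging $\bar{M}_t < 1/\delta$ yields $X_t^2 \le 2\sigma^2(1+N_t)\log((1+N_t)^{1/2}/\delta)$, and the stated bound follows by the trivial slack $2\log x \le 1 + 2\log x$ (equivalently, a slightly different choice of prior variance).

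The main obstacle is that $N_t$ is itself $\gF_t$-measurable and random, so fixing $\lambda$ optimally for each $t$ and union-bounding over $t$ would not work---the ``best'' $\lambda$ depends on the unknown random trajectory of $N_t$. The method of mixtures resolves this by integrating $\lambda$ out once and for all \emph{before} any tail bound is invoked, so that a single maximal inequality suffices to give the uniform-in-$t$ statement. Verifying the supermartingale property for $M_t^\lambda$, in turn, hinges on treating $I_t$ as a predictable ``control'' rather than as part of the noise; this is precisely the structural reason why the final bound scales with $N_t = \sum_s I_s$ rather than with the deterministic horizon $t$, which is exactly what is needed for the application in \cref{lem:non_vanishing_nl_coefficient_stochastic_npg_value_baseline_special} where an action may be drawn only finitely often.
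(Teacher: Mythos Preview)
Your argument is correct: the method of mixtures with a Gaussian prior of variance $\sigma^{-2}$ on $\lambda$, combined with Ville's maximal inequality applied to the resulting nonnegative supermartingale $\bar{M}_t = (1+N_t)^{-1/2}\exp\big(X_t^2/(2\sigma^2(1+N_t))\big)$, yields exactly the stated bound (after the harmless slack $2\log x \le 1+2\log x$). Your handling of the supermartingale verification is also appropriate for the filtration used in this paper, where $I_t$ is $\cF_t$-measurable and $\eta_t$ is conditionally $\sigma$-sub-Gaussian given $\cF_t$, so $\EEt{\exp(\lambda I_t \eta_t)} \le \exp(\lambda^2 \sigma^2 I_t/2)$ using $I_t^2=I_t$.

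As for comparison: the paper does not give its own proof of this lemma. It is simply quoted from \citep{abbasi2011improved} (their Lemma~6) and used as a black box to obtain \cref{cor:z_t_confidence_intervals} and \cref{lem:z_t_confidence_intervals_using_variance}. Your write-up is precisely the standard proof from that reference, so there is nothing to contrast.
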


\begin{corollary}
\label{cor:z_t_confidence_intervals}
For all $a \in [K]$, $\forall \delta$, $\exists \ \gE_{\delta}$ with $\PP{\left( \gE_{\delta} \right)} \ge 1 - \delta$, such that on $\gE_{\delta}$, for all $ t \ge 1$,
\begin{align}
    \left| Z_t(a) \right| \le \frac{1}{8 R_{\max}} \cdot \sqrt{ \left( 1+ N_t(a) \right) \cdot \Bigg( 1 + 2 \log{\bigg( \frac{\left(1+N_t(a) \right)^{\frac{1}{2}}}{\delta} \bigg)} \Bigg) }.
\end{align}
\end{corollary}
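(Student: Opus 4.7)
The plan is to apply the Abbasi-Yadkori self-normalized concentration bound (Lemma \ref{lem:improved_algorithm_confidence_intervals}) directly to $Z_t(a)$, after putting it in the required form. Using the explicit formula in \cref{eq:non_vanishing_nl_coefficient_stochastic_npg_value_baseline_special_claim_2_intermediate_13}, I would write
\[
Z_t(a) \;=\; \sum_{s=1}^{t-1} I_s(a) \cdot \eta_s, \qquad \eta_s \coloneqq \tfrac{1}{8 R_{\max}^2} \cdot \bigl|r(a) - \pi_{\theta_s}^\top r\bigr| \cdot \bigl(x_s(a) - r(a)\bigr),
\]
and match the lemma's template via $\sigma = \tfrac{1}{8 R_{\max}}$, $I_s \leftrightarrow I_s(a)$, $\eta_s$ as above, and $N_t \leftrightarrow N_t(a)$; the shift between $\sum_{s=1}^{t}$ (in the lemma) and $\sum_{s=1}^{t-1}$ (in our definition of $Z_t$) is inessential.

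Next I would verify the two hypotheses of Lemma \ref{lem:improved_algorithm_confidence_intervals}. The $\gF_s$-measurability of $I_s(a) = \mathbb{I}\{a_s = a\}$ is immediate since $a_s$ is drawn from $\pi_{\theta_s}$ which is $\gF_s$-measurable. For the conditional sub-Gaussianity of $\eta_s$, the coefficient $\tfrac{|r(a) - \pi_{\theta_s}^\top r|}{8 R_{\max}^2}$ is $\gF_s$-measurable and bounded above by $\tfrac{1}{8 R_{\max}^2}$, since $r \in [0, 1]^K$ forces $|r(a) - \pi_{\theta_s}^\top r| \le 1$. By \cref{assump:bounded_reward}, the reward noise $x_s(a) - r(a)$ is conditionally zero-mean and lies in $[-2 R_{\max}, 2 R_{\max}]$, hence by Hoeffding's lemma it is conditionally $R_{\max}$-sub-Gaussian. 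Multiplying then shows $\eta_s$ is conditionally $\sigma$-sub-Gaussian with $\sigma \le \tfrac{1}{8 R_{\max}^2} \cdot R_{\max} = \tfrac{1}{8 R_{\max}}$ uniformly in $s$. Invoking Lemma \ref{lem:improved_algorithm_confidence_intervals} with these choices then yields an event $\gE_\delta$ of probability at least $1 - \delta$ on which the stated uniform-in-$t$ bound holds.

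The one subtlety is handling $a_s \ne a$: \cref{def:on_policy_importance_sampling} sets $x_s(a) = 0$ in that case, so $x_s(a) - r(a) = -r(a)$ is not conditionally mean-zero and $\eta_s$ as written is not centered. The standard fix is to replace $x_s(a)$ by an independent copy $\tilde{x}_s(a) \sim R_a$ on the event $\{I_s(a) = 0\}$. Since each summand in $Z_t(a)$ is multiplied by $I_s(a)$, this substitution leaves $Z_t(a)$ pathwise unchanged, yet restores the conditional mean-zero property required for the sub-Gaussian moment bound and hence the lemma to apply. Once this is cleared up, the corollary is a direct specialization of the cited Abbasi-Yadkori bound.
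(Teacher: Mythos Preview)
Your proposal is correct and matches the paper's intended approach: the paper states this result as a direct corollary of \cref{lem:improved_algorithm_confidence_intervals} with no separate proof, so the exercise is exactly the hypothesis verification you carry out. Your handling of the subtlety when $a_s \ne a$ (replacing $x_s(a)$ by an independent sample without altering $I_s(a)\eta_s$) and the index shift are both sound, and the sub-Gaussian parameter $\sigma = 1/(8R_{\max})$ is obtained just as you describe via Hoeffding's lemma on an interval of length $2R_{\max}$.
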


\begin{lemma}
\label{lem:z_t_confidence_intervals_using_variance}
For all $a \in [K]$, $\forall \delta \in [0, 1]$, $\exists \ \gE_{\delta}$ with $\PP{\left( \gE_{\delta} \right)} \ge 1 - \delta$, such that on $\gE_{\delta}$, for all $ t \ge 1$,
\begin{align}
    \left| Z_t(a) \right| \le \frac{1}{8 R_{\max}} \cdot \sqrt{ \left( 1+ S_t^2(a) \right) \cdot \Bigg( 1 + 2 \log{\bigg( \frac{\left(1+ S_t^2(a) \right)^{\frac{1}{2}}}{\delta} \bigg)} \Bigg) },
\end{align}
where $S_t^2(a) \coloneqq \sum_{s=1}^{t} \left( r(a) - \pi_{\theta_s}^\top r \right)^2 \cdot I_s(a)$.
\end{lemma}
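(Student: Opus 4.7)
The plan is to recognize $Z_t(a)$ as a sum of conditionally sub-Gaussian martingale differences with predictable scaling, and then invoke a self-normalized tail inequality of the Abbasi-Yadkori type --- essentially the same argument that underlies \cref{lem:improved_algorithm_confidence_intervals}, but allowing the predictable ``weight'' to be an arbitrary nonnegative scalar rather than a $\{0,1\}$-valued indicator.

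First, I will rescale so that the variance proxy comes out clean. Set $\tilde M_t \coloneqq 8 R_{\max}\, Z_t(a) = \sum_{s=1}^{t-1} \tilde w_s\, \eta_s$ where
\[
\tilde w_s \coloneqq \frac{I_s(a)\,|r(a) - \pi_{\theta_s}^\top r|}{R_{\max}}, \qquad \eta_s \coloneqq x_s(a) - r(a).
\]
The weight $\tilde w_s$ is $\gF_s$-measurable, and by \cref{assump:bounded_reward} the noise $\eta_s$ is zero-mean given $\gF_s$ and lies in an interval of length at most $2 R_{\max}$, so Hoeffding's lemma makes it conditionally $R_{\max}$-sub-Gaussian. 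Hence each increment $\tilde w_s \eta_s$ is conditionally $I_s(a)\,|r(a) - \pi_{\theta_s}^\top r|$-sub-Gaussian (using $I_s(a)^2 = I_s(a)$), and the accumulated squared-scale is precisely $\sum_{s=1}^{t-1} I_s(a)\bigl(r(a) - \pi_{\theta_s}^\top r\bigr)^2 = S_{t-1}^2(a)$.

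Next, I will apply the self-normalized concentration inequality for martingales with predictable sub-Gaussian scales --- the standard extension of \cref{lem:improved_algorithm_confidence_intervals} from $\{0,1\}$-indicator weights to arbitrary nonnegative predictable weights, obtained by running the usual Laplace/mixture argument on the exponential supermartingale $\exp\!\bigl(\lambda \tilde M_t - \tfrac{\lambda^2}{2} S_{t-1}^2(a)\bigr)$ exactly as in the indicator case. This yields an event $\gE_\delta$ with $\PP(\gE_\delta) \ge 1 - \delta$ on which, uniformly in $t \ge 1$,
\[
\tilde M_t^2 \le \bigl(1 + S_{t-1}^2(a)\bigr)\!\left(1 + 2 \log\!\left(\frac{(1 + S_{t-1}^2(a))^{1/2}}{\delta}\right)\right).
\]

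Finally, since $s \mapsto (1+s)\bigl(1 + 2\log(\sqrt{1+s}/\delta)\bigr)$ is monotone increasing on $[0,\infty)$ and $S_{t-1}^2(a) \le S_t^2(a)$, I can replace $S_{t-1}^2(a)$ by $S_t^2(a)$ on the right-hand side, divide through by $(8 R_{\max})^2 = 64\, R_{\max}^2$, and take square roots to recover the stated inequality. The only nontrivial ingredient is lifting \cref{lem:improved_algorithm_confidence_intervals} from indicator weights to general predictable scalars; this is a routine variant in the self-normalized literature and does not change the proof technique. The rescaling $\tilde M_t = 8 R_{\max}\, Z_t(a)$ is chosen precisely so that the variance proxy collapses to $S_{t-1}^2(a)$ without stray constants, which is why the $1/(8R_{\max})$ prefactor in the statement emerges exactly.
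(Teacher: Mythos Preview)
Your proposal is correct and matches the paper's approach exactly: the paper's own proof is the single line ``Follow the steps of the proof of Lemma~6 in \citep{abbasi2011improved},'' and what you have written is precisely the unpacking of that instruction --- identify the predictable weight $\tilde w_s$ and conditionally sub-Gaussian noise $\eta_s$, rescale so the accumulated squared scale becomes $S_{t-1}^2(a)$, invoke the self-normalized bound with general predictable scalars in place of indicators, and monotonically relax $S_{t-1}^2(a)$ to $S_t^2(a)$.
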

\begin{proof}
Follow the steps of the proof of Lemma 6 in \citep{abbasi2011improved}.
\end{proof}

\textbf{\cref{thm:almost_sure_convergence_rate_stochastic_npg_special_value_baseline} }(Almost sure global convergence rate)\textbf{.}
Using \cref{update_rule:softmax_natural_pg_special_on_policy_stochastic_gradient_value_baseline} with on-policy sampling $a_t \sim \pi_{\theta_t}(\cdot)$, the IS estimator in \cref{def:on_policy_importance_sampling}, $\eta$ in \cref{eq:non_uniform_lojasiewicz_stochastic_npg_value_baseline_special_stochastic_reward_result_1}, and any initialization $\theta_1 \in \sR^K$, we have,
\begin{align}
    \EE{ \left( \pi^* - \pi_{\theta_t} \right)^\top r } \le \frac{16 \cdot R_{\max}^2 }{ \Delta \cdot \EE{ c^2 } } \cdot \frac{K-1}{t},& \qquad \text{and} \\
    \limsup_{t \ge 1} \bigg\{ \frac{ \Delta \cdot c^2 }{16 \cdot R_{\max}^2 } \cdot \frac{t}{K-1} \cdot \left( \pi^* - \pi_{\theta_t} \right)^\top r \bigg\} < \infty,& \qquad \text{a.s.},
\end{align}
where $\EEt{\cdot}$ denotes $\EEt{\cdot | \gF_t}$, and $\gF_t$ is the $\sigma$-algebra generated by $a_1, x_1(a_1), \dots, a_{t-1}, x_{t-1}(a_{t-1})$, $\pi^* \coloneqq \argmax_{\pi \in \Delta(K)}{ \pi^\top r}$ is the optimal policy, $R_{\max}$ is the sampled reward range from \cref{assump:bounded_reward}, $\Delta \coloneqq r(a^*) - \max_{a \not= a^*}{ r(a) }$ is the reward gap of $r$, and $c > 0$ is from \cref{lem:non_vanishing_nl_coefficient_stochastic_npg_value_baseline_special}.
\begin{proof}
\textbf{First part.} According to \cref{lem:non_uniform_lojasiewicz_stochastic_npg_value_baseline_special}, we have,
\begin{align}
    \EEt{\pi_{\theta_{t+1}}^\top r} - \pi_{\theta_t}^\top r &\ge \frac{ 1 }{16 \cdot R_{\max}^2} \cdot \frac{\Delta}{K-1} \cdot \pi_{\theta_t}(a^*)^2 \cdot \left( r(a^*) - \pi_{\theta_t}^\top r \right)^2 \\
    &\ge \frac{ 1 }{16 \cdot R_{\max}^2} \cdot \frac{\Delta}{K-1} \cdot \inf_{t \ge 1}\pi_{\theta_t}(a^*)^2 \cdot \left( r(a^*) - \pi_{\theta_t}^\top r \right)^2 \\
    &= \frac{ 1 }{16 \cdot R_{\max}^2} \cdot \frac{\Delta}{K-1} \cdot c^2 \cdot \left( r(a^*) - \pi_{\theta_t}^\top r \right)^2,
\end{align}
where $c \coloneqq \inf_{t \ge 1}\pi_{\theta_t}(a^*) > 0$ is according to \cref{lem:non_vanishing_nl_coefficient_stochastic_npg_value_baseline_special}.
Let $\delta(\theta_t) \coloneqq \left( \pi^* - \pi_{\theta_t} \right)^\top r$ denote the sub-optimality gap. We have,
\begin{align}
    \delta(\theta_t) - \EEt{ \delta(\theta_{t+1})} &= \left( \pi^* - \pi_{\theta_t} \right)^\top r - \mathbb{E}_t{ \Big[ \left( \pi^* - \pi_{\theta_{t+1}} \right)^\top r \Big] } \\
    &= \left( \pi^* - \pi_{\theta_t} \right)^\top r - \left( \pi^* - \EEt{\pi_{\theta_{t+1}}} \right)^\top r \\
    &= \EEt{\pi_{\theta_{t+1}}^\top r} - \pi_{\theta_t}^\top r \\
    &\ge \frac{ 1 }{16 \cdot R_{\max}^2} \cdot \frac{\Delta}{K-1} \cdot c^2 \cdot \left( r(a^*) - \pi_{\theta_t}^\top r \right)^2 \\
    &= \frac{ 1 }{16 \cdot R_{\max}^2} \cdot \frac{\Delta}{K-1} \cdot c^2 \cdot \delta(\theta_t)^2.
\end{align}
Taking expectation, we have,
\begin{align}
    \expectation{ [ \delta(\theta_t) ]} - \expectation{ [ \delta(\theta_{t+1}) ]} &\ge \frac{ \Delta \cdot \EE{ c^2 } }{16 \cdot R_{\max}^2} \cdot \frac{1}{K-1} \cdot \expectation{ [ \delta(\theta_t)^2 ] } \\
    &\ge \frac{ \Delta \cdot \EE{ c^2 } }{16 \cdot R_{\max}^2} \cdot \frac{1}{K-1} \cdot \left( \expectation{ [ \delta(\theta_t) ] } \right)^2. \qquad \left( \text{by Jensen's inequality}\right)
\end{align}
Therefore, we have, for all $t \ge 1$,
\begin{align}
\label{eq:almost_sure_convergence_rate_stochastic_npg_special_value_baseline_intermediate_20}
    \frac{1}{ \expectation{ [ \delta(\theta_t) ]} } &= \frac{1}{\expectation{ [ \delta(\theta_{1}) ]}} + \sum_{s=1}^{t-1}{ \left[ \frac{1}{\expectation{ [ \delta(\theta_{s+1}) ]}} - \frac{1}{\expectation{ [ \delta(\theta_{s}) ]}} \right] } \\
    &= \frac{1}{\expectation{ [ \delta(\theta_{1}) ]}} + \sum_{s=1}^{t-1}{ \frac{1}{\expectation{ [ \delta(\theta_{s+1}) ]} \cdot \expectation{ [ \delta(\theta_{s}) ]} } \cdot \left( \expectation{ [ \delta(\theta_{s}) ]} - \expectation{ [ \delta(\theta_{s+1}) ]} \right) } \\
    &\ge \frac{1}{\expectation{ [ \delta(\theta_{1}) ]}} + \sum_{s=1}^{t-1}{ \frac{1}{\expectation{ [ \delta(\theta_{s+1}) ]} \cdot \expectation{ [ \delta(\theta_{s}) ]} } \cdot \frac{ \Delta \cdot \EE{ c^2 } }{16 \cdot R_{\max}^2} \cdot \frac{1}{K-1} \cdot \left( \expectation{ [ \delta(\theta_s) ] } \right)^2 } \\
    &\ge \frac{1}{\expectation{ [ \delta(\theta_{1}) ]}} + \sum_{s=1}^{t-1}{ \frac{ \Delta \cdot \EE{ c^2 } }{16 \cdot R_{\max}^2 } \cdot \frac{1}{K-1} }  \qquad \left(  \expectation{ [ \delta(\theta_{s}) ]} \ge  \expectation{ [ \delta(\theta_{s+1}) ]} > 0 \right) \\
    &= \frac{1}{\expectation{ [ \delta(\theta_{1}) ]}} + \frac{ \Delta \cdot \EE{ c^2 } }{16 \cdot R_{\max}^2 } \cdot \frac{1}{K-1} \cdot \left( t - 1 \right) \\
    &\ge \frac{ \Delta \cdot \EE{ c^2 } }{16 \cdot R_{\max}^2} \cdot \frac{t}{K-1}, \qquad \left( \expectation{ [ \delta(\theta_{1}) ]} \le 1 < \frac{16 \cdot R_{\max}^2 }{ \Delta \cdot \EE{ c^2 } } \cdot \left( K - 1 \right)  \right)
\end{align}
which implies that, for all $t \ge 1$,
\begin{align}
    \EE{ \left( \pi^* - \pi_{\theta_t} \right)^\top r } = \expectation{ [ \delta(\theta_t) ]} \le \frac{16 \cdot R_{\max}^2}{ \Delta \cdot \EE{ c^2 } } \cdot \frac{K-1}{t}.
\end{align}
\textbf{Second part.} 
The result follows from the following \cref{lem:problemma} by choosing $X_t = \left( \pi^* - \pi_{\theta_t} \right)^\top r$
and $f(t) =\frac{ \Delta \cdot \EE{ c^2 } }{16 \cdot R_{\max}^2 } \cdot \frac{t}{K-1}$.
\end{proof}

\begin{lemma}
\label{lem:problemma}
Let $(X_t)_{t\ge 1}$ be a sequence of random variables such that 
$X_t\in [0,1]$, $X_t\to 0$ almost surely and for $t\ge 1$, 
$\EE{X_t} \le \frac{1}{f(t)}$ with $f(t)\to\infty$ as $t\to\infty$.
Then $\limsup_{t\to\infty} f(t) X_t <\infty$ almost surely.
\end{lemma}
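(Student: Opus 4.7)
The plan is to prove the bound via a Markov--Borel--Cantelli strategy, using the almost-sure convergence $X_t \to 0$ to bridge between a sparse subsequence along which tail probabilities are summable. First, I would reduce to the case of nondecreasing $f$ by replacing $f$ with $t \mapsto \inf_{s \ge t} f(s)$, which preserves $f(t) \to \infty$ and only strengthens $\EE{X_t} \le 1/f(t)$.

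Second, Markov's inequality yields
\begin{align*}
    \PP(f(t) X_t > M) \le \frac{f(t) \cdot \EE{X_t}}{M} \le \frac{1}{M}.
\end{align*}
I would then introduce a geometric subsequence $t_k = \min\{t : f(t) \ge 2^k\}$ and apply the first Borel--Cantelli lemma along $(t_k)$ after promoting the threshold $M = M_k$ so the probabilities sum. Combined with $X_t \to 0$ a.s.\ and the monotonicity of $f$, the goal would be to conclude $\limsup_t f(t) X_t < \infty$ a.s.\ by controlling $f(t) X_t$ uniformly over each block $[t_k, t_{k+1}]$, patching via the pathwise vanishing of $X_t$.

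The main obstacle is that the Markov bound is constant in $t$, so summability along a subsequence can only be achieved by letting $M_k \to \infty$, which merely rules out $\{f(t_k) X_{t_k} > M_k \text{ i.o.}\}$ and yields a conclusion of the form $\limsup_k f(t_k) X_{t_k}/M_k \le 1$, not $\limsup_k f(t_k) X_{t_k} < \infty$. Moreover, the bridging step between subsequence indices demands control on oscillations of $f(t) X_t$ over the blocks $[t_k, t_{k+1}]$ that the hypotheses do not supply: a sliding-window construction on $([0,1], \mathrm{Leb})$ with $X_t(\omega) = t^{-1/2}\mathbf{1}\{\omega \in A_t\}$ and $|A_t| = t^{-1/2}$, arranged so every $\omega$ lies in infinitely many $A_t$, satisfies $X_t \to 0$ surely, $\EE{X_t} = 1/t$, and yet $f(t) X_t = \sqrt{t}$ along an a.s.\ infinite subsequence with $f(t) = t$. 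So the stated hypotheses appear insufficient, and a complete proof will need to import extra structure on $(X_t)$ from the ambient application---such as the drift inequality $X_t - \EEt{X_{t+1}} \ge \alpha X_t^2$ from \cref{lem:non_uniform_lojasiewicz_stochastic_npg_value_baseline_special}---to upgrade the $L^1$ rate to a pathwise one, at which point one can run the Markov--Borel--Cantelli template against the martingale part and handle the predictable drift deterministically.
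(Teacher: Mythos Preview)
Your counterexample is correct and decisive: the lemma as stated is false. The sliding-window construction with $X_t = t^{-1/2}\mathbf{1}_{A_t}$, $|A_t| = t^{-1/2}$, $f(t) = t$ on $([0,1],\mathrm{Leb})$ satisfies all three hypotheses ($X_t \in [0,1]$, $X_t \le t^{-1/2} \to 0$ surely, $\EE{X_t} = 1/t$), yet $f(t)X_t = \sqrt{t}\,\mathbf{1}_{A_t}$ and every $\omega$ lies in infinitely many $A_t$, so $\limsup_t f(t)X_t = \infty$ everywhere. No proof of the stated lemma can succeed, and your abandoned Markov--Borel--Cantelli route was not the problem.

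The paper's own argument attempts a Fatou-type contradiction: assuming $\gE = \{\limsup_t f(t)X_t = \infty\}$ has positive mass, it selects a subsequence $(t_k)$ along which $f(t_k)X_{t_k} \to \infty$ on $\gE$, then chains $1 \ge \EE{f(t_k)X_{t_k}} \ge \EE{f(t_k)X_{t_k}\sI_\gE}$ and applies Fatou to the right-hand side. The error is exactly the obstruction you anticipated: the subsequence $(t_k)$ extracted from $\limsup = \infty$ is $\omega$-dependent, so the bound $\EE{f(t_k)X_{t_k}} \le 1$---which holds only for deterministic $t$---is illegitimate. If one insists on a deterministic $(t_k)$, the premise ``$f(t_k)X_{t_k} \to \infty$ on all of $\gE$'' no longer follows; your counterexample shows this cannot be repaired, since for each fixed $t$ only a measure-$t^{-1/2}$ set sees a large value while $\liminf_t f(t)X_t = 0$ everywhere. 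Your conclusion that extra structure from the application (the supermartingale property and the drift $X_t - \EEt{X_{t+1}} \ge c\,X_t^2$) must be imported to convert the $L^1$ rate into a pathwise one is the right diagnosis.
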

\begin{proof}[Proof of \cref{lem:problemma}]
Let $\gE$ be the event when $\limsup_{t \to\infty} \big\{ f(t) \cdot X_t \big\} = \infty$. 
It suffices to show that $\mathbb{P}(\gE)=0$.
Consider the event $\gE$. On this event,
there exists a strictly increasing sequence $\{ t_k \}_{k\ge 1}$, 
such that $ f(t_k) \cdot X_{t_k} \to \infty$ as $k \to \infty$. Since $X_t \ge 0$, we have,
\begin{align}
    \EE{X_{t_k}} \ge \EE{X_{t_k} \cdot \sI_{\gE}}.
\end{align}
Then we have,
\begin{align}
    1 &\ge \lim_{k \to \infty}{ \EE{ f(t_k) \cdot X_{t_k} } } \\
    &\ge \lim_{k \to \infty}{ \EE{ f(t_k) \cdot X_{t_k} \cdot \sI_{\gE} } } \\
    &= \liminf_{k \to \infty}{ \EE{ f(t_k) \cdot X_{t_k} \cdot \sI_{\gE} } } \\
    &\ge \EE{ (\liminf_{k \to \infty}{  f(t_k) \cdot X_{t_k}) \cdot \sI_{\gE} } }. \qquad \left( \text{Fatou's lemma} \right)
\end{align}
If $\mathbb{P}(\gE)>0$, the right-hand side above is $\infty$, which would imply that $\infty\le 1$. 
Hence, we must have $\mathbb{P}(\gE)=0$.
\end{proof}

\section{Proofs for General MDPs}

\textbf{\cref{lem:non_uniform_lojasiewicz_stochastic_npg_value_baseline_general} }(Stochastic N\L{})\textbf{.}
Using \cref{alg:softmax_natural_pg_general_on_policy_stochastic_gradient_deterministic_value} with constant $\eta > 0$, we have, for all $t \ge 1$,
\begin{align}
\MoveEqLeft
\label{eq:non_uniform_lojasiewicz_stochastic_npg_value_baseline_general_deterministic_value_result_1_appendix}
    V^{\pi_{\theta_{t+1}}}(s_0) - V^{\pi_{\theta_t}}(s_0) \ge 0,  \qquad \text{ a.s., } \qquad \forall s_0 \in \gS, \qquad \text{and} \\
\label{eq:non_uniform_lojasiewicz_stochastic_npg_value_baseline_general_deterministic_value_result_2_appendix}
    \EEt{ V^{\pi_{\theta_{t+1}}}(\mu) } -  V^{\pi_{\theta_t}}(\mu) &\ge \frac{\eta \cdot \left( 1-\gamma \right)^4 }{1 + \eta} \cdot \min_{s}{\mu(s) } \cdot \bigg\| \frac{d_{\mu}^{\pi^*}}{\mu} \bigg\|_\infty^{-1} \cdot \frac{ \min_{s}{ \pi_{\theta_t}(a^*(s) | s)^2 } }{S} \cdot \big( V^{\pi^*}(\mu) - V^{\pi_{\theta_t}}(\mu) \big)^2,
\end{align}
where $\EEt{\cdot}$ is on randomness from state sampling $s_t \sim d_\mu^{\pi_{\theta_t}}(\cdot)$ and on-policy sampling $a_t \sim \pi_{\theta_t}(\cdot | s_t)$, and $a^*(s)$ is the action selected by the optimal policy $\pi^*$ under state $s$.
\begin{proof}
For all $t \ge 1$, for any state action pair $(s ,i) \in \gS \times \gA$, denote
\begin{align}
\label{eq:non_uniform_lojasiewicz_stochastic_npg_value_baseline_general_deterministic_value_intermediate_1}
    \big[ V^{\pi_{\theta_{t+1}}}(s_0) \ | \ s_t = s, a_t = i \big]
\end{align}
as the the value of $V^{\pi_{\theta_{t+1}}}(s_0)$ given the sampled state action pair $(s_t, a_t) = (s, i)$.

Given $s_t = s$, for all $s^\prime \not= s$, we have, for all $a \in \gA$,
\begin{align}
\label{eq:non_uniform_lojasiewicz_stochastic_npg_value_baseline_general_deterministic_value_intermediate_2}
    \pi_{\theta_{t+1}}(a | s^\prime) &= \frac{ \exp\{ \theta_{t+1}(s^\prime, a) \} }{ \sum_{a^\prime \in \gA}{\exp\{ \theta_{t+1}(s^\prime, a^\prime) \}} } \\
    &= \frac{ \exp\{ \theta_{t}(s^\prime, a) \} }{ \sum_{a^\prime \in \gA}{\exp\{ \theta_{t}(s^\prime, a^\prime) \}} } \qquad \left( s^\prime \not= s_t, \text{ \cref{alg:softmax_natural_pg_general_on_policy_stochastic_gradient_deterministic_value}} \right) \\
    &= \pi_{\theta_t}(a | s^\prime).
\end{align}
According to the performance difference \cref{lem:performance_difference_general}, we have,
\begin{align}
\MoveEqLeft
\label{eq:non_uniform_lojasiewicz_stochastic_npg_value_baseline_general_deterministic_value_intermediate_3}
    \big[ V^{\pi_{\theta_{t+1}}}(s_0) \ | \ s_t = s, a_t = i \big] - V^{\pi_{\theta_t}}(s_0) \\
    &= \frac{1}{1 - \gamma} \cdot \sum_{s^\prime \in \gS}{ d_{s_0}^{\pi_{\theta_{t+1}}}(s^\prime) \cdot \sum_{a}{ \left( \pi_{\theta_{t+1}}(a | s^\prime) - \pi_{\theta_{t}}(a | s^\prime) \right) \cdot Q^{\pi_{\theta_t}}(s^\prime,a) } } \\
    &= \frac{1}{1 - \gamma} \cdot d_{s_0}^{\pi_{\theta_{t+1}}}(s) \cdot \sum_{a}{ \left( \pi_{\theta_{t+1}}(a | s) - \pi_{\theta_{t}}(a | s) \right) \cdot Q^{\pi_{\theta_t}}(s,a) }. \qquad \left( \text{by \cref{eq:non_uniform_lojasiewicz_stochastic_npg_value_baseline_general_deterministic_value_intermediate_2}} \right)
\end{align}
Note that, in the above equation $d_{s_0}^{\pi_{\theta_{t+1}}}(s) =  \big[ d_{s_0}^{\pi_{\theta_{t+1}}}(s) \ | \ s_t = s, a_t = i \big]$, which means that for each sampled state action pair $(s_t, a_t) = (s, i)$, we have a different $\pi_{\theta_{t+1}}$ and thus $d_{s_0}^{\pi_{\theta_{t+1}}}$.
According to the update in \cref{alg:softmax_natural_pg_general_on_policy_stochastic_gradient_deterministic_value}, we have,
\begin{align}
\label{eq:non_uniform_lojasiewicz_stochastic_npg_value_baseline_general_deterministic_value_intermediate_4}
\MoveEqLeft
    \bigg[ \sum_{a} \pi_{\theta_{t+1}}(a | s) \cdot Q^{\pi_{\theta_{t}}}(s, a) \ \Big| \ s_t = s, a_t = i \bigg] \\
    &= \frac{\exp\Big\{ \theta_t(s, i) + \eta \cdot \frac{ Q^{\pi_{\theta_t}}(s,i) - V^{\pi_{\theta_t}}(s) }{\pi_{\theta_t}(i | s)} \Big\} \cdot Q^{\pi_{\theta_t}}(s,i) + \sum_{j \not= i}{ \exp\{ \theta_t(s, j) \} \cdot Q^{\pi_{\theta_t}}(s,j) } }{ \exp\Big\{ \theta_t(s,i) + \eta \cdot \frac{ Q^{\pi_{\theta_t}}(s,i) - V^{\pi_{\theta_t}}(s) }{\pi_{\theta_t}(i | s)} \Big\} + \sum_{j \not= i}{ \exp\{ \theta_t(s,j) \}} }, 
\end{align}
which is similar to \cref{eq:non_uniform_lojasiewicz_stochastic_npg_value_baseline_special_deterministic_reward_intermediate_2}. Therefore, by algebra we have,
\begin{align}
\MoveEqLeft
\label{eq:non_uniform_lojasiewicz_stochastic_npg_value_baseline_general_deterministic_value_intermediate_5}
    \bigg[ \sum_{a} \left( \pi_{\theta_{t+1}}(a | s) - \pi_{\theta_{t}}(a | s) \right) \cdot Q^{\pi_{\theta_{t}}}(s, a) \ | \ s_t = s, a_t = i \bigg] \\
    &= \frac{ \left[ \exp\Big\{ \eta \cdot \frac{ Q^{\pi_{\theta_t}}(s,i) - V^{\pi_{\theta_t}}(s)}{\pi_{\theta_t}(i | s)} \Big\} - 1 \right] \cdot \left( Q^{\pi_{\theta_{t}}}(s, i) - V^{\pi_{\theta_t}}(s) \right) }{ \exp\Big\{ \eta \cdot \frac{Q^{\pi_{\theta_t}}(s,i) - V^{\pi_{\theta_t}}(s)}{\pi_{\theta_t}(i | s)} \Big\} + \frac{ 1 - \pi_{\theta_t}(i | s) }{ \pi_{\theta_t}(i | s) } } \ge 0,
\end{align}
where the last inequality is from $\left( e^{ c \cdot y} - 1 \right) \cdot y \ge 0$ for all $y \in \sR$ with $c \coloneqq \frac{\eta}{\pi_{\theta_t}(i | s)} > 0$.

Combining \cref{eq:non_uniform_lojasiewicz_stochastic_npg_value_baseline_general_deterministic_value_intermediate_3,eq:non_uniform_lojasiewicz_stochastic_npg_value_baseline_general_deterministic_value_intermediate_5}, we have,
\begin{align}
\MoveEqLeft
\label{eq:non_uniform_lojasiewicz_stochastic_npg_value_baseline_general_deterministic_value_intermediate_6}
    \big[ V^{\pi_{\theta_{t+1}}}(s_0) \ | \ s_t = s, a_t = i \big] - V^{\pi_{\theta_t}}(s_0) \\
    &= \frac{ d_{s_0}^{\pi_{\theta_{t+1}}}(s) }{1 - \gamma} \cdot \frac{ \left[ \exp\Big\{ \eta \cdot \frac{ Q^{\pi_{\theta_t}}(s,i) - V^{\pi_{\theta_t}}(s)}{\pi_{\theta_t}(i | s)} \Big\} - 1 \right] \cdot \left( Q^{\pi_{\theta_{t}}}(s, i) - V^{\pi_{\theta_t}}(s) \right) }{ \exp\Big\{ \eta \cdot \frac{Q^{\pi_{\theta_t}}(s,i) - V^{\pi_{\theta_t}}(s)}{\pi_{\theta_t}(i | s)} \Big\} + \frac{ 1 - \pi_{\theta_t}(i | s) }{ \pi_{\theta_t}(i | s) } } \ge 0,
\end{align}
which proves \cref{eq:non_uniform_lojasiewicz_stochastic_npg_value_baseline_general_deterministic_value_result_1_appendix} because of $(s, i) \in \gS \times \gA$ is arbitrary.

For all $t \ge 1$, given current policy $\pi_{\theta_t}$, the value function of next policy $V^{\pi_{\theta_{t+1}}}(\mu)$ is a random variable, and the randomness is from state sampling $s_t \sim d_\mu^{\pi_{\theta_t}}(\cdot)$ and on-policy sampling $a_t \sim \pi_{\theta_t}(\cdot | s_t)$. According to \cref{eq:non_uniform_lojasiewicz_stochastic_npg_value_baseline_general_deterministic_value_intermediate_6}, the expected progress after one update is,
\begin{align}
\MoveEqLeft
\label{eq:non_uniform_lojasiewicz_stochastic_npg_value_baseline_general_deterministic_value_intermediate_7}
    \EEt{ V^{\pi_{\theta_{t+1}}}(\mu) } - V^{\pi_{\theta_t}}(\mu) = \sum_{s} d_{\mu}^{\pi_{\theta_t}}(s) \sum_{i} \pi_{\theta_t}(i | s) \cdot \left( \big[ V^{\pi_{\theta_{t+1}}}(\mu) \ | \ s_t = s, a_t = i \big] - V^{\pi_{\theta_t}}(\mu) \right) \\
    &= \sum_{s} d_{\mu}^{\pi_{\theta_t}}(s) \sum_{i} \pi_{\theta_t}(i | s) \cdot \frac{d_{\mu}^{\pi_{\theta_{t+1}}}(s)}{1 - \gamma} \cdot \frac{ \left[ \exp\Big\{ \eta \cdot \frac{ Q^{\pi_{\theta_t}}(s,i) - V^{\pi_{\theta_t}}(s)}{\pi_{\theta_t}(i | s)} \Big\} - 1 \right] \cdot \left( Q^{\pi_{\theta_{t}}}(s, i) - V^{\pi_{\theta_t}}(s) \right) }{ \exp\Big\{ \eta \cdot \frac{Q^{\pi_{\theta_t}}(s,i) - V^{\pi_{\theta_t}}(s)}{\pi_{\theta_t}(i | s)} \Big\} + \frac{ 1 - \pi_{\theta_t}(i | s) }{ \pi_{\theta_t}(i | s) } } \\
    &\ge \sum_{s} \mu(s) \cdot d_{\mu}^{\pi_{\theta_t}}(s) \sum_{i} \pi_{\theta_t}(i | s) \cdot \frac{ \left[ \exp\Big\{ \eta \cdot \frac{ Q^{\pi_{\theta_t}}(s,i) - V^{\pi_{\theta_t}}(s)}{\pi_{\theta_t}(i | s)} \Big\} - 1 \right] \cdot \left( Q^{\pi_{\theta_{t}}}(s, i) - V^{\pi_{\theta_t}}(s) \right) }{ \exp\Big\{ \eta \cdot \frac{Q^{\pi_{\theta_t}}(s,i) - V^{\pi_{\theta_t}}(s)}{\pi_{\theta_t}(i | s)} \Big\} + \frac{ 1 - \pi_{\theta_t}(i | s) }{ \pi_{\theta_t}(i | s) } },
\end{align}
where the inequality is because of \cref{eq:non_uniform_lojasiewicz_stochastic_npg_value_baseline_general_deterministic_value_intermediate_5} and for any $\theta$ and $\mu$,
\begin{align}
\label{eq:non_uniform_lojasiewicz_stochastic_npg_value_baseline_general_deterministic_value_intermediate_8}
    d_{\mu}^{\pi_\theta}(s) &= \expectation_{s_0 \sim \mu}{ \left[ d_{\mu}^{\pi_\theta}(s) \right] } \\
    &= \expectation_{s_0 \sim \mu}{ \bigg[ (1 - \gamma) \cdot \sum_{t=0}^{\infty}{ \gamma^t \cdot \PP(s_t = s \ | \ s_0, \pi_\theta, \gP) } \bigg] } \\
    &\ge (1 - \gamma) \cdot \expectation_{s_0 \sim \mu}{ \left[  \PP(s_0 = s | s_0)  \right] } \\
    &= (1 - \gamma) \cdot \mu(s).
\end{align}
Partition the action set $\gA$ under state $s \in \gS$ into three parts using $V^{\pi_{\theta_t}}(s)$ as follows,
\begin{align}
\label{eq:non_uniform_lojasiewicz_stochastic_npg_value_baseline_general_deterministic_value_intermediate_9}
    \gA_t^0(s) &\coloneqq \left\{ a^0 \in \gA: Q^{\pi_{\theta_t}}(s,a^0) = V^{\pi_{\theta_t}}(s) \right\}, \\
    \gA_t^+(s) &\coloneqq \left\{ a^+ \in \gA: Q^{\pi_{\theta_t}}(s,a^+) > V^{\pi_{\theta_t}}(s) \right\}, \\
    \gA_t^-(s) &\coloneqq \left\{ a^- \in \gA: Q^{\pi_{\theta_t}}(s,a^-) < V^{\pi_{\theta_t}}(s) \right\}.
\end{align}
From \cref{eq:non_uniform_lojasiewicz_stochastic_npg_value_baseline_general_deterministic_value_intermediate_7}, we have,
\begin{align}
\MoveEqLeft
\label{eq:non_uniform_lojasiewicz_stochastic_npg_value_baseline_general_deterministic_value_intermediate_10}
    \EEt{ V^{\pi_{\theta_{t+1}}}(\mu) } - V^{\pi_{\theta_t}}(\mu) \\
    &\ge \sum_{s} \mu(s) \cdot d_{\mu}^{\pi_{\theta_t}}(s) \sum_{a^+ \in \gA_t^+(s)} \pi_{\theta_t}(a^+ | s)  \cdot \frac{ \left[ \exp\Big\{ \eta \cdot \frac{ Q^{\pi_{\theta_t}}(s,a^+) - V^{\pi_{\theta_t}}(s)}{\pi_{\theta_t}(a^+ | s)} \Big\} - 1 \right] \cdot \left( Q^{\pi_{\theta_{t}}}(s, a^+) - V^{\pi_{\theta_t}}(s) \right) }{ \exp\Big\{ \eta \cdot \frac{Q^{\pi_{\theta_t}}(s,a^+) - V^{\pi_{\theta_t}}(s)}{\pi_{\theta_t}(a^+ | s)} \Big\} + \frac{ 1 - \pi_{\theta_t}(a^+ | s) }{ \pi_{\theta_t}(a^+ | s) } } \\
    &\qquad + \sum_{s} \mu(s) \cdot d_{\mu}^{\pi_{\theta_t}}(s) \sum_{a^- \in \gA_t^+(s)} \pi_{\theta_t}(a^- | s)  \cdot \frac{ \left[ \exp\Big\{ \eta \cdot \frac{ Q^{\pi_{\theta_t}}(s,a^-) - V^{\pi_{\theta_t}}(s)}{\pi_{\theta_t}(a^- | s)} \Big\} - 1 \right] \cdot \left( Q^{\pi_{\theta_{t}}}(s, a^-) - V^{\pi_{\theta_t}}(s) \right) }{ \exp\Big\{ \eta \cdot \frac{Q^{\pi_{\theta_t}}(s,a^-) - V^{\pi_{\theta_t}}(s)}{\pi_{\theta_t}(a^- | s)} \Big\} + \frac{ 1 - \pi_{\theta_t}(a^- | s) }{ \pi_{\theta_t}(a^- | s) } }.
\end{align}
For any $a^+ \in \gA_t^+(t)$, using similar calculations in \cref{eq:non_uniform_lojasiewicz_stochastic_npg_value_baseline_special_deterministic_reward_intermediate_7}, we have,
\begin{align}
\MoveEqLeft
\label{eq:non_uniform_lojasiewicz_stochastic_npg_value_baseline_general_deterministic_value_intermediate_11}
    \frac{ \left[ \exp\Big\{ \eta \cdot \frac{ Q^{\pi_{\theta_t}}(s,a^+) - V^{\pi_{\theta_t}}(s)}{\pi_{\theta_t}(a^+ | s)} \Big\} - 1 \right] \cdot \left( Q^{\pi_{\theta_{t}}}(s, a^+) - V^{\pi_{\theta_t}}(s) \right) }{ \exp\Big\{ \eta \cdot \frac{Q^{\pi_{\theta_t}}(s,a^+) - V^{\pi_{\theta_t}}(s)}{\pi_{\theta_t}(a^+ | s)} \Big\} + \frac{ 1 - \pi_{\theta_t}(a^+ | s) }{ \pi_{\theta_t}(a^+ | s) } } \\
    &\ge \frac{\eta \cdot \left(  Q^{\pi_{\theta_{t}}}(s, a^+) - V^{\pi_{\theta_t}}(s)  \right)^2 }{ \eta \cdot \left(  Q^{\pi_{\theta_{t}}}(s, a^+) - V^{\pi_{\theta_t}}(s)  \right) + 1} \\
    &\ge \frac{\eta}{1 + \frac{\eta}{1-\gamma}} \cdot \left(  Q^{\pi_{\theta_{t}}}(s, a^+) - V^{\pi_{\theta_t}}(s)  \right)^2 \qquad \left( Q^{\pi_{\theta}}(s, a) \in [0,1/(1-\gamma)] \right) \\
    &\ge \frac{\eta}{1 + \frac{\eta}{1-\gamma}} \cdot \pi_{\theta_t}(a^+ | s) \cdot \left(  Q^{\pi_{\theta_{t}}}(s, a^+) - V^{\pi_{\theta_t}}(s)  \right)^2. \qquad \left( \pi_{\theta_t}(a^+ | s) \in (0,1) \right)
\end{align}
For any $a^- \in \gA_t^-(s)$, using similar calculations in \cref{eq:non_uniform_lojasiewicz_stochastic_npg_value_baseline_special_deterministic_reward_intermediate_8}, we have,
\begin{align}
\MoveEqLeft
\label{eq:non_uniform_lojasiewicz_stochastic_npg_value_baseline_general_deterministic_value_intermediate_12}
    \frac{ \left[ \exp\Big\{ \eta \cdot \frac{ Q^{\pi_{\theta_t}}(s,a^-) - V^{\pi_{\theta_t}}(s)}{\pi_{\theta_t}(a^- | s)} \Big\} - 1 \right] \cdot \left( Q^{\pi_{\theta_{t}}}(s, a^-) - V^{\pi_{\theta_t}}(s) \right) }{ \exp\Big\{ \eta \cdot \frac{Q^{\pi_{\theta_t}}(s,a^-) - V^{\pi_{\theta_t}}(s)}{\pi_{\theta_t}(a^- | s)} \Big\} + \frac{ 1 - \pi_{\theta_t}(a^- | s) }{ \pi_{\theta_t}(a^- | s) } } \\
    &\ge \frac{\eta \cdot \pi_{\theta_t}(a^- | s) \cdot \left( V^{\pi_{\theta_t}}(s) - Q^{\pi_{\theta_t}}(s,a^-)  \right)^2 }{ \eta \cdot \left( V^{\pi_{\theta_t}}(s) - Q^{\pi_{\theta_t}}(s,a^-)  \right)\cdot \big( 1 - \pi_{\theta_t}(a^- | s) \big) + \pi_{\theta_t}(a^- | s)} \\
    &\ge \frac{\eta \cdot \pi_{\theta_t}(a^- | s) \cdot \left( V^{\pi_{\theta_t}}(s) - Q^{\pi_{\theta_t}}(s,a^-)  \right)^2 }{ \eta \cdot \left( V^{\pi_{\theta_t}}(s) - Q^{\pi_{\theta_t}}(s,a^-)  \right) + 1} \qquad \left( \pi_{\theta_t}(a^- | s) \in (0, 1) \right) \\
    &\ge \frac{\eta}{1 + \frac{\eta}{1-\gamma}} \cdot \pi_{\theta_t}(a^- | s) \cdot \left( V^{\pi_{\theta_t}}(s) - Q^{\pi_{\theta_t}}(s,a^-)  \right)^2. \qquad \left(Q^{\pi_{\theta}}(s, a) \in [0,1/(1-\gamma)] \right)
\end{align}
Combining \cref{eq:non_uniform_lojasiewicz_stochastic_npg_value_baseline_general_deterministic_value_intermediate_10,eq:non_uniform_lojasiewicz_stochastic_npg_value_baseline_general_deterministic_value_intermediate_11,eq:non_uniform_lojasiewicz_stochastic_npg_value_baseline_general_deterministic_value_intermediate_12}, we have,
\begin{align}
\MoveEqLeft
\label{eq:non_uniform_lojasiewicz_stochastic_npg_value_baseline_general_deterministic_value_intermediate_13}
    \EEt{ V^{\pi_{\theta_{t+1}}}(\mu) } - V^{\pi_{\theta_t}}(\mu) \\
    &\ge \sum_{s} \mu(s) \cdot d_{\mu}^{\pi_{\theta_t}}(s) \sum_{a^+ \in \gA_t^+(s)} \pi_{\theta_t}(a^+ | s)  \cdot \frac{\eta}{1 + \frac{\eta}{1-\gamma}} \cdot \pi_{\theta_t}(a^+ | s) \cdot \left(  Q^{\pi_{\theta_{t}}}(s, a^+) - V^{\pi_{\theta_t}}(s)  \right)^2 \\
    &\qquad + \sum_{s} \mu(s) \cdot d_{\mu}^{\pi_{\theta_t}}(s) \sum_{a^- \in \gA_t^+(s)} \pi_{\theta_t}(a^- | s) \cdot \frac{\eta}{1 + \frac{\eta}{1-\gamma}} \cdot \pi_{\theta_t}(a^- | s) \cdot \left( V^{\pi_{\theta_t}}(s) - Q^{\pi_{\theta_t}}(s,a^-)  \right)^2 \\
    &= \frac{\eta}{1 + \frac{\eta}{1-\gamma}} \cdot \sum_{s} \mu(s) \cdot d_{\mu}^{\pi_{\theta_t}}(s) \cdot \sum_{a}{ \pi_{\theta_t}(a | s)^2 \cdot \left( Q^{\pi_{\theta_t}}(s,a) -  V^{\pi_{\theta_t}}(s) \right)^2 } \\
    &\ge \frac{\eta \cdot \left( 1-\gamma \right) }{1 + \eta} \cdot \sum_{s} \mu(s) \cdot d_{\mu}^{\pi_{\theta_t}}(s) \cdot \sum_{a}{ \pi_{\theta_t}(a | s)^2 \cdot \left( Q^{\pi_{\theta_t}}(s,a) -  V^{\pi_{\theta_t}}(s) \right)^2 }
\end{align}
Therefore, we have,
\begin{align}
\MoveEqLeft
\label{eq:non_uniform_lojasiewicz_stochastic_npg_value_baseline_general_deterministic_value_intermediate_14}
    \EEt{ V^{\pi_{\theta_{t+1}}}(\mu) } - V^{\pi_{\theta_t}}(\mu) \\
    &\ge \frac{\eta \cdot \left( 1-\gamma \right) }{1 + \eta} \cdot \sum_{s} \mu(s) \cdot d_{\mu}^{\pi_{\theta_t}}(s) \cdot \pi_{\theta_t}(a^*(s) | s)^2 \cdot \left( Q^{\pi_{\theta_t}}(s,a^*(s)) -  V^{\pi_{\theta_t}}(s) \right)^2 \qquad \left( \text{fewer terms} \right) \\
    &= \frac{\eta \cdot \left( 1-\gamma \right) }{1 + \eta} \cdot \sum_{s} \mu(s) \cdot \frac{ d_{\mu}^{\pi_{\theta_t}}(s) }{d_{\mu}^{\pi^*}(s)} \cdot d_{\mu}^{\pi^*}(s) \cdot \pi_{\theta_t}(a^*(s) | s)^2 \cdot \left( Q^{\pi_{\theta_t}}(s,a^*(s)) -  V^{\pi_{\theta_t}}(s) \right)^2 \\
    &\ge \frac{\eta \cdot \left( 1-\gamma \right) }{1 + \eta} \cdot \min_{s}{\mu(s) } \cdot \bigg\| \frac{d_{\mu}^{\pi^*}}{d_{\mu}^{\pi_{\theta_t}}} \bigg\|_\infty^{-1} \cdot \min_{s}{ \pi_{\theta_t}(a^*(s) | s)^2 } \cdot \sum_{s} d_{\mu}^{\pi^*}(s)  \cdot \left( Q^{\pi_{\theta_t}}(s,a^*(s)) -  V^{\pi_{\theta_t}}(s) \right)^2 \\
    &\ge \frac{\eta \cdot \left( 1-\gamma \right)^2 }{1 + \eta} \cdot \min_{s}{\mu(s) } \cdot \bigg\| \frac{d_{\mu}^{\pi^*}}{\mu} \bigg\|_\infty^{-1} \cdot \min_{s}{ \pi_{\theta_t}(a^*(s) | s)^2 }  \cdot \sum_{s} d_{\mu}^{\pi^*}(s) \cdot \left( Q^{\pi_{\theta_t}}(s,a^*(s)) -  V^{\pi_{\theta_t}}(s) \right)^2,
\end{align}
where $\min_{s}{\mu(s)} >0 $ is by \cref{assump:pos_init}, and the last inequality is according to \cref{eq:non_uniform_lojasiewicz_stochastic_npg_value_baseline_general_deterministic_value_intermediate_8},
\begin{align}
\MoveEqLeft
\label{eq:non_uniform_lojasiewicz_stochastic_npg_value_baseline_general_deterministic_value_intermediate_15}
    \bigg\| \frac{ d_{\mu}^{\pi^*} }{ d_{\mu}^{\pi_{\theta_t}} } \bigg\|_{\infty} \coloneqq \max_{s \in \gS} \frac{ d_{\mu}^{\pi^*}(s) }{ d_{\mu}^{\pi_{\theta_t}}(s) } \le \max_{s \in \gS} \frac{ d_{\mu}^{\pi^*}(s) }{ \left( 1 - \gamma \right) \cdot \mu(s) } = \frac{1}{ 1 - \gamma } \cdot \bigg\| \frac{ d_{\mu}^{\pi^*} }{ \mu } \bigg\|_{\infty}.
\end{align}
From \cref{eq:non_uniform_lojasiewicz_stochastic_npg_value_baseline_general_deterministic_value_intermediate_15}, since $d_{\mu}^{\pi^*}(s)^2 \in (0, 1)$, we have,
\begin{align}
\MoveEqLeft
\label{eq:non_uniform_lojasiewicz_stochastic_npg_value_baseline_general_deterministic_value_intermediate_16}
     \EEt{ V^{\pi_{\theta_{t+1}}}(\mu) } - V^{\pi_{\theta_t}}(\mu) \\
     &\ge \frac{\eta \cdot \left( 1-\gamma \right)^2 }{1 + \eta} \cdot \min_{s}{\mu(s) } \cdot \bigg\| \frac{d_{\mu}^{\pi^*}}{\mu} \bigg\|_\infty^{-1} \cdot \min_{s}{ \pi_{\theta_t}(a^*(s) | s)^2 }  \cdot \sum_{s} d_{\mu}^{\pi^*}(s)^2 \cdot \left( Q^{\pi_{\theta_t}}(s,a^*(s)) -  V^{\pi_{\theta_t}}(s) \right)^2 \\
     &\ge \frac{\eta \cdot \left( 1-\gamma \right)^2 }{1 + \eta} \cdot \min_{s}{\mu(s) } \cdot \bigg\| \frac{d_{\mu}^{\pi^*}}{\mu} \bigg\|_\infty^{-1} \cdot \frac{ \min_{s}{ \pi_{\theta_t}(a^*(s) | s)^2 } }{S} \cdot \bigg[ \sum_{s} d_{\mu}^{\pi^*}(s) \cdot \left| Q^{\pi_{\theta_t}}(s,a^*(s)) -  V^{\pi_{\theta_t}}(s) \right| \bigg]^2,
\end{align}
where the last inequality is by Cauchy–Schwarz. Note that,
\begin{align}
\MoveEqLeft
\label{eq:non_uniform_lojasiewicz_stochastic_npg_value_baseline_general_deterministic_value_intermediate_17}
     \sum_{s} d_{\mu}^{\pi^*}(s) \cdot \left| Q^{\pi_{\theta_t}}(s,a^*(s)) -  V^{\pi_{\theta_t}}(s) \right| \ge  \sum_{s} d_{\mu}^{\pi^*}(s) \cdot \left( Q^{\pi_{\theta_t}}(s,a^*(s)) -  V^{\pi_{\theta_t}}(s) \right) \\
     &= \sum_{s} d_{\mu}^{\pi^*}(s) \cdot \sum_{a} \left( \pi^*(a | s) - \pi_{\theta_t}(a | s) \right) \cdot Q^{\pi_{\theta_t}}(s,a) \\
     &= \left( 1 - \gamma \right) \cdot \big( V^{\pi^*}(\mu) - V^{\pi_{\theta_t}}(\mu) \big). \qquad \left( \text{by \cref{lem:performance_difference_general}} \right)
\end{align}
Combining \cref{eq:non_uniform_lojasiewicz_stochastic_npg_value_baseline_general_deterministic_value_intermediate_16,eq:non_uniform_lojasiewicz_stochastic_npg_value_baseline_general_deterministic_value_intermediate_17}, we have,
\begin{align}
\MoveEqLeft
     \EEt{ V^{\pi_{\theta_{t+1}}}(\mu) } - V^{\pi_{\theta_t}}(\mu) \\
     &\ge \frac{\eta \cdot \left( 1-\gamma \right)^4 }{1 + \eta} \cdot \min_{s}{\mu(s) } \cdot \bigg\| \frac{d_{\mu}^{\pi^*}}{\mu} \bigg\|_\infty^{-1} \cdot \frac{ \min_{s}{ \pi_{\theta_t}(a^*(s) | s)^2 } }{S} \cdot \big( V^{\pi^*}(\mu) - V^{\pi_{\theta_t}}(\mu) \big)^2,
\end{align}
thus finishing the proofs.
\end{proof}

\textbf{\cref{lem:non_vanishing_nl_coefficient_stochastic_npg_value_baseline_general}} (Non-vanishing stochastic N\L{} coefficient / ``automatic exploration'')\textbf{.}
Using \cref{alg:softmax_natural_pg_general_on_policy_stochastic_gradient_deterministic_value} with the same assumptions as \cref{lem:non_uniform_lojasiewicz_stochastic_npg_value_baseline_general}, with arbitrary initialization $\theta_1 \in \sR^{\gS \times \gA}$, we have,
\begin{align}
    c \coloneqq \inf_{t \ge 1, s \in \gS} \pi_{\theta_t}(a^*(s) | s) > 0, \qquad \text{a.s.}
\end{align}
\begin{proof}
Given any sampled state action pair $(s_t, a_t) = (s, i)$, we have,
\begin{align}
\MoveEqLeft
\label{eq:non_vanishing_nl_coefficient_stochastic_npg_value_baseline_general_deterministic_value_intermediate_1}
    \big[ V^{\pi_{\theta_{t+1}}}(\mu)  \ | \ s_t = s, a_t = i \big] - V^{\pi_{\theta_t}}(\mu) \\
    &= \frac{1}{1 - \gamma} \cdot \bigg[ \sum_{s^\prime} d_{\mu}^{\pi_{\theta_{t+1}}}(s^\prime) \cdot \sum_{a} \left( \pi_{\theta_{t+1}}(a | s^\prime) - \pi_{\theta_{t}}(a | s^\prime) \right) \cdot Q^{\pi_{\theta_{t}}}(s^\prime, a) \ \Big| \ s_t = s, a_t = i \bigg] \\
    &= \frac{1}{1 - \gamma} \cdot \bigg[ d_{\mu}^{\pi_{\theta_{t+1}}}(s) \cdot \sum_{a} \left( \pi_{\theta_{t+1}}(a | s) - \pi_{\theta_{t}}(a | s) \right) \cdot Q^{\pi_{\theta_{t}}}(s, a) \ \Big| \ a_t = i  \bigg] \\
    &= \frac{1}{1 - \gamma} \cdot d_{\mu}^{\pi_{\theta_{t+1}}}(s) \cdot \frac{ \left[ \exp\Big\{ \eta \cdot \frac{ Q^{\pi_{\theta_t}}(s,i) - V^{\pi_{\theta_t}}(s)}{\pi_{\theta_t}(i | s)} \Big\} - 1 \right] \cdot \left( Q^{\pi_{\theta_{t}}}(s, i) - V^{\pi_{\theta_t}}(s) \right) }{ \exp\Big\{ \eta \cdot \frac{Q^{\pi_{\theta_t}}(s,i) - V^{\pi_{\theta_t}}(s)}{\pi_{\theta_t}(i | s)} \Big\} + \frac{ 1 - \pi_{\theta_t}(i | s) }{ \pi_{\theta_t}(i | s) } } \\
    &\ge 0, \qquad \left( \text{by \cref{eq:non_uniform_lojasiewicz_stochastic_npg_value_baseline_general_deterministic_value_intermediate_5}} \right)
\end{align}
where the second equation is due to $\pi_{\theta_{t+1}}(a | s^\prime) = \pi_{\theta_t}(a | s^\prime)$ for all $s^\prime \not= s$ by \cref{alg:softmax_natural_pg_general_on_policy_stochastic_gradient_deterministic_value}. 

From \cref{eq:non_vanishing_nl_coefficient_stochastic_npg_value_baseline_general_deterministic_value_intermediate_1}, we have $V^{\pi_{\theta_{t+1}}}(\mu) \ge V^{\pi_{\theta_{t}}}(\mu)$ holds almost surely. According to the definition of $Q^{\pi}(s, a)$, we have,
\begin{align}
\label{eq:non_vanishing_nl_coefficient_stochastic_npg_value_baseline_general_deterministic_value_intermediate_2}
    Q^{\pi_{\theta_{t+1}}}(s, a) - Q^{\pi_{\theta_t}}(s, a) = \gamma \cdot \sum_{s^\prime}{ \gP( s^\prime | s, a) \cdot \left( V^{\pi_{\theta_{t+1}}}(s^\prime)  - V^{\pi_{\theta_{t}}}(s^\prime) \right) } \ge 0,
\end{align}
where the last inequality is by \cref{eq:non_vanishing_nl_coefficient_stochastic_npg_value_baseline_general_deterministic_value_intermediate_1}. Also note that $Q^{\pi}(s, a) \in [0, 1/(1-\gamma) ]$ since $r(s,a) \in [0,1]$ for all $(s, a) \in \gS \times \gA$. According to monotone convergence theorem, we have, for all $(s, a) \in \gS \times \gA$, the following exists,
\begin{align}
\label{eq:non_vanishing_nl_coefficient_stochastic_npg_value_baseline_general_deterministic_value_intermediate_3}
    Q^{\infty}(s, a) \coloneqq \lim_{t \to \infty} Q^{\pi_{\theta_t}}(s, a).
\end{align}
Also, define $V^{\infty}(s) \coloneqq \lim_{t \to \infty} V^{\pi_{\theta_t}}(s)$ for all $s \in \gS$.

For all state $s \in \gS$, given $i \in \gA$, define the following set $\gP(s, i)$ of ``generalized one-hot policy'' under state $s$,
\begin{align}
\label{eq:non_vanishing_nl_coefficient_stochastic_npg_value_baseline_general_deterministic_value_intermediate_4a}
    \gA(s, i) &\coloneqq \left\{ j \in \gA: Q^{\infty}(s, j) = Q^{\infty}(s, i) \right\}, \\
\label{eq:non_vanishing_nl_coefficient_stochastic_npg_value_baseline_general_deterministic_value_intermediate_4b}
    \gP(s, i) &\coloneqq \bigg\{ \pi(\cdot | s) \in \Delta(\gA): \sum_{j \in \gA(s, i)}{ \pi(j | s) } = 1 \bigg\}.
\end{align}
Similar to \cref{cl:approaching_generalized_one_hot_policy,cl:contradiction_approaching_sub_optimal_generalized_one_hot_policy} in the proofs for \cref{lem:non_vanishing_nl_coefficient_stochastic_npg_value_baseline_special}, we make the following two claims.
\begin{claim}
\label{cl:approaching_generalized_one_hot_policy_general_deterministic_value}
Almost surely, $\pi_{\theta_t}(\cdot | s)$ approaches one ``generalized one-hot policy'' under all state $s \in \gS$, i.e., 
there exists (a possibly random) $i \in \gA$, such that $\sum_{ j \in \gA(s, i) }{\pi_{\theta_t}(j | s) } \to 1$ as $t \to \infty$ almost surely as $t \to \infty$.
\end{claim}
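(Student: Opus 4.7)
The plan is to mirror the structure of \cref{cl:approaching_generalized_one_hot_policy}, lifting the one-state argument to the MDP setting via \cref{lem:non_uniform_lojasiewicz_stochastic_npg_value_baseline_general} and its finer per-action intermediate form. First I would note that \cref{lem:non_uniform_lojasiewicz_stochastic_npg_value_baseline_general} gives the almost sure pointwise monotonicity $V^{\pi_{\theta_{t+1}}}(s_0) \ge V^{\pi_{\theta_t}}(s_0)$ for every $s_0 \in \gS$. Combined with the uniform bound $V^{\pi_{\theta_t}}(s_0) \in [0, 1/(1-\gamma)]$, monotone convergence gives that $V^\infty(s_0) \coloneqq \lim_{t} V^{\pi_{\theta_t}}(s_0)$ exists almost surely for every $s_0$; applying \cref{eq:non_vanishing_nl_coefficient_stochastic_npg_value_baseline_general_deterministic_value_intermediate_2} then yields the existence of $Q^\infty(s,a)$ for every $(s,a)$, as already recorded in \cref{eq:non_vanishing_nl_coefficient_stochastic_npg_value_baseline_general_deterministic_value_intermediate_3}.

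Next I would apply a Doob-type submartingale convergence argument to $V^{\pi_{\theta_t}}(\mu)$, directly analogous to how \cref{cor:almost_sure_convergence_stochastic_npg_value_baseline_special} was derived from \cref{thm:smc}, to conclude
\begin{align*}
    \lim_{t \to \infty} \Big( \EEt{V^{\pi_{\theta_{t+1}}}(\mu)} - V^{\pi_{\theta_t}}(\mu) \Big) = 0, \qquad \text{a.s.}
\end{align*}
Combining this with the intermediate ``summed'' lower bound that appears inside the proof of \cref{lem:non_uniform_lojasiewicz_stochastic_npg_value_baseline_general} (namely, $\EEt{V^{\pi_{\theta_{t+1}}}(\mu)} - V^{\pi_{\theta_t}}(\mu) \ge \tfrac{\eta(1-\gamma)}{1+\eta} \sum_s \mu(s) d_\mu^{\pi_{\theta_t}}(s) \sum_a \pi_{\theta_t}(a|s)^2 (Q^{\pi_{\theta_t}}(s,a) - V^{\pi_{\theta_t}}(s))^2$), and using $d_\mu^{\pi_{\theta_t}}(s) \ge (1-\gamma)\mu(s)$ together with \cref{assump:pos_init} (so $\mu(s) > 0$ for every $s$), I conclude for each $(s,a)\in\gS\times\gA$,
\begin{align*}
    \pi_{\theta_t}(a|s)^2 \cdot \bigl(Q^{\pi_{\theta_t}}(s,a) - V^{\pi_{\theta_t}}(s)\bigr)^2 \to 0, \qquad \text{a.s.}
\end{align*}

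Finally, taking $t \to \infty$ and invoking the established limits $Q^\infty(s,a)$ and $V^\infty(s)$ gives that for every $(s,a)$ either $\lim_t \pi_{\theta_t}(a|s) = 0$ or $Q^\infty(s,a) = V^\infty(s)$. Define $\gA^V(s) \coloneqq \{ a \in \gA : Q^\infty(s,a) = V^\infty(s) \}$. Since $V^{\pi_{\theta_t}}(s) = \sum_a \pi_{\theta_t}(a|s) Q^{\pi_{\theta_t}}(s,a)$ is a convex combination, at least one action attains the value $V^\infty(s)$ in the limit, so $\gA^V(s) \ne \emptyset$. Because every $a \notin \gA^V(s)$ has vanishing policy probability and $\sum_a \pi_{\theta_t}(a|s) = 1$, we get $\sum_{a \in \gA^V(s)} \pi_{\theta_t}(a|s) \to 1$. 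Any $i(s) \in \gA^V(s)$ satisfies $\gA(s, i(s)) = \gA^V(s)$ by the definition in \cref{eq:non_vanishing_nl_coefficient_stochastic_npg_value_baseline_general_deterministic_value_intermediate_4a}, yielding the claim.

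The main obstacle I anticipate is the second step: passing from the global almost sure vanishing of the expected progress $\EEt{V^{\pi_{\theta_{t+1}}}(\mu)} - V^{\pi_{\theta_t}}(\mu)$ to a pointwise-in-$(s,a)$ decay, since the coefficient $d_\mu^{\pi_{\theta_t}}(s)$ depends on the evolving policy itself and is not bounded below uniformly in $\pi_{\theta_t}$ without exploiting $\mu(s)>0$. Handling this is exactly where \cref{assump:pos_init} enters, through the inequality $d_\mu^{\pi_{\theta_t}}(s) \ge (1-\gamma)\mu(s) > 0$, decoupling the state-visitation from the policy and reducing the general MDP case to the per-state conclusion obtained in the one-state setting.
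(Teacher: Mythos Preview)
Your proposal is correct and follows essentially the same approach as the paper: both arguments use the almost sure monotonicity from \cref{lem:non_uniform_lojasiewicz_stochastic_npg_value_baseline_general} to get existence of $V^\infty$ and $Q^\infty$, show the expected one-step progress vanishes, feed this into the per-action intermediate bound \cref{eq:non_uniform_lojasiewicz_stochastic_npg_value_baseline_general_deterministic_value_intermediate_13}, and use $d_\mu^{\pi_{\theta_t}}(s) \ge (1-\gamma)\mu(s) > 0$ from \cref{assump:pos_init} to force $\pi_{\theta_t}(a|s)^2 (Q^{\pi_{\theta_t}}(s,a) - V^{\pi_{\theta_t}}(s))^2 \to 0$ for each $(s,a)$. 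One small reference slip: the step $\EEt{V^{\pi_{\theta_{t+1}}}(\mu)} - V^{\pi_{\theta_t}}(\mu) \to 0$ is obtained in the paper via \cref{cor:submnoiseconv} (the analogue of \cref{eq:non_vanishing_nl_coefficient_stochastic_npg_value_baseline_special_claim_1_intermediate_3}), not directly from \cref{thm:smc}/\cref{cor:almost_sure_convergence_stochastic_npg_value_baseline_special}, which only give convergence of the sequence itself.
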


\begin{claim}
\label{cl:contradiction_approaching_sub_optimal_generalized_one_hot_policy_general_deterministic_value}
Almost surely, $\pi_{\theta_t}(\cdot | s)$ cannot approach any ``sub-optimal generalized one-hot policies'' under all state $s \in \gS$, 
i.e., $i$ in the previous claim must be an optimal action. 
\end{claim}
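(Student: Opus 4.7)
[Proof sketch of \cref{cl:contradiction_approaching_sub_optimal_generalized_one_hot_policy_general_deterministic_value}]

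The plan is to argue by contradiction, closely following the structure of the proof of \cref{cl:contradiction_approaching_sub_optimal_generalized_one_hot_policy} from the one-state case but carrying the state index through the argument. By \cref{cl:approaching_generalized_one_hot_policy_general_deterministic_value}, for each $s \in \gS$ there exists a (possibly random) $I(s) \in \gA$ with $\sum_{j \in \gA(s,I(s))} \pi_{\theta_t}(j | s) \to 1$ almost surely. Since $\gS$ is finite, it suffices to fix $s \in \gS$ and a sub-optimal index $i \in \gA$ with $Q^\infty(s,i) < \max_a Q^\infty(s,a)$, and show that the event $\gE \coloneqq \{ I(s) = i \}$ has zero probability. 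On $\gE$, $V^\infty(s) = Q^\infty(s,i)$, so we may partition $\gA$ at state $s$ into $\gA^+(s,i)$, $\gA(s,i)$, $\gA^-(s,i)$ using $Q^\infty(s,\cdot)$ versus $Q^\infty(s,i)$, and by $Q^{\pi_{\theta_t}} \to Q^\infty$ and $V^{\pi_{\theta_t}} \to V^\infty$ there exists a random but almost surely finite $\tau$ such that for all $t \ge \tau$ on $\gE$, the advantages $Q^{\pi_{\theta_t}}(s,\cdot) - V^{\pi_{\theta_t}}(s)$ are bounded away from $0$ on $\gA^+(s,i) \cup \gA^-(s,i)$ with uniform sign.

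The key structural observation, inherited from \cref{alg:softmax_natural_pg_general_on_policy_stochastic_gradient_deterministic_value}, is that $\theta_t(s,a)$ is only modified when $(s_t, a_t) = (s,a)$, so I can apply the one-state martingale decomposition of \cref{eq:non_vanishing_nl_coefficient_stochastic_npg_value_baseline_special_claim_2_intermediate_1,eq:non_vanishing_nl_coefficient_stochastic_npg_value_baseline_special_claim_2_intermediate_4} separately at each state, with the indicator $I_t(s,a) \coloneqq \sI\{s_t = s, a_t = a\}$. Under \cref{assump:pos_init}, $d_\mu^{\pi_{\theta_t}}(s) \ge (1-\gamma)\mu(s) > 0$ uniformly in $t$, so the joint sampling probability $d_\mu^{\pi_{\theta_t}}(s) \cdot \pi_{\theta_t}(a | s)$ is controlled up to a state-dependent constant by $\pi_{\theta_t}(a | s)$, and an extended Borel--Cantelli argument (\cref{lem:ebc}) can be applied to state--action pairs with the role of $\pi_{\theta_t}(a)$ replaced by $d_\mu^{\pi_{\theta_t}}(s)\pi_{\theta_t}(a|s)$. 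I then split $\gE$ into the two sub-cases mirroring 2a and 2b in the proof of \cref{lem:non_vanishing_nl_coefficient_stochastic_npg_value_baseline_special}: (2a) for every $a^+ \in \gA^+(s,i)$, the pair $(s,a^+)$ is sampled only finitely often; (2b) some $(s,a^+)$ is sampled infinitely often.

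In case (2a), Borel--Cantelli together with the per-step bound on the update (cf. \cref{eq:non_vanishing_nl_coefficient_stochastic_npg_value_baseline_special_claim_2_case_1_intermediate_7}, adapted with Q-range $1/(1-\gamma)$ in place of $R_{\max}$) gives $\inf_t \theta_t(s,a^+) > -\infty$ and $\sup_t \theta_t(s,a^+) < \infty$. The analogue of \cref{lem:bad_action_parameter_upper_bounded_almost_surely}, proved by combining the self-normalized martingale bound \cref{lem:z_t_confidence_intervals_using_variance} with the negative drift on $\theta_t(s,a^-)$ for $a^- \in \gA^-(s,i)$ (which kicks in after $\tau$), yields $\sup_t \theta_t(s,a^-) < \infty$ a.s. Finally, for any $j \in \gA(s,i)$, bounding $P_s(j)$ by $(Q^{\pi_{\theta_t}}(s,j) - V^{\pi_{\theta_t}}(s))^2$ and then by $q_t^2$ where $q_t = \sum_{a^+ \in \gA^+(s,i)} \pi_{\theta_t}(a^+ | s)$ (using the analogue of \cref{eq:non_vanishing_nl_coefficient_stochastic_npg_value_baseline_special_claim_2_case_1_intermediate_12}), together with the cumulative-noise bound of \cref{lem:z_t_confidence_intervals_using_variance}, yields $\limsup_t \theta_t(s,j) < \infty$. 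Plugging all three bounds into the softmax at state $s$ shows $\sum_{j \in \gA(s,i)} \pi_{\theta_t}(j|s) \not\to 1$, contradicting $\gE$. In case (2b), the confidence-interval argument of \cref{cor:z_t_confidence_intervals} pushes $\theta_t(s,a^+) \to \infty$, and this lower bound on $\theta_t(s,a^+)$ combined with the upper bound on $\theta_t(s,j)$ for $j \in \gA(s,i)$ again contradicts $\sum_{j \in \gA(s,i)} \pi_{\theta_t}(j|s) \to 1$.

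The main obstacle I anticipate is technical rather than conceptual: verifying that the confidence-interval machinery (\cref{lem:improved_algorithm_confidence_intervals,lem:z_t_confidence_intervals_using_variance,cor:z_t_confidence_intervals}) still applies when conditioning on state visits $s_t = s$, since the relevant noise sequence has increments $\eta \cdot I_t(s,a) \cdot (Q^{\pi_{\theta_t}}(s,a) - V^{\pi_{\theta_t}}(s)) / \pi_{\theta_t}(a|s)$ with the per-step sub-Gaussian constant $1/(8 R_{\max})$ now replaced by a constant of order $1/(1-\gamma)^2$ coming from the Q-value range. Provided these martingale bounds carry over verbatim with this constant, the rest of the argument is a direct translation of the one-state proof, with $r(a)$ replaced by $Q^\infty(s,a)$, $\pi_{\theta_t}^\top r$ by $V^{\pi_{\theta_t}}(s)$, and the sampling probability $\pi_{\theta_t}(a)$ by $d_\mu^{\pi_{\theta_t}}(s)\pi_{\theta_t}(a|s)$, with the last factor bounded from below by $(1-\gamma)\mu(s) > 0$ using \cref{assump:pos_init}.
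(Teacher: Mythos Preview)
Your high-level strategy---fix a state $s$, assume a sub-optimal limit index $i$, partition $\gA$ by $Q^\infty(s,\cdot)$ versus $Q^\infty(s,i)$, case-split on whether some $a^+\in\gA^+(s,i)$ is sampled finitely or infinitely often, and derive a contradiction in the softmax at $s$---is exactly the paper's. However, you are importing the martingale noise machinery from the one-state \emph{stochastic-reward} proof, and this is both unnecessary and, as you half-suspect, not directly applicable.

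The point you are missing is that \cref{alg:softmax_natural_pg_general_on_policy_stochastic_gradient_deterministic_value} uses the \emph{true} $Q^{\pi_{\theta_t}}(s_t,a_t)$, not a sampled reward. Conditional on the sampled pair $(s_t,a_t)$, the increment $\theta_{t+1}(s,a)-\theta_t(s,a)$ is deterministic, so there is no analogue of the reward noise $x_t(a)-r(a)$ that drove \cref{lem:bad_action_parameter_upper_bounded_almost_surely}, \cref{cor:z_t_confidence_intervals}, and \cref{lem:z_t_confidence_intervals_using_variance}. The paper therefore bypasses all of that: after the random time $\tau$, every update to $\theta_t(s,a^+)$ is nonnegative (in fact $\ge \eta c\cdot I_t(s,a^+)$) and every update to $\theta_t(s,a^-)$ is nonpositive, so $\inf_t\theta_t(s,a^+)>-\infty$ and $\sup_t\theta_t(s,a^-)<\infty$ follow by pure monotonicity, and in case~2b one gets $\theta_t(s,a^+)\to\infty$ directly from infinitely many increments each $\ge\eta c$. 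For $j\in\gA(s,i)$ the paper bounds the increment to $\theta_t(s,j)$ deterministically by a constant times $q_t=\sum_{a^+}\pi_{\theta_t}(a^+|s)$ (case~2a) or shows it is eventually nonpositive (case~2b), again with no concentration step.

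Your stated obstacle is in fact a genuine obstruction to your proposed route, not a mere technicality: with \emph{constant} $\eta$ (there is no adaptive factor of $\pi_{\theta_t}(a|s)$ as in \cref{eq:non_uniform_lojasiewicz_stochastic_npg_value_baseline_special_stochastic_reward_result_1}), the increment $\eta\,I_t(s,a)\,(Q^{\pi_{\theta_t}}(s,a)-V^{\pi_{\theta_t}}(s))/\pi_{\theta_t}(a|s)$ is unbounded as $\pi_{\theta_t}(a|s)\to 0$, so no finite sub-Gaussian constant exists and the self-normalized bounds do not apply as written. The fix is not to push harder on those bounds but to drop them entirely and use the sign of the increment, as the paper does.
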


From \cref{cl:contradiction_approaching_sub_optimal_generalized_one_hot_policy_general_deterministic_value}, it follows that $\sum_{ j \in \gA(a^*(s)) }{\pi_{\theta_t}(j | s) } \to 1$ almost surely under all state $s \in \gS$, as $t \to \infty$ and thus the policy sequence obtained
almost surely convergences to a globally optimal policy $\pi^*$.

\textbf{Proof of \cref{cl:approaching_generalized_one_hot_policy_general_deterministic_value}}.

Using similar arguments in \cref{eq:non_vanishing_nl_coefficient_stochastic_npg_value_baseline_special_claim_1_intermediate_3}, we have,
\begin{align}
\label{eq:non_vanishing_nl_coefficient_stochastic_npg_value_baseline_general_deterministic_value_claim_1_intermediate_1}
    \lim_{t \to \infty} \,\, \EEt{ V^{\pi_{\theta_{t+1}}}(\mu) } - V^{\pi_{\theta_t}}(\mu)  &= 0,\, 
    \qquad \text{a.s.}
\end{align}
According to \cref{eq:non_uniform_lojasiewicz_stochastic_npg_value_baseline_general_deterministic_value_intermediate_1,eq:non_uniform_lojasiewicz_stochastic_npg_value_baseline_general_deterministic_value_intermediate_13}, we have,
\begin{align}
\label{eq:non_vanishing_nl_coefficient_stochastic_npg_value_baseline_general_deterministic_value_claim_1_intermediate_2}
    \EEt{ V^{\pi_{\theta_{t+1}}}(\mu) } - V^{\pi_{\theta_t}}(\mu) &\ge \sum_{s} d_{\mu}^{\pi_{\theta_t}}(s) \cdot \mu(s) \cdot \frac{\eta \cdot \left( 1-\gamma \right) }{1 + \eta} \cdot \sum_{a} \pi_{\theta_t}(a | s)^2 \cdot \left( Q^{\pi_{\theta_t}}(s,a) -  V^{\pi_{\theta_t}}(s) \right)^2.
\end{align}
Since $d_{\mu}^{\pi_{\theta_t}}(s) \ge (1 - \gamma) \cdot \mu(s) > 0$ by \cref{eq:non_uniform_lojasiewicz_stochastic_npg_value_baseline_general_deterministic_value_intermediate_8,assump:pos_init}, we have, almost surely,
\begin{align}
\label{eq:non_vanishing_nl_coefficient_stochastic_npg_value_baseline_general_deterministic_value_claim_1_intermediate_3}
    \lim_{t \to \infty} \,\,  \sum_{s} \sum_{a} \pi_{\theta_t}(a | s)^2 \cdot \left( Q^{\pi_{\theta_t}}(s,a) -  V^{\pi_{\theta_t}}(s) \right)^2 = 0,
\end{align}
which implies that for all $s \in \gS$, almost surely,
\begin{align}
\label{eq:non_vanishing_nl_coefficient_stochastic_npg_value_baseline_general_deterministic_value_claim_1_intermediate_4}
    \lim_{t \to \infty} \,\, \sum_{a} \pi_{\theta_t}(a | s)^2 \cdot \left( Q^{\pi_{\theta_t}}(s,a) -  V^{\pi_{\theta_t}}(s) \right)^2 = 0.
\end{align}
Using similar arguments in \cref{eq:non_vanishing_nl_coefficient_stochastic_npg_value_baseline_special_claim_1_intermediate_5}, we have, for each state $s \in \gS$, there exists $i \in \gA$, such that,
\begin{align}
\label{eq:non_vanishing_nl_coefficient_stochastic_npg_value_baseline_general_deterministic_value_claim_1_intermediate_5}
    \lim_{t \to \infty}{ \sum_{j \in \gA(s, i)} \pi_{\theta_t}(j | s) } = 1, \qquad \text{a.s.},
\end{align}
which means $\pi_{\theta_t}(\cdot | s)$ a.s. approaches the ``generalized one-hot policy'' $\gP(s, i)$ in \cref{eq:non_vanishing_nl_coefficient_stochastic_npg_value_baseline_general_deterministic_value_intermediate_4b} as $t \to \infty$, finishing the proof of \cref{cl:approaching_generalized_one_hot_policy_general_deterministic_value}.

\textbf{Proof of \cref{cl:contradiction_approaching_sub_optimal_generalized_one_hot_policy_general_deterministic_value}}. The brief sketch of the proof is as follows:
By \cref{cl:approaching_generalized_one_hot_policy_general_deterministic_value}, for each state $s \in \gS$, there exists a (possibly random) $i\in \gA$ such that $\sum_{ j \in \gA(s, i) }{\pi_{\theta_t}(j | s) } \to 1$ almost surely, as $t \to \infty$.
If $i=a^*(s)$ almost surely, \cref{cl:contradiction_approaching_sub_optimal_generalized_one_hot_policy_general_deterministic_value} follows. 
Hence, it suffices to consider the event that $\{ i\not = a^*(s) \}$ for at least one state $s \in \gS$, and show that this event has zero probability mass. Hence, in the rest of the proof we assume that we are on the event when $i\not =a^*(s)$ for one state $s \in \gS$.

Since $i \not= a^*(s)$, there exists at least one ``good'' action $a^+ \in \gA$ such that $Q^{\infty}(s, a^+) > Q^{\infty}(s, i)$. The two cases are as follows.
\begin{description}[style=unboxed,leftmargin=0cm]
    \item[2a)] All ``good'' actions are sampled finitely many times as $t \to \infty$. \label{cl:contradiction_approaching_sub_optimal_generalized_one_hot_policy_general_deterministic_value:a}
    \item[2b)] At least one ``good'' action is sampled infinitely many times as $t \to \infty$.
\end{description}
In both cases, we show that $\sum_{ j \in \gA(s, i) }{ \exp\{ \theta_t(j | s) \} } < \infty$ as $t \to \infty$ (but for different reasons), \textcolor{red}{which is a contradiction with the assumption of $\sum_{ j \in \gA(s, i) }{\pi_{\theta_t}(j|s) } \to 1$ as $t \to \infty$}, given that a ``good'' action's parameter is almost surely lower bounded. Hence, $i\ne a^*(s)$ almost surely does not happen, which means that almost surely $i=a^*(s)$. Let
\begin{align}
\label{eq:non_vanishing_nl_coefficient_stochastic_npg_value_baseline_general_deterministic_value_claim_2_intermediate_1}
    I_t(s, a) = \begin{cases}
		1, & \text{if } (s_t, a_t) = (s,a)\,; \\
		0, & \text{otherwise}\,.
	\end{cases}
\end{align}
Define the following notations,
\begin{align}
\label{eq:non_vanishing_nl_coefficient_stochastic_npg_value_baseline_general_deterministic_value_claim_2_intermediate_2a}
    N_t(s,a) &\coloneqq \sum_{u=1}^{t}{ I_s(s,a) }, \\
\label{eq:non_vanishing_nl_coefficient_stochastic_npg_value_baseline_general_deterministic_value_claim_2_intermediate_2b}
    N_\infty(s,a) &\coloneqq \sum_{u=1}^{\infty}{ I_u(s,a) }.
\end{align}
Assume $\{ i\not = a^*(s) \}$ for at least one state $s \in \gS$, and $\sum_{ j \in \gA(s, i) }{\pi_{\theta_t}(j | s) } \to 1$ almost surely. Partition the action set $\gA$ under $s \in \gS$ into three parts using $V^{\infty}(s)$ as follows,
\begin{align}
\label{eq:non_vanishing_nl_coefficient_stochastic_npg_value_baseline_general_deterministic_value_claim_2_intermediate_3a}
    \gA(s, i) &\coloneqq \left\{ j \in \gA: Q^{\infty}(s, j) = Q^{\infty}(s, i) \right\}, \\
\label{eq:non_vanishing_nl_coefficient_stochastic_npg_value_baseline_general_deterministic_value_claim_2_intermediate_3b}
    \gA^+(s,i) &\coloneqq \left\{ a^+ \in \gA: Q^{\infty}(s, a^+) > Q^{\infty}(s, i) \right\}, \\
\label{eq:non_vanishing_nl_coefficient_stochastic_npg_value_baseline_general_deterministic_value_claim_2_intermediate_3c}
    \gA^-(s,i) &\coloneqq \left\{ a^- \in \gA: Q^{\infty}(s, a^-) < Q^{\infty}(s, i) \right\}.
\end{align}
Since $i \not= a^*(s)$, we have, $\gA^+(s,i)  \not= \emptyset$. Note that,
\begin{align}
\label{eq:non_vanishing_nl_coefficient_stochastic_npg_value_baseline_general_deterministic_value_claim_2_intermediate_4}
    \big| V^{\pi_{\theta_t}}(s) - Q^{\infty}(s, i) \big| &= \bigg| \sum_{ k \not\in \gA(s,i)}{ \pi_{\theta_t}(k | s) \cdot \left( Q^{\pi_{\theta_t}}(s,k) - Q^{\infty}(s, i) \right) } \\
    &\qquad + \sum_{\substack{j \not= i, \\ j \in \gA(s,i)}}{ \pi_{\theta_t}(j | s) \cdot \left( Q^{\pi_{\theta_t}}(s,j) - Q^{\infty}(s, i) \right) } \bigg| \\
    &\le \sum_{ k \not\in \gA(s,i)}{ \pi_{\theta_t}(k | s) \cdot \left| Q^{\pi_{\theta_t}}(s,k) - Q^{\infty}(s, i) \right| } \qquad \left( \text{triangle inequality} \right) \\
    &\qquad + \sum_{\substack{j \not= i, \\ j \in \gA(s,i)}}{ \pi_{\theta_t}(j | s) \cdot \left| Q^{\pi_{\theta_t}}(s,j) - Q^{\infty}(s, i) \right| }  \\
    &\le \frac{1}{1 - \gamma } \cdot \Big( \underbrace{  1 - \sum_{j \in \gA(s, i)   } \pi_{\theta_t}(j | s) }_{\to 0} \Big) + \sum_{\substack{j \not= i, \\ j \in \gA(s,i)}}{ \underbrace{ \left| Q^{\pi_{\theta_t}}(s,j) - Q^{\infty}(s, i) \right| }_{\to 0} },
\end{align}
which implies that $V^{\pi_{\theta_t}}(s) \to Q^{\infty}(s, i)$ as $t \to \infty$. Therefore, there exists $1\le \tau$, 
almost surely on $\{ i\ne a^* (s)\}$ 
 $\tau < \infty$ while we also have, for all $t \ge \tau$,
\begin{align}
\label{eq:non_vanishing_nl_coefficient_stochastic_npg_value_baseline_general_deterministic_value_claim_2_intermediate_5}
    Q^{\pi_{\theta_t}}(s,a^+) - c \ge V^{\pi_{\theta_t}}(s) \ge Q^{\pi_{\theta_t}}(s,a^-) + c, \\
\end{align}
for all $a^+ \in \gA^+(s,i)$, $a^- \in \gA^-(s,i)$, where $c > 0$. For all $t \ge \tau$, for any $a^+ \in \gA^+(s,a)$, we have, almost surely,
\begin{align}
\MoveEqLeft
\label{eq:non_vanishing_nl_coefficient_stochastic_npg_value_baseline_general_deterministic_value_claim_2_intermediate_6}
    \theta_{t+1}(s, a^+) = \theta_{t}( s, a^+) + \eta \cdot I_t(s, a^+) \cdot \frac{  Q^{\pi_{\theta_{t}}}(s, a^+) - V^{\pi_{\theta_t}}(s) }{ \pi_{\theta_t}(a^+ | s ) } \qquad \left( \text{by \cref{alg:softmax_natural_pg_general_on_policy_stochastic_gradient_deterministic_value}} \right) \\
    &\ge \theta_{t}( s, a^+) + \eta \cdot I_t(s, a^+) \cdot \frac{  c }{ \pi_{\theta_t}(a^+ | s ) } \qquad \left( \text{by \cref{eq:non_vanishing_nl_coefficient_stochastic_npg_value_baseline_general_deterministic_value_claim_2_intermediate_5}} \right) \\
    &\ge \theta_{t}( s, a^+) + \eta \cdot I_t(s, a^+) \cdot c \qquad \left( \pi_{\theta_t}(a^+ | s ) \in (0, 1) \right) \\
    &\ge \theta_{t}( s, a^+),
\end{align}
which implies that, almost surely,
\begin{align}
\label{eq:non_vanishing_nl_coefficient_stochastic_npg_value_baseline_general_deterministic_value_claim_2_intermediate_7}
    c_1 &\coloneqq \inf_{t \ge 1}{ \theta_t(s, a^+) } > - \infty.
\end{align}
On the other hand, for all $t \ge \tau$, for any $a^- \in \gA^-(s,a)$, we have, almost surely,
\begin{align}
\MoveEqLeft
\label{eq:non_vanishing_nl_coefficient_stochastic_npg_value_baseline_general_deterministic_value_claim_2_intermediate_8}
    \theta_{t+1}(s, a^-) = \theta_{t}( s, a^-) + \eta \cdot I_t(s, a^-) \cdot \frac{  Q^{\pi_{\theta_{t}}}(s, a^-) - V^{\pi_{\theta_t}}(s) }{ \pi_{\theta_t}(a^- | s ) } \qquad \left( \text{by \cref{alg:softmax_natural_pg_general_on_policy_stochastic_gradient_deterministic_value}} \right) \\
    &\le \theta_{t}( s, a^-) - \eta \cdot I_t(s, a^-) \cdot \frac{  c }{ \pi_{\theta_t}(a^- | s ) } \qquad \left( \text{by \cref{eq:non_vanishing_nl_coefficient_stochastic_npg_value_baseline_general_deterministic_value_claim_2_intermediate_5}} \right) \\
    &\le \theta_{t}( s, a^-) - \eta \cdot I_t(s, a^-) \cdot c \qquad \left( \pi_{\theta_t}(a^- | s ) \in (0, 1) \right) \\
    &\le \theta_{t}( s, a^-),
\end{align}
which implies that, almost surely,
\begin{align}
\label{eq:non_vanishing_nl_coefficient_stochastic_npg_value_baseline_general_deterministic_value_claim_2_intermediate_9}
    c_2 \coloneqq \sup_{t \ge 1}{ \theta_t(s, a^-)} < \infty.
\end{align}
\textbf{First case. 2a).} Consider the event,
\begin{align}
\label{eq:non_vanishing_nl_coefficient_stochastic_npg_value_baseline_general_deterministic_value_claim_2_case_1_intermediate_1}
    \gE_0 \coloneqq \bigcap\limits_{a^+ \in \gA^+(s, i)} 
    \underbrace{
    \left\{ N_\infty(s,a^+) < \infty \right\}}_{\mathcal{E}_0(s,a^+)},
\end{align}
i.e., any ``good'' action $a^+ \in \gA^+(s,i)$ has finitely many updates as $t \to \infty$. Using the extended Borel-Cantelli lemma (\cref{lem:ebc}), we have, almost surely,
\begin{align}
\label{eq:non_vanishing_nl_coefficient_stochastic_npg_value_baseline_general_deterministic_value_claim_2_case_1_intermediate_2}
    \Big\{ \sum_{t \ge 1} \pi_{\theta_t}(a^+|s)<\infty \Big\} = \left\{N_\infty(s, a^+)<\infty\right\}.
\end{align}
Next, we have, almost surely,
\begin{align}
\MoveEqLeft
    1 - \sum_{ j \in \gA(s,i) }{\pi_{\theta_t}(j | s) } = \frac{ \sum_{a^+ \in \gA^+(s,i)}{ e^{\theta_t(s,a^+)} } + \sum_{a^- \in \gA^-(s,i)}{ e^{\theta_t(s,a^-)} } }{ \sum_{a \in \gA}{ e^{\theta_t(s,a)} } } \\
    &\le \frac{ \sum_{a^+ \in \gA^+(s,i)}{ e^{\theta_t(s,a^+)} } + \sum_{a^- \in \gA^-(s,i)}{ e^{c_2} } }{ \sum_{a \in \gA}{ e^{\theta_t(s,a)} } } \qquad \left( \text{by \cref{eq:non_vanishing_nl_coefficient_stochastic_npg_value_baseline_general_deterministic_value_claim_2_intermediate_9}} \right) \\
    &= \frac{ \sum_{a^+ \in \gA^+(s,i)}{ e^{\theta_t(s,a^+)} } + e^{c_2 - c_1} \cdot \frac{ |\gA^-(s,i)| }{ | \gA^+(s,i) | } \cdot | \gA^+(s,i) | \cdot e^{c_1} }{ \sum_{a \in \gA}{ e^{\theta_t(s,a)} } } \\
    &\le \frac{ \sum_{a^+ \in \gA^+(s,i)}{ e^{\theta_t(s,a^+)} } + e^{c_2 - c_1} \cdot \frac{ |\gA^-(s,i)| }{ | \gA^+(s,i) | } \cdot \sum_{a^+ \in \gA^+(s,i)}{ e^{\theta_t(s,a^+)} } }{ \sum_{a \in \gA}{ e^{\theta_t(s,a)} } } \qquad \left( \text{by \cref{eq:non_vanishing_nl_coefficient_stochastic_npg_value_baseline_general_deterministic_value_claim_2_intermediate_7}} \right) \\
    &= \frac{ \sum_{a^+ \in \gA^+(s,i)}{ e^{\theta_t(s,a^+)} }  }{ \sum_{a \in \gA}{ e^{\theta_t(s,a)} } } \cdot \left( 1 + e^{c_2 - c_1} \cdot \frac{ |\gA^-(s,i)| }{ | \gA^+(s,i) | } \right) \\
    &= \left( 1 + e^{c_2 - c_1} \cdot \frac{ |\gA^-(s,i)| }{ | \gA^+(s,i) | } \right) \cdot \sum_{a^+ \in \gA^+(s,i)}{ \pi_{\theta_t}(a^+|s) } \,.
\end{align}
Define
\begin{align}
    q_t \coloneqq \sum_{a^+ \in \gA^+(s,i)}{ \pi_{\theta_t}(a^+|s) }.
\end{align}
According to \cref{eq:non_vanishing_nl_coefficient_stochastic_npg_value_baseline_general_deterministic_value_claim_2_case_1_intermediate_2}, we have, on $\gE_0$, almost surely,
\begin{align}
    \sum_{t=1}^{\infty}{q_t} < \infty.
\end{align}
On the other hand, according to the assumption of $\sum_{ j \in \gA(s, i) }{\pi_{\theta_t}(j | s) } \to 1$, there exists at least one $j \in \gA(s, i)$, such that almost surely, for all $t \ge \tau$, $\pi_{\theta_t}(j | s) > c^\prime$ for some $c^\prime >0$. We have, 
\begin{align}
\MoveEqLeft
    \theta_{t+1}(s, j) = \theta_{t}( s, j) + \eta \cdot I_t(s, j) \cdot \frac{  Q^{\pi_{\theta_{t}}}(s, j) - V^{\pi_{\theta_t}}(s) }{ \pi_{\theta_t}(j | s ) } \qquad \left( \text{by \cref{alg:softmax_natural_pg_general_on_policy_stochastic_gradient_deterministic_value}} \right) \\
    &\le \theta_{t}( s, j) + \eta \cdot I_t(s, j) \cdot \frac{ 1 - \sum_{ j \in \gA(s,i) }{\pi_{\theta_t}(j | s) } }{ \pi_{\theta_t}(j | s ) } \cdot \frac{1}{1 - \gamma} \\
    &\le \theta_{t}( s, j) + \eta \cdot I_t(s, j) \cdot \frac{ 1 - \sum_{ j \in \gA(s,i) }{\pi_{\theta_t}(j | s) } }{ c^\prime } \cdot \frac{1}{1 - \gamma}, \qquad \left( \pi_{\theta_t}(j | s) > c^\prime \right)
\end{align}
which implies that, for $C \coloneqq \max_{t \in [1, \tau]}{ \theta_t(s, j)}$, we have
\begin{align}
    \sup_{t \ge 1}{\theta_t(s, j)} \le C + \frac{\eta \cdot \left( 1 + e^{c_2 - c_1} \cdot \frac{ |\gA^-(s,i)| }{ | \gA^+(s,i) | } \right) }{ \left( 1 - \gamma \right) \cdot c^\prime} \cdot \sum_{t=\tau}^{\infty} \sum_{a^+ \in \gA^+(s,i)}{ \pi_{\theta_t}(a^+|s) } < \infty.
\end{align}
Following calculations in \cref{eq:non_vanishing_nl_coefficient_stochastic_npg_value_baseline_special_claim_2_case_1_intermediate_22}, almost surely on $\gE^\prime \coloneqq \gE_0\cap \{ i\ne a^*(s) \}$, we have, $\sum_{j \in \gA(s, i)} \pi_{\theta_t}(j | s) \not\to 1$, which is a contradiction with the assumption,
showing that $\mathbb{P}(\gE^\prime)=0$.

\textbf{Second case. 2b).} Consider the complement $\gE_0^c$ of $\gE_0$, where $\gE_0$ is by \cref{eq:non_vanishing_nl_coefficient_stochastic_npg_value_baseline_general_deterministic_value_claim_2_case_1_intermediate_1}.
We now show that also $\PP(\gE^{\prime\prime})=0$ where $\gE^{\prime\prime}=\gE_0^c \cap \{ i\ne a^*(s) \}$.

Pick $a^+ \in \gA^+(s, i)$, such that $\mathbb{P}{\left( N_\infty(s, a^+) = \infty \right) } > 0$. 
On event $\gE_{\infty}(s, a^+) \coloneqq \{ N_\infty(s, a^+) = \infty \}$, accoding to \cref{eq:non_vanishing_nl_coefficient_stochastic_npg_value_baseline_general_deterministic_value_claim_2_intermediate_6}, we have, almost surely,
\begin{align}
\label{eq:non_vanishing_nl_coefficient_stochastic_npg_value_baseline_general_deterministic_value_claim_2_case_2_intermediate_1}
    c_3 \coloneqq \lim_{t \to \infty}{ \theta_t(s, a^+) } = \infty.
\end{align}
Therefore, we have, for all $t \ge \tau$,
\begin{align}
\MoveEqLeft
    V^{\pi_{\theta_t}}(s) = Q^{\pi_{\theta_t}}(s,i) + \sum_{\substack{j \not= i, \\ j \in \gA(s,i)}}{ \pi_{\theta_t}(j|s) \cdot \underbrace{ \left( Q^{\pi_{\theta_t}}(s,j) - Q^{\pi_{\theta_t}}(s,i)\right) }_{\to 0} } \\
    &\qquad + \sum_{a^- \in \gA^-(s,i)} \pi_{\theta_t}(a^-|s) \cdot \underbrace{  \left( Q^{\pi_{\theta_t}}(s,a^-) - Q^{\pi_{\theta_t}}(s,i)\right)}_{ < 0} \\
    &\qquad + \sum_{\tilde{a}^+ \in \gA^+(s,i)} \pi_{\theta_t}(\tilde{a}^+|s) \cdot \underbrace{ \left( Q^{\pi_{\theta_t}}(s,\tilde{a}^+) - Q^{\pi_{\theta_t}}(s,i)\right)}_{ > 0} \\
    &\ge Q^{\pi_{\theta_t}}(s,i) + \sum_{\substack{j \not= i, \\ j \in \gA(s,i)}}{ \pi_{\theta_t}(j|s) \cdot \left( Q^{\pi_{\theta_t}}(s,j) - Q^{\pi_{\theta_t}}(s,i)\right)  } \\
    &\qquad + \pi_{\theta_t}(a^+|s) \cdot \bigg[ \left( Q^{\pi_{\theta_t}}(s,a^+) - Q^{\pi_{\theta_t}}(s,i)\right) -  \sum_{a^- \in \gA^-(s,i)} \frac{Q^{\pi_{\theta_t}}(s,i)-Q^{\pi_{\theta_t}}(s,a^-) }{ \exp\{ \theta_t(s, a^+) - \theta_t(s, a^-) \}} \bigg].
\end{align}
According to \cref{eq:non_vanishing_nl_coefficient_stochastic_npg_value_baseline_general_deterministic_value_claim_2_intermediate_9,eq:non_vanishing_nl_coefficient_stochastic_npg_value_baseline_general_deterministic_value_claim_2_case_2_intermediate_1}, $\theta_t(s, a^+) - \theta_t(s, a^-) \to \infty$, which implies that, on event $\gE_\infty(s, a^+)$, almost surely, for all $t \ge \tau$,
\begin{align}
    V^{\pi_{\theta_t}}(s) > Q^{\pi_{\theta_t}}(s,i) +  \sum_{\substack{j \not= i, \\ j \in \gA(s,i)}}{ \pi_{\theta_t}(j|s) \cdot \left( Q^{\pi_{\theta_t}}(s,j) - Q^{\pi_{\theta_t}}(s,i)\right)  },
\end{align}
which implies that,
\begin{align}
    \sum_{k \in \gA(s,i)}  \pi_{\theta_t}(k|s) \cdot V^{\pi_{\theta_t}}(s) &> \sum_{k \in \gA(s,i)}  \pi_{\theta_t}(k|s) \cdot Q^{\pi_{\theta_t}}(s,k) \\
    &\qquad +  \sum_{k \in \gA(s,i)}  \pi_{\theta_t}(k|s) \cdot \sum_{\substack{j \not= k, \\ j \in \gA(s,i)}}{ \pi_{\theta_t}(j|s) \cdot \left( Q^{\pi_{\theta_t}}(s,j) - Q^{\pi_{\theta_t}}(s,k)\right)  }\\
    &= \sum_{k \in \gA(s,i)}  \pi_{\theta_t}(k|s) \cdot Q^{\pi_{\theta_t}}(s,k).
\end{align}
For all $t \ge \tau$, we have,
\begin{align}
    \theta_{t+1}(s, i) &= \theta_{t}( s, i) + \eta \cdot I_t(s, i) \cdot \frac{  Q^{\pi_{\theta_{t}}}(s, i) - V^{\pi_{\theta_t}}(s) }{ \pi_{\theta_t}(i | s ) } \qquad \left( \text{by \cref{alg:softmax_natural_pg_general_on_policy_stochastic_gradient_deterministic_value}} \right) \\
    &\le \theta_{t}( s, i),
\end{align}
which implies that,
\begin{align}
    \sup_{t \ge 1}{\theta_{t}(s, i)} < \infty.
\end{align}
Following calculations in \cref{eq:non_vanishing_nl_coefficient_stochastic_npg_value_baseline_special_claim_2_case_2_intermediate_10}, almost surely on $\gE^{\prime\prime}=\gE_0^c \cap \{ i\ne a^*(s) \}$, we have, $\sum_{j \in \gA(s, i)} \pi_{\theta_t}(j | s) \not\to 1$, which is a contradiction with the assumption,
showing that $\mathbb{P}(\gE^{\prime\prime})=0$.
\end{proof}

\textbf{\cref{thm:almost_sure_convergence_rate_stochastic_npg_general_value_baseline}} (Almost sure global convergence rate) \textbf{.}
Using \cref{alg:softmax_natural_pg_general_on_policy_stochastic_gradient_deterministic_value} with any initialization $\theta_1 \in \sR^K$,  under the same assumptions as Lemmas \ref{lem:non_uniform_lojasiewicz_stochastic_npg_value_baseline_general}, we have, for all $t \ge 1$,
\begin{align}
    \EE{ V^*(\mu) - V^{\pi_{\theta_t}}(\mu) } \le \frac{1 + \eta}{\eta \cdot \left( 1 - \gamma \right)^4 \cdot \min_{s}{\mu(s) } }  \cdot \bigg\| \frac{d_{\mu}^{\pi^*}}{\mu} \bigg\|_\infty \cdot \frac{S}{\EE{ c^2 }} \cdot \frac{1}{t},& \qquad \text{and} \\
    \limsup_{t \ge 1} \bigg\{ \frac{\eta \cdot \left( 1 - \gamma \right)^4 \cdot \min_{s}{\mu(s) } }{1 + \eta}  \cdot \bigg\| \frac{d_{\mu}^{\pi^*}}{\mu} \bigg\|_\infty^{-1} \cdot \frac{c^2 \cdot t}{S} \cdot \left( V^*(\mu) - V^{\pi_{\theta_t}}(\mu) \right) \bigg\} < \infty,& \qquad \text{a.s.},
\end{align}
where we use $\EEt{\cdot}$ to denote $\EEt{\cdot | \gF_t}$ for brevity, and $\gF_t$ is the $\sigma$-algebra generated by $(s_1, a_1), (s_2, a_2), \dots, (s_{t-1}, a_{t-1})$, $\pi^*$ is the global optimal policy, $S$ is the state number, $\min_{s}{\mu(s)} > 0$ by \cref{assump:pos_init}, and $c \coloneqq \inf_{t \ge 1, s \in \gS} \pi_{\theta_t}(a^*(s) | s) > 0$ is from \cref{lem:non_vanishing_nl_coefficient_stochastic_npg_value_baseline_general}.
\begin{proof}
\textbf{First part. } According to \cref{lem:non_uniform_lojasiewicz_stochastic_npg_value_baseline_general}, we have,
\begin{align}
\MoveEqLeft
    \EEt{ V^{\pi_{\theta_{t+1}}}(\mu) } - V^{\pi_{\theta_t}}(\mu) \\
    &\ge \frac{\eta \cdot \left( 1 - \gamma \right)^4 \cdot \min_{s}{\mu(s) } }{1 + \eta}  \cdot \bigg\| \frac{d_{\mu}^{\pi^*}}{\mu} \bigg\|_\infty^{-1} \cdot \frac{ \min_{s}{ \pi_{\theta_t}(a^*(s) | s)^2 } }{S} \cdot \big( V^{\pi^*}(\mu) - V^{\pi_{\theta_t}}(\mu) \big)^2 \\
    &\ge \frac{\eta \cdot \left( 1 - \gamma \right)^4 \cdot \min_{s}{\mu(s) } }{1 + \eta}  \cdot \bigg\| \frac{d_{\mu}^{\pi^*}}{\mu} \bigg\|_\infty^{-1} \cdot \frac{ \inf_{t \ge 1, s \in \gS}{ \pi_{\theta_t}(a^*(s) | s)^2 } }{S} \cdot \big( V^{\pi^*}(\mu) - V^{\pi_{\theta_t}}(\mu) \big)^2 \\
    &= \frac{\eta \cdot \left( 1 - \gamma \right)^4 \cdot \min_{s}{\mu(s) } }{1 + \eta}  \cdot \bigg\| \frac{d_{\mu}^{\pi^*}}{\mu} \bigg\|_\infty^{-1} \cdot \frac{ c^2 }{S} \cdot \big( V^{\pi^*}(\mu) - V^{\pi_{\theta_t}}(\mu) \big)^2,
\end{align}
where $c \coloneqq \inf_{t \ge 1, s \in \gS} \pi_{\theta_t}(a^*(s) | s) > 0$ according to \cref{lem:non_vanishing_nl_coefficient_stochastic_npg_value_baseline_general}. Let $\delta(\theta_t) \coloneqq V^*(\mu) - V^{\pi_{\theta_t}}(\mu)$ denote the sub-optimality gap. Using similar calculations in \cref{thm:almost_sure_convergence_rate_stochastic_npg_special_value_baseline}, we have, for all $t \ge 1$,
\begin{align}
    \EE{ V^*(\mu) - V^{\pi_{\theta_t}}(\mu) } = \expectation{ [ \delta(\theta_t) ]} \le \frac{1 + \eta}{\eta \cdot \left( 1 - \gamma \right)^4 \cdot \min_{s}{\mu(s) } }  \cdot \bigg\| \frac{d_{\mu}^{\pi^*}}{\mu} \bigg\|_\infty \cdot \frac{S}{\EE{ c^2 }} \cdot \frac{1}{t}.
\end{align}
\textbf{Second part. } The result follows from \cref{lem:problemma} by choosing $X_t = V^*(\mu) - V^{\pi_{\theta_t}}(\mu)$
and $f(t) = \frac{\eta \cdot \left( 1 - \gamma \right)^4 \cdot \min_{s}{\mu(s) } }{1 + \eta}  \cdot \Big\| \frac{d_{\mu}^{\pi^*}}{\mu} \Big\|_\infty^{-1} \cdot \frac{\EE{ c^2 }}{S}  \cdot t$.
\end{proof}

\section{Proofs for Understanding Baselines}

\textbf{\cref{prop:softmax_natural_pg_unbiased}} (Unbiasedness of NPG)\textbf{.}
For NPG with and without a state value baseline, corresponding to \cref{update_rule:softmax_natural_pg_special_on_policy_stochastic_gradient,update_rule:softmax_natural_pg_special_on_policy_stochastic_gradient_value_baseline} respectively,
we have 
$\expectation_{a_t \sim \pi_{\theta_t}(\cdot)}{ \left[ \hat{r}_t \right] } = \expectation_{a_t \sim \pi_{\theta_t}(\cdot)}{ [ \hat{r}_t - \hat{b}_t ] } = r$. 
\begin{proof}
\textbf{First part.} $\expectation_{a_t \sim \pi_{\theta_t}(\cdot)}{ \left[ \hat{r}_t \right] } = r$.

According to \cref{def:simplified_on_policy_importance_sampling}, we have, for all $i \in [K]$,
\begin{align}
    \expectation_{a_t \sim \pi_{\theta_t}(\cdot)}{ \left[ \hat{r}_t(i) \right] } &= \sum_{a \in [K]}{ \PP{(a_t = a )} \cdot \hat{r}_t(i) } \\
    &= \sum_{a \in [K]}{ \pi_{\theta_t}(a) \cdot \frac{ \sI\left\{ a = i \right\} }{ \pi_{\theta_t}(i) } \cdot r(i) } = r(i). \qquad \left( a_t \sim \pi_{\theta_t}(\cdot) \right)
\end{align}

\textbf{Second part.} $\expectation_{a_t \sim \pi_{\theta_t}(\cdot)}{ [ \hat{r}_t - \hat{b}_t ] } = r$.
According to \cref{def:simplified_on_policy_importance_sampling}, we have, for all $i \in [K]$,
\begin{align}
    \expectation_{a_t \sim \pi_{\theta_t}(\cdot)}{ [ \hat{r}_t(i) - \hat{b}_t(i) ] } &= \sum_{a \in [K]}{ \pi_{\theta_t}(a) \cdot \left[ \frac{ \sI\left\{ a = i \right\} }{ \pi_{\theta_t}(i) } \cdot \left( r(i) - \pi_{\theta_t}^\top r \right) + \pi_{\theta_t}^\top r \right] } \qquad \left( \text{by \cref{update_rule:softmax_natural_pg_special_on_policy_stochastic_gradient_value_baseline}} \right) \\
    &= r(i) - \pi_{\theta_t}^\top r  + \pi_{\theta_t}^\top r \\
    &= r(i). \qedhere
\end{align}
\end{proof}

\textbf{\cref{prop:softmax_natural_pg_variances}} (Unboundedness of NPG)\textbf{.}
For NPG without a baseline, \cref{update_rule:softmax_natural_pg_special_on_policy_stochastic_gradient},
we have $\expectation_{a_t \sim \pi_{\theta_t}(\cdot)}{ \left\| \hat{r}_t \right\|_2^2 } = \sum_{a \in [K]}{ \frac{ r(a)^2 }{ \pi_{\theta_t}(a) }  }$. 
For NPG with a state value baseline, \cref{update_rule:softmax_natural_pg_special_on_policy_stochastic_gradient_value_baseline},
we have $\expectation_{a_t \sim \pi_{\theta_t}(\cdot)}{ \| \hat{r}_t - \hat{b}_t \|_2^2 } = \sum_{a \in [K]}{ \frac{ ( r(a) - \pi_{\theta_t}^\top r )^2 }{ \pi_{\theta_t}(a) } } - K \cdot ( \pi_{\theta_t}^\top r)^2 + 2 \cdot ( \pi_{\theta_t}^\top r) \cdot ( r^\top \rvone )$.
\begin{proof}
\textbf{First part.} $\expectation_{a_t \sim \pi_{\theta_t}(\cdot)}{ \left\| \hat{r}_t \right\|_2^2 } = \sum_{a \in [K]}{ \frac{ r(a)^2 }{ \pi_{\theta_t}(a) }  }$. 

According to \cref{def:simplified_on_policy_importance_sampling}, we have,
\begin{align}
    \left\| \hat{r}_t \right\|_2^2 = \sum_{i}{ \hat{r}_t(i)^2 } = \sum_{i}{ \frac{ \left( \sI\left\{ a_t = i \right\} \right)^2 }{ \pi_{\theta_t}(i)^2 } \cdot r(i)^2 } = \sum_{i}{ \frac{ \sI\left\{ a_t = i \right\} }{ \pi_{\theta_t}(i)^2 } \cdot r(i)^2 }.
\end{align}
Taking expectation, we have,
\begin{align}
    \expectation_{a_t \sim \pi_{\theta_t}(\cdot)}{ \left\| \hat{r}_t \right\|_2^2 } &= \sum_{a \in [K]}{ \pi_{\theta_t}(a) \cdot \sum_{i}{ \frac{ \sI\left\{ a = i \right\} }{ \pi_{\theta_t}(i)^2 } \cdot r(i)^2 } } \\
    &= \sum_{a \in [K]}{ \pi_{\theta_t}(a) \cdot \frac{1}{ \pi_{\theta_t}(a)^2 } \cdot r(a)^2 } \\
    &= \sum_{a \in [K]}{ \frac{ r(a)^2 }{ \pi_{\theta_t}(a) }  }.
\end{align}

\textbf{Second part.} $\expectation_{a_t \sim \pi_{\theta_t}(\cdot)}{ \| \hat{r}_t - \hat{b}_t \|_2^2 } = \sum_{a \in [K]}{ \frac{ ( r(a) - \pi_{\theta_t}^\top r )^2 }{ \pi_{\theta_t}(a) } } - K \cdot ( \pi_{\theta_t}^\top r)^2 + 2 \cdot ( \pi_{\theta_t}^\top r) \cdot ( r^\top \rvone )$.

According to \cref{def:simplified_on_policy_importance_sampling}, we have,
\begin{align}
\MoveEqLeft
    \big\| \hat{r}_t - \hat{b}_t \big\|_2^2 = \sum_{i}{ \left( \hat{r}_t(i) - \hat{b}_t(i) \right)^2 } \\
    &= \sum_{i}{ \left[ \frac{ \sI\left\{ a_t = i \right\} }{ \pi_{\theta_t}(i) } \cdot \left( r(i) - \pi_{\theta_t}^\top r \right) + \pi_{\theta_t}^\top r \right]^2 } \\
    &= \sum_{i}{ \frac{ \left( \sI\left\{ a_t = i \right\} \right)^2 }{ \pi_{\theta_t}(i)^2 } \cdot \left( r(i) - \pi_{\theta_t}^\top r \right)^2 } + \sum_{i} \left( \pi_{\theta_t}^\top r \right)^2 + 2 \cdot \sum_{i}{ \frac{ \sI\left\{ a_t = i \right\} }{ \pi_{\theta_t}(i) } \cdot \left( r(i) - \pi_{\theta_t}^\top r \right) \cdot \left( \pi_{\theta_t}^\top r \right) } \\
    &= \sum_{i}{ \frac{ \sI\left\{ a_t = i \right\} }{ \pi_{\theta_t}(i)^2 } \cdot \left( r(i) - \pi_{\theta_t}^\top r \right)^2 } + K \cdot \left( \pi_{\theta_t}^\top r \right)^2 + 2 \cdot \sum_{i}{ \frac{ \sI\left\{ a_t = i \right\} }{ \pi_{\theta_t}(i) } \cdot \left( r(i) - \pi_{\theta_t}^\top r \right) \cdot \left( \pi_{\theta_t}^\top r \right) }.
\end{align}
Taking expectation, we have,
\begin{align}
\MoveEqLeft
    \expectation_{a_t \sim \pi_{\theta_t}(\cdot)}{ \big\| \hat{r}_t - \hat{b}_t \big\|_2^2 } = \sum_{a \in [K]}{ \pi_{\theta_t}(a) \cdot \sum_{i}{ \frac{ \sI\left\{ a = i \right\} }{ \pi_{\theta_t}(i)^2 } \cdot \left( r(i) - \pi_{\theta_t}^\top r \right)^2 } } \\
    &\qquad + \sum_{a \in [K]}{ \pi_{\theta_t}(a) \cdot K \cdot \left( \pi_{\theta_t}^\top r \right)^2 } + 2 \cdot \left( \pi_{\theta_t}^\top r \right) \cdot \sum_{a \in [K]}{ \pi_{\theta_t}(a) \cdot \sum_{i}{ \frac{ \sI\left\{ a_t = i \right\} }{ \pi_{\theta_t}(i) } \cdot \left( r(i) - \pi_{\theta_t}^\top r \right)  } } \\
    &= \sum_{a \in [K]}{ \pi_{\theta_t}(a) \cdot \frac{ 1 }{ \pi_{\theta_t}(a)^2 } \cdot \left( r(a) - \pi_{\theta_t}^\top r \right)^2 } \\
    &\qquad + K \cdot \left( \pi_{\theta_t}^\top r \right)^2 + 2 \cdot \left( \pi_{\theta_t}^\top r \right) \cdot \sum_{a \in [K]}{ \pi_{\theta_t}(a) \cdot \frac{ 1 }{ \pi_{\theta_t}(a) } \cdot \left( r(a) - \pi_{\theta_t}^\top r \right) } \\
    &= \sum_{a \in [K]}{ \frac{ ( r(a) - \pi_{\theta_t}^\top r )^2 }{ \pi_{\theta_t}(a) } } - K \cdot ( \pi_{\theta_t}^\top r)^2 + 2 \cdot ( \pi_{\theta_t}^\top r) \cdot ( r^\top \rvone ). \qedhere
\end{align}
\end{proof}

\textbf{\cref{lem:positive_infinite_product}} (Bad sampling)\textbf{.}
Let $\pi_{\theta_t}(a) \in (0, 1)$ be the probability of sampling action $a$ using online sampling $a_t \sim \pi_{\theta_t}(\cdot)$, for all $t \ge 1$. If $1 - \pi_{\theta_t}(a) \in O(1/t^{1+ \epsilon})$, where $\epsilon > 0$, then $\prod_{t=1}^{\infty}{ \pi_{\theta_t}(a) } > 0$.
\begin{proof}
According to \cref{lem:infinite_product_infinite_sum}, we have, for a sequence $u_t \in (0, 1)$ for all $t \ge 1$, if $\sum_{t=1}^{\infty}{ u_t } < \infty$, then $\prod_{t=1}^{\infty}{\left( 1 - u_t \right) } > 0$.

Let $u_t = 1 - \pi_{\theta_t}(a) \in (0, 1)$ according to the softmax parameterization. If $1 - \pi_{\theta_t}(a) \in O(1/t^{1+\epsilon})$, such as $1 - \pi_{\theta_t}(a) \in \Theta(1/t^\alpha)$ where $a \in (1, \infty)$, then we have, for all $C > 0$,
\begin{align}
\MoveEqLeft
    \sum_{t=1}^{\infty}{ u_t } = \sum_{t=1}^{\infty}{ \left( 1 - \pi_{\theta_t}(a) \right) } \\
    &= \sum_{t=1}^{\infty}{ \frac{C}{t^\alpha} } \\
    &\le C \cdot \left( 1 + \int_{t = 1}^{\infty}{ \frac{1}{t^\alpha} dt } \right) \\
    &= \frac{C \cdot \alpha}{ \alpha - 1},
\end{align}
or if $1 - \pi_{\theta_t}(a) \in \Theta(e^{- c \cdot t})$ where $c > 0$, then we have, for all $C > 0$ and $C^\prime > 0$,
\begin{align}
\MoveEqLeft
    \sum_{t=1}^{\infty}{ u_t } = \sum_{t=1}^{\infty}{ \left( 1 - \pi_{\theta_t}(a) \right) } \\
    &= \sum_{t=1}^{\infty}{ \frac{C}{\exp\{ C^\prime \cdot t\}} } \\
    &\le \int_{t=0}^{\infty}{ \frac{C}{\exp\{ C^\prime \cdot t\}} } \\
    &= \frac{C}{C^\prime}.
\end{align}
Therefore, using \cref{lem:infinite_product_infinite_sum}, we have,
\begin{align}
    \prod_{t=1}^{\infty}{\left( 1 - u_t \right) } = \prod_{t=1}^{\infty}{ \pi_{\theta_t}(a) } > 0,
\end{align}
finishing the proofs.
\end{proof}

\textbf{\cref{lem:npg_aggressiveness}} (NPG aggressiveness)\textbf{.}
Fix sampling $a_t = a$ for all $t \ge 1$, using \cref{update_rule:softmax_natural_pg_special_on_policy_stochastic_gradient} with constant learning rate $\eta > 0$, where $\hat{r}_t$ is from \cref{def:simplified_on_policy_importance_sampling}, we have $1 - \pi_{\theta_t}(a) \in O(e^{-c \cdot t})$ for all $t \ge 1$, where $c > 0$.
\begin{proof}
See \citep[Theorem 3]{mei2021understanding}. We include a proof for completeness.

Suppose $a_1 = a, a_2 = a, \cdots, a_{t-1} = a$. We have,
\begin{align}
\label{eq:npg_aggressiveness_intermediate_1}
    \theta_{t}(a) &= \theta_1(a) + \eta \cdot \sum_{s=1}^{t-1}{ \hat{r}_s(a) } \qquad \left( \text{by \cref{update_rule:softmax_natural_pg_special_on_policy_stochastic_gradient}} \right) \\
    &= \theta_1(a) + \eta \cdot \sum_{s=1}^{t-1}{ \frac{ \sI\left\{ a_s = a \right\} }{ \pi_{\theta_s}(a) } \cdot r(a) } \qquad \left( \text{by \cref{def:simplified_on_policy_importance_sampling}} \right) \\
    &= \theta_1(a) + \eta \cdot \sum_{s=1}^{t-1}{ \frac{ r(a) }{ \pi_{\theta_s}(a) }  } \qquad \left( a_s = a \text{ for all } s \in \left\{ 1, 2, \dots, t-1 \right\} \right) \\
    &\ge \theta_1(a) + \eta \cdot \sum_{s=1}^{t-1}{ r(a) } \qquad \left( \pi_{\theta_s}(a) \in (0, 1) \right) \\
    &= \theta_1(a) +  \eta \cdot r(a) \cdot \left( t - 1 \right).
\end{align}
On the other hand, we have, for any other action $a^\prime \not= a$,
\begin{align}
\label{eq:npg_aggressiveness_intermediate_2}
    \theta_{t}(a^\prime) &= \theta_1(a^\prime) + \eta \cdot \sum_{s=1}^{t-1}{ \frac{ \sI\left\{ a_s = a^\prime \right\} }{ \pi_{\theta_s}(a^\prime) } \cdot r(a^\prime) } \qquad \left( \text{by \cref{update_rule:softmax_natural_pg_special_on_policy_stochastic_gradient,def:simplified_on_policy_importance_sampling}} \right) \\
    &= \theta_1(a^\prime). \qquad \left( a_s \not= a^\prime \text{ for all } s \in \left\{ 1, 2, \dots, t-1 \right\} \right)
\end{align}
Therefore, we have,
\begin{align}
\label{eq:npg_aggressiveness_intermediate_3}
\MoveEqLeft
    \pi_{\theta_t}(a) = 1 - \sum_{a^\prime \not= a}{ \pi_{\theta_t}(a^\prime) } \\
    &= 1 - \frac{ \sum_{a^\prime \not= a}{ \exp\{ \theta_t(a^\prime) \} } }{ \exp\{ \theta_t(a) \} + \sum_{a^\prime \not= a}{ \exp\{ \theta_t(a^\prime) \} } } \\
    &\ge 1 - \frac{ \sum_{a^\prime \not= a}{ \exp\{ \theta_1(a^\prime) \} } }{ \exp\{ \theta_1(a) + \eta \cdot r(a) \cdot \left( t - 1 \right) \} + \sum_{a^\prime \not= a}{ \exp\{ \theta_1(a^\prime) \} } }, \qquad \left( \text{by \cref{eq:npg_aggressiveness_intermediate_1,eq:npg_aggressiveness_intermediate_2}} \right)
\end{align}
which implies that,
\begin{align}
    1 - \pi_{\theta_t}(a) &\le \frac{ \sum_{a^\prime \not= a}{ \exp\{ \theta_1(a^\prime) \} } }{ \exp\{ \theta_1(a) + \eta \cdot r(a) \cdot \left( t - 1 \right) \} + \sum_{a^\prime \not= a}{ \exp\{ \theta_1(a^\prime) \} } } \\
    &\in O(e^{-c \cdot t}),
\end{align}
where $c \coloneqq \eta \cdot r(a) > 0$.
\end{proof}

\textbf{\cref{lem:zero_infinite_product}} (Good sampling)\textbf{.}
Let $\pi_{\theta_t}(a) \in (0, 1)$ and $a_t \sim \pi_{\theta_t}(\cdot)$, for all $t \ge 1$. If $\sum_{t=1}^{\infty}{ \left( 1 - \pi_{\theta_t}(a) \right) } = \infty$ (e.g., $1 - \pi_{\theta_t}(a) \in \Omega(1/t)$), then $\prod_{t=1}^{\infty}{ \pi_{\theta_t}(a) } = 0$.
\begin{proof}
According to \cref{lem:infinite_product_infinite_sum_2}, we have, for a sequence $u_t \in (0, 1)$ for all $t \ge 1$, if $\sum_{t=1}^{\infty}{ u_t } = \infty$, then $\prod_{t=1}^{\infty}{\left( 1 - u_t \right) } = 0$.

Let $u_t = 1 - \pi_{\theta_t}(a) \in (0, 1)$ according to the softmax parameterization, the result follows.
\end{proof}

\textbf{\cref{lem:npg_aggressiveness_value_baseline}} (Value baselines reduce NPG aggressiveness)\textbf{.}
Fix sampling $a_t = a$ for all $t \ge 1$. 
Then using \cref{update_rule:softmax_natural_pg_special_on_policy_stochastic_gradient_value_baseline} with a constant learning rate $\eta > 0$ and $\hat{r}_t$ from \cref{def:simplified_on_policy_importance_sampling} obtains $1 - \pi_{\theta_t}(a) \in \Omega(1/t)$ for all $t \ge 1$.
\begin{proof}
Since the claim is concerned with the policies underlying the parameter vectors and not the parameter vectors themselves, as noted after 
\cref{update_rule:softmax_natural_pg_special_on_policy_stochastic_gradient_value_baseline},
we used the equivalent \cref{update_rule:equivalent_update_softmax_natural_pg_special_on_policy_stochastic_gradient_value_baseline} with the change of $\hat{r}_t$ is from \cref{def:simplified_on_policy_importance_sampling} as follows,
\begin{align}
\theta_{t+1}(a) \gets \theta_t(a) + \eta \cdot \frac{ \sI\left\{ a_t = a \right\} }{ \pi_{\theta_t}(a) } \cdot \left( r(a) - \pi_{\theta_t}^\top r \right)\,.
\end{align}
Since $a_t = a$ for all $t \ge 1$ by assumption, we have,
\begin{align}
\label{eq:npg_aggressiveness_value_baseline_intermediate_1}
    \theta_{t+1}(a) \gets \theta_t(a) + \eta \cdot \frac{ r(a) - \pi_{\theta_t}^\top r }{ \pi_{\theta_t}(a) },
\end{align}
while for all $a^\prime \not= a$,
\begin{align}
\label{eq:npg_aggressiveness_value_baseline_intermediate_2}
    \theta_{t+1}(a^\prime) \gets \theta_t(a^\prime).
\end{align}
If $\pi_{\theta_t}^\top r < r(a)$, then we have,
\begin{align}
\label{eq:npg_aggressiveness_value_baseline_intermediate_3}
    \theta_{t+1}(a) &= \theta_t(a) + \eta \cdot \frac{ r(a) - \pi_{\theta_t}^\top r }{ \pi_{\theta_t}(a) } \qquad \left( \text{by \cref{eq:npg_aggressiveness_value_baseline_intermediate_1}} \right) \\
    &\ge 0, \qquad \left( \pi_{\theta_t}^\top r < r(a) \right)
\end{align}
which implies that,
\begin{align}
\label{eq:npg_aggressiveness_value_baseline_intermediate_4}
    \pi_{\theta_{t+1}}(a) &= \frac{ \exp\{ \theta_{t+1}(a) \}}{ \exp\{ \theta_{t+1}(a) \} + \sum_{a^\prime \not= a}{ \exp\{ \theta_{t+1}(a^\prime) \} } } \\
    &= \frac{ \exp\{ \theta_{t+1}(a) \}}{ \exp\{ \theta_{t+1}(a) \} + \sum_{a^\prime \not= a}{ \exp\{ \theta_{t}(a^\prime) \} } } \qquad \left( \text{by \cref{eq:npg_aggressiveness_value_baseline_intermediate_2}} \right) \\
    &\ge \frac{ \exp\{ \theta_{t}(a) \}}{ \exp\{ \theta_{t}(a) \} + \sum_{a^\prime \not= a}{ \exp\{ \theta_{t}(a^\prime) \} } } \qquad \left( \text{by \cref{eq:npg_aggressiveness_value_baseline_intermediate_3}} \right) \\
    &= \pi_{\theta_t}(a),
\end{align}
which means $1 - \pi_{\theta_t}(a)$ is decreasing. Otherwise, if $\pi_{\theta_t}^\top r \ge r(a)$, then using similar calculations, we have $\pi_{\theta_{t+1}}(a) \le \pi_{\theta_{t}}(a)$, i.e., $1 - \pi_{\theta_t}(a)$ is increasing and will not approach $0$. Since we prove $1 - \pi_{\theta_t}(a) \in \Omega(1/t)$, we assume the non-trivial case where  $\pi_{\theta_t}^\top r < r(a)$ for all $t \ge 1$.

According to \cref{lem:smoothness_softmax_special}, we have, 
\begin{align}
\label{eq:npg_aggressiveness_value_baseline_intermediate_5}
    \left| \pi_{\theta_{t+1}}(a) - \pi_{\theta_t}(a) - \Big\langle \frac{d \pi_{\theta_t}(a)}{d \theta_t}, \theta_{t+1} - \theta_{t} \Big\rangle \right| \le \frac{3}{4} \cdot \| \theta_{t+1} - \theta_{t} \|_2^2.
\end{align}
Therefore, we have,
\begin{align}
\label{eq:npg_aggressiveness_value_baseline_intermediate_6}
\MoveEqLeft
    \left( 1 - \pi_{\theta_t}(a) \right) - \left( 1 - \pi_{\theta_{t+1}}(a) \right) =  \pi_{\theta_{t+1}}(a) - \pi_{\theta_t}(a) - \Big\langle \frac{d \pi_{\theta_t}(a)}{d \theta_t}, \theta_{t+1} - \theta_{t} \Big\rangle + \Big\langle \frac{d \pi_{\theta_t}(a)}{d \theta_t}, \theta_{t+1} - \theta_{t} \Big\rangle \\
    &\le \frac{3}{4} \cdot \| \theta_{t+1} - \theta_{t} \|_2^2 + \Big\langle \frac{d \pi_{\theta_t}(a)}{d \theta_t}, \theta_{t+1} - \theta_{t} \Big\rangle \qquad \left( \text{by \cref{eq:npg_aggressiveness_value_baseline_intermediate_1}} \right) \\
    &= \frac{3 \cdot \eta^2}{4} \cdot \frac{ ( r(a) - \pi_{\theta_t}^\top r )^2 }{ \pi_{\theta_t}(a)^2 } + \eta \cdot \frac{d \pi_{\theta_t}(a)}{d \theta_t(a)} \cdot \frac{ r(a) - \pi_{\theta_t}^\top r }{ \pi_{\theta_t}(a) }, \qquad \left( \text{using the update} \right) \\
    &= \frac{3 \cdot \eta^2}{4} \cdot \frac{ ( r(a) - \pi_{\theta_t}^\top r )^2 }{ \pi_{\theta_t}(a)^2 } + \eta \cdot \left( 1 - \pi_{\theta_t}(a)  \right) \cdot \left( r(a) - \pi_{\theta_t}^\top r \right) \qquad \left( \frac{d \pi_{\theta_t}(a)}{d \theta_t(a)} = \pi_{\theta_t}(a) \cdot \left( 1 - \pi_{\theta_t}(a)  \right) \right) \\
    &\le \frac{3 \cdot \eta^2}{4} \cdot \frac{ ( r(a) - \pi_{\theta_t}^\top r )^2 }{ \pi_{\theta_1}(a)^2 } + \eta \cdot \left( 1 - \pi_{\theta_t}(a)  \right) \cdot \left( r(a) - \pi_{\theta_t}^\top r \right) \qquad \left( \text{by \cref{eq:npg_aggressiveness_value_baseline_intermediate_4}} \right) \\
    &\le \frac{3 \cdot \eta^2}{4} \cdot \frac{ \left( 1 - \pi_{\theta_t}(a)  \right)^2 }{ \pi_{\theta_1}(a)^2 } + \eta \cdot \left( 1 - \pi_{\theta_t}(a)  \right)^2 \\
    &= C \cdot \left( 1 - \pi_{\theta_t}(a)  \right)^2 \qquad \left( C \coloneqq \frac{3 \cdot \eta^2}{4 \cdot \pi_{\theta_1}(a)^2} + \eta \right) 
\end{align}
where the last inequality is because of,
\begin{align}
\label{eq:npg_aggressiveness_value_baseline_intermediate_7}
    r(a) - \pi_{\theta_t}^\top r &= \sum_{a^\prime \not= a}{ \pi_{\theta_t}(a^\prime) \cdot \left( r(a) - r(a^\prime) \right) } \\
    &\le 1 - \pi_{\theta_t}(a). \qquad \left( r \in (0, 1]^K \right)
\end{align}
Next, we have,
\begin{align}
\label{eq:npg_aggressiveness_value_baseline_intermediate_8}
\MoveEqLeft
    \frac{1}{ 1 - \pi_{\theta_t}(a) } = \frac{1}{ 1 - \pi_{\theta_1}(a)} + \sum_{s=1}^{t-1}{ \left[ \frac{1}{1 - \pi_{\theta_{s+1}}(a)} - \frac{1}{1 - \pi_{\theta_s}(a)} \right] } \\
    &= \frac{1}{ 1 - \pi_{\theta_1}(a)} + \sum_{s=1}^{t-1}{ \frac{1}{ \left( 1 - \pi_{\theta_{s+1}}(a) \right) \cdot \left( 1 - \pi_{\theta_{s}}(a) \right) } \cdot \left[ \left( 1 - \pi_{\theta_{s}}(a) \right) - \left( 1 - \pi_{\theta_{s+1}}(a) \right) \right] } \\
    &\le \frac{1}{ 1 - \pi_{\theta_1}(a)} + \sum_{s=1}^{t-1}{ \frac{1}{ \left( 1 - \pi_{\theta_{s+1}}(a) \right) \cdot \left( 1 - \pi_{\theta_{s}}(a) \right) } \cdot C \cdot \left( 1 - \pi_{\theta_s}(a) \right)^2 } \qquad \left( \text{by \cref{eq:npg_aggressiveness_value_baseline_intermediate_6}} \right) \\
    &\le \frac{1}{ 1 - \pi_{\theta_1}(a)} + \frac{ C }{2} \cdot (t-1),
\end{align}
which implies that, for all large enough $t \ge 1$, 
\begin{equation*}
    1 - \pi_{\theta_t}(a) \ge \frac{1}{ \frac{1}{ 1 - \pi_{\theta_1}(a)} + \frac{ C }{2} \cdot (t-1) } \\
    \in \Omega(1/t). \qedhere
\end{equation*}
\end{proof}

\section{Simulation Settings}

\subsection{One-state MDPs}

The detailed settings for simulations in \cref{fig:adversarial_initialization_uniform_initialization} are as follows. The total number of actions is $K = 20$, and after sorting rewards the true mean reward vector $r \in (0, 1)^K$ is,
\begin{align*}
    r = (&0.96990985, \ 0.95071431, \ 0.86617615, \ 0.83244264, \\
    &0.73199394, \ 0.70807258, \ 0.60111501, \ 0.59865848, \\
    &0.52475643, \ 0.43194502, \ 0.37454012, \ 0.30424224, \\
    &0.29122914, \ 0.21233911, \ 0.18340451, \ 0.18182497, \\
    &0.15601864, \ 0.15599452, \ 0.05808361, \ 0.02058449)^\top.
\end{align*}
For each $a \in [K]$, the sampled reward distribution is $\text{Bernoulli}(0.5)$, such that with probability $0.5$, one of the following two sampled reward values is observed,
\begin{align*}
    R_1 &= (-2.03009015, \ 3.96990985), \quad R_2 = (-2.04928569, \ 3.95071431), \\
    R_3 &= (-2.13382385, \ 3.86617615), \quad R_4 = (-2.16755736, \ 3.83244264), \\
    R_5 &= (-2.26800606, \ 3.73199394), \quad R_6 = (-2.29192742, \  3.70807258), \\
    R_7 &= (-2.39888499, \ 3.60111501), \quad R_8 = (-2.40134152, \ 3.59865848), \\
    R_9 &= (-2.47524357, \ 3.52475643), \quad R_{10} = (-2.56805498, \ 3.43194502), \\
    R_{11} &= (-2.62545988, \ 3.37454012), \quad R_{12} = (-2.69575776, \ 3.30424224), \\
    R_{13} &= (-2.70877086, \ 3.29122914), \quad R_{14} = (-2.78766089, \ 3.21233911), \\
    R_{15} &= (-2.81659549, \ 3.18340451), \quad R_{16} = (-2.81817503, \ 3.18182497), \\
    R_{17} &= (-2.84398136, \ 3.15601864), \quad R_{18} = (-2.84400548, \ 3.15599452), \\
    R_{19} &= (-2.94191639, \ 3.05808361), \quad R_{20} = (-2.97941551, \ 3.02058449).
\end{align*}
The initial parameter $\theta_1 \in \sR^K$ is,
\begin{align}
\label{eq:initial_theta}
    \theta(i) = \begin{cases}
		5, & \text{if } i = 2, \\
		0, & \text{otherwise},
	\end{cases}
\end{align}
such that the initial probability of best sub-optimal action is,
\begin{align}
    \pi_{\theta_1}(2) = \frac{ e^5}{ e^5 + 19 \cdot e^0 } \approx 0.8865,
\end{align}
and all the other action's probability, including the optimal action, is
\begin{align}
    \pi_{\theta_1}(1) = \frac{ e^0 }{ e^5 + 19 \cdot e^0 } \approx 0.0060.
\end{align}
We run \cref{update_rule:softmax_natural_pg_special_on_policy_stochastic_gradient_value_baseline} with learning rate,
\begin{align}
    \eta = \frac{1}{2} \cdot \frac{\pi_{\theta_t}(a_t) \cdot \left| r(a_t) - \pi_{\theta_t}^\top r \right|}{9},
\end{align}
and the results are shown in \cref{fig:adversarial_initialization_expected_reward,fig:adversarial_initialization_optimal_action_probability}.

For the results in \cref{fig:uniform_initialization_sub_optimality_gap}, \cref{def:simplified_on_policy_importance_sampling} is used, i.e., the true mean reward value $r(a_t)$ is observed for sampled action $a_t$, and we run the same update \cref{update_rule:softmax_natural_pg_special_on_policy_stochastic_gradient_value_baseline} using the same true mean reward vector $r \in (0,1)^K$ with learning rate $\eta = 0.1$ and uniform initial policy $\pi_{\theta_1}(a) = 1 / K$ for all $a \in [K]$.

\subsection{Tree MDPs}

We conduct experiments using a synthetic tree MDP with depth $d = 4$ and branch factor (number of actions) $k = 4$. The total number of states is
\begin{align}
    S = \sum_{i= 0}^{d-1}{ k^i } = \sum_{i= 0}^{3}{ 4^i } = 85.
\end{align}
The discount factor $\gamma = 0.9$. For each state $s \in \gS$, the immediate reward vector is,
\begin{align}
    r(s, \cdot) \coloneqq \left( 1.0, 0.9, 0.8, 0.2 \right)^\top.
\end{align}
The state distribution $\rho$ we used to measure the sub-optimality gap $V^*(\rho) - V^{\pi_{\theta_t}}(\rho)$ is $\rho(s_0) = 1$ for the root state $s_0$. The initial state distribution $\mu$ we used in the algorithm is set to satisfy \cref{assump:pos_init} as follows,
\begin{align}
    \mu = 0.2 \cdot \rho + \frac{0.8}{S - 1} \cdot \left( 1 - \rho \right),
\end{align}
i.e., $\mu(s_0) = 0.2$ and $\mu(s^\prime) = \frac{0.8}{84}$ for any other state $s^\prime \not= s_0$. We use an adversarial initialization, such that optimal actions have smallest initial probabilities, i.e., for all $s \in \gS$,
\begin{align}
    \pi_{\theta_1}(a^*(s) | s) = 0.07,
\end{align}
and $\pi_{\theta_1}(a^\prime | s) = 0.31$ for any sub-optimal action $a^\prime \not= a^*(s)$, where the optimal action $a^*(s)$ and policy $\pi^*$ are calculated using dynamic programming.
\begin{figure*}[ht]
\centering
\begin{subfigure}[b]{.325\linewidth}
\includegraphics[width=\linewidth]{./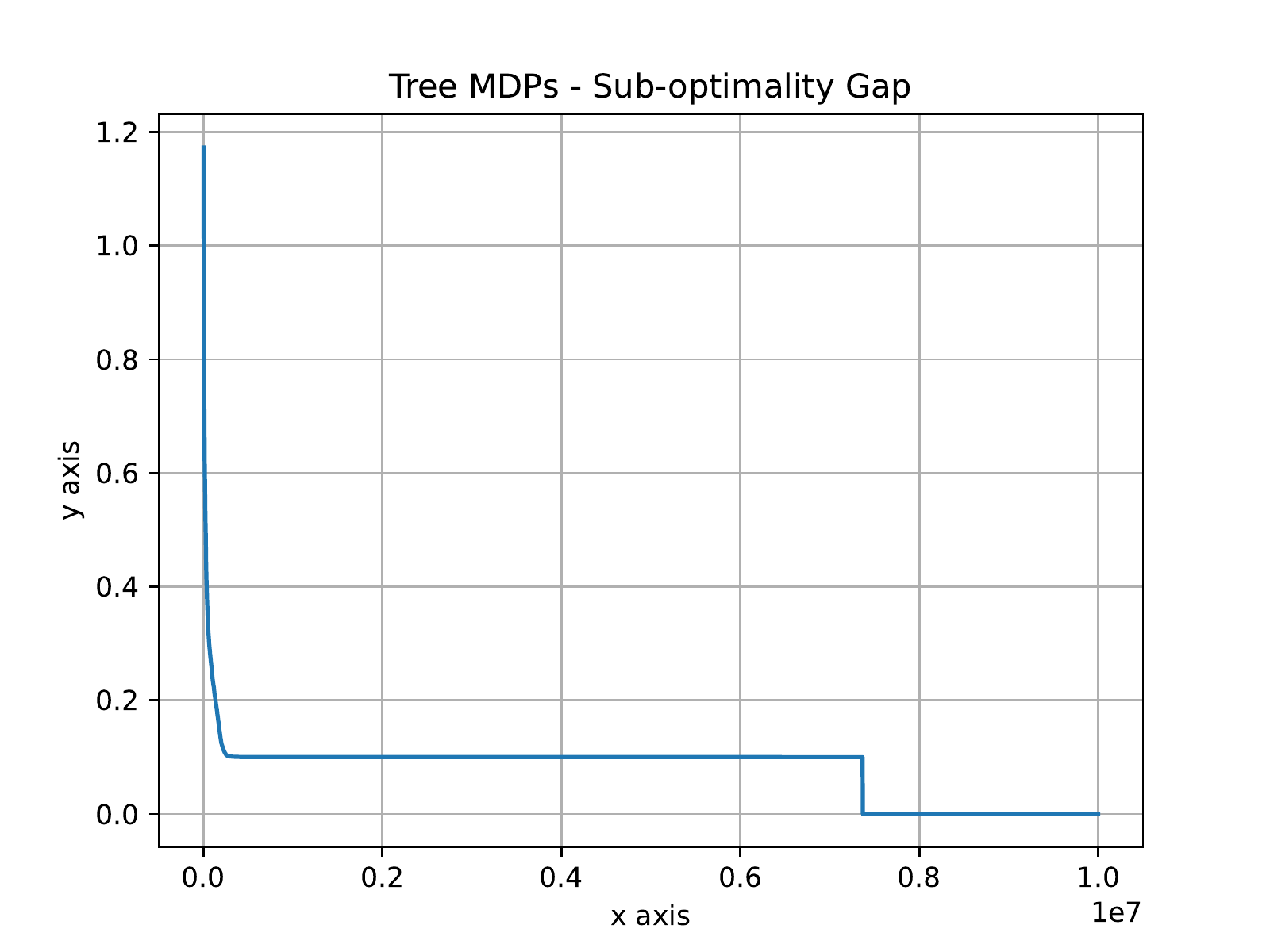}
\caption{$V^*(\rho) - V^{\pi_{\theta_t}}(\rho)$.}\label{fig:adversarial_initialization_tree_mdp_suboptimality_gap}
\end{subfigure}
\begin{subfigure}[b]{.325\linewidth}
\includegraphics[width=\linewidth]{./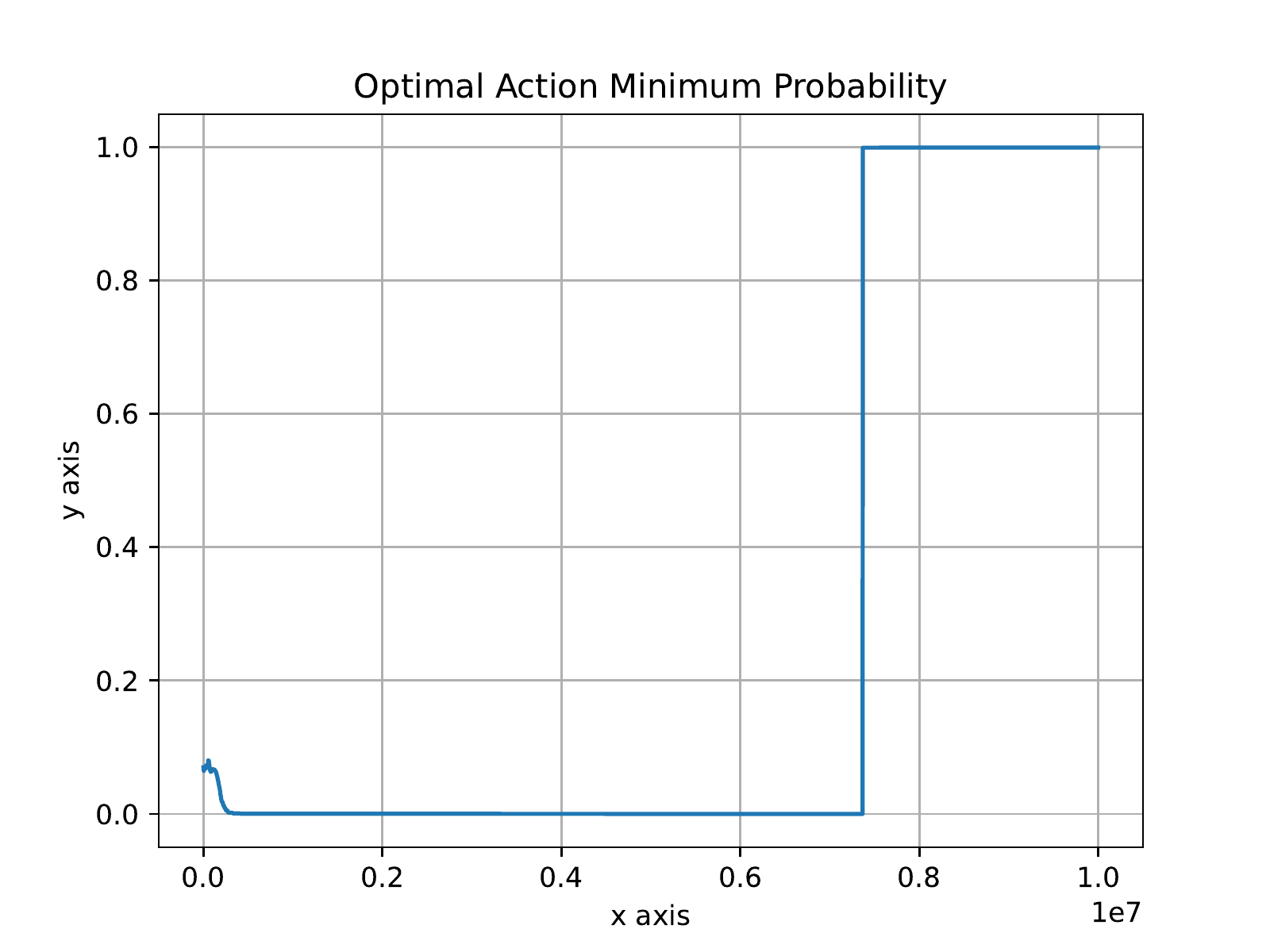}
\caption{$\min_{s \in \gS} \pi_{\theta_t}(a^*(s) | s)$.}\label{fig:adversarial_initialization_tree_mdp_min_optimal_action_probability}
\end{subfigure}
\begin{subfigure}[b]{.325\linewidth}
\includegraphics[width=\linewidth]{./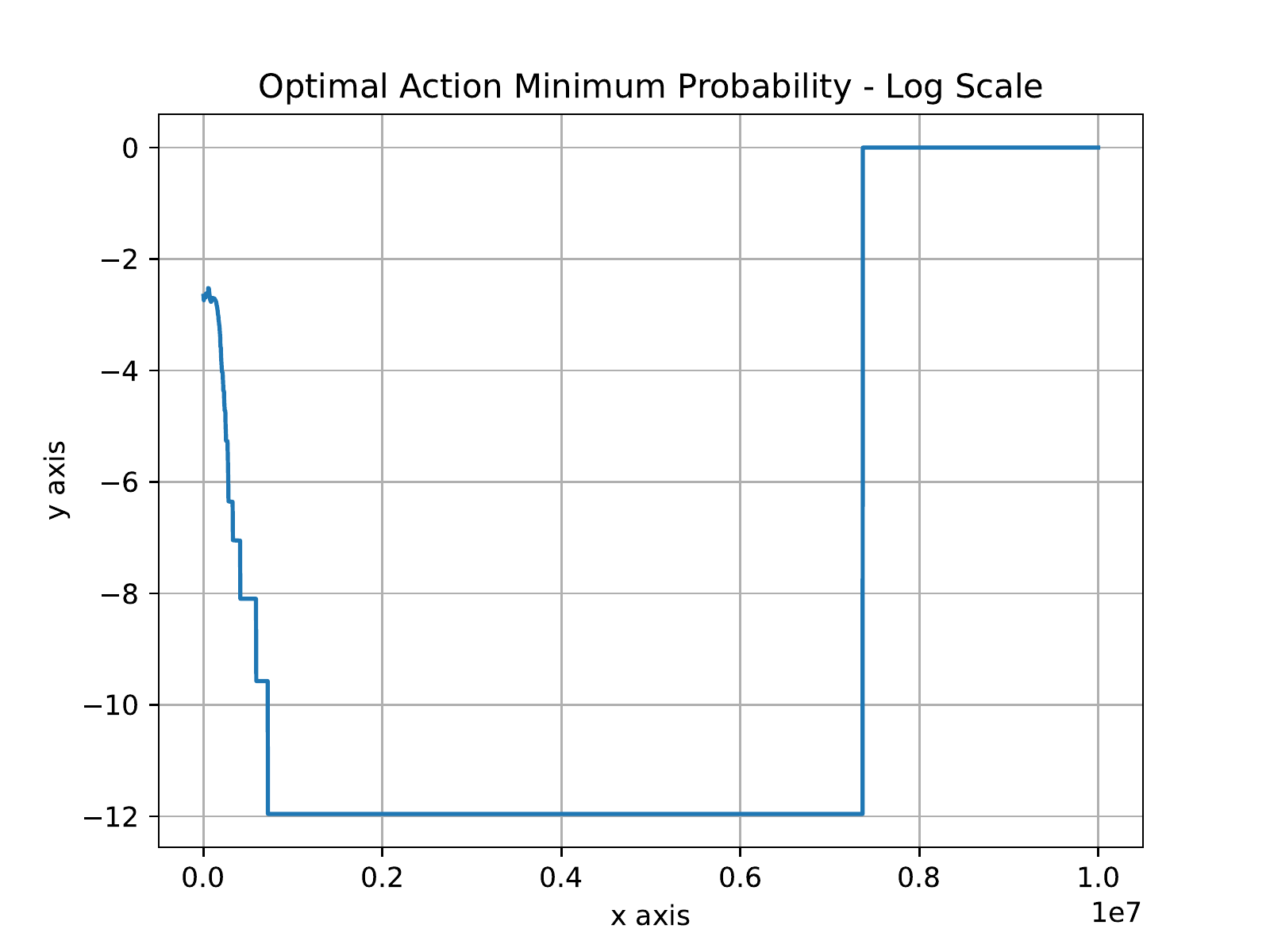}
\caption{$\log{ \min_{s \in \gS} \pi_{\theta_t}(a^*(s) | s)}$.}\label{fig:adversarial_initialization_tree_mdp_min_optimal_action_probability_log_scale}
\end{subfigure}
\caption{Results on a tree MDP, adversarial initialization.}
\label{fig:adversarial_initialization_tree_mdp}
\vspace{-10pt}
\end{figure*}

As shown in \cref{fig:adversarial_initialization_tree_mdp}, the sub-optimality gap $V^*(\rho) - V^{\pi_{\theta_t}}(\rho)$ quickly approached about $0.1$ value, while the optimal action's minimum probability $\min_{s \in \gS} \pi_{\theta_t}(a^*(s) | s)$ approaching very close to $0$. The algorithm got stuck on the sub-optimality plateau and finally escaped and approached the global optimal policy $\pi^*$ after about $7 \times 10^6$ iterations.
\begin{figure*}[ht]
\centering
\begin{subfigure}[b]{.245\linewidth}
\includegraphics[width=\linewidth]{./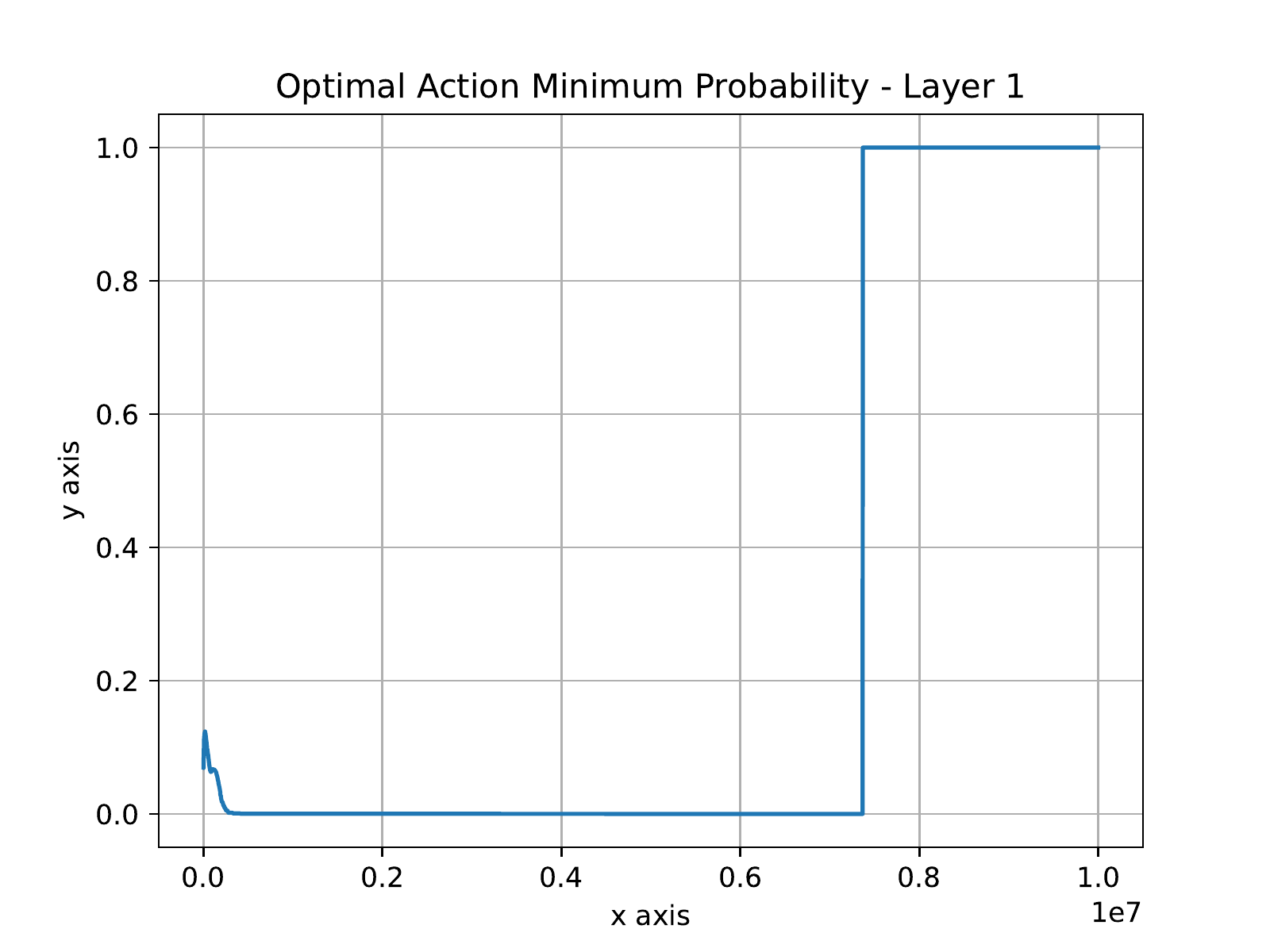}
\caption{$\pi_{\theta_t}(a^*(s_0) | s_0)$.}\label{fig:adversarial_initialization_tree_mdp_min_optimal_action_probability_layer_1}
\end{subfigure}
\begin{subfigure}[b]{.245\linewidth}
\includegraphics[width=\linewidth]{./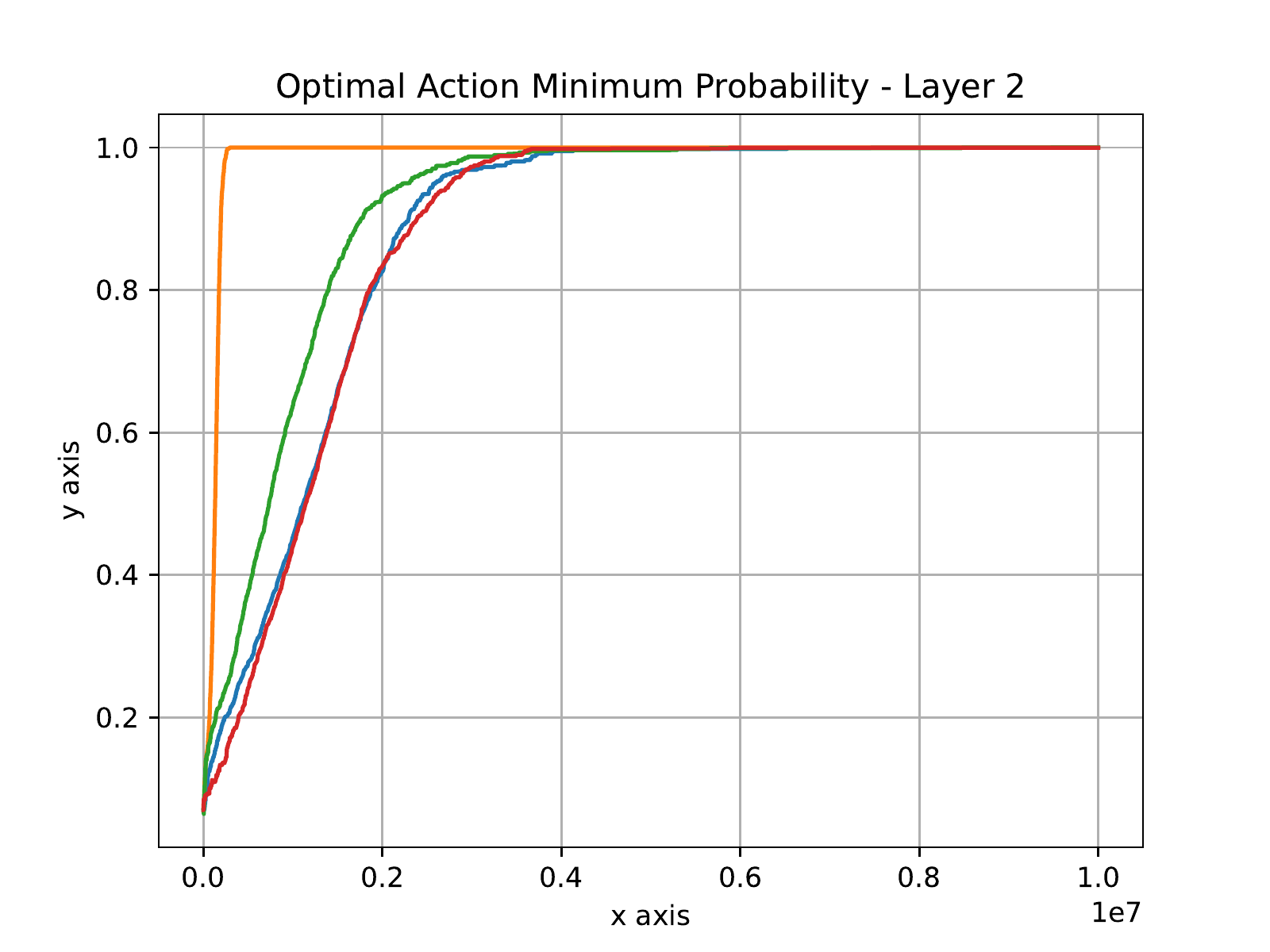}
\caption{Layer 2.}\label{fig:adversarial_initialization_tree_mdp_min_optimal_action_probability_layer_2}
\end{subfigure}
\begin{subfigure}[b]{.245\linewidth}
\includegraphics[width=\linewidth]{./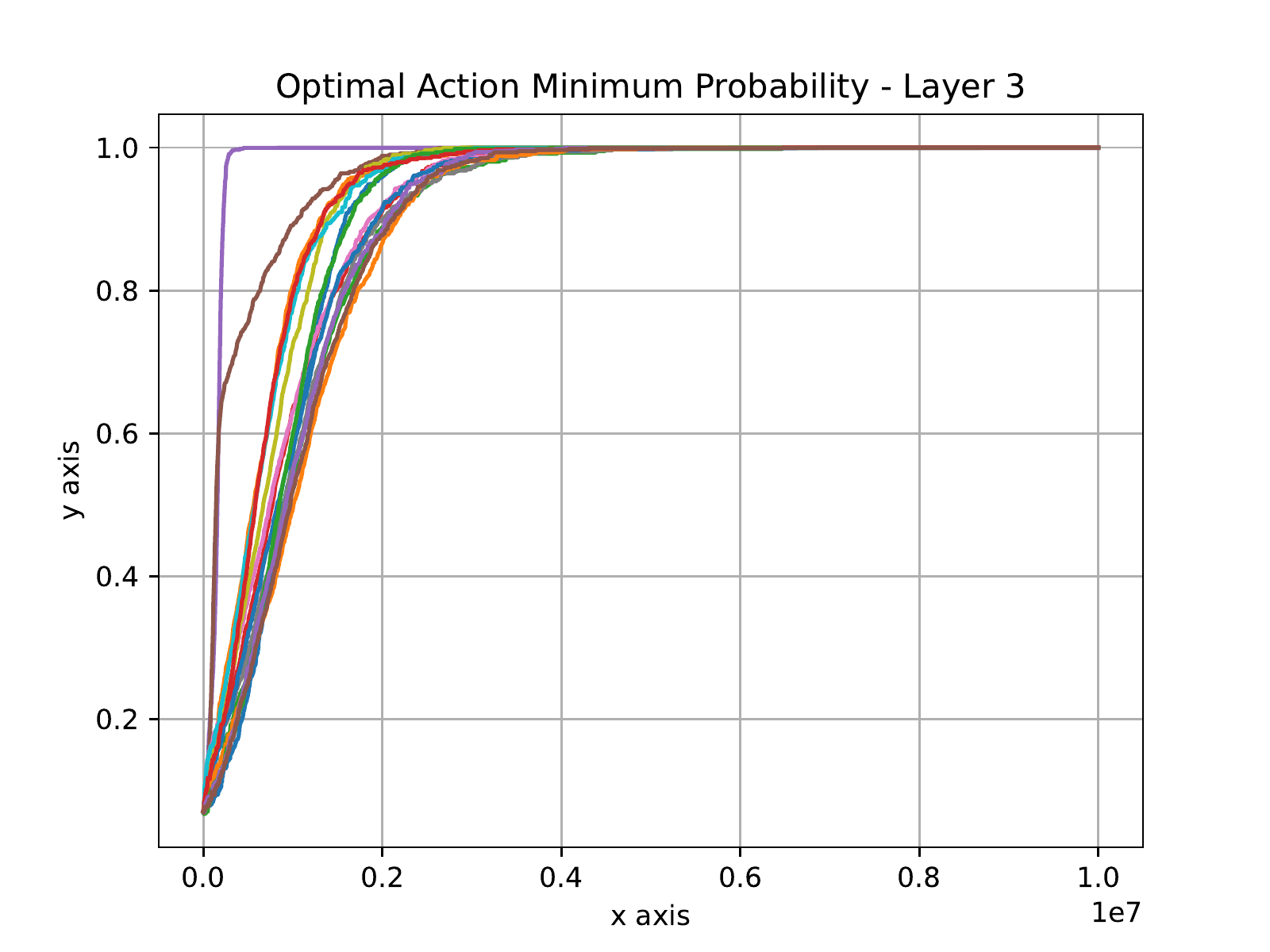}
\caption{Layer 3.}\label{fig:adversarial_initialization_tree_mdp_min_optimal_action_probability_layer_3}
\end{subfigure}
\begin{subfigure}[b]{.245\linewidth}
\includegraphics[width=\linewidth]{./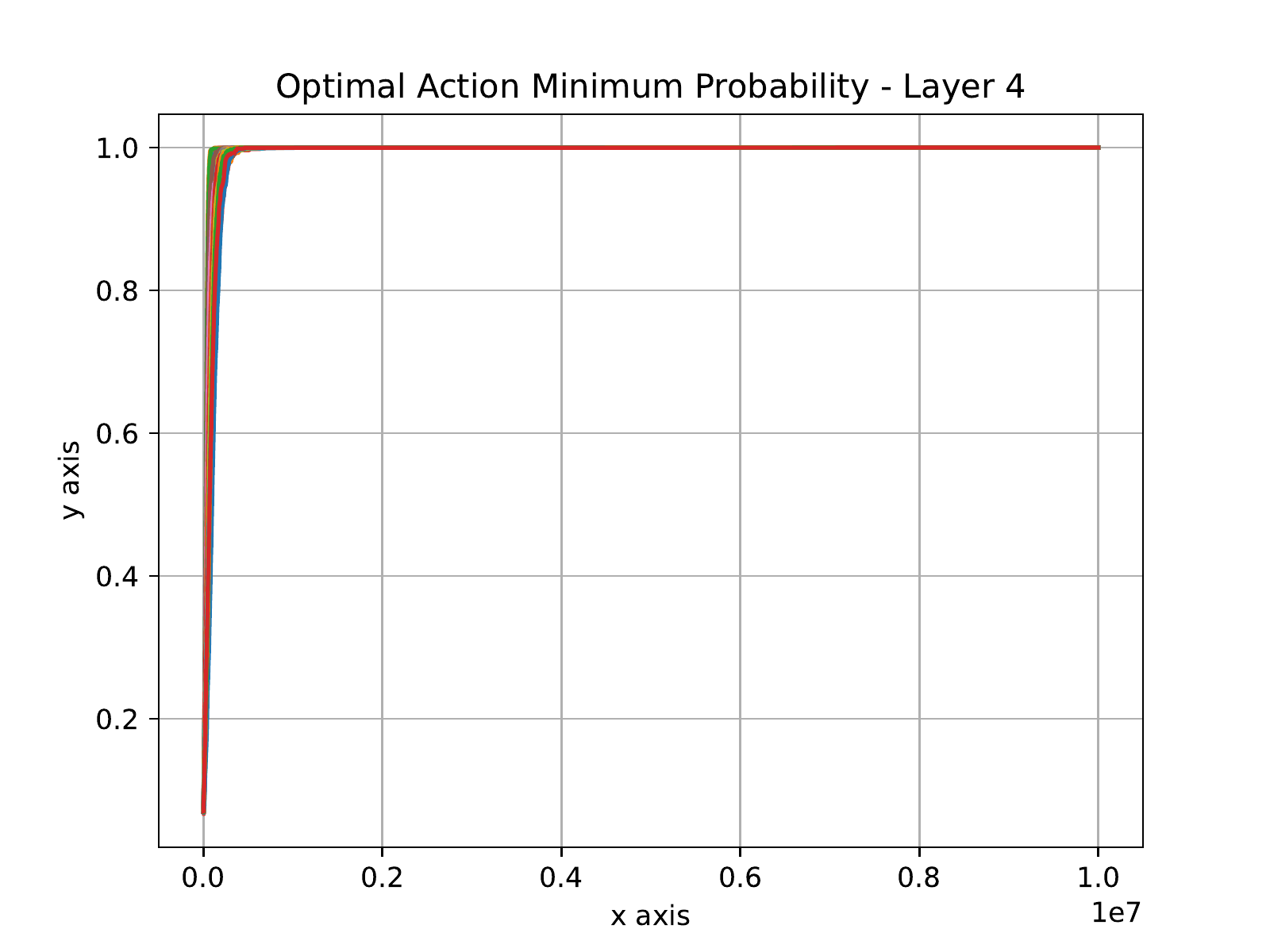}
\caption{Layer 4.}\label{fig:adversarial_initialization_tree_mdp_min_optimal_action_probability_layer_4}
\end{subfigure}
\caption{Optimal actions' probabilities for different layers of states.}
\label{fig:adversarial_initialization_tree_mdp_min_prob_optimal_action_layers}
\vspace{-10pt}
\end{figure*}

\cref{fig:adversarial_initialization_tree_mdp_min_prob_optimal_action_layers} demonstrates a more detailed process of the optimization. Note that the tree MDP has four layers of states, with state numbers $S_1 = 1$ (root state), $S_2 = k = 4$, $S_3 = k^2 = 16$, and $S_4 = k^3 = 64$, respectively. We calculated the optimal actions' probabilities for each layers of states. For example, \cref{fig:adversarial_initialization_tree_mdp_min_prob_optimal_action_layers}(b) shows $ \pi_{\theta_t}(a^*(s) | s)$ for all state $s$ in Layer 2.

As shown in \cref{fig:adversarial_initialization_tree_mdp_min_prob_optimal_action_layers}, $ \pi_{\theta_t}(a^*(s) | s)$ for states in Layer 4 approaches to $1$ most quickly comparing to other layers of states. However, it took $ \pi_{\theta_t}(a^*(s) | s)$ for Layers 2 and 3 several millions of iterations to approach $1$, and in the meanwhile $ \pi_{\theta_t}(a^*(s_0) | s_0)$ decreased to near zero values. Therefore, $a^*(s_0)$ would have very small chance to be sampled and learned using on-policy sampling, which created the sub-optimality plateau for about $7 \times 10^6$ iterations.

\section{Miscellaneous Extra Supporting Results}
\label{sec:supporting_results}
Recall that 
$(X_t,\cF_t)_{t\ge 1}$ is a \emph{sub-martingale} (super-martingale, martingale) if $(X_t)_{t\ge 1}$ is adapted to the filtration $(\cF_t)_{t\ge 1}$ 
and $\EE{X_{t+1}|\cF_t} \ge X_t$
($\EE{X_{t+1}|\cF_t} \le X_t$, $\EE{X_{t+1}|\cF_t} = X_t$, respectively)
 holds almost surely for any $t\ge 1$.
For brevity, let $\EEt{\cdot}$ denote $\EE{\cdot|\cF_t}$ where the filtration should be clear from the context and we also extend this notation to $t=0$ such that $\chE_0{U} = \EE{U}$.

\begin{theorem}[Theorem 13.3.2 of \citep{AthreyaLahiri2006}]
\label{thm:doob2}
Let $(X_t,\cF_t)_{t\ge 1}$ be a sub-martingale such that $\sup_{n\ge 1} \EE{ X_n^+ }<\infty$.
Then $(X_t)_{t\ge 1}$ converges to a finite limit $X_\infty$ a.s. and $\EE{|X_\infty|}<\infty$.
\end{theorem}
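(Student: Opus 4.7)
The plan is to prove this classical theorem via Doob's upcrossing inequality. For real numbers $a<b$ and integer $n\ge 1$, define $U_n(a,b)$ to be the number of upcrossings of the interval $[a,b]$ by the finite sequence $X_1,\dots,X_n$. The main technical step is to establish the inequality
\begin{align*}
(b-a)\,\EE{U_n(a,b)} \le \EE{(X_n-a)^+} - \EE{(X_1-a)^+}.
\end{align*}
This is proved by constructing a predictable $\{0,1\}$-valued process $H_t$ (the ``gambling strategy'') that turns on whenever $X_{t-1}$ first drops below $a$ and turns off as soon as $X_t$ exceeds $b$. Since $H_t$ is predictable and bounded, and $(X_t - a)^+$ is a sub-martingale (composition of $X_t-a$ with the convex increasing function $x\mapsto x^+$), the transform $\sum_{s\le n} H_s\bigl((X_s-a)^+-(X_{s-1}-a)^+\bigr)$ has non-negative expectation. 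By construction it is bounded below by $(b-a)U_n(a,b)$ up to a term dominated by $(X_n-a)^+$, yielding the inequality.

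Given the upcrossing bound, the conclusion follows in two steps. First, since $(X_n-a)^+\le X_n^+ + |a|$, the hypothesis $\sup_n \EE{X_n^+}<\infty$ gives $\sup_n \EE{U_n(a,b)}<\infty$. Monotone convergence then implies $\EE{U_\infty(a,b)}<\infty$, so $U_\infty(a,b)<\infty$ almost surely. Taking a countable union over all rational pairs $a<b$ shows that almost surely $(X_t)$ has only finitely many upcrossings of every rational interval, and hence $\liminf_{t\to\infty} X_t = \limsup_{t\to\infty} X_t$ a.s. Therefore $X_\infty := \lim_{t\to\infty} X_t$ exists in $[-\infty,+\infty]$ almost surely.

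It remains to verify that $\EE{|X_\infty|}<\infty$ (which in particular rules out $X_\infty=\pm\infty$ on a set of positive probability). The submartingale property gives $\EE{X_n}\ge\EE{X_1}$, hence $\EE{X_n^-} = \EE{X_n^+}-\EE{X_n} \le \EE{X_n^+} - \EE{X_1}$, so
\begin{align*}
\sup_n \EE{|X_n|} = \sup_n \bigl(\EE{X_n^+}+\EE{X_n^-}\bigr) \le 2\sup_n \EE{X_n^+} - \EE{X_1} < \infty.
\end{align*}
Applying Fatou's lemma to the non-negative sequence $|X_n|$ along the almost sure convergent subsequence yields $\EE{|X_\infty|}\le \liminf_n \EE{|X_n|}<\infty$, which finishes the proof. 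The main obstacle is the upcrossing inequality itself; the rest is bookkeeping with $L^1$-norms and Fatou. No part of the argument depends on the specific MDP structure of the paper, which is why the theorem is quoted rather than re-derived in the body.
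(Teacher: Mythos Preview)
Your proof sketch is the standard, correct argument via Doob's upcrossing inequality, and you correctly anticipate that the paper does not reprove this result: the theorem is simply cited from \citep{AthreyaLahiri2006} with no proof given in the paper. So there is nothing to compare against; your proposal supplies exactly the classical proof that the reference would contain.
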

\cref{thm:doob2} implies the following \cref{thm:smc}.
\begin{theorem}[Doob's supermartingale convergence theorem \citep{doob2012measure}]
\label{thm:smc}
If $(Y_t)_{t\ge 1}$ is an $\{ \cF_t \}_{t\ge 1}$-adapted sequence such that $\EE{Y_{t+1}|\cF_t}\le Y_t$ and
$\sup_t \EE{ |Y_t| }<\infty$ then $\{Y_t\}_{t\ge 1}$ almost surely converges (a.s.) and, in particular, $Y_t \to Y$ a.s. as $t\to\infty$ where $Y=\limsup_{t\to\infty}Y_t$ is such that $\EE{|Y|}<\infty$.
\end{theorem}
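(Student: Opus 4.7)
The plan is to reduce Doob's supermartingale convergence theorem to the submartingale convergence theorem already stated as \cref{thm:doob2}, via the standard trick of passing to $-Y_t$. Concretely, I would set $X_t \coloneqq -Y_t$ and observe that, since $(Y_t)_{t\ge 1}$ is $(\cF_t)$-adapted, so is $(X_t)_{t\ge 1}$; and since $\EE{Y_{t+1}\mid \cF_t}\le Y_t$ almost surely, we have $\EE{X_{t+1}\mid \cF_t}\ge X_t$ almost surely, so that $(X_t,\cF_t)_{t\ge 1}$ is a submartingale.

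Next I would verify the integrability hypothesis required by \cref{thm:doob2}, namely $\sup_{n\ge 1}\EE{X_n^+}<\infty$. This is immediate from the pointwise bound $X_n^+ = \max(-Y_n,0)\le |Y_n|$, which yields
\begin{align*}
\sup_{n\ge 1}\EE{X_n^+}\;\le\;\sup_{n\ge 1}\EE{|Y_n|}\;<\;\infty,
\end{align*}
where the finiteness is exactly the hypothesis of the theorem. Applying \cref{thm:doob2} to $(X_t,\cF_t)$ then gives an almost sure finite limit $X_\infty$ with $\EE{|X_\infty|}<\infty$.

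Setting $Y\coloneqq -X_\infty$, I would conclude that $Y_t\to Y$ almost surely as $t\to\infty$, and $\EE{|Y|}=\EE{|X_\infty|}<\infty$. Because the almost sure limit agrees with the pointwise $\limsup$ on the full-measure event on which convergence holds, I can identify $Y$ with $\limsup_{t\to\infty}Y_t$ (taking any fixed value, e.g.\ $\limsup$, on the null exceptional set), matching the statement. There is no real obstacle here: the entire content of the proof is the sign flip and the elementary bound $X_n^+\le |Y_n|$; the heavy lifting is done by the already-stated \cref{thm:doob2}.
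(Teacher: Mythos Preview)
Your proposal is correct and matches the paper's approach: the paper simply states that \cref{thm:doob2} implies \cref{thm:smc} without writing out details, and your sign-flip argument $X_t\coloneqq -Y_t$ together with the bound $X_n^+\le |Y_n|$ is exactly the standard way to supply those details.
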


\begin{lemma}
\label{lem:submnoiseconv}
Let $(X_t,\cF_t)_{t\ge 1}$ be a sub-martingale such that $\sup_{n\ge 1} \EE{X_n^+}<\infty$.
Let 
$Z_n= \sum_{t=0}^{n-1} X_{t+1} - \EEt{X_{t+1}}$ 
and
assume that for any $n$, $\EE{|Z_n|}<\infty$.
Then, 
 $X_{t+1}-\EEt{X_{t+1}} \to 0$ almost surely as $t\to\infty$.
\end{lemma}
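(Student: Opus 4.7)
The idea is to use Doob's decomposition to express $Z_n$ as the difference of two almost-surely convergent sequences, which forces the martingale-difference increments $X_{t+1}-\chE_t X_{t+1}$ to vanish. With the convention $\chE_0 = \EE{\cdot}$, write the Doob decomposition $X_n - X_1 = M_n + A_n$, where
\[
M_n = \sum_{t=1}^{n-1}(X_{t+1}-\chE_t X_{t+1}), \qquad A_n = \sum_{t=1}^{n-1}(\chE_t X_{t+1}-X_t),
\]
with $M_1=A_1=0$. Here $M_n$ is a zero-mean martingale and $A_n$ is adapted, nonnegative, and nondecreasing by the submartingale property. Folding the $t=0$ summand of $Z_n$ into the leading term gives $Z_n = (X_1 - \EE{X_1}) + M_n$, and eliminating $M_n$ via $M_n = X_n-X_1-A_n$ yields the key identity
\[
Z_n \;=\; X_n \;-\; \EE{X_1} \;-\; A_n\,.
\]

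First I would apply Theorem \ref{thm:doob2} to $(X_n)$, which, under the standing hypothesis $\sup_n \EE{X_n^+}<\infty$, supplies an a.s.\ limit $X_\infty$ with $\EE{|X_\infty|}<\infty$. Next I would establish the uniform bound $\sup_n \EE{A_n}<\infty$: taking expectations in the Doob decomposition gives $\EE{A_n} = \EE{X_n}-\EE{X_1}$, which is nonnegative because $(X_n)$ is a submartingale, and bounded above because $\EE{X_n} \le \EE{X_n^+} \le \sup_k \EE{X_k^+}<\infty$ while $\EE{X_1}$ is finite by integrability of the submartingale. Monotone convergence then supplies an a.s.\ finite limit $A_\infty$ with $\EE{A_\infty}<\infty$.

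Combining these two convergences in the identity $Z_n = X_n - \EE{X_1} - A_n$, we obtain $Z_n \to Z_\infty := X_\infty - \EE{X_1} - A_\infty$ almost surely, with $Z_\infty$ finite a.s. The conclusion now follows from the trivial fact that the increments of any almost-surely convergent sequence vanish:
\[
X_{t+1} - \chE_t X_{t+1} \;=\; Z_{t+1}-Z_t \;\to\; 0 \quad\text{a.s.\ as } t\to\infty.
\]

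The only real obstacle is the uniform bound $\sup_n \EE{A_n}<\infty$: Doob's submartingale convergence alone supplies only the limit $X_\infty$, and a priori the two pieces $M_n$ and $A_n$ of the decomposition could both oscillate or grow unboundedly while their sum $X_n-X_1$ stabilizes. The hypothesis $\sup_n \EE{X_n^+}<\infty$ is precisely what rules this out, via $\EE{A_n} \le \sup_k \EE{X_k^+}-\EE{X_1}$, and is what actually drives the almost sure convergence of the martingale $Z_n$.
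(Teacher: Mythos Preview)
Your proof is correct, and it rests on the same identity the paper derives: the paper's telescoping rearrangement
\[
Z_n = X_n + \sum_{t=1}^{n-1}\bigl(X_t - \chE_t[X_{t+1}]\bigr) - \EE{X_1}
\]
is exactly $Z_n = X_n - \EE{X_1} - A_n$ in your Doob-decomposition notation. The only difference is how convergence of $(Z_n)$ is then extracted. The paper uses $A_n\ge 0$ to bound $Z_n \le X_n - \EE{X_1}$, transfers the hypothesis $\sup_n \EE{X_n^+}<\infty$ to $\sup_n \EE{Z_n^+}<\infty$, and applies Theorem~\ref{thm:doob2} directly to the martingale $(Z_n)$; this is where the assumption $\EE{|Z_n|}<\infty$ is invoked. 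You instead apply Theorem~\ref{thm:doob2} to $(X_n)$ and monotone convergence to $(A_n)$ separately, then combine. A small bonus of your route is that it never actually uses the integrability hypothesis on $Z_n$ (which is in fact redundant, since $|Z_n|\le |X_n|+|\EE{X_1}|+A_n$ is already integrable).
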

\begin{proof}
By construction, and the assumption that $\EE{|Z_n|}<\infty$, $(Z_n, \cF_n)_{n \ge 1}$ is a martingale
and as such, it is also a sub-martingale.
Further, for any $n\ge 1$, 
\begin{align*}
Z_n 
& = (X_n - \chE_{n-1}[X_n]) + (X_{n-1} - \chE_{n-2}[X_{n-1}]) + \dots + (X_1 - \chE_0[X_1]) \\
& = X_n  + (X_{n-1} - \chE_{n-1}[X_n]) + (X_{n-2} - \chE_{n-2}[X_{n-1}]) + \dots + (X_1 - \chE_1[X_2])- \chE_0[X_1] \\
& \le X_n - \chE_0[X_1]\,.
\end{align*}
Hence, $Z_n^+ \le (X_n - \chE_0[X_1])^+
 \le 
 (X_n + |\chE_0[X_1]|)^+
 \le
 X_n^+ + \chE[|X_1|]$, and hence 
$\sup_{n\ge 1} \EE{Z_n^+}
\le \sup_{n\ge 1} \EE{X_n^+} + \EE{|X_1|}<\infty$.
Applying \cref{thm:doob2} to $(Z_n,\cF_n)_{n\ge 1}$, we get that 
there exist a random variable $Z_\infty$ such that $\EE{|Z_\infty|}<\infty$ and $Z_n \to Z_\infty$ almost surely as $n\to \infty$.
On the set where $(Z_n)_{n\ge 1}$ converges to $Z_\infty$, $(Z_n)_{n\ge 1}$ is a Cauchy sequence,
and it follows that  $|X_{n+1}-\chE_n{X_{n+1}}|=|Z_{n+1}-Z_n|\to 0$, finishing the proof.
\end{proof}

\begin{corollary}
\label{cor:submnoiseconv}
Let $(X_t,\cF_t)_{t\ge 1}$ be a sub-martingale such that $X_n\in [a,b]$ almost surely for some reals $a<b$.
Let $Z_n= \sum_{t=0}^{n-1} X_{t+1} - \EEt{X_{t+1}}$ and
assume that for any $n$, $\EE{|Z_n|}<\infty$.
Then, 
 $X_{t+1}-\EEt{X_{t+1}} \to 0$ almost surely as $t\to\infty$.
\end{corollary}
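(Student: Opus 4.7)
The proposal is that this corollary follows immediately by checking that the hypotheses of the preceding \cref{lem:submnoiseconv} are all satisfied under the stronger boundedness assumption $X_n\in[a,b]$ a.s., so essentially no new work is required beyond a short verification.

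The plan is as follows. First, I would invoke \cref{lem:submnoiseconv}, which gives the conclusion $X_{t+1}-\EEt{X_{t+1}}\to 0$ a.s.\ whenever $(X_t,\cF_t)_{t\ge1}$ is a sub-martingale with $\sup_{n\ge1}\EE{X_n^+}<\infty$ and $\EE{|Z_n|}<\infty$ for every $n$. The sub-martingale property and the integrability of $Z_n$ are assumed in the statement of the corollary, so the only thing to verify is the uniform bound on $\EE{X_n^+}$. But since $X_n\in[a,b]$ almost surely, we have $X_n^+\le\max(0,b)\le|b|$ a.s., and hence $\sup_{n\ge1}\EE{X_n^+}\le|b|<\infty$. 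Substituting into \cref{lem:submnoiseconv} yields the claim.

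There is essentially no obstacle here; the corollary is a direct specialization of \cref{lem:submnoiseconv}. If one wanted to be fully self-contained one could also note that the almost-sure bound $X_n\in[a,b]$ in fact makes $\EE{|Z_n|}<\infty$ automatic (each summand $X_{t+1}-\EEt{X_{t+1}}$ is bounded in absolute value by $b-a$, so $|Z_n|\le n(b-a)$), which means the second hypothesis of \cref{lem:submnoiseconv} could be dropped from the corollary's statement; but given the way the corollary is phrased, the verification above suffices.
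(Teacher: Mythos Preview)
Your proposal is correct and matches the paper's approach: both invoke \cref{lem:submnoiseconv} and verify its hypotheses, with $\sup_{n\ge 1}\EE{X_n^+}\le b^+<\infty$ following from $X_n\in[a,b]$. The paper also explicitly checks that $|Z_n|\le n(b-a)$ (exactly the observation you make in your final paragraph), whereas you simply take $\EE{|Z_n|}<\infty$ as given from the statement; either way the verification is immediate.
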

\begin{proof}
We use \cref{lem:submnoiseconv}, hence we need to verify that the conditions of this result hold.
Clearly, $\sup_{n\ge 1} \EE{X_n^+} \le b^+<\infty$.
Next, we have for any $n\ge 1$ that
$|Z_n|\le \sum_{t=0}^{n-1} | X_{t+1} - \EEt{X_{t+1}} |
\le n (b-a)<\infty$ since $\EEt{X_{t+1}}\in [a,b]$ also holds when $X_{t+1}\in [a,b]$.
\end{proof}

\begin{lemma}[Extended Borel-Cantelli Lemma, Corollary 5.29 of \citep{breiman1992probability}]
\label{lem:ebc}
Let $( \cF_n)_{n \ge 1}$ be a filtration, $A_n \in \cF_n$.
Then, almost surely, 
\begin{align*}
\{ \omega \,: \, \omega \in A_n \text{ infinitely often } \} = \left\{ \omega \, : \, 
\sum_{n=1}^\infty \PP(A_n|\cF_n) \right\}\,.
\end{align*}
\end{lemma}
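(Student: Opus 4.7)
The excerpt's statement contains typographical errors; read literally, the conditioning $\PP(A_n\,|\,\cF_n)$ collapses to $\sI_{A_n}$ (since $A_n\in\cF_n$), and the right-hand side is missing the condition ``$=\infty$''. I interpret the intended claim as the standard conditional (second) Borel--Cantelli lemma: for an adapted sequence with $A_n\in\cF_n$,
\begin{equation*}
\{A_n \text{ i.o.}\} = \Big\{\sum_{n\ge 1}\PP(A_n|\cF_{n-1})=\infty\Big\} \quad \text{almost surely.}
\end{equation*}
Set $S_n=\sum_{k=1}^n \sI_{A_k}$ and $T_n=\sum_{k=1}^n\PP(A_k|\cF_{k-1})$; since $\{A_n \text{ i.o.}\}=\{S_\infty=\infty\}$, the task reduces to proving $\{S_\infty=\infty\}=\{T_\infty=\infty\}$ almost surely. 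The plan is to use the centered martingale $M_n = S_n - T_n$, whose differences $D_k = \sI_{A_k}-\PP(A_k|\cF_{k-1})$ are bounded in $[-1,1]$, and to show that the fluctuations of $M_n$ are negligible relative to $T_n$ on both sides.

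For the forward direction ($T_\infty<\infty \Rightarrow S_\infty<\infty$), I would compute the predictable quadratic variation
\begin{equation*}
\langle M\rangle_n = \sum_{k=1}^n \PP(A_k|\cF_{k-1})\bigl(1-\PP(A_k|\cF_{k-1})\bigr) \le T_n,
\end{equation*}
so on $\{T_\infty<\infty\}$ one has $\langle M\rangle_\infty<\infty$. By the martingale convergence theorem for $L^2$-bounded martingales (after localization at $\tau_m=\inf\{n:\langle M\rangle_{n+1}>m\}$), $M_n$ converges almost surely on that event; combined with the convergence of $T_n$, this forces $S_n=M_n+T_n$ to converge, hence $S_\infty<\infty$ almost surely on $\{T_\infty<\infty\}$.

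For the reverse direction, I would show $M_n/(1+T_n)\to 0$ almost surely on $\{T_\infty=\infty\}$, which implies $S_n/T_n\to 1$ there and hence $S_\infty=\infty$. Introduce the weighted martingale $\tilde M_n=\sum_{k=1}^n D_k/(1+T_k)$, which is well-defined because $T_k$ is $\cF_{k-1}$-measurable. Its predictable quadratic variation satisfies
\begin{equation*}
\langle \tilde M\rangle_\infty \;\le\; \sum_{k\ge 1}\frac{T_k-T_{k-1}}{(1+T_k)^2} \;\le\; \int_0^\infty\frac{dt}{(1+t)^2}=1,
\end{equation*}
using $T_k-T_{k-1}=\PP(A_k|\cF_{k-1})\in[0,1]$ and the monotonicity of $t\mapsto(1+t)^{-2}$. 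Hence $\tilde M_n$ converges almost surely, and Kronecker's lemma applied pathwise on $\{T_\infty=\infty\}$ yields $M_n/(1+T_n)\to 0$. The main obstacle is precisely this reverse direction, and within it the telescoping/integral bound on $\langle\tilde M\rangle_\infty$: it hinges on the predictability and monotonicity of $T_n$ together with the per-step bound $T_k-T_{k-1}\le 1$. Once those ingredients are secured, Kronecker's lemma closes the argument cleanly.
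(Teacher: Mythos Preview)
Your reading of the statement is correct: as written it contains two evident typos (the conditioning should be on $\cF_{n-1}$, and the right-hand event should carry ``$=\infty$''), and your interpretation matches how the paper actually uses the lemma (e.g., equating $\{\sum_t \pi_{\theta_t}(a^+)=\infty\}$ with $\{N_\infty(a^+)=\infty\}$).

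The paper does not supply its own proof of this lemma; it simply cites Corollary~5.29 of Breiman's \emph{Probability}. Hence there is no in-paper argument to compare against. Your proposal is a correct and standard proof of the conditional Borel--Cantelli lemma via the martingale $M_n=S_n-T_n$: the forward direction uses $\langle M\rangle_\infty\le T_\infty$ together with localization and $L^2$-convergence, and the reverse direction uses the weighted martingale $\tilde M_n=\sum_k D_k/(1+T_k)$, the telescoping/integral bound $\langle\tilde M\rangle_\infty\le 1$, and Kronecker's lemma. All steps are sound; in particular, the bound $\sum_k (T_k-T_{k-1})/(1+T_k)^2\le\int_0^\infty (1+t)^{-2}\,dt$ is valid because $t\mapsto(1+t)^{-2}$ is decreasing and $T_0=0$, and Kronecker applies pathwise since $(1+T_n)$ is nondecreasing and diverges on $\{T_\infty=\infty\}$. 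This is essentially the textbook proof, so nothing is missing.
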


\begin{lemma}[Piecewise linear domination for sigmoid-like functions]
\label{lem:piecewise_linear_domination}
Given $p \in (0,1]$, define the following function,
\begin{align}
\label{eq:piecewise_linear_domination_result_0}
	f_p(y) \coloneqq \frac{e^y - 1}{e^y + \frac{1 - p}{ p } }.
\end{align}
For any fixed $p \in (0, 1]$, and any fixed $\epsilon \in [0, 1]$, we have,
\begin{align}
\label{eq:piecewise_linear_domination_result_1}
	\left( 1 - \epsilon \right) \cdot p \cdot y &\le f_p(y) \le \left( 1 + \epsilon \right) \cdot p \cdot y, \quad \text{for all } y \in  [0, \epsilon], \\
\label{eq:piecewise_linear_domination_result_2}
	\left( 1 + \epsilon \right) \cdot p \cdot y &\le f_p(y) \le \left( 1 - \epsilon \right) \cdot p \cdot y, \quad \text{for all } y \in  [- \epsilon, 0].
\end{align}
\end{lemma}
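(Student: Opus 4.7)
The plan is to reduce all four inequalities to a single uniform bound on the ratio $f_p(y)/(py)$. After rewriting $f_p(y) = \frac{p(e^y-1)}{1 + p(e^y-1)}$, this ratio becomes $r(y)/g(y)$, where $r(y) := (e^y-1)/y$ (extended by $r(0) = 1$) and $g(y) := 1 + p(e^y-1) = pe^y + 1 - p$. Dividing \cref{eq:piecewise_linear_domination_result_1,eq:piecewise_linear_domination_result_2} through by $py$ (and flipping the inequality direction when $y<0$) unifies both displays into the single claim
\[
1 - \epsilon \;\le\; \frac{r(y)}{g(y)} \;\le\; 1 + \epsilon \qquad\text{for all } y \in [-\epsilon, \epsilon],\ p\in(0,1],\ \epsilon\in[0,1].
\]

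Before splitting cases I would collect three elementary scalar facts for $\epsilon\in[0,1]$: (i) $r$ is strictly increasing on $\mathbb{R}$, which follows from the identity $y^2\, r'(y) = (y-1)e^y + 1 \ge 0$; (ii) $r(\epsilon) \le 1+\epsilon$, since the Taylor series gives $r(\epsilon)-1 = \sum_{k\ge 1}\epsilon^k/(k+1)! \le \epsilon(e-2) \le \epsilon$; and (iii) $r(-\epsilon) \ge 1-\epsilon$, equivalently $e^{-\epsilon} \le 1-\epsilon+\epsilon^2$, which follows because $\epsilon \mapsto 1-\epsilon+\epsilon^2-e^{-\epsilon}$ vanishes together with its first derivative at $0$ and has strictly positive second derivative $2-e^{-\epsilon}$. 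The familiar bound $e^{-\epsilon} \ge 1-\epsilon$ will also be used.

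For $y \in [0,\epsilon]$ the argument is direct: $g(y) \ge g(0) = 1$ because $y\ge 0$, so $r(y)/g(y) \le r(y) \le r(\epsilon) \le 1+\epsilon$; conversely $r(y) \ge 1$ and $g(y) = 1 + p(e^y-1) \le 1 + (e^\epsilon - 1) = e^\epsilon$ (using $p\le 1$), so $r(y)/g(y) \ge e^{-\epsilon} \ge 1-\epsilon$. For $y \in [-\epsilon, 0]$ the lower bound is equally easy: $g(y) \le 1$ and $r(y) \ge r(-\epsilon) \ge 1-\epsilon$ together give $r(y)/g(y) \ge r(y) \ge 1-\epsilon$.

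The step I expect to be the main obstacle is the upper bound for $y\in[-\epsilon,0]$, since the naive estimate $g(y) \ge pe^{-\epsilon}+1-p \ge e^{-\epsilon}$ (attained at $p=1$) only yields $r(y)/g(y) \le e^\epsilon$, and $e^\epsilon > 1+\epsilon$. To sharpen it I would exploit monotonicity in the parameter $p$: differentiating $r(y)/[1+p(e^y-1)]$ in $p$ gives $-r(y)(e^y-1)/g(y)^2$, and for $y<0$ both $r(y)>0$ and $e^y-1<0$, so the derivative is positive and the ratio is maximized at $p=1$. At $p=1$ one computes $r(y)/g(y) = (e^y-1)/(y e^y) = (1-e^{-y})/y = r(-y)$, and since $-y\in[0,\epsilon]$ the already-established fact $r(-y)\le r(\epsilon)\le 1+\epsilon$ closes the proof.
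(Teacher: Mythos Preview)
Your proof is correct and follows essentially the same route as the paper: both rewrite $f_p(y) = g_p(y)\cdot py$ with $g_p(y) = \dfrac{(e^y-1)/y}{1+p(e^y-1)}$ (your $r(y)/g(y)$), reduce to showing $1-\epsilon \le g_p(y)\le 1+\epsilon$, and sandwich $g_p$ between its $p=0$ and $p=1$ endpoints using the same elementary bounds $e^y\le 1+y+y^2$ and $e^{-y}\le 1-y+y^2$ on $[-1,1]$. The only cosmetic difference is that the paper invokes the monotonicity-in-$p$ argument uniformly for all four inequalities, whereas you phrase three of them via direct bounds on $g(y)$ and reserve the explicit $p$-monotonicity for the upper bound on $[-\epsilon,0]$; your observation that $g_1(y)=r(-y)$ there is a nice shortcut that the paper replaces by a direct pointwise estimate.
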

\begin{proof}
\textbf{First part.} For $y = 0$ or $\epsilon = 0$, \cref{eq:piecewise_linear_domination_result_1,eq:piecewise_linear_domination_result_2} hold trivially.

First, if $y = 0$, then we have $f_p(y) = p \cdot y = 0$, which means \cref{eq:piecewise_linear_domination_result_1,eq:piecewise_linear_domination_result_2} hold. Next, if $\epsilon = 0$, then $y = 0$ (since we prove for $\left| y \right| \le \epsilon$) and \cref{eq:piecewise_linear_domination_result_1,eq:piecewise_linear_domination_result_2} again hold trivially.

We then prove for $\epsilon \in (0, 1]$ and for $y \not= 0$. Define the following function, for $p \in [0, 1]$, 
\begin{align}
\label{eq:piecewise_linear_domination_intermediate_1}
	g_p(y) &\coloneqq \frac{ e^y - 1 }{ p \cdot y \cdot \left( e^y - 1 \right) + y }, \text{ for all } y \not= 0.
\end{align}

\textbf{Second part.} \cref{eq:piecewise_linear_domination_result_1}. We prove for any fixed $p \in (0, 1]$, and any fixed $\epsilon \in (0, 1]$,
\begin{align}
\label{eq:piecewise_linear_domination_intermediate_2}
	1 - \epsilon \le g_p(y) \le 1 + \epsilon, \text{ for all } y \in  (0, \epsilon].
\end{align}
First, for $p = 1$, and any fixed $\epsilon \in (0, 1]$, we have, for all $y \in  (0, \epsilon]$,
\begin{align}
\label{eq:piecewise_linear_domination_intermediate_3}
	g_1(y) &= \frac{ e^y - 1 }{ y \cdot e^y } \qquad \left( \text{by \cref{eq:piecewise_linear_domination_intermediate_1}} \right) \\
	&= \frac{1 - e^{-y}}{y} \\
	&\ge \frac{ y - y^2 }{y} \qquad \left( e^{- y} \le 1 - y + y^2, \text{ for all } y > 0 \right) \\
	&= 1 - y \qquad \left( y > 0 \right) \\
	&\ge 1 - \epsilon. \qquad \left( y \in  (0, \epsilon] \right)
\end{align}
Second, for $p = 0$, and any fixed $\epsilon \in (0, 1]$, we have, for all $y \in  (0, \epsilon]$,
\begin{align}
\label{eq:piecewise_linear_domination_intermediate_4}
	g_0(y) &= \frac{ e^y - 1 }{ y } \qquad \left( \text{by \cref{eq:piecewise_linear_domination_intermediate_1}} \right) \\
	&\le \frac{ y + y^2 }{ y } \qquad \left( e^y \le 1 + y + y^2, \text{ for all } y \le 1 \right) \\
	&= 1 + y \qquad \left( y > 0 \right) \\
	&\le 1 + \epsilon. \qquad \left( y \in  (0, \epsilon] \right)
\end{align}
Note that, for any $y > 0$, we have, $g_p(y)$ is monotonically decreasing over $p$, since
\begin{align}
\label{eq:piecewise_linear_domination_intermediate_5}
    g_p(y)^{-1} = p \cdot y + \frac{y}{ e^y - 1}
\end{align}
is monotonically increasing over $p$. 

Therefore, we have, any fixed $p \in (0, 1]$, and any fixed $\epsilon \in (0, 1]$, for all $y \in  (0, \epsilon]$,
\begin{align}
\label{eq:piecewise_linear_domination_intermediate_6}
    1 - \epsilon &\le g_1(y) \qquad \left( \text{by \cref{eq:piecewise_linear_domination_intermediate_3}} \right) \\
    &\le g_p(y) \qquad \left( g_p(y) \text{ is monotonically decreasing over } p \right) \\
    &\le g_0(y) \\
    &\le 1 + \epsilon, \qquad \left( \text{by \cref{eq:piecewise_linear_domination_intermediate_4}} \right)
\end{align}
Note that,
\begin{align}
\label{eq:piecewise_linear_domination_intermediate_7}
	f_p(y) &= \frac{e^y - 1}{e^y + \frac{1 - p}{ p } } \qquad \left( \text{by \cref{eq:piecewise_linear_domination_result_0}} \right) \\
	&= \frac{ e^y - 1 }{ p \cdot y \cdot \left( e^y - 1 \right) + y } \cdot p \cdot y \qquad \left( p \in (0,1], \epsilon \in (0, 1], \text{ and } y \in  (0, \epsilon] \right) \\
	&= g_p(y) \cdot p \cdot y. \qquad \left( \text{by \cref{eq:piecewise_linear_domination_intermediate_1}} \right)
\end{align}
Therefore, according to \cref{eq:piecewise_linear_domination_intermediate_6,eq:piecewise_linear_domination_intermediate_7}, we have,
\begin{align}
\label{eq:piecewise_linear_domination_intermediate_8}
    \left( 1 - \epsilon \right) \cdot p \cdot y &\le f_p(y) \le \left( 1 + \epsilon \right) \cdot p \cdot y, \qquad \left( p \cdot y > 0 \right) 
\end{align}
which means any fixed $p \in (0, 1]$, and any fixed $\epsilon \in (0, 1]$, \cref{eq:piecewise_linear_domination_result_1} holds for all $y \in  (0, \epsilon]$.

\textbf{Second part.} \cref{eq:piecewise_linear_domination_result_2}. We prove for any fixed $p \in (0, 1]$, and any fixed $\epsilon \in (0, 1]$,
\begin{align}
\label{eq:piecewise_linear_domination_intermediate_9}
	1 - \epsilon \le g_p(y) \le 1 + \epsilon, \text{ for all } y \in  [ - \epsilon, 0).
\end{align}
First, for $p = 1$, and any fixed $\epsilon \in (0, 1]$, we have, for all $y \in  [ - \epsilon, 0)$,
\begin{align}
\label{eq:piecewise_linear_domination_intermediate_10}
\MoveEqLeft
	g_1(y) = \frac{ e^y - 1 }{ y \cdot e^y } \qquad \left( \text{by \cref{eq:piecewise_linear_domination_intermediate_1}} \right) \\
	&= \frac{ 1 - e^{-y} }{y} \\
	&\le \frac{ y -  y^2 }{y } \qquad \left( e^{- y} \le 1 - y + y^2, \text{ for all } y \ge -1 \right) \\
	&= 1 - y \qquad \left( y < 0 \right) \\
	&\le 1 + \epsilon. \qquad \left( y \in  [ - \epsilon, 0) \right)
\end{align}
Second, for $p = 0$, and any fixed $\epsilon \in (0, 1]$, we have, for all $y \in  [ - \epsilon, 0)$,
\begin{align}
\label{eq:piecewise_linear_domination_intermediate_11}
	g_0(y) &= \frac{ e^y - 1 }{ y } \qquad \left( \text{by \cref{eq:piecewise_linear_domination_intermediate_1}} \right) \\
	&\ge \frac{ y + y^2 }{ y } \qquad \left( e^y \le 1 + y + y^2, \text{ for all } y \le 1 \right) \\
	&= 1 + y \qquad \left( y < 0 \right) \\
	&\ge 1 - \epsilon, \qquad \left( y \in  [ - \epsilon, 0) \right)
\end{align}
Note that, for any $y < 0$, we have, $g_p(y)$ is monotonically increasing over $p$, since
\begin{align}
\label{eq:piecewise_linear_domination_intermediate_12}
    g_p(y)^{-1} = p \cdot y + \frac{y}{ e^y - 1}
\end{align}
is monotonically decreasing over $p$. 

Therefore, we have, any fixed $p \in (0, 1]$, and any fixed $\epsilon \in (0, 1]$, for all $y \in [ -\epsilon, 0)$,
\begin{align}
\label{eq:piecewise_linear_domination_intermediate_13}
    1 - \epsilon &\le g_0(y) \qquad \left( \text{by \cref{eq:piecewise_linear_domination_intermediate_11}} \right) \\
    &\le g_p(y) \qquad \left( g_p(y) \text{ is monotonically increasing over } p \right) \\
    &\le g_1(y) \\
    &\le 1 + \epsilon, \qquad \left( \text{by \cref{eq:piecewise_linear_domination_intermediate_10}} \right)
\end{align}
Note that,
\begin{align}
\label{eq:piecewise_linear_domination_intermediate_14}
	f_p(y) &= \frac{e^y - 1}{e^y + \frac{1 - p}{ p } } \qquad \left( \text{by \cref{eq:piecewise_linear_domination_result_0}} \right) \\
	&= \frac{ e^y - 1 }{ p \cdot y \cdot \left( e^y - 1 \right) + y } \cdot p \cdot y \qquad \left( p \in (0,1], \epsilon \in (0, 1], \text{ and } y \in  [ -\epsilon, 0) \right) \\
	&= g_p(y) \cdot p \cdot y. \qquad \left( \text{by \cref{eq:piecewise_linear_domination_intermediate_1}} \right)
\end{align}
Therefore, according to \cref{eq:piecewise_linear_domination_intermediate_13,eq:piecewise_linear_domination_intermediate_14}, we have,
\begin{align}
\label{eq:piecewise_linear_domination_intermediate_15}
    \left( 1 + \epsilon \right) \cdot p \cdot y &\le f_p(y) \le \left( 1 - \epsilon \right) \cdot p \cdot y, \qquad \left( p \cdot y < 0 \right) 
\end{align}
which means any fixed $p \in (0, 1]$, and any fixed $\epsilon \in (0, 1]$, \cref{eq:piecewise_linear_domination_result_2} holds for all $y \in [ - \epsilon, 0)$.
\end{proof}

\begin{lemma}
\label{lem:stochastic_natural_lojasiewicz_continuous_special}
Let $r \in [0,1]^K$ and $a^* \coloneqq \argmax_{a \in [K]}{ r(a) }$ be the optimal action. Denote $\Delta \coloneqq r(a^*) - \max_{a \not= a^*}{ r(a) }$ as the reward gap of $r$. We have, for any policy $\pi$,
\begin{align}
    \sum_{i = 1}^{K} \pi(i)^2 \cdot \left| r(i) - \pi^\top r \right|^3 &\ge \frac{\Delta}{K-1} \cdot \pi(a^*)^2 \cdot \left( r(a^*) - \pi^\top r \right)^2.
\end{align}
\end{lemma}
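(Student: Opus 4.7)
I will isolate the $i=a^*$ term of the sum and bound the remaining $K-1$ terms via a H\"older estimate, then glue the two together using the elementary inequality $\delta := r(a^*) - \pi^\top r \ge (1-\pi(a^*))\Delta$. Throughout, write $b_i := |r(i) - \pi^\top r|$. The degenerate cases $\pi(a^*)=1$ (for which $\delta=0$) and $K=1$ make both sides vanish and will be dispatched immediately.

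First I would record two elementary observations. Since $r(a^*) - r(i) \ge \Delta$ for every $i\ne a^*$, expanding $r(a^*) - \pi^\top r = \sum_{i\ne a^*}\pi(i)(r(a^*)-r(i))$ yields $\delta \ge (1-\pi(a^*))\Delta$. Second, using $\sum_i \pi(i)(r(i)-\pi^\top r) = 0$ gives $\sum_{i \ne a^*}\pi(i)(r(i)-\pi^\top r) = -\pi(a^*)\delta$, and then the triangle inequality produces $\sum_{i\ne a^*}\pi(i)\, b_i \ge \pi(a^*)\,\delta$.

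The core of the argument is H\"older's inequality with exponents $p=3$, $q=3/2$: writing $\pi(i)\,b_i = \pi(i)^{2/3} b_i \cdot \pi(i)^{1/3}$,
\begin{align*}
\sum_{i\ne a^*}\pi(i)\,b_i \;\le\; \Big(\sum_{i\ne a^*}\pi(i)^2 b_i^3\Big)^{1/3}\Big(\sum_{i\ne a^*}\pi(i)^{1/2}\Big)^{2/3},
\end{align*}
and Cauchy--Schwarz gives $\sum_{i\ne a^*}\pi(i)^{1/2} \le \sqrt{(K-1)(1-\pi(a^*))}$. Cubing and inserting the lower bound $\sum_{i\ne a^*}\pi(i)b_i \ge \pi(a^*)\delta$ yields
\begin{align*}
\sum_{i\ne a^*}\pi(i)^2 b_i^3 \;\ge\; \frac{\pi(a^*)^3\delta^3}{(K-1)(1-\pi(a^*))}.
\end{align*}
Adding the $i=a^*$ contribution $\pi(a^*)^2 \delta^3$ and substituting $1-\pi(a^*) \le \delta/\Delta$ in the denominator (on the second summand) and $\delta \ge (1-\pi(a^*))\Delta$ (on one factor of $\delta$ in the first summand) produces a lower bound of the form $\pi(a^*)^2 \delta^2\, \Delta \cdot [(K-1) - (K-2)\pi(a^*)]/(K-1)$. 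Since $\pi(a^*)\le 1$ forces $(K-1) - (K-2)\pi(a^*) \ge 1$ for any $K\ge 2$, the desired bound $\tfrac{\Delta}{K-1}\,\pi(a^*)^2\,\delta^2$ follows.

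The main obstacle is selecting H\"older exponents that align exactly with the cubic-in-$b$, quadratic-in-$\pi$ structure on the right-hand side while keeping the auxiliary sum controllable: the pair $(3, 3/2)$ produces $\sum \pi(i)^{1/2}$ as the residual factor, which Cauchy--Schwarz converts into the clean $\sqrt{(K-1)(1-\pi(a^*))}$ that precisely cancels the $1-\pi(a^*)$ arising when one invokes $\delta \ge (1-\pi(a^*))\Delta$. Less careful pairings either leave an inverse $\pi(i)^{-1}$ that blows up as $\pi$ approaches a one-hot policy, or demand a pointwise lower bound on some individual $b_i$ which need not hold when $\pi^\top r$ happens to lie close to a sub-optimal reward.
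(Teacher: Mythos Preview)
Your proof is correct and takes a genuinely different route from the paper's.

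The paper proceeds by a case split on whether $\pi^\top r$ lies below or above $\max_{a\ne a^*} r(a)$. In the first case, $\delta \ge \Delta$ and the single $i=a^*$ term already dominates. In the second case, every $r(a)-\pi^\top r$ with $a\ne a^*$ is negative, so the paper pulls out the common factor $\pi^\top r - \max_{a\ne a^*} r(a)$ from $\sum_{a\ne a^*}\pi(a)^2(\pi^\top r - r(a))^3$, applies Cauchy--Schwarz to the remaining $\sum_{a\ne a^*}\pi(a)^2(\pi^\top r - r(a))^2$, and combines with the $a^*$ term, at which point the factors $r(a^*)-\pi^\top r$ and $\pi^\top r - \max_{a\ne a^*} r(a)$ sum to $\Delta$.

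Your approach is a single unified argument: the H\"older inequality with exponents $(3,3/2)$ lets you control $\sum_{i\ne a^*}\pi(i)^2 b_i^3$ directly from $\sum_{i\ne a^*}\pi(i)b_i$ without knowing the signs of the individual $r(i)-\pi^\top r$, so no case distinction is needed. The price is a slightly more delicate choice of exponents and the auxiliary Cauchy--Schwarz on $\sum \pi(i)^{1/2}$; the payoff is a cleaner, case-free proof. Both arguments ultimately hinge on the same two identities (the zero-mean identity $\sum_i \pi(i)(r(i)-\pi^\top r)=0$ and the gap bound $\delta \ge (1-\pi(a^*))\Delta$), but they deploy them differently.
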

\begin{proof}
\textbf{First case.} If $\pi^\top r \le \max_{a \not= a^*}{ r(a) }$, then we have,
\begin{align}
\label{eq:stochastic_natural_lojasiewicz_continuous_special_intermediate_1}
    r(a^*) - \pi^\top r \ge r(a^*) - \max_{a \not= a^*}{ r(a) } = \Delta.
\end{align}
Therefore, we have,
\begin{align}
\label{eq:stochastic_natural_lojasiewicz_continuous_special_intermediate_2}
    \sum_{i = 1}^{K} \pi(i)^2 \cdot \left| r(i) - \pi^\top r \right|^3 &\ge  \pi(a^*)^2 \cdot \left| r(a^*) - \pi^\top r \right|^3 \qquad \left( \text{fewer terms} \right) \\
    &\ge \pi(a^*)^2 \cdot \left( r(a^*) - \pi^\top r \right)^2 \cdot \Delta \qquad \left( \text{by \cref{eq:stochastic_natural_lojasiewicz_continuous_special_intermediate_1}} \right) \\
    &\ge \frac{\Delta}{K-1} \cdot \pi(a^*)^2 \cdot \left( r(a^*) - \pi^\top r \right)^2. \qquad \left( K \ge 2 \right)
\end{align}
\textbf{Second case.} If $\pi^\top r > \max_{a \not= a^*}{ r(a) }$, then we have, for all $a \not= a^*$,
\begin{align}
\label{eq:stochastic_natural_lojasiewicz_continuous_special_intermediate_3}
    \pi^\top r - r(a) \ge  \pi^\top r - \max_{a \not= a^*}{ r(a) } > 0.
\end{align}
Therefore, we have,
\begin{align}
\label{eq:stochastic_natural_lojasiewicz_continuous_special_intermediate_4}
    \sum_{i = 1}^{K} \pi(i)^2 \cdot \left| r(i) - \pi^\top r \right|^3 = \pi(a^*)^2 \cdot \left( r(a^*) - \pi^\top r \right)^3 + \sum_{a \not= a^*}{ \pi(a)^2 \cdot \left( \pi^\top r - r(a) \right)^3 }.
\end{align}
Note that,
\begin{align}
\label{eq:stochastic_natural_lojasiewicz_continuous_special_intermediate_5}
    \pi(a^*) \cdot \left( r(a^*) - \pi^\top r \right) &= \underbrace{ \sum_{i = 1 }^{K}{ \pi(i) \cdot \left( r(i) - \pi^\top r \right) } }_{=0} - \sum_{a \not= a^*}{ \pi(a) \cdot \left( r(a) - \pi^\top r \right) } \\
    &= \sum_{a \not= a^*}{ \pi(a) \cdot \left( \pi^\top r - r(a) \right) }. 
\end{align}
Next, we have,
\begin{align}
\MoveEqLeft
\label{eq:stochastic_natural_lojasiewicz_continuous_special_intermediate_6}
    \sum_{a \not= a^*}{ \pi(a)^2 \cdot \left( \pi^\top r - r(a) \right)^3 } \ge \left( \pi^\top r - \max_{a \not= a^*}{ r(a) } \right) \cdot \sum_{a \not= a^*}{ \pi(a)^2 \cdot \left( \pi^\top r - r(a) \right)^2 } \qquad \left( \text{by \cref{eq:stochastic_natural_lojasiewicz_continuous_special_intermediate_3}} \right) \\
    &\ge \frac{ \pi^\top r - \max_{a \not= a^*}{ r(a) } }{ K - 1 } \cdot \left[\sum_{a \not= a^*}{ \pi(a) \cdot \left( \pi^\top r - r(a) \right) } \right]^2 \qquad \left( \text{by Cauchy–Schwarz} \right) \\
    &= \frac{ \pi^\top r - \max_{a \not= a^*}{ r(a) } }{ K-1 } \cdot \pi(a^*)^2 \cdot \left( r(a^*) - \pi^\top r \right)^2. \qquad \left( \text{by \cref{eq:stochastic_natural_lojasiewicz_continuous_special_intermediate_5}} \right)
\end{align}
Combining \cref{eq:stochastic_natural_lojasiewicz_continuous_special_intermediate_4,eq:stochastic_natural_lojasiewicz_continuous_special_intermediate_6}, we have,
\begin{align}
\MoveEqLeft
\label{eq:stochastic_natural_lojasiewicz_continuous_special_intermediate_7}
    \sum_{i = 1}^{K} \pi(i)^2 \cdot \left| r(i) - \pi^\top r \right|^3 \ge \pi(a^*)^2 \cdot \left( r(a^*) - \pi^\top r \right)^3 + \frac{ \pi^\top r - \max_{a \not= a^*}{ r(a) } }{ K-1 } \cdot \pi(a^*)^2 \cdot \left( r(a^*) - \pi^\top r \right)^2 \\
    &\ge \left[ \frac{r(a^*) - \pi^\top r }{K-1} + \frac{ \pi^\top r - \max_{a \not= a^*}{ r(a) } }{ K-1 } \right] \cdot \pi(a^*)^2 \cdot \left( r(a^*) - \pi^\top r \right)^2 \qquad \left( K \ge 2 \right) \\
    &= \frac{ r(a^*) - \max_{a \not= a^*}{ r(a) } }{ K-1 } \cdot \pi(a^*)^2 \cdot \left( r(a^*) - \pi^\top r \right)^2 \\
    &= \frac{ \Delta }{ K-1 } \cdot \pi(a^*)^2 \cdot \left( r(a^*) - \pi^\top r \right)^2.
\end{align}
Combining \cref{eq:stochastic_natural_lojasiewicz_continuous_special_intermediate_2,eq:stochastic_natural_lojasiewicz_continuous_special_intermediate_7} we finish the proofs.
\end{proof}

\begin{lemma}[Performance difference lemma \citep{kakade2002approximately}]
\label{lem:performance_difference_general}
For any policies $\pi$ and $\pi^\prime$,
\begin{align}
    V^{\pi^\prime}(\rho) - V^{\pi}(\rho) &= \frac{1}{1 - \gamma} \cdot \sum_{s}{ d_\rho^{\pi^\prime}(s) \cdot \sum_{a}{ \left( \pi^\prime(a | s) - \pi(a | s) \right) \cdot Q^{\pi}(s,a) } }\\
    &= \frac{1}{1 - \gamma} \cdot \sum_{s}{ d_{\rho}^{\pi^\prime}(s) \cdot \sum_{a}{ \pi^\prime(a | s) \cdot A^{\pi}(s, a) } }.
\end{align}
\end{lemma}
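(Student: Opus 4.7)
The plan is to establish both equalities via the classical Kakade--Langford telescoping argument, then convert the second one into the first via the definition of the advantage. I would work state-by-state on $V^{\pi'}(s_0)-V^\pi(s_0)$ and finally take the expectation over $s_0\sim\rho$.

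First I would write $V^{\pi'}(s_0)=\mathbb{E}_{\pi'}\bigl[\sum_{t=0}^\infty\gamma^t r(s_t,a_t)\mid s_0\bigr]$, where the expectation is over trajectories generated by $\pi'$ and the transition kernel $\mathcal{P}$. Then I would insert the telescoping identity
\begin{align}
-V^\pi(s_0)=\mathbb{E}_{\pi'}\Bigl[\sum_{t=0}^\infty\gamma^t\bigl(\gamma V^\pi(s_{t+1})-V^\pi(s_t)\bigr)\,\Big|\,s_0\Bigr],
\end{align}
which is valid because the partial sums telescope to $\gamma^{T+1}\mathbb{E}[V^\pi(s_{T+1})\mid s_0]-V^\pi(s_0)$ and the first term vanishes as $T\to\infty$ since $V^\pi$ is uniformly bounded by $1/(1-\gamma)$ and $\gamma<1$. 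Adding this to the expression for $V^{\pi'}(s_0)$ and using $\mathbb{E}_{s_{t+1}\sim\mathcal{P}(\cdot\mid s_t,a_t)}[\gamma V^\pi(s_{t+1})]=Q^\pi(s_t,a_t)-r(s_t,a_t)$ collapses each summand into the advantage, giving
\begin{align}
V^{\pi'}(s_0)-V^\pi(s_0)=\mathbb{E}_{\pi'}\Bigl[\sum_{t=0}^\infty\gamma^t A^\pi(s_t,a_t)\,\Big|\,s_0\Bigr].
\end{align}

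Next I would rewrite the trajectory expectation as a sum over state-action occupancies. By definition of $d_{s_0}^{\pi'}$,
\begin{align}
\sum_{t=0}^\infty\gamma^t\Pr(s_t=s\mid s_0,\pi',\mathcal{P})=\frac{1}{1-\gamma}\,d_{s_0}^{\pi'}(s),
\end{align}
so that $V^{\pi'}(s_0)-V^\pi(s_0)=\frac{1}{1-\gamma}\sum_{s}d_{s_0}^{\pi'}(s)\sum_a\pi'(a\mid s)A^\pi(s,a)$. Averaging over $s_0\sim\rho$ and using $d_\rho^{\pi'}(s)=\mathbb{E}_{s_0\sim\rho}[d_{s_0}^{\pi'}(s)]$ yields the second equality in the statement.

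Finally, to obtain the first equality I would note the identities $\sum_a\pi(a\mid s)A^\pi(s,a)=0$ (since $\sum_a\pi(a\mid s)Q^\pi(s,a)=V^\pi(s)$) and $\sum_a\pi'(a\mid s)V^\pi(s)=V^\pi(s)\cdot 1$, so that
\begin{align}
\sum_a\pi'(a\mid s)A^\pi(s,a)=\sum_a\bigl(\pi'(a\mid s)-\pi(a\mid s)\bigr)Q^\pi(s,a).
\end{align}
Substituting this into the previous display completes the proof. There is no serious obstacle here: the only subtle point is justifying the exchange of the infinite sum and expectation in the telescoping step, which is immediate from the uniform boundedness of $r$ and $V^\pi$ together with dominated convergence (the series is dominated by $\sum_t\gamma^t\cdot 2/(1-\gamma)<\infty$).
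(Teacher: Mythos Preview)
Your proof is correct but proceeds by a different (equally standard) route than the paper. The paper works directly with the Bellman equations: it writes $V^{\pi'}(s)-V^\pi(s)=\sum_a\pi'(a|s)Q^{\pi'}(s,a)-\sum_a\pi(a|s)Q^\pi(s,a)$, adds and subtracts $\sum_a\pi'(a|s)Q^\pi(s,a)$, uses $Q^{\pi'}(s,a)-Q^\pi(s,a)=\gamma\sum_{s'}\mathcal{P}(s'|s,a)\bigl(V^{\pi'}(s')-V^\pi(s')\bigr)$ to obtain a recursion in $V^{\pi'}-V^\pi$, and then unrolls that recursion to produce the discounted occupancy $d_s^{\pi'}$. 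This yields the $Q$-difference form first and the advantage form second. You instead take the trajectory-telescoping route: insert the telescoping identity for $-V^\pi(s_0)$ along a $\pi'$-trajectory, collapse each summand to $A^\pi(s_t,a_t)$, and then rewrite via the occupancy measure, obtaining the advantage form first and the $Q$-difference form second. Both arguments are short and well known; yours makes the convergence/interchange issue explicit via dominated convergence, while the paper's recursion approach is slightly more algebraic and avoids trajectory expectations altogether.
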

\begin{proof}
According to the definition of value function,
\begin{align}
\MoveEqLeft
    V^{\pi^\prime}(s) - V^{\pi}(s) = \sum_{a}{ \pi^\prime(a | s) \cdot Q^{\pi^\prime}(s,a) } - \sum_{a}{ \pi(a | s) \cdot Q^{\pi}(s,a) } \\
    &= \sum_{a}{ \pi^\prime(a | s) \cdot \left( Q^{\pi^\prime}(s,a) - Q^{\pi}(s,a) \right) } + \sum_{a}{ \left( \pi^\prime(a | s) - \pi(a | s) \right) \cdot Q^{\pi}(s,a) } \\
    &= \sum_{a}{ \left( \pi^\prime(a | s) - \pi(a | s) \right) \cdot Q^{\pi}(s,a) } + \gamma \cdot \sum_{a}{ \pi^\prime(a | s) \cdot \sum_{s^\prime}{  \gP( s^\prime | s, a) \cdot \left[ V^{\pi^\prime}(s^\prime) -  V^{\pi}(s^\prime)  \right] } } \\
    &= \frac{1}{1 - \gamma} \cdot \sum_{s^\prime}{ d_{s}^{\pi^\prime}(s^\prime) \cdot \sum_{a^\prime}{ \left( \pi^\prime(a^\prime | s^\prime) - \pi(a^\prime | s^\prime) \right) \cdot Q^{\pi}(s^\prime, a^\prime) }  } \\
    &= \frac{1}{1 - \gamma} \cdot \sum_{s^\prime}{ d_{s}^{\pi^\prime}(s^\prime) \cdot \sum_{a^\prime}{ \pi^\prime(a^\prime | s^\prime) \cdot \left( Q^{\pi}(s^\prime, a^\prime) - V^{\pi}(s^\prime) \right) }  } \\
    &= \frac{1}{1 - \gamma} \cdot \sum_{s^\prime}{ d_{s}^{\pi^\prime}(s^\prime) \cdot \sum_{a^\prime}{ \pi^\prime(a^\prime | s^\prime) \cdot A^{\pi}(s^\prime, a^\prime) } }. \qedhere
\end{align}
\end{proof}

\begin{lemma}
\label{lem:infinite_product_infinite_sum}
Let $u_t \in (0, 1)$ for all $t \ge 1$. The infinite product $\prod_{t=1}^{\infty}{\left( 1 - u_t \right) }$ converges to a positive value if and only if the series $\sum_{t=1}^{\infty}{ u_t }$ converges to a finite value.
\end{lemma}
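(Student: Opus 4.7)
My plan is to establish the equivalence by passing through logarithms, which converts the infinite product into a series of terms of the form $-\log(1 - u_t)$. The two directions then become easy comparisons with $\sum u_t$ using the standard two-sided bound
\[
u_t \;\le\; -\log(1 - u_t) \;\le\; u_t + \frac{u_t^2}{1 - u_t}, \qquad u_t \in (0,1),
\]
which follows from the Taylor series of $-\log(1-x)$ on $[0,1)$.

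For the ``only if'' direction, I assume the infinite product $P \coloneqq \prod_{t=1}^{\infty}(1 - u_t)$ converges to a strictly positive value. Writing $-\log P = \sum_{t=1}^{\infty} -\log(1 - u_t) < \infty$ (the partial sums are monotone since each term is nonnegative, and bounded since $P>0$), I invoke the lower bound $-\log(1-u_t) \ge u_t$ to conclude $\sum_{t=1}^{\infty} u_t \le -\log P < \infty$.

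For the ``if'' direction, I assume $\sum_{t=1}^{\infty} u_t < \infty$. In particular $u_t \to 0$, so there exists $T$ such that $u_t \le 1/2$ for all $t \ge T$. On the tail $t \ge T$, I use the upper bound $-\log(1 - u_t) \le u_t + u_t^2/(1-u_t) \le u_t + 2 u_t^2 \le 3 u_t$, giving $\sum_{t \ge T} -\log(1 - u_t) \le 3 \sum_{t \ge T} u_t < \infty$. Hence the partial sums of $\log(1 - u_t)$ (which are decreasing in $t$) are bounded below, so $\sum_{t=1}^{\infty} \log(1 - u_t) > -\infty$, and exponentiating yields $\prod_{t=1}^{\infty}(1 - u_t) = \exp\bigl(\sum_{t=1}^{\infty} \log(1 - u_t)\bigr) > 0$. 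The partial products are decreasing and positive, and the bound just obtained keeps them bounded away from zero, so the infinite product converges to a positive limit.

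I do not anticipate any real obstacle: both directions reduce to one-line applications of elementary inequalities once we switch to logarithms. The only mild care needed is (i) noting that $u_t \to 0$ in the ``if'' direction so that the second-order Taylor bound kicks in on a cofinite set of indices, and (ii) observing that the first $T-1$ terms of the product contribute a finite positive factor that cannot make the overall product vanish.
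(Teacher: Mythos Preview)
Your proof is correct and follows essentially the same approach as the paper's: both pass to logarithms, use $\log(1-x)\le -x$ for the ``only if'' direction, and for the ``if'' direction split off a finite initial segment, then bound $-\log(1-u_t)$ by a constant times $u_t$ on the tail where $u_t\le 1/2$ (the paper uses $-\log(1-x)\ge -2x$ there, you use $-\log(1-x)\le 3x$, a purely cosmetic difference in constants).
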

\begin{proof}
See \citep[Lemma 16]{mei2021understanding}. We include a proof for completeness.

Define the following partial products and partial sums,
\begin{align}
\label{eq:positive_infinite_product_2_intermediate_1a}
    p_T &\coloneqq \prod_{t=1}^{T}{\left( 1 - u_t \right) }, \\
\label{eq:positive_infinite_product_2_intermediate_1b}
    s_T &\coloneqq \sum_{t=1}^{T}{ u_t }.
\end{align}
Since $p_T$ is monotonically decreasing and non-negative, the infinite product converges to positive values, i.e.,
\begin{align}
    \prod_{t=1}^{\infty}{\left( 1 - u_t \right) } = \lim_{T \to \infty}{ \prod_{t=1}^{T}{\left( 1 - u_t \right) } } = \lim_{T \to \infty}{p_T} > 0,
\end{align}
if and only if $p_T$ is lower bounded away from zero (boundedness convergence criterion for monotone sequence) \citep[p. 80]{knopp1947theory}.

Similarly, since $s_T$ is monotonically increasing, the series converges to finite values, i.e.,
\begin{align}
    \sum_{t=1}^{\infty}{ u_t } = \lim_{T \to \infty}{ \sum_{t=1}^{T}{ u_t } } = \lim_{T \to \infty}{s_T} < \infty,
\end{align} if and only if $s_T$ is upper bounded.

\textbf{First part.} $\prod_{t=1}^{\infty}{\left( 1 - u_t \right) }$ converges to a positive value only if $\sum_{t=1}^{\infty}{ u_t }$ converges to a finite value.

Suppose $\prod_{t=1}^{\infty}{\left( 1 - u_t \right) }$ converges to a positive value. We have, for all $T \ge 1$,
\begin{align}
    q_T \ge q > 0.
\end{align}
Then we have,
\begin{align}
    q &\le q_T \\
    &= \exp\bigg\{ \log{ \bigg( \prod_{t=1}^{T}{\left( 1 - u_t \right) } \bigg) } \bigg\} \\
    &= \exp\bigg\{ \sum_{t=1}^{T}{ \log{ \left( 1 - u_t \right) } } \bigg\} \\
    &\le \exp\bigg\{ - \sum_{t=1}^{T}{ u_t } \bigg\} \qquad \left( \log{\left( 1 - x \right) } < -x \right) \\
    &= \exp\{ - s_T \},
\end{align}
which implies that,
\begin{align}
    s_T \le - \log{q} < \infty.
\end{align}
Therefore, we have $\sum_{t=1}^{\infty}{ u_t }$ converges to a finite value.

\textbf{Second part.} $\prod_{t=1}^{\infty}{\left( 1 - u_t \right) }$ converges to a positive value if $\sum_{t=1}^{\infty}{ u_t }$ converges to a finite value.

Suppose $\sum_{t=1}^{\infty}{ u_t }$ converges to a finite value. Then we have, $u_t \to 0$ as $t \to \infty$. There exists a finite number $t_0 \ge 1$, such that for all $t \ge t_0$, we have $u_t \le 1/2$. Also, we have, for all $T \ge 1$,
\begin{align}
    s_T \le s < \infty.
\end{align}
Then we have,
\begin{align}
    \prod_{t=t_0}^{T}{\left( 1 - u_t \right) } &= \exp\bigg\{ \sum_{t=t_0}^{T}{ \log{ \left( 1 - u_t \right) } } \bigg\} \\
    &\ge \exp\bigg\{ - \sum_{t=t_0}^{T}{ 2 \cdot u_t } \bigg\} \qquad \left( - 2 \cdot x \le \log{\left( 1 - x \right) } \text{ for all } x \in [0, 1/2] \right) \\
    &= \exp\{ - 2 \cdot s_T \},
\end{align}
which implies that, for all large enough $T \ge 1$,
\begin{align}
    q_T &= \left( \prod_{t=1}^{t_0-1}{\left( 1 - u_t \right) } \right) \cdot \left( \prod_{t=t_0}^{T}{\left( 1 - u_t \right) } \right) \\
    &\ge \left( \prod_{t=1}^{t_0-1}{\left( 1 - u_t \right) } \right) \cdot \exp\{ - 2 \cdot s_T \} \\
    &\ge \left( \prod_{t=1}^{t_0-1}{\left( 1 - u_t \right) } \right) \cdot \exp\{ - 2 \cdot s \} \\
    &> 0.
\end{align}
Therefore, we have $\prod_{t=1}^{\infty}{\left( 1 - u_t \right) }$ converges to a positive value.
\end{proof}

\begin{lemma}
\label{lem:infinite_product_infinite_sum_2}
Let $u_t \in (0, 1)$ for all $t \ge 1$. We have  $\prod_{t=1}^{\infty}{\left( 1 - u_t \right) } = \lim_{T \to \infty}{ \prod_{t=1}^{T}{\left( 1 - u_t \right) } } = 0 $
if and only if the series $\sum_{t=1}^{\infty}{ u_t }$ diverges to positive infinity.
\end{lemma}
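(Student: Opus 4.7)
The plan is to observe that this statement is essentially the contrapositive of \cref{lem:infinite_product_infinite_sum}, which has just been established. Let me set up the dichotomy first. Since $u_t \in (0,1)$ for all $t \ge 1$, each factor $1 - u_t \in (0,1)$, and therefore the partial products $p_T \coloneqq \prod_{t=1}^{T}(1 - u_t)$ form a monotonically decreasing sequence bounded below by $0$. By the monotone convergence theorem, $\lim_{T \to \infty} p_T$ exists in $[0,1)$, and in particular the limit is either strictly positive or equal to $0$. Similarly, since $u_t > 0$ for all $t$, the partial sums $s_T \coloneqq \sum_{t=1}^{T} u_t$ are monotonically increasing, so $\lim_{T \to \infty} s_T$ is either finite or equal to $+\infty$.

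Given these two dichotomies, the statement follows immediately from \cref{lem:infinite_product_infinite_sum}, which asserts that $\lim_{T\to\infty} p_T > 0$ if and only if $\lim_{T\to\infty} s_T < \infty$. Taking the negation of both sides (which is well-defined because of the dichotomies above) gives $\lim_{T\to\infty} p_T = 0$ if and only if $\lim_{T\to\infty} s_T = \infty$, which is the claim.

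For additional robustness, I would also include a short self-contained direct argument that does not merely invoke \cref{lem:infinite_product_infinite_sum} verbatim. The forward direction uses $\log(1 - u_t) \le -u_t$ on $(0,1)$: if $\sum_{t=1}^{\infty} u_t = \infty$, then
\begin{align*}
    \log p_T = \sum_{t=1}^{T} \log(1 - u_t) \le -\sum_{t=1}^{T} u_t \to -\infty,
\end{align*}
so $p_T \to 0$. The reverse direction is the contrapositive of the ``if'' part of \cref{lem:infinite_product_infinite_sum}: if $\sum_{t=1}^{\infty} u_t < \infty$, that lemma gives $\lim_{T\to\infty} p_T > 0$, so $p_T \not\to 0$.

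There is no real obstacle here; the only point requiring any care is verifying that the dichotomies on $p_T$ and $s_T$ are genuine, so that the contrapositive inference from \cref{lem:infinite_product_infinite_sum} is valid and not merely a partial converse. Once that observation is in place the result is immediate, and the proof is essentially a two-line corollary.
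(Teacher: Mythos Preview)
Your proposal is correct and mirrors the paper's own proof: both set up the monotonicity dichotomies for the partial products and partial sums, then take the contrapositive of \cref{lem:infinite_product_infinite_sum} to conclude. The additional direct $\log$-inequality argument you include is extra but consistent with the paper's approach.
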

\begin{proof}
See \citep[Lemma 17]{mei2021understanding}. We include a proof for completeness.

\textbf{First part.} $\prod_{t=1}^{\infty}{\left( 1 - u_t \right) }$ diverges to $0$ only if $\sum_{t=1}^{\infty}{ u_t }$ diverges to positive infinity.

Suppose $\prod_{t=1}^{\infty}{\left( 1 - u_t \right) }$ diverges to $0$. According to \cref{lem:infinite_product_infinite_sum}, $\sum_{t=1}^{\infty}{ u_t }$ diverges. And since the partial sum $s_T \coloneqq \sum_{t=1}^{T}{ u_t }$ is monotonically increasing, we have $\sum_{t=1}^{\infty}{ u_t }$ diverges to positive infinity. 

\textbf{Second part.} $\prod_{t=1}^{\infty}{\left( 1 - u_t \right) }$ diverges to $0$ if $\sum_{t=1}^{\infty}{ u_t }$ diverges to a positive infinity.

Suppose $\sum_{t=1}^{\infty}{ u_t }$ diverges to positive infinity. According to \cref{lem:infinite_product_infinite_sum}, $\prod_{t=1}^{\infty}{\left( 1 - u_t \right) }$ diverges. And since the partial product $q_T \coloneqq \prod_{t=1}^{T}{\left( 1 - u_t \right) }$ is non-negative and monotonically decreasing, we have $\prod_{t=1}^{\infty}{\left( 1 - u_t \right) }$ diverges to $0$. 
\end{proof}

\begin{lemma}[Smoothness]
\label{lem:smoothness_softmax_special}
Let $\pi_\theta = \softmax(\theta)$ and $\pi_{\theta^\prime} = \softmax(\theta^\prime)$. For any $r \in \left( 0, 1\right]^K$, for any $\pi_{\theta}(a)$, we have $\theta \mapsto \pi_\theta(a)$ is $3/2$-smooth, i.e.,
\begin{align}
    \left| \pi_{\theta^\prime}(a) - \pi_\theta(a) - \Big\langle \frac{d \pi_\theta(a)}{d \theta}, \theta^\prime - \theta \Big\rangle \right| \le \frac{3}{4} \cdot \| \theta^\prime - \theta \|_2^2.
\end{align}
\end{lemma}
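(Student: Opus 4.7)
The plan is to establish the smoothness inequality by showing that the spectral norm of the Hessian $\nabla_\theta^2 \pi_\theta(a)$ is uniformly bounded by $3/2$, and then invoke Taylor's theorem with integral remainder to translate this Hessian bound into the desired second-order inequality. Note that the hypothesis on $r$ is a red herring here, since the statement only concerns the map $\theta \mapsto \pi_\theta(a)$.

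First, I would compute the first two derivatives. For the softmax parameterization,
\begin{align}
\frac{\partial \pi_\theta(a)}{\partial \theta(i)} = \pi_\theta(a)\bigl( \sI\{i = a\} - \pi_\theta(i) \bigr),
\end{align}
so $\nabla_\theta \pi_\theta(a) = \pi_\theta(a)(\ve_a - \pi_\theta)$. Differentiating once more yields, for $i,j\in[K]$,
\begin{align}
\frac{\partial^2 \pi_\theta(a)}{\partial \theta(i)\,\partial \theta(j)}
= \pi_\theta(a)\bigl(\sI\{i=a\} - \pi_\theta(i)\bigr)\bigl(\sI\{j=a\} - \pi_\theta(j)\bigr)
- \pi_\theta(a)\,\pi_\theta(i)\bigl(\sI\{i=j\} - \pi_\theta(j)\bigr).
\end{align}

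Next, I would show that the spectral norm of the Hessian $H(\theta)\coloneqq \nabla_\theta^2 \pi_\theta(a)$ is at most $3/2$ uniformly over $\theta\in\sR^K$. Picking an arbitrary unit vector $\vy\in\sR^K$, I would expand $\vy^\top H(\theta)\vy$ as
\begin{align}
\vy^\top H(\theta)\vy = \pi_\theta(a)\bigl(y_a - \pi_\theta^\top \vy\bigr)^2 - \pi_\theta(a)\Bigl(\sum_i \pi_\theta(i) y_i^2 - (\pi_\theta^\top \vy)^2\Bigr),
\end{align}
and bound each piece using $\pi_\theta(a)\in(0,1)$, $(y_a - \pi_\theta^\top \vy)^2 \le 2(y_a^2 + (\pi_\theta^\top \vy)^2)$, and $\sum_i \pi_\theta(i) y_i^2 \le \|\vy\|_2^2 = 1$. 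A careful accounting of the positive and negative contributions should pin the bound at $3/2$ (this tight constant matches what has been established for softmax policies elsewhere, e.g., in the smoothness arguments of \cite{mei2020global}).

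Finally, I would conclude using the integral form of Taylor's theorem: for any $\theta,\theta'\in\sR^K$,
\begin{align}
\pi_{\theta'}(a) - \pi_\theta(a) - \bigl\langle \nabla_\theta \pi_\theta(a),\, \theta' - \theta\bigr\rangle
= \int_0^1 (1-s)\,(\theta'-\theta)^\top H\bigl(\theta + s(\theta'-\theta)\bigr)(\theta'-\theta)\,ds,
\end{align}
so the uniform spectral bound $\|H\|_2 \le 3/2$ immediately gives the desired
$\tfrac{3}{4}\|\theta'-\theta\|_2^2$ upper bound after the $\int_0^1 (1-s)\,ds = 1/2$ factor.

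The main obstacle is the Hessian spectral-norm bound: one must be careful to squeeze out the tight constant $3/2$ rather than the looser $5/2$ one gets from naive term-by-term bounding. I expect to handle this by splitting $\vy$ into its mean-under-$\pi_\theta$ and zero-mean parts and tracking the signs of the two pieces above, which reveals a cancellation that caps the quadratic form at $3/2$.
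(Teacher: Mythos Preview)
Your high-level strategy---bound the spectral norm of the Hessian of $\theta\mapsto\pi_\theta(a)$ by $3/2$ and then invoke Taylor with integral remainder---is exactly what the paper does. Your Hessian formula and the resulting quadratic form
\[
y^\top H(\theta)y=\pi_\theta(a)(y_a-\pi_\theta^\top y)^2-\pi_\theta(a)\Bigl(\sum_i\pi_\theta(i)y_i^2-(\pi_\theta^\top y)^2\Bigr)
\]
are both correct and agree with the paper's expression after simplification.

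Where you and the paper diverge is in \emph{how} the $3/2$ spectral bound is extracted. The paper does not analyze the quadratic form directly; instead it rewrites $|y^\top S y|$ in terms of the gradient vector $v\coloneqq(\diagonalmatrix(\pi_\theta)-\pi_\theta\pi_\theta^\top)\rvone_a$ and applies H\"older pairings to obtain $|y^\top S y|\le 3\|v\|_1\|y\|_2^2$, then observes that $\|v\|_1=2\pi_\theta(a)(1-\pi_\theta(a))\le 1/2$. This gives $3\cdot\tfrac12=\tfrac32$ in two clean strokes with no cancellation tracking.

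Your plan, by contrast, is to bound the two pieces of the quadratic form separately. The elementary inequalities you list---$(y_a-\pi_\theta^\top y)^2\le 2(y_a^2+(\pi_\theta^\top y)^2)$ and $\sum_i\pi_\theta(i)y_i^2\le 1$---are far too loose on their own (they yield a constant around $4$, not $3/2$), and the ``cancellation via mean-zero decomposition'' you gesture at is not carried out. So your sketch is not wrong, but the crucial step is left as a promise. If you want to stay on your route, you would need to actually exhibit that cancellation; the paper's H\"older/$\ell_1$ argument sidesteps this entirely and is the shorter path to the exact constant.
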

\begin{proof}
The proof is based on and improves \citep[Lemma 2]{mei2020global}.

Let $S \coloneqq S(r,\theta)\in \R^{K\times K}$ be 
the second derivative of the value map $\theta \mapsto \pi_\theta(a) = \pi_\theta^\top \rvone_{a}$, where
\begin{align}
\label{eq:rvone_a_notation}
    \rvone_{a}(i) = \begin{cases}
		1, & \text{if } i = a, \\
		0, & \text{otherwise}.
	\end{cases}
\end{align}

By Taylor's theorem, it suffices to show that the spectral radius of $S$ (regardless of $r$ and $\theta$) is bounded by $3/2$.
Now, by its definition we have
\begin{align}
    S &= \frac{d }{d \theta } \left\{ \frac{d \pi_\theta^\top \rvone_{a}}{d \theta} \right\} \\
    &= \frac{d }{d \theta } \left\{ ( \diagonalmatrix(\pi_\theta) - \pi_\theta \pi_\theta^\top) \rvone_{a} \right\}.
\end{align}
Continuing with our calculation fix $i, j \in [K]$. Then, 
\begin{align}
\MoveEqLeft
    S_{i, j} = \frac{d \{ \pi_\theta(i) \cdot  ( \rvone_{a}(i) - \pi_\theta^\top \rvone_{a} ) \} }{d \theta(j)} \\
    &= \frac{d \pi_\theta(i) }{d \theta(j)} \cdot ( \rvone_{a}(i) - \pi_\theta^\top \rvone_{a} ) + \pi_\theta(i) \cdot \frac{d \{ \rvone_{a}(i) - \pi_\theta^\top \rvone_{a} \} }{d \theta(j)} \\
    &= (\delta_{ij} \pi_\theta(j) -  \pi_\theta(i) \pi_\theta(j) ) \cdot ( \rvone_{a}(i) - \pi_\theta^\top \rvone_{a} ) - \pi_\theta(i) \cdot ( \pi_\theta(j) \rvone_{a}(j) - \pi_\theta(j) \pi_\theta^\top \rvone_{a} ) \\
    &= \delta_{ij} \pi_\theta(j) \cdot ( \rvone_{a}(i) - \pi_\theta^\top \rvone_{a} ) -  \pi_\theta(i) \pi_\theta(j) \cdot ( \rvone_{a}(i) - \pi_\theta^\top \rvone_{a} ) - \pi_\theta(i) \pi_\theta(j) \cdot ( \rvone_{a}(j) -  \pi_\theta^\top \rvone_{a} ),
\end{align}
where
\begin{align}
\label{eq:delta_ij_notation}
    \delta_{ij} = \begin{cases}
		1, & \text{if } i = j, \\
		0, & \text{otherwise},
	\end{cases}
\end{align}
is Kronecker's $\delta$-function.
To show the bound on 
the spectral radius of $S$, pick $y \in \sR^K$. Then,
\begin{align}
\MoveEqLeft
    \left| y^\top S y \right| = \left| \sum\limits_{i=1}^{K}{ \sum\limits_{j=1}^{K}{ S_{i,j} y(i) y(j)} } \right| \\
    &= \left| \sum_{i}{ \pi_\theta(i) ( \rvone_{a}(i) - \pi_\theta^\top \rvone_{a}) y(i)^2 } - 2 \sum_{i} \pi_\theta(i) ( \rvone_{a}(i) - \pi_\theta^\top \rvone_{a} ) y(i) \sum_{j} \pi_\theta(j) y(j) \right| \\
    &= \left| \left( ( \diagonalmatrix(\pi_\theta) - \pi_\theta \pi_\theta^\top) \rvone_{a} \right)^\top \left( y \odot y \right) - 2 \cdot \left( ( \diagonalmatrix(\pi_\theta) - \pi_\theta \pi_\theta^\top) \rvone_{a} \right)^\top y \cdot \left( \pi_\theta^\top y \right) \right| \\
    &\le \left\| ( \diagonalmatrix(\pi_\theta) - \pi_\theta \pi_\theta^\top) \rvone_{a} \right\|_\infty \cdot \left\| y \odot y \right\|_1 + 2 \cdot \left\| ( \diagonalmatrix(\pi_\theta) - \pi_\theta \pi_\theta^\top) \rvone_{a} \right\|_1 \cdot \left\| y \right\|_\infty \cdot \left\| \pi_\theta \right\|_1 \cdot \left\| y \right\|_\infty \\
    &\le \left\| ( \diagonalmatrix(\pi_\theta) - \pi_\theta \pi_\theta^\top) \rvone_{a} \right\|_\infty \cdot \| y \|_2^2 + 2 \cdot \left\| ( \diagonalmatrix(\pi_\theta) - \pi_\theta \pi_\theta^\top) \rvone_{a} \right\|_1 \cdot \| y \|_2^2 \\
    &\le 3 \cdot \left\| ( \diagonalmatrix(\pi_\theta) - \pi_\theta \pi_\theta^\top) \rvone_{a} \right\|_1 \cdot \| y \|_2^2,
\end{align}
where $\odot$ is Hadamard (component-wise) product, and the third last inequality uses H{\" o}lder's inequality together with the triangle inequality, and the second inequality uses $\| y \odot y \|_1 = \| y \|_2^2$, $\| \pi_\theta \|_1 = 1$, and $\| y \|_\infty \le \| y \|_2$. Next, we have,
\begin{align}
\label{eq:H_matrix_r_1_norm_upper_bound_special}
\MoveEqLeft
    \left\| ( \diagonalmatrix(\pi_\theta) - \pi_\theta \pi_\theta^\top) \rvone_{a} \right\|_1 = \sum_{i}{ \pi_\theta(i) \cdot \left| \rvone_{a}(i) - \pi_\theta^\top \rvone_{a} \right| } 
    \\
    &= \pi_\theta(a) \cdot \left( 1 - \pi_\theta(a) \right) + \pi_\theta(a) \cdot \sum_{i \not= a}{ \pi_\theta(i) } \\
    &= 2 \cdot \pi_\theta(a) \cdot \left( 1 - \pi_\theta(a) \right) \\
    &\le 1/2. \qquad\left( x \cdot (1 - x) \le 1/4 \text{ for all } x \in [0, 1] \right)
\end{align}
Therefore we have,
\begin{align}
\label{eq:H_matrix_maximum_eigenvalue}
    \left| y^\top S(r, \theta) y \right| &\le 3 \cdot \left\| ( \diagonalmatrix(\pi_\theta) - \pi_\theta \pi_\theta^\top) \rvone_{a} \right\|_1 \cdot \| y \|_2^2 \\
    &\le 3/2 \cdot \left\| y \right\|_2^2,
\end{align}
finishing the proof.
\end{proof}

\end{document}